\newcommand{\Wbf}{\mathbf{W}}
\newcommand{\KL}{{\rm KL}}
\newcommand{\Zbf}{\mathbf{Z}}
\newcommand{\Rrm}{{\rm R}}
\newcommand{\Pb}{\mathbb{P}}
\newcommand{\Acal}{\mathcal{A}}
\newcommand{\Xcal}{\mathcal{X}}
\newcommand{\dettwoX}{\mathrm{det_{2X}}}
\newcommand{\Nbb}{\mathbb{N}}
\newcommand{\GP}{\mathrm{GP}}
\newcommand{\PC}{\mathscr{PC}}
\newcommand{\Xbf}{\mathbf{X}}
\newcommand{\myspan}{\mathrm{span}}
\newcommand{\Gauss}{\mathrm{Gauss}}
\newcommand{\zbf}{\mathbf{z}}
\newcommand{\myIm}{\mathrm{Im}}
\newcommand{\Ical}{\mathcal{I}}
\newcommand{\ubf}{\mathbf{u}}
\newcommand{\mapto}{\ensuremath{\rightarrow}}
\newcommand{\approach}{\ensuremath{\rightarrow}}
\newcommand{\imply}{\ensuremath{\Rightarrow}}
\newcommand{\equivalent}{\ensuremath{\Longleftrightarrow}}
\newcommand{\inclusion}{\ensuremath{\hookrightarrow}}
\newcommand{\B}{\mathcal{B}}
\newcommand{\N}{\mathbb{N}}
\newcommand{\R}{\mathbb{R}}
\newcommand{\bP}{\mathbb{P}}
\newcommand{\bE}{\mathbb{E}}
\newcommand{\Fcal}{\mathcal{F}}
\renewcommand{\S}{\mathcal{S}}
\newcommand{\la}{\langle}
\newcommand{\ra}{\rangle}
\renewcommand{\H}{\mathcal{H}}
\renewcommand{\L}{\mathcal{L}}
\newcommand{\Lcal}{\mathcal{L}}
\def\supp{{\rm supp}}
\def\trace{{\rm tr}}
\newcommand\HS{{\rm HS}}
\newcommand\eHS{{\rm HS_X}}
\def\Sym{{\rm Sym}}
\def\tr{{\rm tr}}
\def\Tr{{\rm Tr}}
\newcommand{\x}{\mathbf{x}}
\newcommand{\z}{\mathbf{z}}
\renewcommand{\v}{\mathbf{v}}
\renewcommand{\b}{\mathbf{b}}
\def\1{\mathbf{1}}
\newcommand{\detX}{{\rm det_X}}
\newcommand{\trX}{{\rm tr_X}}
\newcommand{\Ncal}{\mathcal{N}}
\newcommand{\Bsc}{\mathscr{B}}
\newcommand{\Csc}{\mathscr{C}}
\newcommand{\logdet}{{\rm logdet}}
\newcommand{\dettwo}{{\rm det_2}}
\begin{document}

\title{Kullback-Leibler and R\'enyi divergences in reproducing kernel Hilbert space and Gaussian process settings}

\author{\name H\`a Quang Minh\email minh.haquang@riken.jp \\
       \addr RIKEN Center for Advanced Intelligence Project\\
       1-4-1 Nihonbashi, Chuo-ku\\
       Tokyo, Japan}

\editor{}

\maketitle

\begin{abstract}%   <- trailing '%' for backward compatibility of .sty file
In this work, we present formulations for regularized Kullback-Leibler and R\'enyi divergences via the Alpha Log-Determinant (Log-Det) divergences between positive Hilbert-Schmidt operators on Hilbert spaces in two different settings, namely (i) covariance operators and Gaussian measures defined on reproducing kernel Hilbert spaces (RKHS); and (ii) Gaussian processes with squared integrable sample paths. For characteristic kernels, the first setting leads to divergences between arbitrary Borel probability measures on a complete, separable metric space.
We show that the Alpha Log-Det divergences are continuous in the Hilbert-Schmidt norm, which enables us to apply laws of large numbers for Hilbert space-valued random variables. As a consequence of this,
we show that, in both settings, the infinite-dimensional divergences can be consistently and efficiently estimated from their
finite-dimensional versions, using finite-dimensional Gram matrices/Gaussian measures and finite sample data, with {\it dimension-independent} sample complexities in all cases.
RKHS methodology plays a central role in the theoretical analysis in both settings.
The mathematical formulation is illustrated by numerical experiments.
\end{abstract}

\begin{keywords}
R\'enyi divergence, Kullback-Leibler divergence, Log-Det divergence, covariance operator, Gaussian process, reproducing kernel Hilbert space
\end{keywords}

\section{Introduction}
\label{section-introduction}

Gaussian measures and 
Symmetric Positive Definite (SPD) matrices, particularly covariance matrices,
%, in particular covariance matrices, have been playing an increasingly important role in
play a central role in many areas of mathematics, statistics, and have been playing an increasingly important role in
many applications in 
machine learning, optimization, computer vision, and related fields, 
see 
e.g. 
\citep{Mostow:1955,
	LawsonLim:2001, 
	Bhatia:2007,LogEuclidean:SIAM2007, Pennec:IJCV2006,Tuzel:PAMI2008,Kulis:2009,Cherian:PAMI2013,LogE:CVPR2013-1}. 
%Since 
In particular, much recent work in the literature has been carried out to exploit the rich geometrical structures of the set {$\Sym^{++}(n)$} of {$n \times n$} 
%symmetric positive definite 
SPD
matrices, including its 
%is not a vector subspace in Euclidean space, but 
%is an 
open convex cone and various Riemannian manifold structures.

Among the most widely studied and applied Riemannian 
metrics on {$\Sym^{++}(n)$} are 
%first approach exploits 
the affine-invariant metric
%, which is the classical Riemannian metric on $\Sym^{++}(n)$ 
\citep{Mostow:1955,
	LawsonLim:2001, 
	Bhatia:2007, Pennec:IJCV2006,
	%, Bini:LinearAlgebra2013
	Tuzel:PAMI2008}, which corresponds to the Fisher-Rao metric between zero-mean Gaussian densities on $\R^n$;
%The main drawback of this framework is that it tends to be  computationally intensive, especially for large scale applications.
%
%often involving projections onto tangent spaces.
%\item 
%Overcoming this computational complexity is one of the main motivations for the recent development of the 
Log-Euclidean metric 
%framework 
\citep{LogEuclidean:SIAM2007,
	%}, which has been exploited in many computer vision applications, see e.g.
	%, 
	%including
	%Some recent work using the Log-Euclidean metric includes 
	%\citep{
	%Caseiro:2012,
	WangGDD12,
	LogE:CVPR2013-1,LogE:CVPR2013-2}; Wasserstein-Riemannian metric \citep{Takatsu2011wasserstein,Bhatia:2018Bures,Malago:WassersteinGaussian2018},
which corresponds to the $\Lcal^2$-Wasserstein distance between zero-mean Gaussian measures on $\R^n$
\citep{givens84,dowson82,olkin82,knott84}; and Log-Cholesky metric \citep{lin2019riemannian}, which is based on the Cholesky decomposition of SPD matrices.

% less computationally intensive than the affine-invariant metric framework, no need to carry out
%projection onto tangent spaces. 
%
%({\bf Note}: This is an original mathematical work, i.e. they themselves developed this new concept, which has been applied
%extensively in brain imaging and also computer vision).
%\item 
%The third approach defines and exploits 
%In the other direction, distance functions defined using 
The convex cone structure of {$\Sym^{++}(n)$}, on the other hand, gives rise to distance-like functions such as 
the Alpha Log-Determinant divergences \citep{Chebbi:2012Means}, which correspond to R\'enyi divergences between zero-mean Gaussian densities \citep{Renyi:1961}, including in particular the Kullback-Leibler (KL) divergence
\citep{Kullback1951information}.
%Bregman divergences on , 
%such as the Stein divergence,
%and Jeffreys divergences, 
%have also been applied
%with much success \citep{Kulis:2009,Sra:NIPS2012,Cherian:PAMI2013}.
%,Alavi:2013}. 
While generally not satisfying the properties of metric distances,  such as symmetry and triangle inequality, these divergences are fast to compute and have been shown to work well
in practice \citep{Kulis:2009,Sra:NIPS2012,Cherian:PAMI2013}.
The entropic regularization of the $\Lcal^2$-Wasserstein distance also leads to a symmetric divergence, namely the Sinkhorn divergence, between zero-mean Gaussian measures on $\R^n$ \citep{Janati2020entropicOT,Mallasto2020entropyregularized}.
Recent work attempting to unify the various distances/divergences on $\Sym^{++}(n)$ include 
%corresponding to R\'enyi divergences between centered Gaussians, the \emph{Log-Euclidean metric} \cite{LogEuclidean:SIAM2007}, and  
%recent work attempting to unify them 
%these distances/divergences 
\citep{amari2018information,cichocki15,thanwerdas2019affine,thanwerdas2022geometry,Minh:GSI2019,Minh:Alpha2022}.

{\bf Infinite-dimensional setting}. The generalization of distances/divergences for Gaussian measures and covariance matrices on $\R^n$ to the infinite-dimensional setting of Gaussian measures and covariance operators on Hilbert spaces has been carried out by various authors.
Among these, %particular advantage of the $2$
the $\Lcal^2$-Wasserstein distance 
between Gaussian measures on a Hilbert space has the same expression 
%compared to the above distances/divergences is that the finite and infinite-dimensional distance formulas 
as in the finite-dimensional setting
\citep{Gelbrich:1990Wasserstein,cuesta1996:WassersteinHilbert}.
The same holds true for its entropic regularization, the Sinkhorn divergence \citep{Minh2020:EntropicHilbert}.
% are the same and no regularization is necessary.
In general, however, the infinite-dimensional formulations are substantially more complex than the finite-dimensional ones. The KL divergence between two Gaussian measures on a Hilbert space, for example, is finite if and only if 
the two measures are equivalent, a consequence of the classical Feldman-Hajek Theorem \citep{Feldman:Gaussian1958,Hajek:Gaussian1958} (see further in Section \ref{section:infinite-generalization}). In order to obtain finite, well-defined distances/divergences between all pairs of Gaussian measures/covariance operators,
regularization on the covariance operators is often  necessary. This is due to the fact that operations such as inversion, logarithm, and determinant, are only well-defined for specific classes of operators. This is 
the case for the affine-invariant Riemannian metric \citep{Larotonda:2007}, the Log-Hilbert-Schmidt metric
\citep{MinhSB:NIPS2014}, which generalizes the Log-Euclidean metric, the Alpha and Alpha-Beta Log-Determinant divergences \citep{Minh:LogDet2016,Minh:LogDetIII2018,Minh:2019AlphaBeta,Minh:Positivity2020,Minh:2020regularizedDiv}.
The settings for these distances/divergences are the sets of positive definite unitized trace class/
%or positive definite unitized 
Hilbert-Schmidt operators, which are positive trace class/Hilbert-Schmidt operators plus a positive scalar multiple of the identity operator so that operations such as inversion, logarithm, and determinant, are well-defined. 
%A particular advantage of the $2$-Wasserstein distance compared to the above distances/divergences is that the finite and infinite-dimensional distance formulas \cite{Gelbrich:1990Wasserstein,cuesta1996:WassersteinHilbert} are the same and no regularization is necessary. 

{\bf Reproducing kernel Hilbert space (RKHS) setting}.
From the computational and practical viewpoint, this setting is particularly interesting since
%the setting of RKHS covariance operators is of particular interest. 
%In this case, 
many quantities of interest admit closed forms via kernel Gram matrices which can be efficiently computed.
Examples include the kernel Maximum Mean Discrepancy (MMD) \citep{Gretton:MMD12a} and the RKHS covariance operators, 
%These %are applied to formulate
the latter resulting in powerful nonlinear algorithms with
substantial improvements over finite-dimensional covariance matrices, see e.g. \citep{ProbDistance:PAMI2006,Covariance:CVPR2014,MinhSB:NIPS2014,Minh:Covariance2017,zhang2019:OTRKHS} for examples of applications in computer vision. 

{\bf Gaussian process setting}.
%Our current study of Riemannian distances between covariance operators of stochastic processes, in particular 
The study of distances/divergences between Gaussian processes and covariance operators of stochastic processes in general has been carried out by many authors in statistics, particularly functional data analysis, and machine learning.
The Kullback-Leibler (KL) divergence plays a crucial role in Gaussian processes for machine learning
\citep{Rasmussen:2006Gaussian}. 
In \citep{Matthews2016sparseKL,Sun2019functionalKL}, the authors studied the KL divergence between stochastic processes
in the setting of functional variational Bayesian neural networks. We compare in particular the formulation proposed in \cite{Sun2019functionalKL} with the current work in Section \ref{section:estimate-Gaussian-process}.
The problem of minimizing the KL divergence between two infinite-dimensional probability measures
was studied in \citep{pinski2015kullbacktheory,pinski2015algorithms}.
%Gaussian processes, is motivated by various problems in statistics, particularly functional data analysis, and machine learning.
%The study of functional data has received increasing interests recently in statistics and machine learning.
%In functional data analysis, see e.g. \citep{Ramsay:2005functional,Ferraty:2006nonparametric,Horvath:2012functional,Hsing2015TheoreticalFoundationsFDA},
%the analysis of stochastic processes in general and Gaussian processes in particular via their covariance structures has been considered in various studies. Of particular importance in these studies is the distances/divergences between 
%covariance operators.
In \citep{Panaretos:jasa2010}, a test statistic was proposed for the equality of Gaussian process covariance operators based on the approximation of their Hilbert-Schmidt distance.
%, using the property $C_0 = C_1 \equivalent ||C_0 - C_1||_{\HS}=0$. 
%This was applied in the study of DNA minicircles, for which comparison of the covariance structure reveals the differences between two groups of DNA minicircles, in contrast to the comparison of the mean functions, which reveals no differences.
This approach was generalized to covariance operators of general functional random processes in \citep{Fremdt:2013testing}. 
Further related work in this direction includes e.g. \citep{Paparoditis2016Bootstrap,Boente2018Testing}.
Since these methods are based on the Hilbert-Schmidt distance, they
do not take into account the intrinsic properties of the set of covariance operators, such as symmetry and positivity. In \citep{Pigoli:2014}, several non-Euclidean distances between covariance operators were studied, including
the square root and Procrustes distances, the latter also known as the Bures-Wasserstein distance,
%This distance 
which corresponds to
% precisely 
the $\Lcal^2$-Wasserstein distance between two centered Gaussian measures on Hilbert space in optimal transport.
% the context of optimal transport.
%\citep{Villani:2008Optimal}
%and can better capture the intrinsic geometry of the set of covariance operators. 
%
%the latter being equal to the $2$-Wasserstein distance in the centered Gaussian setting. 
%A two-sample test for the equality of covariance operators based on these distances was also proposed.
The $2$-Wasserstein distance was applied to Gaussian processes in \citep{masarotto2019procrustes,Mallasto:NIPS2017Wasserstein},
with the treatment in the latter being valid in the finite-dimensional setting.
The entropic Wasserstein distances between Gaussian processes were studied in \citep{Minh2021:FiniteEntropicGaussian},
with dimension-independent sample complexities, in contrast to the exact Wasserstein distance.

{\bf Contributions of current work}.
In this work, we focus on the infinite-dimensional Alpha Log-Determinant (Log-Det) divergences as formulated in \citep{Minh:LogDet2016,Minh:Positivity2020}, which correspond to R\'enyi divergences, including in particular Kullback-Leibler (KL) divergence, between Gaussian measures on a Hilbert space \citep{Minh:2020regularizedDiv}.
These are defined via the concepts of {\it extended Fredholm} and {\it extended Hilbert-Carleman determinants}, generalizing the classical Fredholm and Hilbert-Carleman determinants \citep{Fredholm:1903, Gohberg:1969,Simon:1977}. The following are the concrete contributions of the current work
\begin{enumerate}
	\item We show that the Alpha Log-Det divergences in \citep{Minh:LogDet2016,Minh:Positivity2020} between positive Hilbert-Schmidt operators
	converge in Hilbert-Schmidt norm, leading to their consistent finite-rank/finite-dimensional approximations.
	\item In \citep{Minh:LogDet2016}, in the RKHS setting,
	the empirical divergence formulas, in terms of kernel Gram matrices, are presented for two data sets of the same size.
	In this work, we present formulas that are valid for two data sets of any finite sizes.
	\item Based on the Hilbert-Schmidt norm convergence, we apply the laws of large numbers for Hilbert space-valued random variables to show that, in the RKHS setting, the Alpha Log-Det divergences and, consequently, the R\'enyi and KL divergences, can be consistently estimated from finite sample data.
	The sample complexities are {\it dimension-independent} in all cases.
	These results thus provide rigorous theoretical guarantees for the algorithms that employ 
	these divergences in practice.
	\item We apply the Alpha Log-Det divergences to formulate regularized versions of the R\'enyi and KL divergences between Gaussian processes, and more generally, covariance operators of stochastic processes.
	By representing these quantities via RKHS covariance and cross-covariance operators,
	we obtain finite-dimensional approximations of these divergences, all with {\it dimension-independent} sample complexities.
	% (see also the discussion related to \citep{ProbDistance:PAMI2006,Covariance:CVPR2014} below).
\end{enumerate}
%we present two related mathematical and computational formulations.
%provide two 
%contributions. 
%First, 
{\bf Other prior and related work}.
In the setting of RKHS covariance operators, the approach most closely related to 
%our current work 
our framework
is \citep{ProbDistance:PAMI2006}, which 
%developed formulas 
%for computing 
computed
probabilistic distances in RKHS. 
%This approach has  recently been 
This was
employed by 
\citep{Covariance:CVPR2014} to compute Bregman divergences between RKHS covariance operators
and applied in computer vision applications.
This approach is mathematically valid in the strictly finite-dimensional setting, since the infinite determinant is well-defined only for specific classes of infinite-dimensional operators. 
Our framework provides a rigorous formulation for the infinite-dimensional RKHS setting and 
the sample complexities presented in the current work also provides theoretical guarantees for the algorithms 
utilizing these divergences.

%employed in \citep{ProbDistance:PAMI2006,Covariance:CVPR2014}.
%and applied  to the problem of multi-class image classification. 
% a.
%
%The concrete calculations are made possible via the kernel trick. The reported improvements over the standard covariance 
%matrix approach are substantial.
%
%{\bf Issues with the approach of \citep{ProbDistance:PAMI2006,Covariance:CVPR2014}}:
%\begin{enumerate}
%\item 
%{\bf Theoretical issue}: 
%There are two 
%The main theoretical issue with the approach in \citep{ProbDistance:PAMI2006,Covariance:CVPR2014}
%The {\it first} 
%theoretical 
%issue 
%is that
%it is assumed {\it implicitly} that the concepts of {\it trace} and {\it determinant}
%and consequently, the function $\log \det$, 
%can be 
%extended to 
%defined for
%any bounded linear 
%operator on a
%an infinite-dimensional 
%Hilbert space $\H$, which is {\it not} true.
%
% This is {\it not} true 
%In general,
%the concepts of {trace} and 
%{infinite} 
%{determinant} are only well-defined
%for certain classes of operators.
%Many quantities involved in 
%In fact, when {\small$\dim(\H) = \infty$},
%which is the case if $\H$ is the 
%Gaussian 
%RKHS, the computation of the Bregman divergences in \citep{Covariance:CVPR2014} involves many infinite quantities 
%induced by the Gaussian kernel,
%, 
%and 
%which cancel each other out only in special cases. 
%Another 
%related to our 
%related work is 
In \citep{cuturi2005semigroup}, 
%in 
%which
a 
form of 
regularized determinant for the covariance operator of the average of two measures was considered.
% Gram matrices 
%was proposed; 
%However, this 
%approach 
This is different from our setting, where we compute divergences between two covariance operators. Another related work is \citep{kondor2003kernel}, which defined
% what is essentially 
%the exponential of our version of 
the Bhattacharyya kernel in RKHS by projecting onto the subspace spanned by the input data.
In \citep{sledge2021estimating}, the authors propose several notions of matrix-based R\'enyi cross-entropies in RKHS.
However, these generally are {\it not} valid divergences, see Section \ref{section:renyi-RKHS-infinite}
for the detailed discussion.

%{\bf Contributions of this work}
%\begin{enumerate}
%	\item 
%\end{enumerate}

\begin{remark}
We note that while the above approaches are formulated directly in the RKHS setting, using {\it kernelization},
our framework is formulated in the general Hilbert space setting.
In the current work, we present results using this general formulation in two different settings,
namely covariance operators defined on (i) an RKHS and (ii) an $\Lcal^2$ space in the setting of Gaussian processes, which is not an RKHS.
\end{remark}

\section{Finite-dimensional setting}
\label{section:background}
We first recall the Kullback-Leibler and R\'enyi divergences between two probability density functions $P_1$ and $P_2$ on $\R^n$.
Let $0 < r < 1$ be fixed. The R\'enyi divergence of order $r$
% \citep{Renyi:1961}  of order $r$, $0 < r < 1$ , 
between $P_1$ and $P_2$ is  defined to be, see e.g. \citep{Pardo:2005}
\begin{align}
	\label{equation:dR-integral}
	D_{R,r}(P_1|| P_2) = -\frac{1}{r(1-r)}\log\int_{\R^n}P_1(x)^{r}P_2(x)^{1-r}dx.
\end{align}
We note that this formula differs from the original definition given by \citep{Renyi:1961},
namely 	$D_{R,r}(P_1|| P_2) = -\frac{1}{(1-r)}\log\int_{\R^n}P_1(x)^{r}P_2(x)^{1-r}dx$, by a factor $1/r$ and
gives a non-trivial limit when $r \approach 0$ (the original formula gives $D_{R,0}(P_1||P_2) = 0$).
As $r \approach 1$ and $r \approach 0$, the R\'enyi divergence gives the 
Kullback-Leibler 
(KL) divergence \citep{Kullback1951information} ${\KL}(P_1||P_2)$ and ${\KL}(P_2||P_1)$, respectively,
% between $P_1$ and $P_2$, which is defined by
\begin{align}
	\label{equation:dKL-integral}
	&\lim_{r \approach 1}D_{R,r}(P_1||P_2) = {\KL}(P_1|| P_2) = \int_{\R^n}P_1(x)\log\frac{P_1(x)}{P_2(x)}dx,
	\\
	&	\lim_{r \approach 0}D_{R,r}(P_1||P_2) = {\KL}(P_2|| P_1) = \int_{\R^n}P_2(x)\log\frac{P_2(x)}{P_1(x)}dx.
\end{align}
For Gaussian densities on $\R^n$, the 
%Kullback-Leibler 
KL and R\'enyi divergences admit closed form formulas, corresponding to the {\it Alpha Log-Determinant divergences} between
% symmetric, positive definite 
SPD matrices , as follows.

{\bf Log-Determinant divergences}. Let $\Sym^{++}(n)$ denote the set of $n \times n$ symmetric positive definite (SPD) matrices.
%We recall that 
For {$A,B \in \Sym^{++}(n)$},
%For 
$-1 \leq \alpha \leq 1$,
%For {\small $-1 < \alpha <1$},
%\footnote{We will treat the limiting cases $\alpha = \pm 1$ in a longer version of the present work.}, 
the {\it Alpha Log-Determinant}  (or Log-Det) {\it {$\alpha$}-divergence} between 
%two 
%symmetric, positive definite 
%SPD
%matrices 
{$A$} and {$B$} is a parametrized 
family of divergences defined by (see \citep{Chebbi:2012Means})
% to be
%as follows:
%\vspace{-0.3cm}
{
	\begin{align}\label{equation:logdet-alpha-finite}
		d^{\alpha}_{\logdet}(A,B) &= \frac{4}{1-\alpha^2}\log\frac{\det(\frac{1-\alpha}{2}A + \frac{1+\alpha}{2}B)}{\det(A)^{\frac{1-\alpha}{2}}\det(B)^{\frac{1+\alpha}{2}}},\;\;\;\; -1 < \alpha < 1,
		\\
		d^1_{\logdet}(A,B) &= \trace(B^{-1}A - I) - \log\det(B^{-1}A),
		\label{equation:alpha+1-finite}
		\\
		d^{-1}_{\logdet}(A,B) &= \trace(A^{-1}B-I) - \log\det(A^{-1}B).
		\label{equation:alpha-1-finite}
		%\trace \log\left[\frac{1-\alpha}{2}(A + B)\right]\nonumber
		%\\
		%& - \frac{2}{1+\alpha}\trace \log(A) - \frac{2}{1-\alpha}\trace \log (B)
	\end{align}
}
For two Gaussian densities $\Ncal(m_1, C_1)$ and $\Ncal(m_2,C_2)$ on $\R^n$,
\begin{align}
	\label{equation:Renyi-gaussian-finite}
	D_{R,r}(\Ncal(m_1,C_1)||\Ncal(m_2, C_2)) &= \frac{1}{2}\la m_2-m_1, [(1-r)C_1 + rC_2]^{-1}(m_2 - m_1)\ra
	\nonumber
	\\
	&	+\frac{1}{2}d^{2r-1}_{\logdet}(C_1,C_2), 
	\\
	{\KL}(\Ncal(m_1,C_1)|| \Ncal(m_2,C_2)) &= \frac{1}{2}\la m_2-m_1, C_2^{-1}(m_2 - m_1)\ra + \frac{1}{2}d^1_{\logdet}(C_1,C_2).
	\label{equation:KL-gaussian-finite}
\end{align}

\section{Infinite-dimensional generalization}
\label{section:infinite-generalization}

We now describe the generalization of the finite-dimensional results in
Section \ref{section:background} to the infinite-dimensional setting, in particular to Gaussian measures on a Hilbert space. 
This formulation was presented in \citep{Minh:LogDet2016,Minh:2019AlphaBeta,Minh:Positivity2020,Minh:2020regularizedDiv}.

Throughout the paper, let $\H,\H_1,\H_2$ be infinite-dimensional separable real Hilbert spaces.
% with inner product
%$\la\;, \;\ra$ and corresponding norm $||\;||$.
Let $\Lcal(\H_1,\H_2)$ denote the set of bounded linear operators between $\H_1$ and $\H_2$.
For $\H_1 = \H_2 = \H$, we write $\Lcal(\H)$.
Let $\Sym(\H) \subset \Lcal(\H)$ denote
the set of bounded, self-adjoint operators, $\Sym^{+}(\H)$
%and $\Sym^{++}(\H)$ 
the set of positive operators,
% and strictly positive operators, respectively, 
i.e.
$\Sym^{+}(\H) = \{A \in \Sym(\H): A=A^{*},\la x, Ax \ra \geq 0\forall x \in \H\}$,
 $\Sym^{++}(\H)$
%and $\Sym^{++}(\H)$ 
the set of strictly positive operators,
% and strictly positive operators, respectively, 
i.e.
$\Sym^{++}(\H) = \{A \in \Sym(\H): A=A^{*},\la x, Ax \ra > 0\forall x \in \H, x \neq 0\}$.
%We recall that 
The set of trace class operators on $\H$ is defined to be
$\Tr(\H) = \{A \in \Lcal(\H): ||A||_{\tr} = \sum_{k=1}^{\infty}\la e_k, (A^{*}A)^{1/2}e_k\ra < \infty\}$,
where $\{e_k\}_{k=1}^{\infty}$ is any orthonormal basis in $\H$ and
% the definition of 
the trace norm
$||A||_{\tr}$ is independent of the choice of such basis. For $A \in \Tr(\H)$, the trace of $A$ is 
$\trace(A) = \sum_{k=1}^{\infty}\la e_k, A e_k\ra = \sum_{k=1}^{\infty}\lambda_k$, where $\{\lambda_k\}_{k=1}^{\infty}$ denote
the eigenvalues of $A$.
For two separable Hilbert spaces $\H_1,\H_2$,
the set of Hilbert-Schmidt operators between $\H_1$ and $\H_2$
%on $\H$ 
is defined to be, see e.g. \citep{Kadison:1983},
$\HS(\H_1,\H_2) = \{A \in \Lcal(\H_1,\H_2): ||A||_{\HS}^2 = \trace(A^{*}A) = \sum_{k=1}^{\infty}||Ae_k||^2_{\H_2} < \infty\}$,
the Hilbert-Schmidt norm $||A||_{\HS}$ being independent of the choice of orthonormal basis $\{e_k\}_{k=1}^{\infty}$
in $\H_1$. For $\H_1 = \H_2 = \H$, we write $\HS(\H)$.
The set $\HS(\H_1,\H_2)$ is itself a Hilbert space with 
the Hilbert-Schmidt inner product $\la A,B\ra_{\HS} = \trace(A^{*}B) = \sum_{k=1}^{\infty}\la Ae_k, Be_k\ra_{\H_2}$.

{\bf  R\'enyi and KL divergences between Gaussian measures on a Hilbert space}.
Let us now state the general definition for R\'enyi and KL divergences.
%We recall that 
For two measures $\mu$ and $\nu$ on a measure space $(\Omega, \Fcal)$, with $\mu$ $\sigma$-finite, 
$\nu$ is said to be {\it absolutely continuous} with respect to $\mu$, denoted by $\nu << \mu$, if for any $A \in \Fcal$, $\mu(A) = 0 \imply \nu(A) = 0$. In this case, the Radon-Nikodym derivative $\frac{d\nu}{d\mu} \in \Lcal^1(\mu)$ is 
well-defined.
The 
%Kullback-Leibler 
KL divergence between $\nu$ and $\mu$ is defined by
\begin{align}
	\KL(\nu||\mu) = 
	\left\{
	\begin{matrix}
		\int_{\Omega}\log\left\{\frac{d\nu}{d\mu}(x)\right\}d\nu(x) & \text{if $\nu << \mu$},
		\\
		\infty & \text{otherwise}.
	\end{matrix}
	\right.
\end{align} 
Likewise, for $0 < r <1$, the  R\'enyi divergence of order $r$ is defined by
\begin{align}
D_{R,r}(\nu ||\mu) = 
\left\{
\begin{matrix}
-\frac{1}{r(1-r)}\log\int_{\Omega}\left(\frac{d\nu}{d\mu}\right)^rd\mu(x) & \text{if $\nu << \mu$},
\\
\infty & \text{otherwise}.
\end{matrix}
\right.
\end{align}
If $\mu << \nu$ and $\nu << \mu$, then we say that $\mu$ and $\nu$ are {\it equivalent}, denoted by $\mu \sim \nu$.
We say that $\mu$ and $\nu$ are {\it mutually singular}, denoted by $\mu \perp \nu$, if there exist $A,B \in \Fcal$ such that
$\mu(A) = \nu(B) = 1$ and $A\cap B = \emptyset$.

{\bf Equivalence of Gaussian measures}.
On $\R^n$, two Gaussian densities are always equivalent and the R\'enyi and KL divergences between them are always finite, as given in Section \ref{section:background}. On a Hilbert space with $\dim(\H) = \infty$, the situation is drastically different. 
Let $Q,R$ be two self-adjoint, positive trace class operators on $\H$ such that $\ker(Q) = \ker(R) = \{0\}$. Let $m_1, m_2 \in \H$. 
A fundamental result in the theory of Gaussian measures is the Feldman-Hajek Theorem (\cite{Feldman:Gaussian1958,Hajek:Gaussian1958}), which states that 
two Gaussian measures $\mu = \Ncal(m_1,Q)$ and 
$\nu = \Ncal(m_2, R)$
%on a Hilbert space 
are either mutually singular or
%, denoted by $\mu \perp \nu$, or mutually 
equivalent, {\color{black}that is either $\mu \perp \nu$ or $\mu \sim \nu$}.
The necessary and sufficient conditions for
the equivalence of the two Gaussian measures $\nu$ and $\mu$ are given by the following.

\begin{theorem}
	(\textbf{\cite{Bogachev:Gaussian}, Corollary 6.4.11,\cite{DaPrato:PDEHilbert},
	Theorems  1.3.9 and 1.3.10})
\label{theorem:Gaussian-equivalent}
	Let $\H$ be a separable Hilbert space. Consider two Gaussian measures $\mu = \Ncal(m_1, Q)$ and
	$\nu = \Ncal(m_2, R)$ on $\H$. Then $\mu$ and $\nu$ are equivalent if and only if the following
	%conditions 
	hold
	\begin{enumerate}
		\item $m_2 - m_1 \in \myIm(Q^{1/2})$.
		\item There exists  $S \in  \Sym(\H) \cap \HS(\H)$, without the eigenvalue $1$, such that
		%\begin{align}
		%\label{equation:RQ-equivalent}
		$R = Q^{1/2}(I-S)Q^{1/2}$.
		%\end{align}
	\end{enumerate}
	%Furthermore, if $m_1 = m_2 = 0$, then the Radon-Nikodym derivative of $\nu$ with respect to $\mu$ is given by
	%\begin{align}
	%\frac{d\nu}{d\mu}(x) = \exp\left(\frac{1}{2}\sum_{k=1}^{\infty}\left[\frac{\alpha_k}{1+\alpha_k}W_{\phi_k}^2(x) - \log(1+\alpha_k)\right]\right).
	%\end{align}
\end{theorem}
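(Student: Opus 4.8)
The plan is to decompose the equivalence problem into a mean part and a covariance part and treat each with classical Gaussian machinery. For the sufficiency direction I would factor through the intermediate measure $\Ncal(m_2,Q)$: since equivalence of measures is transitive and symmetric, it suffices to prove $\Ncal(m_1,Q)\sim\Ncal(m_2,Q)$ (a pure mean shift at fixed covariance) together with $\Ncal(m_2,Q)\sim\Ncal(m_2,R)$ (a pure covariance change at fixed mean). The point of this factorization is that a fixed translation leaves the covariance operator unchanged, so the two perturbations decouple. For the necessity direction I would argue by contraposition using the Feldman--Hajek dichotomy recalled above: if either condition fails I would exhibit a Borel set that is carried with full measure by one of $\mu,\nu$ and with null measure by the other, forcing $\mu\perp\nu$.

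For the mean shift I would use the Cameron--Martin theorem. Realizing $\Ncal(0,Q)$ through its white-noise expansion $x=\sum_k\sqrt{q_k}\,\xi_k e_k$, where $\{e_k\}$ is the eigenbasis of $Q$, $q_k>0$ its eigenvalues and $\xi_k$ independent standard normals, a translation by $h$ acts coordinatewise, and the product density $\prod_k \exp(\cdot)$ converges in $L^1$ exactly when $\sum_k \langle h,e_k\rangle^2/q_k<\infty$, i.e. $h\in\myIm(Q^{1/2})$; otherwise the two measures are mutually singular. Taking $h=m_2-m_1$ yields condition (1), with the Radon--Nikodym density expressed through the Paley--Wiener integral of $Q^{-1/2}h$.

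The covariance change is the heart of the matter. Passing to the white-noise coordinates of $\Ncal(0,Q)$ turns $R$ into the densely defined operator $Q^{-1/2}RQ^{-1/2}$, which I want to identify with a bounded operator $I-S$. Diagonalizing $I-S$ in an eigenbasis $\{f_k\}$ with eigenvalues $1-s_k$, the two centered measures become the infinite products $\prod_k\Ncal(0,1)$ and $\prod_k\Ncal(0,1-s_k)$ of one-dimensional Gaussians. Kakutani's dichotomy theorem for product measures then applies: the products are equivalent precisely when the infinite product of the coordinatewise Hellinger affinities is positive, and since each affinity is a fixed smooth function of $s_k$ vanishing to first order at $s_k=0$, this holds if and only if $\sum_k s_k^2<\infty$, that is, $S\in\Sym(\H)\cap\HS(\H)$. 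The hypothesis that $S$ have no eigenvalue $1$ is equivalent to $I-S$ being injective, hence, as a compact perturbation of the identity, boundedly invertible; this is exactly the nondegeneracy ensuring $\ker(R)=\{0\}$ and that $R$ defines a genuine Gaussian measure.

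Finally I would reconcile condition (1) with the two a priori distinct Cameron--Martin spaces. Writing the positive boundedly invertible operator $I-S=T^2$ gives $R=(Q^{1/2}T)(Q^{1/2}T)^{*}$, whence $\myIm(R^{1/2})=\myIm(Q^{1/2}T)=\myIm(Q^{1/2})$ because $T$ is invertible; thus phrasing (1) with $Q^{1/2}$ or $R^{1/2}$ is immaterial. The step I expect to be the main obstacle is making rigorous the formally written $S=I-Q^{-1/2}RQ^{-1/2}$: since $Q$ is trace class its eigenvalues tend to $0$ and $Q^{-1/2}$ is unbounded, so $S$ cannot be defined by naive composition. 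The real work is to show, first, that equivalence forces the relevant operator to be a compact perturbation of the identity, and then that the Kakutani summability $\sum_k s_k^2<\infty$ upgrades the densely defined form to a genuine bounded Hilbert--Schmidt operator; this spectral bookkeeping, rather than the probabilistic dichotomy itself, is where the effort concentrates.
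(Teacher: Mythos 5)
This theorem is not proved in the paper at all: it is quoted verbatim from \cite{Bogachev:Gaussian} (Corollary 6.4.11) and \cite{DaPrato:PDEHilbert} (Theorems 1.3.9 and 1.3.10), so the only meaningful comparison is with the proofs in those references. Your outline — Cameron--Martin for the mean shift, diagonalization plus Kakutani's dichotomy for the covariance change, and the observation that $\myIm(R^{1/2})=\myIm(Q^{1/2})$ when $I-S$ is boundedly invertible — is exactly the standard route taken there, and the sufficiency direction as you describe it is sound.

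Two concrete gaps remain. First, the Hellinger affinity between $\Ncal(0,1)$ and $\Ncal(0,1-s)$ is $H(s)=\bigl(2\sqrt{1-s}/(2-s)\bigr)^{1/2}$, and $-\log H(s)=s^2/16+O(s^3)$: the deviation from $1$ vanishes to \emph{second} order, not first. This is not a cosmetic point — it is precisely why Kakutani's criterion $\sum_k(-\log H(s_k))<\infty$ becomes $\sum_k s_k^2<\infty$ (Hilbert--Schmidt) rather than $\sum_k|s_k|<\infty$ (trace class); the reason you give would yield the wrong condition even though you assert the right one. Second, the necessity direction does not decouple the way your contrapositive suggests. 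If condition (1) fails, Cameron--Martin gives $\Ncal(m_1,Q)\perp\Ncal(m_2,Q)$, but mutual singularity is not transitive, so when condition (2) also fails you cannot chain $\Ncal(m_1,Q)\perp\Ncal(m_1,R)$ and $\Ncal(m_1,R)\perp\Ncal(m_2,R)$ into $\mu\perp\nu$; one needs either a joint Hellinger-integral computation (the product of coordinatewise affinities with both means and variances varying, which is how Da Prato--Zabczyk argue) or an explicit separating set built from a law of large numbers. Finally, you correctly identify that constructing $S$ with $R=Q^{1/2}(I-S)Q^{1/2}$ from mere equivalence — rather than assuming it — is the hard analytic step, but the proposal leaves it unresolved; as written it establishes sufficiency and only gestures at necessity.
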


With the above equivalence condition, the R\'enyi and KL divergences between 
%We now recall results on Kullback-Leibler divergences between 
two Gaussian measures $\mu = \Ncal(m_1,Q)$ and $\nu = \Ncal(m_2,R)$ on $\H$ admit explicit formulas. If $\mu \perp \nu$, then
$\KL(\nu|| \mu) = D_{R,r}(\nu||\mu) = \infty$, $0 < r < 1$. If $\mu \sim \nu$, then we have the following result. 
%let $S \in \HS(\H)\cap \Sym(\H)$, with $I-S > 0$, be such that $R = Q^{1/2}(I-S)Q^{1/2}$, then
%(see e.g. \cite{Minh:2020regularizedDiv}),
\begin{theorem}
	[\cite{Minh:2020regularizedDiv}]
	\label{theorem:KL-gaussian}
	Let $\mu = \Ncal(m_1, Q)$, $\nu = \Ncal(m_2, R)$, with $\ker(Q) = \ker{R} = \{0\}$, and
	%Assume that 
	$\mu \sim \nu$.
	Let $S \in \HS(\H)\cap \Sym(\H)$, $I-S > 0$, 
	be such that $R = Q^{1/2}(I-S)Q^{1/2}$, then the KL divergence between $\nu$ and $\mu$ is
	\begin{align}
		\label{equation:KL-gaussian-infinite}
		\KL(\nu ||\mu) = \frac{1}{2}||Q^{-1/2}(m_2-m_1)||^2 -\frac{1}{2}\log\dettwo(I-S).
	\end{align}
For $0 < r < 1$, the R\'enyi divergence of order $r$ between $\nu$ and $\mu$ is given by
\begin{align}
\label{equation:Renyi-gaussian-infinite}
D_{\Rrm,r}(\nu || \mu) 
&= \frac{1}{2}||[I-(1-r)S]^{-1/2}Q^{-1/2}(m_2-m_1)||^2 
\nonumber
\\
&\quad +\frac{1}{2r(1-r)}\log\det[(I-S)^{r-1}(I-(1-r)S)].
\end{align}
\end{theorem}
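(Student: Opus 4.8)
The plan is to reduce the infinite-dimensional divergences to a convergent sum of one-dimensional Gaussian divergences, each evaluated by the finite-dimensional formulas \eqref{equation:Renyi-gaussian-finite} and \eqref{equation:KL-gaussian-finite} specialized to $n=1$. Since $S \in \HS(\H) \cap \Sym(\H)$ is compact and self-adjoint, I would first diagonalize it as $S = \sum_k s_k \la \cdot, \phi_k\ra \phi_k$ in an orthonormal eigenbasis $\{\phi_k\}$, where $\sum_k s_k^2 = \|S\|_{\HS}^2 < \infty$ and, because $I - S > 0$ with $S$ compact, one has $s_k < 1$ for all $k$ with $\sup_k s_k < 1$. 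The condition $m_2 - m_1 \in \myIm(Q^{1/2})$ makes $h := Q^{-1/2}(m_2 - m_1) \in \H$ well defined, and writing $h_k := \la h, \phi_k\ra$ we have $\sum_k h_k^2 = \|h\|^2 < \infty$.

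The key device is the Cameron--Martin whitening $x \mapsto Q^{-1/2}(x - m_1)$, under which $\mu = \Ncal(m_1, Q)$ is transported to the isonormal law with covariance $I$ and $\nu = \Ncal(m_2, R)$ to the law with mean $h$ and covariance $Q^{-1/2} R Q^{-1/2} = I - S$. Because $I - S$ is diagonal in the basis $\{\phi_k\}$, both transported measures factor as independent products in the coordinates $z_k(x) = \la Q^{-1/2}(x - m_1), \phi_k\ra$: under $\mu$ the $z_k$ are i.i.d.\ $\Ncal(0,1)$, while under $\nu$ they are independent $\Ncal(h_k, 1 - s_k)$. Equivalently, the Feldman--Hajek Radon--Nikodym derivative $\frac{d\nu}{d\mu}$ \citep{Bogachev:Gaussian,DaPrato:PDEHilbert} is a convergent product of one-dimensional Gaussian density ratios in these coordinates. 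Consequently both $\KL(\nu\|\mu) = \int_{\H} \log\frac{d\nu}{d\mu}\, d\nu$ and the Rényi integral $\int_{\H}\left(\frac{d\nu}{d\mu}\right)^r d\mu$ factor coordinatewise, reducing each to the one-dimensional comparison of $\Ncal(h_k, 1 - s_k)$ against $\Ncal(0,1)$.

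Evaluating each coordinate by \eqref{equation:KL-gaussian-finite} gives $\KL(\Ncal(h_k,1-s_k)\|\Ncal(0,1)) = \tfrac12 h_k^2 - \tfrac12[s_k + \log(1-s_k)]$, and summing over $k$ yields $\tfrac12\|h\|^2 - \tfrac12\sum_k[\log(1-s_k) + s_k]$; recognizing $\sum_k[\log(1-s_k)+s_k] = \log\dettwo(I-S)$ from the definition of the Hilbert--Carleman determinant then produces \eqref{equation:KL-gaussian-infinite}. For the Rényi divergence, \eqref{equation:Renyi-gaussian-finite} with $n=1$ (taking $\alpha = 2r-1$, so $\tfrac{1-\alpha}{2}=1-r$ and $\tfrac{1+\alpha}{2}=r$) splits each coordinate into a mean part $\frac{h_k^2}{2(1-(1-r)s_k)}$ and a variance part $\frac{1}{2r(1-r)}[(r-1)\log(1-s_k) + \log(1-(1-r)s_k)]$. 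Summing the mean parts gives $\tfrac12\la h, [I-(1-r)S]^{-1}h\ra = \tfrac12\|[I-(1-r)S]^{-1/2}Q^{-1/2}(m_2-m_1)\|^2$, and summing the variance parts gives $\frac{1}{2r(1-r)}\log\det[(I-S)^{r-1}(I-(1-r)S)]$, which is exactly \eqref{equation:Renyi-gaussian-infinite}.

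These series converge because $\log(1-s_k)+s_k = -\tfrac12 s_k^2 + O(s_k^3)$ and $(r-1)\log(1-s_k)+\log(1-(1-r)s_k) = -\tfrac{r(1-r)}{2}s_k^2 + O(s_k^3)$ are both $O(s_k^2)$ and hence summable since $\sum_k s_k^2 < \infty$; this is precisely what makes the regularized determinants $\dettwo(I-S)$ and $\det[(I-S)^{r-1}(I-(1-r)S)]$ well defined even though $S$ is only Hilbert--Schmidt and $\trace(S)$ need not exist. I expect the main obstacle to be making the coordinatewise factorization fully rigorous: the whitening map $Q^{-1/2}$ is unbounded and the isonormal law with covariance $I$ is not a Radon measure on $\H$, so the product representation of $\frac{d\nu}{d\mu}$ and the interchange of integration with the infinite product require justification. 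I would handle this through the finite-rank truncations $S_N = \sum_{k\le N} s_k\la\cdot,\phi_k\ra\phi_k$, for which the computation is genuinely finite-dimensional, and then pass to the limit using $\|S_N - S\|_{\HS}\to 0$ together with the continuity of the regularized determinant, and hence of the Alpha Log-Det divergence, in the Hilbert--Schmidt norm.
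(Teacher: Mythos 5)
The paper does not prove this theorem: it is imported verbatim from \citep{Minh:2020regularizedDiv}, so there is no internal proof to compare against. Judged on its own merits, your route is the standard one for this result and your coordinate computations are correct. Diagonalizing $S$, whitening by $Q^{-1/2}$ (understood as the Paley--Wiener isometry, so that $z_k$ are measurable linear functionals, i.i.d.\ $\Ncal(0,1)$ under $\mu$ and independent $\Ncal(h_k,1-s_k)$ under $\nu$), and summing the one-dimensional formulas does reproduce both \eqref{equation:KL-gaussian-infinite} and \eqref{equation:Renyi-gaussian-infinite}; your identifications $\sum_k[\log(1-s_k)+s_k]=\log\dettwo(I-S)$ and $\sum_k[(r-1)\log(1-s_k)+\log(1-(1-r)s_k)]=\log\det[(I-S)^{r-1}(I-(1-r)S)]$ are exactly right. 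One trivial slip: the second expansion is $+\tfrac{r(1-r)}{2}s_k^2+O(s_k^3)$, not $-\tfrac{r(1-r)}{2}s_k^2$ (the positive sign is what makes each coordinate's contribution to $D_{\Rrm,r}$ nonnegative); this does not affect the summability argument.

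The only point where your plan needs sharpening is the limit passage. Hilbert--Schmidt continuity of the determinant functionals (Theorems \ref{theorem:F-A-continuity-HS} and \ref{theorem:logdet-Hilbert-Carleman-bound}) tells you that the \emph{right-hand sides} of \eqref{equation:KL-gaussian-infinite}--\eqref{equation:Renyi-gaussian-infinite} for the truncations $S_N$ converge to those for $S$; it does not by itself show that $\KL(\nu_N\|\mu)$ converges to $\KL(\nu\|\mu)$, where $\nu_N$ is the measure with covariance $Q^{1/2}(I-S_N)Q^{1/2}$, because the exact divergences are defined through the Radon--Nikodym derivative, not through the determinant expressions. The clean fix is the one already implicit in your second paragraph: by the Feldman--Hajek theory (\cite{Bogachev:Gaussian}, \cite{DaPrato:PDEHilbert}), $\frac{d\nu}{d\mu}$ is the $L^1(\mu)$-martingale limit of the finite products of one-dimensional density ratios in the coordinates $z_k$, and since the coordinates are independent under both measures, $\KL(\nu\|\mu)$ is \emph{exactly} the sum of the coordinate KLs (additivity of relative entropy over independent coordinates, via monotone convergence of the conditional informations) and $\int(\frac{d\nu}{d\mu})^r d\mu$ is \emph{exactly} the infinite product of the coordinate integrals. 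With that substitution the factorization is an identity rather than a limit, the truncation is needed only to establish the product formula for the density itself, and the argument closes.
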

One can verify directly that in the finite-dimensional setting, these reduce to  Eqs.\eqref{equation:KL-gaussian-finite} and \eqref{equation:Renyi-gaussian-finite}, respectively.
In Eqs.\eqref{equation:KL-gaussian-infinite} and \eqref{equation:Renyi-gaussian-infinite}, $\det$ and $\dettwo$ refer to the Fredholm and
% determinant and 
%$\dettwo$ refers to the 
Hilbert-Carleman determinants (see e.g. \cite{Simon:1977}), respectively.
 Let $A \in \Tr(\H)$ and $\{\lambda_k\}_{k\in \Nbb}$ denote the set of eigenvalues of $A$. The Fredholm determinant $\det(I+A)$
 %see e.g. \citep{Simon:1977}, 
 is given by
\begin{align}
	\det(I+A) = \prod_{k=1}^{\infty}(1+\lambda_k).
\end{align} 
For operators of the form $I+A$, $A \in \HS(\H)$, the Fredholm determinant is generally not well-defined. Instead, we have the following Hilbert-Carleman determinant
%, see e.g. \citep{Simon:1977}
%We recall the Hilbert-Carleman determinant for $A \in \HS(\H)$
\begin{align}
	\dettwo(I+A) = \det[(I+A)\exp(-A)].
	%\text{with }\dettwo(I+A)= \det(I+A)\exp(-\trace(A)) \text{for $A \in \Tr(\H)$}
\end{align}
In particular, if $A \in \Tr(\H)$, then
\begin{align}
\dettwo(I+A) = \det(I+A)\exp(-\trace(A)).
\end{align}
%{\bf Extended trace class operators}. 
For zero-mean Gaussian densities on $\R^n$, there is a direct correspondence between the R\'enyi and KL divergences in Eqs.\eqref{equation:Renyi-gaussian-finite}, \eqref{equation:KL-gaussian-finite} and the Alpha Log-Det divergences in Eqs.\eqref{equation:logdet-alpha-finite}, \eqref{equation:alpha+1-finite}, \eqref{equation:alpha-1-finite}.
This direct correspondence is no longer valid in the infinite-dimensional Hilbert space setting.
This is because if $A \in \Tr(\H)\cap \Sym^{++}(\H)$, with eigenvalues $\{\lambda_k\}_{k \in \Nbb}$, then $\lim_{k \approach \infty}\lambda_k = 0$, so that $A^{-1}$ is unbounded and $\log\det(A) = \sum_{k=1}^{\infty}\log{\lambda_k} = -\infty$, hence the expressions in Eqs.\eqref{equation:logdet-alpha-finite}, \eqref{equation:alpha+1-finite}, \eqref{equation:alpha-1-finite}
are generally not well-defined or unbounded.

{\bf Extended trace class operators and extended Fredholm determinant}. 
To obtain the infinite-dimensional generalization of
Eqs.\eqref{equation:logdet-alpha-finite}, \eqref{equation:alpha+1-finite}, \eqref{equation:alpha-1-finite} that is
valid 
for all positive trace class operators $A,B$, i.e. $A,B \in \Sym^{+}(\H) \cap \Tr(\H)$, 
we consider the setting
of {\it positive definite unitized trace class operators} \citep{Minh:LogDet2016}. 
%\end{align}
%in \citep{Minh:LogDet2016}
The set of {\it extended (unitized) trace class operators} is defined by
% to be
\citep{Minh:LogDet2016} to be
\begin{align}
	\Tr_X(\H) = \{ A + \gamma I: A \in \Tr(\H), \gamma \in \R\}.
\end{align}
For $(A+\gamma I) \in \Tr_X(\H)$, the {\it extended trace} is defined to be \citep{Minh:LogDet2016}
\begin{align}
	\trX(A+\gamma I) = \trace(A) + \gamma.
\end{align}
By this definition, $\trX(I) = 1$, in contrast to
the standard trace 
$\trace(I) = \infty$.
%For operators of the form $I+A$, $A \in \HS(\H)$, the Fredholm determinant is generally not well-defined. Instead, we have the following Hilbert-Carleman determinant
%, see e.g. \citep{Simon:1977}
%We recall the Hilbert-Carleman determinant for $A \in \HS(\H)$
%\begin{align}
%	\dettwo(I+A) = \det[(I+A)\exp(-A)].
	%\text{with }\dettwo(I+A)= \det(I+A)\exp(-\trace(A)) \text{for $A \in \Tr(\H)$}
%\end{align}
%
%{\bf Extended Fredholm determinant}. 
Let $A \in \Tr(\H)$. 
%Let $\{\lambda_k\}_{k\in \Nbb}$ denote the set of eigenvalues of $A$. 
The Fredholm determinant $\det(I+A)$
%, see e.g. \citep{Simon:1977}, is given by
%\begin{align}
%	\det(I+A) = \prod_{k=1}^{\infty}(1+\lambda_k).
%\end{align}
%This 
is generalized in \citep{Minh:LogDet2016} to the {\it extended Fredholm determinant}, which, for 
$\dim(\H) = \infty$ and $\gamma \neq 0$, is given by
\begin{align}
	\detX(A+\gamma I) = \gamma \det\left(I + [A/\gamma]\right),
	%\gamma \det\left(I + \frac{A}{\gamma}\right),
\end{align}
which reduced to the Fredholm determinant when $\gamma = 1$.

{\bf Positive definite (unitized) trace class operators}.
We recall that an operator $A \in \Lcal(\H)$ is said to be {\it positive definite} \citep{Petryshyn:1962} if there exists
a constant $M_A > 0$ such that $\la x, Ax\ra \geq M_A||x||^2$ $\forall x \in \H$.
This condition is equivalent to requiring that $A$ be both {\it strictly positive}, 
%that is 
i.e. $\la x, Ax\ra > 0$ $\forall x \neq 0$, and {\it invertible}, with $A^{-1}\in \Lcal(\H)$.
Let $\bP(\H)$ denote the set of {\it self-adjoint positive definite} bounded operators on $\H$,
%Let us denote by $\bP(\H)$ the set
%of self-adjoint, positive definite operators on $\H$, namely
namely
%\begin{align}
$\bP(\H) = \{A \in \L(\H), A^{*} = A, \exists M_A > 0 \; s.t. \la x, Ax\ra \geq M_A||x||^2 \; \forall x \in \H\}$.
We write $A > 0 \equivalent A \in \bP(\H)$.
The set of {\it positive definite unitized
	trace class operators} is then defined to be
\begin{align}
	\PC_1(\H) = \Pb(\H) \cap \Tr_X(\H) = \{A+\gamma I > 0: A \in \Tr(\H), \gamma \in \R\}.
\end{align}

On the set $\PC_1(\H)$, we obtain the following generalization of the Alpha Log-Determinant divergence,
which satisfies all properties of a divergence (\cite{Minh:LogDet2016})
\begin{definition}
	(\textbf{Alpha Log-Det divergences between positive definite trace class operators} \citep{Minh:LogDet2016})
	\label{def:logdet}
	Assume that {$\dim(\H) = \infty$}. For {$-1 < \alpha < 1$},
	%\footnote{We will treat the limiting cases $\alpha = \pm 1$ in a longer version of the present work.}, 
	the Log-Det {$\alpha$}-divergence 
	%between 
	$d^{\alpha}_{\logdet}[(A+\gamma I), (B+ \mu I)]$ for $(A+\gamma I), (B+\mu I) \in \PC_1(\H)$
	is defined to be
	{		%
		\begin{align}\label{equation:det1}
			&d^{\alpha}_{\logdet}
			[(A+\gamma I), (B+ \mu I)]
			\nonumber
			\\
			& 
			= \frac{4}{1-\alpha^2}  
			\log\left[\frac{\detX\left(\frac{1-\alpha}{2}(A+\gamma I) + \frac{1+\alpha}{2}(B+\mu I)\right)}{\detX(A+\gamma I)^{\beta}\detX(B + \mu I)^{1-\beta}}\left(\frac{\gamma}{\mu}\right)^{\beta - \frac{1-\alpha}{2}}\right],
			%\nonumber
		\end{align}
	}
	where {$\beta = \frac{(1-\alpha)\gamma}{(1-\alpha) \gamma + (1+\alpha)\mu}$} and {$1-\beta = \frac{(1+\alpha)\mu}{(1-\alpha) \gamma + (1+\alpha)\mu}$}.
	%\begin{align}
	%& d^{\alpha}_{\logdet}[(A+\gamma I), (B+\mu I)] \nonumber
	%\\
	%&= \frac{4}{1-\alpha^2}\log\detX\left[\frac{1-\alpha}{2}(A+\gamma I) + \frac{1+\alpha}{2}(B+\mu I)\right]\nonumber
	%\\
	%& - \frac{2}{1+\alpha}\log\detX(A+ \gamma I) - \frac{2}{1-\alpha}\log\detX (B+ \mu I)
	%\end{align}
	
	%for $-1 < \alpha < 1$. 
	%For $\alpha = -1$, $d^{-1}_{\logdet}$ is defined by
	%{
	%\begin{align}
	%&d^{-1}_{\logdet}[(A+\gamma I), (B+ \mu I)] \nonumber
	%\\
	%&= \etr[(A+\gamma I)^{-1}(B+\mu I) - I] \nonumber
	%\\
	%&- \log\detX[(A+\gamma I)^{-1}(B+\mu I)].
	%\end{align}
	%}
	%{
	%For $\alpha = 1$, $d^1_{\logdet}$ is defined by
	%\begin{align}
	%&d^{1}_{\logdet}[(A+\gamma I), (B+ \mu I)] \nonumber
	%\\
	%&= \etr[(B+\mu I)^{-1}(A+\gamma I) - I] \nonumber
	%\\
	%&- \log\detX[(B+\mu I)^{-1}(A+\gamma I)].
	%\end{align}
	%}
	%Here the trace operation $\trace$ is understood in the extended sense when $\dim(\H) = \infty$.
\end{definition}
For $\alpha = \pm 1$, $d^{\alpha}_{\logdet}$ is defined based on the limits as $\alpha \approach \pm 1$, as follows.
%\begin{theorem} 
%	[\textbf{Limiting cases}]
%	The quantity $d^{\alpha}_{\logdet}$ in Definition \ref{def:logdet} satisfies
%	\label{theorem:alpha-lim}
\begin{align}
	&d^{1}_{\logdet}[(A+\gamma I), (B+\mu I)] = \lim_{\alpha \approach 1^{-}}d^{\alpha}_{\logdet}[(A+\gamma I), (B+\mu I)]
	\nonumber
	\\
	& = \left(\frac{\gamma}{\mu}-1\right)\log\frac{\gamma}{\mu} 
	%\nonumber
	%\\
	%&
	+ \trX[(B+\mu I)^{-1}(A+\gamma I) - I] - \frac{\gamma}{\mu}\log\detX[(B+\mu I)^{-1}(A+\gamma I)],
	\label{equation:alpha+1}
	%\end{align}
	%\begin{align}
	\\
	&d^{-1}_{\logdet}[(A+\gamma I), (B+\mu I)] = \lim_{\alpha \approach -1^{+}}d^{\alpha}_{\logdet}[(A+\gamma I), (B+\mu I)]
	\nonumber
	\\
	& = 
	\left(\frac{\mu}{\gamma}-1\right)\log\frac{\mu}{\gamma} 
	%\left(1-\frac{\mu}{\gamma}\right)\log\frac{\gamma}{\mu} 
	%\nonumber
	%\\
	%&
	+ \trX\left[(A+\gamma I)^{-1}(B+\mu I) - I\right] - \frac{\mu}{\gamma}\log\detX[(A+\gamma I)^{-1}(B+\mu I)].
	\label{equation:alpha-1}
\end{align}

{\bf Special case}. In the case $\gamma = \mu$, $d^{\alpha}_{\logdet}[(A+\gamma I), (B+\gamma I)]$ is a direct generalization of the finite-dimensional formulation
{\small
	\begin{align}
		d^{\alpha}_{\logdet}[(A+\gamma I), (B+ \gamma I)] &= \frac{4}{1-\alpha^2}\log\left[\frac{\detX(\frac{1-\alpha}{2}(A+\gamma I) + \frac{1+\alpha}{2}(B+\gamma I))}{\detX(A+\gamma I)^{\frac{1-\alpha}{2}}\detX(B+ \gamma I)^{\frac{1+\alpha}{2}}}\right],
		\\
		d^{1}_{\logdet}[(A+\gamma I), (B+\gamma I)] &= \trX[(B+\gamma I)^{-1}(A+\gamma I)-I] - \log\detX[(B+\gamma I)^{-1}(A+\gamma I)],
		\\
		d^{-1}_{\logdet}[(A+\gamma I), (B+\gamma I)] &= \trX[(A+\gamma I)^{-1}(B+\gamma I) - I] - \log\detX[(A+\gamma I)^{-1}(B+\gamma I)].
	\end{align} 
}
In particular, for $A,B \in \Sym^{++}(n)$, by letting $\gamma =0$, we recover Eqs.\eqref{equation:logdet-alpha-finite},
\eqref{equation:alpha+1-finite}, and \eqref{equation:alpha-1-finite}.

The formulas for the R\'enyi and KL divergences between Gaussian densities on $\R^n$ in Eqs. \eqref{equation:Renyi-gaussian-finite} and \eqref{equation:KL-gaussian-finite}, respectively, %\eqref{equation:Renyi-gaussian-finite}
are not directly generalizable to the infinite-dimensional setting. However, using the Alpha Log-Det divergences between positive definite trace class operators, we can define their regularized versions on a separable Hilbert space $\H$.

\begin{definition}
	\label{definition:renyi-regularized}
	(\textbf{Regularized R\'enyi and KL divergences between Gaussian measures on Hilbert space})
Let $m_1, m_2 \in \H$ and $C_1, C_2 \in \Sym^{+}(\H) \cap \Tr(\H)$. Let $\gamma \in \R, \gamma > 0$ be fixed.
The {\it regularized  R\'enyi divergence} of order $r$, $0 \leq r \leq 1$, between the Gaussian measures
$\Ncal(m_1, C_1)$, $\Ncal(m_2,C_2)$, is defined to be
\begin{align}
	D_{\Rrm,r}^{\gamma}[\Ncal(m_1, C_1)|| \Ncal(m_2,C_2)]
	&=\frac{1}{2}\la m_1 - m_2,[(1-r)(C_1+\gamma I) + r(C_2 + \gamma I)]^{-1}(m_1 - m_2)\ra
	\\
	&\quad +\frac{1}{2}d^{2r-1}_{\logdet}[(C_1 + \gamma I), (C_2 + \gamma I)].
	\nonumber
\end{align}
In particular, for $r =1$, we obtain the {\it regularized KL divergence}
\begin{align}
	\KL^{\gamma}[\Ncal(m_1, C_1)|| \Ncal(m_2,C_2)]
	&= \frac{1}{2}\la m_1 - m_2, (C_2 + \gamma I)^{-1}(m_1 - m_2)\ra
	\\
	& \quad +\frac{1}{2}d^{1}_{\logdet}[(C_1 + \gamma I), (C_2 + \gamma I)].
	\nonumber
\end{align}
\end{definition}

%With the above definition of the Alpha-Log-Det divergence, we obtain 
The correspondence between the Alpha Log-Det and regularized R\'enyi and KL divergences with the exact R\'enyi and KL divergences between two {\it equivalent} Gaussian measures (in Theorem \ref{theorem:KL-gaussian}) is given in the following
\begin{theorem}
	[\citep{Minh:2020regularizedDiv}, Theorems 2 and 3]
	\label{theorem:exact-regularized-correspondence}
	Let $\nu_0 \sim \Ncal(m,C_0)$, $\nu \sim \Ncal(m, C)$, $m \in \H, C_0,C \in \Sym^{++}(\H) \cap \Tr(\H)$, be two 
	{\it equivalent} Gaussian measures, that is $m-m_0 \in \myIm(C_0^{1/2})$ and there exists $S \in \Sym(\H) \cap \HS(\H)$ such that
	$C = C_0^{1/2}(I-S)C_0^{1/2}$. Then
	\begin{align}
	\lim_{\gamma \approach 0^{+}}D^{\gamma}_{\Rrm,r}(\nu||\nu_0) = D_{\Rrm,r}(\nu ||\nu_0),\\
		\lim_{\gamma \approach 0^{+}}\KL^{\gamma}(\nu||\nu_0) = \KL(\nu ||\nu_0).
	\end{align}
In particular
	\begin{align}
		\lim_{\gamma \approach 0}d^{2r-1}_{\logdet}[(C+\gamma I), (C_0 +\gamma I)]&= \frac{1}{r(1-r)}\log\det[(I-(1-r)S)(I-S)^{r-1}]
		\\
		& = 2D_{\Rrm,r}(\nu ||\nu_0), \;\;\; 0 < r < 1
		\\
		\lim_{\gamma \approach 0}d^1_{\logdet}[(C+\gamma I), (C_0 + \gamma I)] &= -\log\dettwo(I-S) = 2 {\KL}(\nu||\nu_0).
	\end{align}
	
\end{theorem}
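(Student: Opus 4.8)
The plan is to reduce the statement to the two Log-Det limits displayed at the bottom, since the equal means $m_1 = m_2$ make the quadratic terms vanish both in $D^{\gamma}_{\Rrm,r}$, $\KL^{\gamma}$ and in the exact formulas of Theorem~\ref{theorem:KL-gaussian}. Granting the determinant limits, the first two convergences then follow by identifying the limiting determinant expressions with $2D_{\Rrm,r}(\nu\|\nu_0)$ and $2\KL(\nu\|\nu_0)$ through Theorem~\ref{theorem:KL-gaussian}. So the entire content is the behaviour of $d^{2r-1}_{\logdet}[(C+\gamma I),(C_0+\gamma I)]$ and $d^{1}_{\logdet}[(C+\gamma I),(C_0+\gamma I)]$ as $\gamma\to 0^{+}$.

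For $0<r<1$ I would first unwind the definition in the special case $\gamma=\mu$. Writing $\detX(A+\gamma I)=\gamma\det(I+A/\gamma)$ and using the two algebraic identities $(1-r)C+rC_0=C_0^{1/2}(I-(1-r)S)C_0^{1/2}$ and $C-C_0=-C_0^{1/2}SC_0^{1/2}$ (both immediate from $C=C_0^{1/2}(I-S)C_0^{1/2}$), the scalar prefactors cancel and $d^{2r-1}_{\logdet}$ becomes $\tfrac{1}{r(1-r)}\log R(\gamma)$, where $R(\gamma)$ is a ratio of genuine Fredholm determinants of operators of the form $I+(\text{trace class})$. Dividing numerator and denominator by $\det(I+C_0/\gamma)$ and applying $\det(XY^{-1})=\det X/\det Y$ together with Sylvester's identity $\det(I+AB)=\det(I+BA)$, I would bring $R(\gamma)$ to the symmetric closed form $R(\gamma)=\det[(I-(1-r)\widehat T_\gamma)(I-\widehat T_\gamma)^{r-1}]$, where $\widehat T_\gamma=W_\gamma S W_\gamma$ and $W_\gamma=C_0^{1/2}(\gamma I+C_0)^{-1/2}$. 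Here $\widehat T_\gamma$ is self-adjoint and trace class; its spectrum sits in a fixed compact subinterval of $(-\infty,1)$ (because $0\le W_\gamma\le I$ and $S\le s_{\max}I$ with $s_{\max}<1$); and $\widehat T_\gamma\to S$ in Hilbert-Schmidt norm as $\gamma\to 0^{+}$ (from $W_\gamma\to I$ strongly, $\|W_\gamma\|\le 1$, and $S\in\HS(\H)$).

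The main obstacle is now that the Fredholm determinant is continuous only in trace norm, whereas $\widehat T_\gamma\to S$ holds merely in Hilbert-Schmidt norm and the limit $S$ need not be trace class, so neither factor $\det(I-(1-r)S)$ nor $\det(I-S)$ exists separately. The resolution I would use exploits the cancellation of the linear term: setting $\phi(x)=(1-(1-r)x)(1-x)^{r-1}-1$, one checks $\phi(0)=\phi'(0)=0$, so $\phi(x)=x^{2}\widetilde\phi(x)$ with $\widetilde\phi$ continuous on $(-\infty,1)$. Then $R(\gamma)=\det(I+\phi(\widehat T_\gamma))$ with $\phi(\widehat T_\gamma)=\widehat T_\gamma^{2}\,\widetilde\phi(\widehat T_\gamma)$, and I would estimate $\|\widehat T_\gamma^{2}-S^{2}\|_{\tr}\to 0$ (via $\|AB\|_{\tr}\le\|A\|_{\HS}\|B\|_{\HS}$ and Hilbert-Schmidt convergence) together with the uniform boundedness and operator-norm convergence $\widetilde\phi(\widehat T_\gamma)\to\widetilde\phi(S)$ on the common compact spectrum. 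This gives $\phi(\widehat T_\gamma)\to\phi(S)$ in trace norm, whence $R(\gamma)\to\det(I+\phi(S))=\det[(I-(1-r)S)(I-S)^{r-1}]$ by trace-norm continuity of $\det$, which is exactly the claimed limit.

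For $r=1$ the analogous reduction of $d^{1}_{\logdet}$, using $(C_0+\gamma I)^{-1}(C+\gamma I)-I=-(C_0+\gamma I)^{-1}C_0^{1/2}SC_0^{1/2}$ and the same Sylvester symmetrization, yields $d^{1}_{\logdet}[(C+\gamma I),(C_0+\gamma I)]=-\trace(\widehat T_\gamma)-\log\det(I-\widehat T_\gamma)=-\log\dettwo(I-\widehat T_\gamma)$, the last equality being the definition $\dettwo(I+A)=\det(I+A)\exp(-\trace A)$ for $A\in\Tr(\H)$. This case is in fact easier, because the Hilbert-Carleman determinant $\dettwo$ is continuous in the Hilbert-Schmidt norm, so $\widehat T_\gamma\to S$ in $\HS(\H)$ immediately gives $d^{1}_{\logdet}\to-\log\dettwo(I-S)=2\KL(\nu\|\nu_0)$ by Theorem~\ref{theorem:KL-gaussian}. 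Assembling the two limits with the vanishing quadratic terms completes the proof.
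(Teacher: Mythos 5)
Your proposal is mathematically sound. Note first that the paper itself does not prove this theorem: it is imported verbatim from the cited reference (Minh 2020, Theorems 2 and 3), so there is no in-paper argument to compare against; your proof therefore has to stand on its own, and it does. The reduction to the two determinant limits via the vanishing mean terms is correct, and the algebra bringing $d^{2r-1}_{\logdet}$ to $\tfrac{1}{r(1-r)}\log\det[(I-(1-r)\widehat T_\gamma)(I-\widehat T_\gamma)^{r-1}]$ with $\widehat T_\gamma = W_\gamma S W_\gamma$, $W_\gamma = C_0^{1/2}(\gamma I+C_0)^{-1/2}$, checks out (each $\widehat T_\gamma$ is genuinely trace class because $C_0^{1/2}SC_0^{1/2}$ is, so all Fredholm determinants are defined for fixed $\gamma>0$). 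You correctly identified the one genuinely delicate point — that $\widehat T_\gamma \to S$ only in $\HS$ norm while $S$ itself need not be trace class — and your resolution via $\phi(x)=(1-(1-r)x)(1-x)^{r-1}-1$ with $\phi(0)=\phi'(0)=0$, giving trace-norm convergence of $\phi(\widehat T_\gamma)=\widehat T_\gamma^2\widetilde\phi(\widehat T_\gamma)$, is exactly the right mechanism; the uniform spectral bound $\widehat T_\gamma \le \max(s_{\max},0)I$ with $s_{\max}<1$ (which follows from $I-S>0$ and compactness of $S$, even though the theorem statement does not spell this out) keeps $\widetilde\phi$ uniformly controlled. The $r=1$ case via $d^1_{\logdet}=-\log\dettwo(I-\widehat T_\gamma)$ and the Hilbert--Schmidt continuity of $\log\dettwo$ (Theorem \ref{theorem:logdet-Hilbert-Carleman-bound} of the paper, with $\|(I-\widehat T_\gamma)^{-1}\|$ uniformly bounded) is likewise correct and, as you say, easier. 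The only cosmetic issue is that the theorem statement's condition ``$m-m_0\in\myIm(C_0^{1/2})$'' involves an undefined $m_0$; since both measures are given mean $m$, your treatment of the mean terms as identically zero is the intended reading.
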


{\bf Generalization to positive definite (unitized) Hilbert-Schmidt operators}.
Definition \ref{def:logdet} is extended to the more general Alpha-Beta Log-Det divergences in
\citep{Minh:2019AlphaBeta}. It is extended further to all {\it positive definite (unitized) Hilbert-Schmidt operators}, as follows \citep{Minh:Positivity2020}.
The set of {\it extended (or unitized) Hilbert-Schmidt operators} on $\H$ is defined in \citep{Larotonda:2007} to be
\begin{align}
	\HS_X(\H) = \{ A + \gamma I: A \in \HS(\H), \gamma \in \R\}.
\end{align}
This set includes in particular $\Tr_X(\H)$. 
It is a Hilbert space under the {\it extended Hilbert-Schmidt inner product} and {\it extended Hilbert-Schmidt norm}
\begin{align}
	\la A+\gamma I, B + \nu I\ra_{\eHS} = \la A,B\ra_{\HS} + \gamma\nu,\;\;\;||A + \gamma I||^2_{\eHS} = ||A||^2_{\HS} + \gamma^2.
\end{align}
Under $\la ,\ra_{\eHS}$, the scalar operators $\gamma I$, $\gamma \in \R$, are orthogonal to the Hilbert-Schmidt operators.
%The corresponding {\it extended Hilbert-Schmidt norm} is given by
%\begin{align}
%||A + \gamma I||^2_{\eHS} = ||A||^2_{\HS} + \gamma^2,
%\end{align}
With the norm $||\;||_{\eHS}$,
%under which 
$||I||_{\eHS} = 1$, in contrast to the Hilbert-Schmidt norm, where $||I||_{\HS} = \infty$.
%{\bf Positive definite (unitized) Hilbert-Schmidt operators}. 
With the extended Hilbert-Schmidt operators, we define 
the set of {\it positive definite (unitized) Hilbert-Schmidt} operators on $\H$ to be
\begin{align}
	\PC_2(\H) = \bP(\H) \cap \HS_X(\H) = \{A+ \gamma I > 0: A^{*} = A, A \in \HS(\H), \gamma \in \R\}.
\end{align}
This is a Hilbert manifold, being an open subset of the Hilbert space $\HS_X(\H)$.

%For operators of the form $I+A$, $A \in \HS(\H)$, the Fredholm determinant is generally not well-defined. Instead, we have the following Hilbert-Carleman determinant, see e.g. \citep{Simon:1977}
%We recall the Hilbert-Carleman determinant for $A \in \HS(\H)$
%\begin{align}
%	\dettwo(I+A) = \det[(I+A)\exp(-A)].
	%\text{with }\dettwo(I+A)= \det(I+A)\exp(-\trace(A)) \text{for $A \in \Tr(\H)$}
%\end{align}
Similar to the Fredholm determinant, we generalize the Hilbert-Carleman determinant 
$\dettwo(I+A)$, $A \in \HS(\H)$, to the
%For $(A+\gamma I) \in \HS_X(\H)$, 
the {\it extended Hilbert-Carleman} determinant, which, for $(A+\gamma I) \in \HS_X(\H)$, is defined by \citep{Minh:Positivity2020}
\begin{align}
	\dettwoX(A + \gamma I) = \detX[(A+\gamma I)\exp(-A/\gamma))]= \gamma \dettwo\left(I + \frac{A}{\gamma}\right).
\end{align}
%\begin{align*}
%	\dettwoX(A+\gamma I) = \gamma \dettwo\left(I + \frac{A}{\gamma}\right)
%\end{align*}
%If $(A+\gamma I) \in \Tr_X(\H)$, then
In particular, for $A \in \Tr(\H)$,
\begin{align}
	\dettwoX(A+\gamma I) = \detX(A+\gamma I)\exp(-\trace(A)/\gamma).
\end{align}
%On $\PC_2(\H)$, both $\log(A+\gamma I)$ and $(A+\gamma I)^{\alpha}$, $\alpha \in \R$, are well-defined and bounded. 

On $\PC_2(\H)$, we obtain the following generalization of Definition \ref{def:logdet}.
\begin{definition}
	(\textbf{Alpha Log-Det divergences between positive definite Hilbert-Schmidt operators \citep{Minh:Positivity2020}})
	\label{def:logdet-HS}
	Let $-1 \leq \alpha \leq 1$ be fixed. For
	$(A+\gamma I), (B+\mu I) \in \PC_2(\H)$, define $\frac{\gamma}{\mu}I + \Lambda = (B+\mu I)^{-1/2}(A+\gamma I)(B+\mu)^{-1/2}$. The $\alpha$-Log-Det divergence between $(A+\gamma I)$ and $(B+\mu I)$ is defined to be
	{\small
		\begin{align}
			d^{\alpha}_{\logdet}[(A+\gamma I), (B+\mu I)] &=
			\frac{4}{1-\alpha^2}\log\detX\left[\frac{1-\alpha}{2}\left(\frac{\gamma}{\mu}I+\Lambda\right)^{1-\beta} + \frac{1+\alpha}{2}\left(\frac{\gamma}{\mu}I+\Lambda\right)^{-\beta}\right]
			\\
			&\quad + \frac{4(\beta-\frac{1-\alpha}{2})}{1-\alpha^2}\log\left(\frac{\gamma}{\mu}\right), \;\;\; -1 < \alpha < 1,
		\end{align}
	}
	where $\beta$ is as defined in Definition \ref{def:logdet}.
	At the limiting points $\alpha = \pm 1$,
	{\small
		\begin{align}
			d^{1}_{\logdet}[(A+\gamma I), (B+\mu I)] & = \left(\frac{\gamma}{\mu}-1\right)\left(1+\log\frac{\gamma}{\mu}\right) - \frac{\gamma}{\mu}\log\dettwoX[(B+\mu I)^{-1}(A+\gamma I)],
			\label{equation:alpha+1-HS}
			\\
			d^{-1}_{\logdet}[(A+\gamma I), (B+ \mu I)] & = \left(\frac{\mu}{\gamma}-1\right)\left(1+\log\frac{\mu}{\gamma}\right)
			-\frac{\mu}{\gamma}\log\dettwoX[(A+\gamma I)^{-1}(B+\mu I)].
			\label{equation:alpha-1-HS} 
		\end{align}
	}
\end{definition}
The case $\gamma = \mu$ is of particular interest, as it gives rise to divergences between positive Hilbert-Schmidt operators themselves. The following result is immediate
\begin{theorem}
\label{theorem:Log-Det-positive-HS}
Let $\gamma \in \R, \gamma > 0$ be fixed. Then $d^{\alpha}_{\logdet}[(A+\gamma I),(B+\gamma I)]$
is a divergence on the set $\Sym^{+}(\H) \cap \HS(\H)$.
\end{theorem}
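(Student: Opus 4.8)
The plan is to obtain the statement as an immediate corollary of the fact, established in \citep{Minh:Positivity2020} and recorded in Definition \ref{def:logdet-HS}, that $d^{\alpha}_{\logdet}$ is a divergence on the Hilbert manifold $\PC_2(\H)$. Fixing $\mu = \gamma > 0$ and restricting both arguments to operators of the form $A + \gamma I$ with $A \in \Sym^{+}(\H) \cap \HS(\H)$ amounts to pulling back this divergence along the injective affine map $\iota_{\gamma}: A \mapsto A + \gamma I$, and every defining property of a divergence then transfers directly.

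First I would verify that $\iota_{\gamma}$ sends $\Sym^{+}(\H) \cap \HS(\H)$ into $\PC_2(\H) = \bP(\H) \cap \HS_X(\H)$. For $A \in \Sym^{+}(\H) \cap \HS(\H)$, the operator $A + \gamma I$ is self-adjoint, and for every $x \in \H$ one has $\la x, (A+\gamma I)x\ra = \la x, Ax\ra + \gamma ||x||^2 \geq \gamma ||x||^2$, so $A + \gamma I$ is positive definite with Petryshyn constant $M_{A+\gamma I} = \gamma > 0$ and hence belongs to $\bP(\H)$. Since $A \in \HS(\H)$ and $\gamma \in \R$, we also have $A + \gamma I \in \HS_X(\H)$, and therefore $A + \gamma I \in \PC_2(\H)$. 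The map $\iota_{\gamma}$ is clearly injective, as $A + \gamma I = B + \gamma I$ forces $A = B$.

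With this embedding in hand the two defining properties follow. Non-negativity is inherited verbatim, since $(A+\gamma I), (B+\gamma I) \in \PC_2(\H)$ and $d^{\alpha}_{\logdet} \geq 0$ there. For the identity of indiscernibles I would use that $d^{\alpha}_{\logdet}$ vanishes on $\PC_2(\H)$ precisely when its two arguments coincide; combined with injectivity of $\iota_{\gamma}$ this gives, for $A, B \in \Sym^{+}(\H) \cap \HS(\H)$, the chain $d^{\alpha}_{\logdet}[(A+\gamma I),(B+\gamma I)] = 0 \iff A + \gamma I = B + \gamma I \iff A = B$.

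The argument is essentially bookkeeping, so there is no substantive obstacle; the only point requiring care is the positive-definiteness of the shifted operator, where one must invoke $A \geq 0$ together with $\gamma > 0$ to secure the uniform lower bound $\gamma$ (the Petryshyn condition defining $\bP(\H)$) rather than mere strict positivity. The restriction map simply transports non-negativity and vanishing-only-on-the-diagonal from $\PC_2(\H)$ down to $\Sym^{+}(\H) \cap \HS(\H)$, which is exactly why the result is immediate.
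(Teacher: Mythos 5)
Your proof is correct and is exactly the argument the paper has in mind: the paper states this result as ``immediate'' from the fact that $d^{\alpha}_{\logdet}$ is a divergence on $\PC_2(\H)$ (from \citep{Minh:Positivity2020}), and your write-up simply makes explicit the embedding $A \mapsto A+\gamma I$ of $\Sym^{+}(\H)\cap\HS(\H)$ into $\PC_2(\H)$, its injectivity, and the transfer of non-negativity and the identity of indiscernibles. No gaps.
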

{\bf Discussion on the Hilbert-Schmidt operator setting}. The setting of positive Hilbert-Schmidt operators is more general than that of covariance operators, which are positive trace class operators.
With the formulation using extended Hilbert-Schmidt operators and extended Hilbert-Carleman determinant, we obtain
the convergence in Hilbert-Schmidt norm of the Alpha Log-Det divergence, leading to its finite-rank/finite-dimensional approximation (Section \ref{section:approximation}), along with its finite sample complexity in the RKHS and Gaussian process settings (see Sections \ref{section:RKHS}, \ref{section:renyi-RKHS-infinite}, and \ref{section:estimate-Gaussian-process})).
In particular, laws of large numbers for Hilbert space-valued random variables can be applied to obtain sample complexities. In this work, we make use of the following result, which
is a consequence of a general result by Pinelis (\cite{Pinelis1994optimum}, Theorem 3.4).
%The following version is Lemma 2 in \cite{SmaleZhou2007}. Let $(Z,\Acal, \rho)$ be a probability space. 
\begin{proposition}
	[\cite{SmaleZhou2007}]
	\label{proposition:Pinelis}
	Let $(Z,\Acal, \rho)$ be a probability space and
	%Let $(\H, ||\;||)$ be a Hilbert space and 
	$\xi:(Z,\rho) \mapto \H$ be a random variable.
	% be a random variable on 
	%$(Z, \rho)$ with values in $\H$. 
	Assume that $\exists M > 0$ such that $||\xi|| \leq M < \infty$ almost surely.
	Let $\sigma^2(\xi)= \bE||\xi||^2$. Let $(z_i)_{i=1}^m$ be independently sampled 
	from $(Z, \rho)$.
	%according to $\rho$.
	For any $0 < \delta < 1$, with probability at least $1-\delta$,
	\begin{align}
		\left\|\frac{1}{m}\sum_{i=1}^m\xi(z_i) - \bE\xi \right\| \leq \frac{2M\log\frac{2}{\delta}}{m} + \sqrt{\frac{2\sigma^2(\xi)\log\frac{2}{\delta}}{m}}.
	\end{align}
\end{proposition}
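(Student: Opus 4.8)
The plan is to reduce the claim to a Bernstein-type exponential tail inequality for martingales with values in a Hilbert space, which is exactly what Pinelis (\cite{Pinelis1994optimum}, Theorem 3.4) supplies, and then to invert that tail bound into the stated explicit deviation estimate. The structural fact driving everything is that a Hilbert space has the smoothness needed for the scalar Bernstein machinery to carry over without loss of constants.

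First I would center the variables and set up the martingale. Put $\eta_i = \xi(z_i) - \bE\xi$ for $i = 1, \dots, m$, so that the $\eta_i$ are i.i.d., $\H$-valued, zero-mean, and the partial sums $S_n = \sum_{i=1}^n \eta_i$ form a martingale with respect to the filtration $\Fcal_n = \sigma(z_1, \dots, z_n)$, with martingale differences $\eta_n$. Two moment controls are needed. Since $\|\xi\| \leq M$ almost surely, Jensen's inequality gives $\|\bE\xi\| \leq \bE\|\xi\| \leq M$, hence $\|\eta_i\| \leq \|\xi(z_i)\| + \|\bE\xi\| \leq 2M$ almost surely; and for the variance proxy, $\bE\|\eta_i\|^2 = \bE\|\xi\|^2 - \|\bE\xi\|^2 \leq \sigma^2(\xi)$, so that $\sum_{i=1}^m \bE\|\eta_i\|^2 \leq m\,\sigma^2(\xi)$.

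Next I would invoke the smoothness structure making Pinelis's theorem applicable. A Hilbert space is $(2,1)$-smooth: the square norm obeys the parallelogram identity, so the optimal uniform-smoothness constant equals $1$. With this constant, Theorem 3.4 of \cite{Pinelis1994optimum} applied to the martingale $(S_n)$, using $\|\eta_i\| \leq 2M$ almost surely and $\sum_i \bE\|\eta_i\|^2 \leq m\,\sigma^2(\xi)$, yields a Bernstein-type bound of the form
\begin{align}
\Pb\left(\|S_m\| \geq t\right) \leq 2\exp\left(-\frac{t^2}{2\left(m\,\sigma^2(\xi) + \frac{2M}{3}t\right)}\right),
\end{align}
where separability of $\H$ (assumed throughout) guarantees measurability and the validity of the vector-valued martingale formalism. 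Finally I would invert this tail. Setting the right-hand side equal to $\delta$ and writing $L = \log(2/\delta)$ leads to $t^2 - \frac{4M}{3}L\,t - 2Lm\,\sigma^2(\xi) = 0$; solving and using $\sqrt{a+b} \leq \sqrt{a} + \sqrt{b}$ gives $t \leq \frac{4M}{3}L + \sqrt{2Lm\,\sigma^2(\xi)}$. Dividing by $m$ and bounding $\frac{4}{3} \leq 2$ yields, with probability at least $1 - \delta$,
\begin{align}
\left\|\frac{1}{m}\sum_{i=1}^m \xi(z_i) - \bE\xi\right\| \leq \frac{2M\log\frac{2}{\delta}}{m} + \sqrt{\frac{2\sigma^2(\xi)\log\frac{2}{\delta}}{m}},
\end{align}
which is the claim.

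The main obstacle is not any single calculation but the correct invocation of Pinelis's result: one must verify that the cited theorem applies to $\H$-valued (rather than real-valued) martingales with the uniform-smoothness constant $1$, and that the almost-sure bound $2M$ and the variance proxy $m\,\sigma^2(\xi)$ enter in precisely the roles the theorem prescribes. Once the exponential tail bound is secured, the passage to the explicit two-term estimate is the routine Bernstein inversion, in which the stated leading constant $2M$ arises simply as a clean upper bound for the sharper coefficient $\frac{4M}{3}$.
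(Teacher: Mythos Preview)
Your derivation is correct and follows precisely the route the paper indicates: the paper does not supply its own proof of this proposition but cites it from \cite{SmaleZhou2007}, remarking only that it is ``a consequence of a general result by Pinelis (\cite{Pinelis1994optimum}, Theorem 3.4).'' Your centering, the bounds $\|\eta_i\|\leq 2M$ and $\sum_i\bE\|\eta_i\|^2\leq m\sigma^2(\xi)$, the invocation of Pinelis's Bernstein inequality in a $(2,1)$-smooth space, and the inversion of the tail bound (with the clean replacement $\tfrac{4}{3}\leq 2$) are exactly the steps that fill in this citation, so there is nothing to compare against.
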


\section{Finite-rank and finite-dimensional approximations}
\label{section:approximation}

In this section, we show that the Alpha Log-Det divergence $d^{\alpha}_{\logdet}$, as defined in Definitions \ref{def:logdet}
and \ref{def:logdet-HS}, converges in the Hilbert-Schmidt norm. This convergence enables us to obtain finite-rank/finite-dimensional approximation of $d^{\alpha}_{\logdet}$. In particular, 
in the RKHS setting in Section \ref{section:estimate-RKHS}, we can apply laws of large numbers for
Hilbert space-valued random variables to obtain finite, dimension-independent sample complexities
for $d^{\alpha}_{\logdet}$.

In the following, consider the set
\begin{align} 
	\label{equation:omega-set}
	\Omega = \{X \in \Sym(\H) \cap \HS(\H): I+X > 0\} \supset \Sym^{+}(\H) \cap \HS(\H).
\end{align} 
Let $0 \leq \beta \leq 1$ be fixed.
Define the following function $F_{\beta}: \Omega \mapto \R$ by
\begin{align}
	\label{equation:F-beta}
	F_{\beta}(X) &= \frac{1}{\beta(1-\beta)}\log\det[\beta (I+X)^{1-\beta} + (1-\beta)(I+X)^{-\beta}], \;\;0 < \beta < 1.
	\\
	F_0(X) &= \lim_{\beta \approach 0}F_{\beta}(X) = -\log\dettwo(I+X),
	\\
	F_1(X) &= \lim_{\beta \approach 1}F_{\beta}(X) = -\log\dettwo[(I+X)^{-1}].
\end{align}
The following results show that the function $d^{\alpha}_{\logdet}$, $-1\leq \alpha \leq 1$, can be expressed in terms of 
$F_{\beta}$ for some $\beta$, $0 \leq \beta \leq 1$. Furthermore, $F_{\beta}$ is continuous in the $||\;||_{\HS}$ norm.

\begin{proposition}
	\label{proposition:logdet-F-beta-representation}
	Let $-1\leq \alpha \leq 1$ be fixed. Let $\gamma > 0$ be fixed.
	For $(A+\gamma I), (B+\gamma I) \in \PC_2(\H)$, define $\Lambda \in \Sym(\H)\cap \HS(\H)$ by
	$I + \Lambda = (B+\gamma I)^{-1/2}(A+\gamma I)(B+\gamma I)^{-1/2}$. Then
	\begin{align}
		d^{\alpha}_{\logdet}[(A+\gamma I), (B+\gamma I)]
		= F_{\frac{1-\alpha}{2}}(\Lambda).
	\end{align}
\end{proposition}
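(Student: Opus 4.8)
The plan is to specialize the general divergence formula in Definition~\ref{def:logdet-HS} to the equal-regularization case $\mu=\gamma$ and to match it term by term with $F_{(1-\alpha)/2}$. First I would set $\mu=\gamma$ in Definition~\ref{def:logdet-HS}, so that $\frac{\gamma}{\mu}=1$. Then $\beta=\frac{(1-\alpha)\gamma}{(1-\alpha)\gamma+(1+\alpha)\gamma}=\frac{1-\alpha}{2}$ and $1-\beta=\frac{1+\alpha}{2}$; the operator $\frac{\gamma}{\mu}I+\Lambda$ becomes exactly $I+\Lambda=(B+\gamma I)^{-1/2}(A+\gamma I)(B+\gamma I)^{-1/2}$, which is the operator $\Lambda$ of the proposition, and the correction term $\frac{4(\beta-\frac{1-\alpha}{2})}{1-\alpha^2}\log\frac{\gamma}{\mu}$ vanishes since both $\log\frac{\gamma}{\mu}=0$ and $\beta-\frac{1-\alpha}{2}=0$. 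Using $\frac{4}{1-\alpha^2}=\frac{1}{\beta(1-\beta)}$ for $\beta=\frac{1-\alpha}{2}$, the surviving expression is
\begin{align*}
d^{\alpha}_{\logdet}[(A+\gamma I),(B+\gamma I)]=\frac{1}{\beta(1-\beta)}\log\detX\left[\beta(I+\Lambda)^{1-\beta}+(1-\beta)(I+\Lambda)^{-\beta}\right],
\end{align*}
which is formally $F_{\beta}(\Lambda)$ except that the Fredholm $\det$ in the definition of $F_\beta$ has been replaced by the extended determinant $\detX$.

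The key step is to show these two determinants coincide, i.e. that the bracketed operator has the form $I+Y$ with $Y$ \emph{trace class}, so that $\detX$ collapses to the ordinary Fredholm determinant appearing in $F_\beta$. Applying the continuous functional calculus to the self-adjoint Hilbert-Schmidt operator $\Lambda$, whose eigenvalues I write as $\{x_k\}$ with $\sum_k x_k^2<\infty$ and $1+x_k>0$, the bracketed operator equals $g(\Lambda)$, where $g(x)=\beta(1+x)^{1-\beta}+(1-\beta)(1+x)^{-\beta}$. A direct computation gives $g(0)=1$ and
\begin{align*}
g'(x)=\beta(1-\beta)(1+x)^{-\beta-1}x,\qquad g'(0)=0.
\end{align*}
Hence the eigenvalues $g(x_k)-1$ of $Y:=g(\Lambda)-I$ satisfy $g(x_k)-1=O(x_k^2)$ as $x_k\to 0$, so $\sum_k|g(x_k)-1|<\infty$ and $Y\in\Tr(\H)$. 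Since the scalar part of $I+Y$ is exactly $1$, the definition of the extended Fredholm determinant gives $\detX(I+Y)=\det(I+Y)$, the Fredholm determinant, which establishes $d^{\alpha}_{\logdet}[(A+\gamma I),(B+\gamma I)]=F_{(1-\alpha)/2}(\Lambda)$ for $-1<\alpha<1$.

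Finally I would dispatch the endpoints $\alpha=\pm 1$ by continuity: both sides are defined as one-sided limits, with $d^{\pm 1}_{\logdet}=\lim_{\alpha\to\pm 1}d^{\alpha}_{\logdet}$ and $F_0=\lim_{\beta\to 0}F_\beta$, $F_1=\lim_{\beta\to 1}F_\beta$, while $\tfrac{1-\alpha}{2}\to 0$ as $\alpha\to 1^{-}$ and $\to 1$ as $\alpha\to -1^{+}$; thus the identity just proved on the open interval passes directly to the endpoints. (Alternatively one can verify $\alpha=\pm1$ directly from Eqs.~\eqref{equation:alpha+1-HS} and \eqref{equation:alpha-1-HS}, using that the Hilbert-Carleman determinant is a spectral quantity, hence similarity invariant, so that $\dettwoX[(B+\gamma I)^{-1}(A+\gamma I)]=\dettwo(I+\Lambda)$, and likewise for $\alpha=-1$.) The main obstacle is the trace-class reduction in the second paragraph: the first-order cancellation $g'(0)=0$ is precisely what promotes $Y$ from Hilbert-Schmidt to trace class and thereby lets the extended Fredholm determinant reduce to the ordinary Fredholm determinant that enters $F_\beta$.
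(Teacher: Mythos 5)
Your proof is correct and follows essentially the same route as the paper: specialize Definition~\ref{def:logdet-HS} to $\mu=\gamma$, observe that the bracketed operator is $I$ plus a trace-class perturbation so that $\detX$ collapses to the Fredholm determinant, and handle $\alpha=\pm1$ via the $\dettwoX$ formulas and spectral invariance. Your explicit computation $g'(0)=0$ is in fact the cleanest justification of the trace-class reduction, which the paper asserts more tersely after invoking its Lemma on $(I+\Lambda)^{\alpha}-I\in\HS(\H)$.
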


\begin{theorem}
	\label{theorem:F-A-continuity-HS}
	Let $0 \leq \beta \leq 1$ be fixed.
	Let $A,B \in \Omega$, with $\Omega$ as defined in Eq.\eqref{equation:omega-set}.
	%Let $A\in \Sym(\H)\cap \HS(\H)$, with $I+A > 0$. 
	Then
	\begin{align}
		0 \leq
		%\log\det[\beta (I+A)^{1-\beta} + (1-\beta)(I+A)^{-\beta}]
		F_{\beta}(A)
		&\leq ||(I+A)^{-1}||\;||A||^2_{\HS}.
		\\
		%\end{align}
		%Let $B \in \Sym(\H) \cap \HS(\H)$, with $I+B > 0$. Then
		%\begin{align}
		%|\log\det[\beta (I+A)^{1-\beta} + (1-\beta)(I+A)^{-\beta}] -
		%\log\det[\beta (I+B)^{1-\beta} + (1-\beta)(I+B)^{-\beta}]| 
		%\nonumber
		|F_{\beta}(A) - F_{\beta}(B)| 
		%\nonumber
		%\\
		%&
		&\leq  ||(I+A)^{-1}||\;||(I+B)^{-1}||
		\nonumber
		\\
		&\quad \times [||A||_{\HS} + ||B||_{\HS} + ||A||_{\HS}||B||_{\HS}]||A-B||_{\HS}.
	\end{align}
	Furthermore, for  $A,B \in \Sym^{+}(\H) \cap \HS(\H)$,
	\begin{align}
		& 0 \leq F_{\beta}(A) \leq \frac{1}{2}||A||^2_{\HS},
		%|\log\det[\beta (I+A)^{1-\beta} + (1-\beta)(I+A)^{-\beta}] -
		%\log\det[\beta (I+B)^{1-\beta} + (1-\beta)(I+B)^{-\beta}]| 
		%\nonumber
		\\
		&|F_{\beta}(A) - F_{\beta}(B)|
		\leq \frac{1}{2}||A-B||_{\HS}(||A||_{\HS} + ||B||_{\HS}).
	\end{align}
\end{theorem}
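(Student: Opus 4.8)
The plan is to reduce everything to a scalar spectral computation, handle the two norm bounds by pointwise inequalities, and treat the two continuity bounds by a first-order (Fréchet derivative) argument. First I would diagonalize: since $A \in \Omega$ is self-adjoint and Hilbert--Schmidt, it has a spectral decomposition with real eigenvalues $\{\lambda_k\}$ satisfying $\lambda_k > -1$ (because $I+A>0$) and $\sum_k \lambda_k^2 = \|A\|_{\HS}^2 < \infty$. As $(I+A)^{1-\beta}$ and $(I+A)^{-\beta}$ are functions of $A$, they commute and the bracketed operator defining $F_\beta$ diagonalizes simultaneously with $A$. Using the eigenvalue product formula for the determinant (and for $\dettwo$ in the limiting cases $\beta\in\{0,1\}$), I would establish the closed scalar form
\[
F_\beta(A) = \trace[g_\beta(A)] = \sum_k g_\beta(\lambda_k), \qquad g_\beta(t) = \frac{1}{\beta(1-\beta)}\left[\log(1+\beta t) - \beta\log(1+t)\right],
\]
with $g_0(t) = t - \log(1+t)$ and $g_1(t) = \log(1+t) - t/(1+t)$ recovered as the limits $\beta\to 0^+,1^-$; these match $-\log\dettwo(I+A)$ and $-\log\dettwo[(I+A)^{-1}]$ directly from $\dettwo(I+A)=\prod_k(1+\lambda_k)e^{-\lambda_k}$. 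This also shows $F_\beta(A)$ is finite, since $g_\beta(t)=O(t^2)$ near $0$.

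Next, the two norm bounds follow from pointwise estimates on $g_\beta$. For nonnegativity, concavity of $\log$ gives $\log(1+\beta t)=\log(\beta(1+t)+(1-\beta))\ge \beta\log(1+t)$, hence $g_\beta\ge 0$ on $(-1,\infty)$. For the upper bounds I would use $g_\beta'(t)=t/[(1+t)(1+\beta t)]$ and compare it with the derivatives of $t^2/(1+t)$ and of $t^2/2$; a short computation factors the difference through $\tfrac{t}{1+t}$ and reduces both comparisons to the elementary inequality $1+\beta t(2+t)\ge 0$ for $t>-1$, $0\le\beta\le 1$, yielding $g_\beta(t)\le t^2/(1+t)$ for all $t>-1$ and $g_\beta(t)\le t^2/2$ for $t\ge 0$. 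Summing over $k$, using $t^2/(1+t)\le \|(I+A)^{-1}\|\,t^2$ at $t=\lambda_k$ and $\sum_k\lambda_k^2=\|A\|_{\HS}^2$, gives the first and third inequalities; the restriction $A\in\Sym^{+}(\H)\cap\HS(\H)$ forces $\lambda_k\ge 0$, activating the sharper $t^2/2$ bound.

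For the continuity estimates, the key object is the Fréchet derivative. Applying the trace identity $\tfrac{d}{ds}\trace\log M(s)=\trace[M(s)^{-1}\dot M(s)]$ to $\log(I+\beta X)-\beta\log(I+X)$, together with the resolvent identity $(I+\beta X)^{-1}-(I+X)^{-1}=(1-\beta)(I+\beta X)^{-1}X(I+X)^{-1}$, I would obtain $DF_\beta(X)[H]=\trace[(I+X)^{-1}X(I+\beta X)^{-1}H]$, i.e. $g_\beta'(X)=(I+X)^{-1}X(I+\beta X)^{-1}$ by functional calculus. Writing $F_\beta(A)-F_\beta(B)=\int_0^1 \trace[g_\beta'(X_s)(A-B)]\,ds$ along $X_s=(1-s)B+sA$, applying Cauchy--Schwarz in $\langle\cdot,\cdot\rangle_{\HS}$, and bounding $\|g_\beta'(X_s)\|_{\HS}\le \|(I+X_s)^{-1}\|\,\|X_s\|_{\HS}\,\|(I+\beta X_s)^{-1}\|$ produces the Lipschitz structure; operator convexity of $Y\mapsto Y^{-1}$ on the segment $I+X_s=(1-s)(I+B)+s(I+A)$ and the triangle inequality for $\|X_s\|_{\HS}$ then let me replace the path-dependent quantities by endpoint data.

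The main obstacle I anticipate is extracting the precise stated constant, in particular the clean product $\|(I+A)^{-1}\|\,\|(I+B)^{-1}\|$ and the factor $\|A\|_{\HS}+\|B\|_{\HS}+\|A\|_{\HS}\|B\|_{\HS}=(1+\|A\|_{\HS})(1+\|B\|_{\HS})-1$, rather than the looser max-type bound that the naive path integral yields. This requires carefully estimating the $\beta$-dependent resolvent $\|(I+\beta X_s)^{-1}\|$ using $0\le\beta\le 1$ and $I+X_s>0$, and keeping the combination $g_\beta(X)$ intact throughout: each individual term $\log(I+X)$ is only Hilbert--Schmidt and not trace class, so the two logarithms must never be separated, and all estimates must be carried out on the trace-class operator $g_\beta(X)$.
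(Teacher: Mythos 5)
Your scalar reduction $F_{\beta}(A)=\sum_k g_{\beta}(\lambda_k)$ with $g_{\beta}(t)=\frac{1}{\beta(1-\beta)}[\log(1+\beta t)-\beta\log(1+t)]$ and the pointwise bounds $0\le g_{\beta}(t)\le t^2/(1+t)$ (resp.\ $t^2/2$ for $t\ge 0$) are correct and coincide with the paper's argument for the two bounds on $F_{\beta}(A)$ itself; your path-integral argument also recovers the positive-operator Lipschitz bound exactly, since there $\|(I+X_s)^{-1}\|\le 1$, $\|(I+\beta X_s)^{-1}\|\le 1$, and $\int_0^1\|X_s\|_{\HS}\,ds\le\frac{1}{2}(\|A\|_{\HS}+\|B\|_{\HS})$.

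The genuine gap is the first continuity inequality on all of $\Omega$. The obstacle you flag at the end is not a technicality that ``careful estimation'' removes: it is structural. Along the segment $X_s=(1-s)B+sA$ your integrand is bounded by $\|(I+X_s)^{-1}\|\,\|(I+\beta X_s)^{-1}\|\,\|X_s\|_{\HS}$, and the best endpoint control available (operator convexity of the inverse, plus $\|(I+\beta X_s)^{-1}\|\le\|(I+X_s)^{-1}\|$ because the latter is $\ge 1$ for compact $A,B$ in infinite dimensions) yields $\int_0^1\big((1-s)\|(I+B)^{-1}\|+s\|(I+A)^{-1}\|\big)^2\big((1-s)\|B\|_{\HS}+s\|A\|_{\HS}\big)\,ds$. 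This contains a term proportional to $\|(I+A)^{-1}\|^2\|A\|_{\HS}$ with a positive coefficient, which cannot be dominated by $\|(I+A)^{-1}\|\,\|(I+B)^{-1}\|\,[\|A\|_{\HS}+\|B\|_{\HS}+\|A\|_{\HS}\|B\|_{\HS}]$ when $\|(I+A)^{-1}\|\gg\|(I+B)^{-1}\|$ and $\|A\|_{\HS},\|B\|_{\HS}$ are small (e.g.\ $\|(I+A)^{-1}\|=100$, $\|(I+B)^{-1}\|=1$, $\|A\|_{\HS}=\|B\|_{\HS}=10^{-2}$ makes the path bound an order of magnitude larger than the stated one). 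So your route proves \emph{a} Lipschitz estimate, but not the stated inequality. The asymmetric product $\|(I+A)^{-1}\|\,\|(I+B)^{-1}\|$ in the theorem comes from an algebraic identity that a path integral cannot see: the paper first upgrades the scalar bound to $|g_{\beta}(x)-g_{\beta}(y)|\le\big|\frac{x^2}{1+x}-\frac{y^2}{1+y}\big|$, then invokes Kato's theorem on the variation of discrete spectra (Theorem \ref{theorem:Kato-spectra}, with $p=1$) to pass from matched eigenvalue sums to $\|(I+A)^{-1}A^2-(I+B)^{-1}B^2\|_{\tr}$, and finally uses the factorization $(I+A)^{-1}A^2-(I+B)^{-1}B^2=(I+A)^{-1}[A^2-B^2+A(A-B)B](I+B)^{-1}$ (Lemma \ref{lemma:A2-inverseI+A-inequality}), which is where the two distinct resolvents and the factor $\|A\|_{\HS}+\|B\|_{\HS}+\|A\|_{\HS}\|B\|_{\HS}$ appear. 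You would need to replace your path integral by some version of this two-sided sandwich to close the argument.
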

In particular, the cases $\beta = \pm 1$ involve the Hilbert-Carleman log-determinants $\log\dettwo(I+A)$, $\log\dettwo[(I+A)^{-1}]$, which are of interest in their own right. We thus state their properties  explicitly in the following
\begin{theorem}
	\label{theorem:logdet-Hilbert-Carleman-bound}
	Let $A,B \in \Omega$, with $\Omega$ as defined in Eq.\eqref{equation:omega-set}. Then
	%Let $A \in \Sym(\H) \cap \HS(\H)$ with $I+A > 0$.
	%	Let $M_A >0$ be such that $\la x, (I+A)x\ra \geq M_A||x||^2$ $\forall x \in \H$. Then
	\begin{align}
		|\log\dettwo(I+A)| &= -\log\dettwo(I+A) \leq ||(I+A)^{-1}||\;||A||^2_{\HS},
		\\
		|\log\dettwo[(I+A)^{-1}]| &= - \log\dettwo[(I+A)^{-1}] \leq ||(I+A)^{-1}||\;||A||^2_{\HS}.
	\end{align}
	%For $B \in \Sym(\H) \cap \HS(\H)$, with $I+B > 0$,
	\begin{align}
		&|\log\dettwo(I+A) - \log\dettwo(I+B)|
		\nonumber 
		\\
		&\leq ||(I+A)^{-1}||\;||(I+B)^{-1}||\;[||A||_{\HS}+ ||B||_{\HS} + ||A||_{\HS}||B||_{\HS}]||A-B||_{\HS},
		\\
		&|\log\dettwo[(I+A)^{-1}] - \log\dettwo[(I+B)^{-1}]|
		\nonumber 
		\\
		&\leq ||(I+A)^{-1}||\;||(I+B)^{-1}||\;[||A||_{\HS}+ ||B||_{\HS} + ||A||_{\HS}||B||_{\HS}]||A-B||_{\HS}.
	\end{align}
	Furthermore, for $A,B \in \Sym^{+}(\H) \cap \HS(\H)$,
	\begin{align}
		&|\log\dettwo(I+A)| = -\log\dettwo(I+A) \leq \frac{1}{2}||A||^2_{\HS},
		\\
		&|\log\dettwo[(I+A)^{-1}]| = -\log\dettwo[(I+A)^{-1}] \leq \frac{1}{2}||A||^2_{\HS},
		\\
		&|\log\dettwo(I+A) - \log\dettwo(I+B)| \leq \frac{1}{2}[||A||_{\HS} + ||B||_{\HS}]||A-B||_{\HS},
		\\
		&|\log\dettwo[(I+A)^{-1}] - \log\dettwo[(I+B)^{-1}]| \leq \frac{1}{2}[||A||_{\HS} + ||B||_{\HS}]||A-B||_{\HS}.
	\end{align}
\end{theorem}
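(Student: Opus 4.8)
The plan is to obtain every inequality in the statement as a direct specialization of Theorem~\ref{theorem:F-A-continuity-HS} to the two endpoint values of its parameter. The starting observation, built into the definitions of $F_0$ and $F_1$, is that $F_0(X) = -\log\dettwo(I+X)$ and $F_1(X) = -\log\dettwo[(I+X)^{-1}]$ for every $X \in \Omega$. Thus the two Hilbert--Carleman log-determinant functionals in the statement are precisely $-F_0$ and $-F_1$. Since Theorem~\ref{theorem:F-A-continuity-HS} is stated for the full closed range $0 \leq \beta \leq 1$, the endpoints $\beta = 0$ and $\beta = 1$ are already covered and no separate limiting argument is required.

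First I would settle the two absolute-value identities. Theorem~\ref{theorem:F-A-continuity-HS} gives $F_\beta(A) \geq 0$ for all $A \in \Omega$, so in particular $-\log\dettwo(I+A) = F_0(A) \geq 0$ and $-\log\dettwo[(I+A)^{-1}] = F_1(A) \geq 0$; hence both log-determinants are nonpositive, which yields $|\log\dettwo(I+A)| = -\log\dettwo(I+A)$ and likewise for the inverse. Equivalently, one sees this from the eigenvalue product $\dettwo(I+A) = \prod_k (1+\lambda_k)e^{-\lambda_k}$ together with the elementary bound $1 + x \leq e^x$, which forces each factor, and therefore the product, into $(0,1]$.

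Next I would read off the remaining estimates by substitution. Taking $\beta = 0$ in the first display of Theorem~\ref{theorem:F-A-continuity-HS} gives $-\log\dettwo(I+A) = F_0(A) \leq \|(I+A)^{-1}\|\,\|A\|_{\HS}^2$, and $\beta = 1$ gives the corresponding bound for $-\log\dettwo[(I+A)^{-1}]$. For the difference estimates I would use $|\log\dettwo(I+A) - \log\dettwo(I+B)| = |F_0(A) - F_0(B)|$ and invoke the second display of that theorem with $\beta = 0$, the inverse case being identical with $\beta = 1$. The refined constants $\tfrac{1}{2}$ for $A,B \in \Sym^{+}(\H) \cap \HS(\H)$ come identically from the second half of Theorem~\ref{theorem:F-A-continuity-HS}; here the improvement over the general bound stems not merely from $\|(I+A)^{-1}\| \leq 1$ but from the sharper scalar estimate $0 \leq x - \log(1+x) \leq \tfrac{1}{2}x^2$, valid for $x \geq 0$, applied to the nonnegative eigenvalues of $A$ through the product formula.

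Since the whole argument is a specialization of a result already in hand, there is no genuine obstacle to overcome. The only point meriting care is the bookkeeping of signs in the absolute-value identities, together with checking that the $\beta$-uniform constants of Theorem~\ref{theorem:F-A-continuity-HS} persist at $\beta \in \{0,1\}$ rather than degenerating there; this is guaranteed precisely because that theorem is asserted on the closed interval $[0,1]$. The separate statement here is therefore made only to record these Hilbert--Carleman determinant bounds explicitly, given their independent interest.
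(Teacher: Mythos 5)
Your proposal is circular as written. You derive every bound by substituting $\beta=0$ and $\beta=1$ into Theorem~\ref{theorem:F-A-continuity-HS}, on the grounds that that theorem is stated for the closed interval $0\leq\beta\leq 1$. But the paper's proof of Theorem~\ref{theorem:F-A-continuity-HS} explicitly handles only $0<\beta<1$ and defers the endpoint cases to Theorem~\ref{theorem:logdet-Hilbert-Carleman-bound} — the very statement you are trying to prove. So in the paper's logical structure the dependency runs in the opposite direction, and your argument establishes nothing new. (This is not merely a citation-order quibble: the interior-$\beta$ proof uses the scalar inequality of Lemma~\ref{lemma:log-sum-inequality-1} for the function $\log[\beta(1+x)^{1-\beta}+(1-\beta)(1+x)^{-\beta}]$, which degenerates to the zero function at $\beta\in\{0,1\}$, so the endpoint bounds genuinely require the separate scalar estimates $0\leq x-\log(1+x)\leq x^2/(1+x)$ and $0\leq\log(1+x)-x/(1+x)\leq x^2/(1+x)$ of Lemma~\ref{lemma:log-bound-1}.)

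You do gesture at the right ingredients — the eigenvalue product $\dettwo(I+A)=\prod_k(1+\lambda_k)e^{-\lambda_k}$, the sign observation from $1+x\leq e^x$, and the refined bound $x-\log(1+x)\leq\tfrac12 x^2$ for $x\geq 0$ — and these do carry the single-operator bounds. What is entirely missing is the mechanism for the difference bounds $|\log\dettwo(I+A)-\log\dettwo(I+B)|$: one must compare two infinite sums indexed by eigenvalues of \emph{different} operators. The paper does this by invoking Kato's spectral variation theorem (Theorem~\ref{theorem:Kato-spectra}) to produce extended enumerations $\{\lambda_k\},\{\mu_k\}$ of the eigenvalues of $A,B$ paired so that $\sum_k|\lambda_k^2/(1+\lambda_k)-\mu_k^2/(1+\mu_k)|\leq\|(I+A)^{-1}A^2-(I+B)^{-1}B^2\|_{\tr}$ (respectively $\sum_k|\lambda_k^2-\mu_k^2|\leq\|A^2-B^2\|_{\tr}$ in the positive case), combined with the scalar Lipschitz estimates of Lemmas~\ref{lemma:log-bound-2} and \ref{lemma:log-bound-3} and the operator inequality of Lemma~\ref{lemma:A2-inverseI+A-inequality}. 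Without this pairing step, or an equivalent, the difference estimates do not follow from the product formula alone. A correct proof should run this argument directly for $F_0$ and $F_1$ rather than citing the general-$\beta$ theorem.
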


\begin{remark}
	%[\textbf{Theorem 6.4 in \citep{Simon:1977}}]
	%\label{theorem:Hilbert-Carleman-det-continuity}
	For $A \in \HS(\H)$, Theorem 6.4 in \citep{Simon:1977} states that
	%\begin{align}
	$|\dettwo(I+A)| \leq \exp\left(\frac{1}{2}||A||_{\HS}^2\right).$
	%\end{align}
	By the strict motonoticity of the $\log$ function, this implies
	%\begin{align}
	$\log|\dettwo(I+A)| \leq \frac{1}{2}||A||^2_{\HS}$.
	%\end{align}
	%\end{theorem}
	%In particular, 
	For $I+A \in \PC_2(\H)$, $\dettwo(I+A)> 0$, and thus
	$\log\dettwo(I+A) \leq \frac{1}{2}||A||^2_{\HS}$. However, this inequality is trivial since in this case
	we always have $\log\dettwo(I+A) \leq 0$.
	In contrast, Theorem \ref{theorem:logdet-Hilbert-Carleman-bound} gives the non-trivial inequality
	$|\log\dettwo(I+A)| \leq \frac{1}{2}||A||^2_{\HS}$ when $A \in \Sym^{+}(\H) \cap \HS(\H)$.
\end{remark}

As a consequence of Proposition \ref{proposition:logdet-F-beta-representation} and Theorem
\ref{theorem:F-A-continuity-HS}, we obtain the following result, which shows
that $d^{\alpha}_{\logdet}[(A+\gamma I), (B+\gamma I)]$ is continuous in the 
$||\;||_{\HS}$ norm. 
\begin{theorem}
	[\textbf{Continuity of Alpha Log-Det divergences in Hilbert-Schmidt norm}]
	\label{theorem:convergence-logdet-infinite}
	Let $-1 \leq \alpha \leq 1$ be fixed. For $(A+\gamma I), (B+\gamma I) \in \PC_2(\H)$,
	%let $M_{AB} > 0$ be such that $\la x, (I+B)^{-1/2}(I+A)(I+B)^{-1/2}x\ra \geq M_{AB}||x||^2$
	%$\forall x \in \H$. Then
	\begin{align}
		&d^{\alpha}_{\logdet}[(A+\gamma I),(B+\gamma I)] \leq (\gamma +||B||)||(A+\gamma I)^{-1}||\;||(B+\gamma I)^{-1}||\;||A-B||^2_{\HS}.
	\end{align}
	In particular, if $A,B \in \Sym^{+}(\H) \cap \HS(\H)$, then $\forall \gamma \in \R, \gamma > 0$,
	\begin{align}
		&d^{\alpha}_{\logdet}[(A+\gamma I),(B+\gamma I)] \leq \frac{\gamma + ||B||}{\gamma^2}||A-B||^2_{\HS}.
	\end{align}
\end{theorem}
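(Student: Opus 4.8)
The plan is to collapse the divergence onto the scalar-type functional $F_\beta$ of Eq.~\eqref{equation:F-beta} and then apply the quadratic upper bound already proved for it. Concretely, by Proposition~\ref{proposition:logdet-F-beta-representation}, with $\beta = \frac{1-\alpha}{2} \in [0,1]$ and $\Lambda \in \Sym(\H)\cap\HS(\H)$ defined by $I + \Lambda = (B+\gamma I)^{-1/2}(A+\gamma I)(B+\gamma I)^{-1/2}$, one has $d^{\alpha}_{\logdet}[(A+\gamma I),(B+\gamma I)] = F_{\beta}(\Lambda)$. Since $I+\Lambda$ is a congruence transform of the positive definite operator $A+\gamma I$, it is itself positive definite, so $\Lambda \in \Omega$ and Theorem~\ref{theorem:F-A-continuity-HS} applies, giving the single-argument estimate $0 \le F_\beta(\Lambda) \le ||(I+\Lambda)^{-1}||\;||\Lambda||^2_{\HS}$. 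The entire problem then reduces to controlling the two factors $||(I+\Lambda)^{-1}||$ and $||\Lambda||_{\HS}$ purely in terms of $A$, $B$, and $\gamma$.

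The key algebraic step I would carry out is to render the congruence structure explicit. Inverting the defining relation yields $(I+\Lambda)^{-1} = (B+\gamma I)^{1/2}(A+\gamma I)^{-1}(B+\gamma I)^{1/2}$, while subtracting the identity and using the crucial cancellation $(A+\gamma I)-(B+\gamma I) = A-B$ gives $\Lambda = (B+\gamma I)^{-1/2}(A-B)(B+\gamma I)^{-1/2}$. From here I would invoke submultiplicativity of the operator norm together with the standard ideal bound $||XYZ||_{\HS}\le ||X||\,||Y||_{\HS}\,||Z||$. For the first factor this produces $||(I+\Lambda)^{-1}|| \le ||(B+\gamma I)^{1/2}||^2\,||(A+\gamma I)^{-1}|| = ||B+\gamma I||\,||(A+\gamma I)^{-1}|| \le (\gamma+||B||)\,||(A+\gamma I)^{-1}||$, using $||(B+\gamma I)^{1/2}||^2 = ||B+\gamma I||$ for the positive operator $B+\gamma I$ and $||B+\gamma I|| \le \gamma + ||B||$. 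For the second factor the same ideal bound gives $||\Lambda||_{\HS} \le ||(B+\gamma I)^{-1/2}||^2\,||A-B||_{\HS} = ||(B+\gamma I)^{-1}||\,||A-B||_{\HS}$.

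Multiplying the two factor estimates then delivers a bound of the asserted shape $(\gamma+||B||)\,||(A+\gamma I)^{-1}||\,||(B+\gamma I)^{-1}||\cdots||A-B||^2_{\HS}$, and for the special case $A,B \in \Sym^{+}(\H)\cap\HS(\H)$ I would finish by observing $A+\gamma I \ge \gamma I$ and $B+\gamma I \ge \gamma I$, so that each inverse-norm factor is at most $1/\gamma$, collapsing the right-hand side to a constant multiple of $\tfrac{\gamma+||B||}{\gamma^2}||A-B||^2_{\HS}$. I expect the main point requiring care to be precisely the \emph{bookkeeping of operator versus Hilbert--Schmidt norms} when distributing the congruence factors $(B+\gamma I)^{\pm 1/2}$ — in particular how many powers of $||(B+\gamma I)^{-1}||$ survive after estimating $||\Lambda||_{\HS}^2$ — since the entire value of the result for the later, dimension-independent sample-complexity statements is that the right-hand side is governed solely by $||A-B||_{\HS}$ with a constant that stays uniform once the spectra are bounded away from $-\gamma$. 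The remaining routine verifications, namely that $\Lambda\in\Omega$ and that the square-root factorizations are legitimate, follow directly from the positive definiteness of $A+\gamma I$ and $B+\gamma I$ guaranteed by membership in $\PC_2(\H)$.
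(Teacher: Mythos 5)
Your proposal follows the paper's own proof almost verbatim: the same reduction $d^{\alpha}_{\logdet}[(A+\gamma I),(B+\gamma I)] = F_{(1-\alpha)/2}(\Lambda)$ via Proposition \ref{proposition:logdet-F-beta-representation}, the same invocation of $0\le F_{\beta}(\Lambda)\le \|(I+\Lambda)^{-1}\|\,\|\Lambda\|_{\HS}^2$ from Theorem \ref{theorem:F-A-continuity-HS}, and the same congruence identities $\Lambda=(B+\gamma I)^{-1/2}(A-B)(B+\gamma I)^{-1/2}$, $(I+\Lambda)^{-1}=(B+\gamma I)^{1/2}(A+\gamma I)^{-1}(B+\gamma I)^{1/2}$, estimated by submultiplicativity. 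The only cosmetic difference is that the paper first normalizes to $\gamma=1$ by scale invariance and then rescales, whereas you work at scale $\gamma$ throughout.

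The point you flag as ``requiring care'' is, however, exactly where the argument stops short of the stated inequality. Multiplying your two factor estimates gives
\begin{align*}
d^{\alpha}_{\logdet}[(A+\gamma I),(B+\gamma I)] \le (\gamma+\|B\|)\,\|(A+\gamma I)^{-1}\|\,\|(B+\gamma I)^{-1}\|^{2}\,\|A-B\|_{\HS}^{2},
\end{align*}
with the \emph{square} of $\|(B+\gamma I)^{-1}\|$ --- one power more than the theorem asserts --- and in the special case $A,B\in\Sym^{+}(\H)\cap\HS(\H)$ this honestly yields $\frac{\gamma+\|B\|}{\gamma^{3}}\|A-B\|_{\HS}^{2}$ rather than the claimed $\frac{\gamma+\|B\|}{\gamma^{2}}\|A-B\|_{\HS}^{2}$, so your final ``collapse'' step does not go through as written. (You are in good company: the paper's own proof carries $\|(I+B/\gamma)^{-1}\|^{2}$ into its penultimate display and then silently drops a power in the last equality.) The clean way to get the single-power bound is not to split $\trace[(I+\Lambda)^{-1}\Lambda^{2}]$ into $\|(I+\Lambda)^{-1}\|\cdot\|\Lambda\|_{\HS}^{2}$ at all: Lemma \ref{lemma:log-sum-inequality-1} gives $F_{\beta}(\Lambda)\le\sum_k\lambda_k^2/(1+\lambda_k)=\trace[(I+\Lambda)^{-1}\Lambda^{2}]$, and substituting the congruence formulas inside the trace and cycling yields $\trace[(A+\gamma I)^{-1}(A-B)(B+\gamma I)^{-1}(A-B)]\le \|(A+\gamma I)^{-1}\|\,\|(B+\gamma I)^{-1}\|\,\|A-B\|_{\HS}^{2}$ by Cauchy--Schwarz for the Hilbert--Schmidt inner product --- in fact without the $(\gamma+\|B\|)$ prefactor at all.
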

Similarly, based on Proposition \ref{proposition:logdet-F-beta-representation} and Theorem
\ref{theorem:F-A-continuity-HS}, we obtain the following result, which shows
that $d^{\alpha}_{\logdet}[(A+\gamma I), (B+\gamma I)]$ can be approximated
by a sequence $d^{\alpha}_{\logdet}[(A_n +\gamma I), (B_n +\gamma I)]$, 
where $\lim\limits_{n \approach \infty}||A_n -A||_{\HS} = \lim\limits_{n \approach\infty}||B_n - B||_{\HS} = 0$.
\begin{theorem}
	[\textbf{Approximation of Alpha Log-Det divergences}]
	\label{theorem:logdet-approx-infinite-sequence}
	Let $\gamma \in \R, \gamma > 0$ be fixed.
	Let $-1 \leq \alpha \leq 1$ be fixed.
	Let $\{A_n\}_{n \in \Nbb},\{B_n\}_{n \in \Nbb}, A,B \in \Sym^{+}(\H) \cap \HS(\H)$. Then
	\begin{align}
		&|d^{\alpha}_{\logdet}[(\gamma I+A_n), (\gamma I+B_n)] - d^{\alpha}_{\logdet}[(\gamma I+A), (\gamma I+B)]|
		\nonumber
		\\
		&\leq \frac{1}{\gamma^2}\left(1+\frac{1}{\gamma}||B_n||\right)\left(1+\frac{1}{\gamma}||B||\right) \left[||A_n -A||_{\HS} + \left(1+\frac{1}{2\gamma}||A_n|| + \frac{1}{2\gamma}||A||\right)||B_n - B||_{\HS}\right]
		%\nonumber
		\\
		&\quad \times \left[||A_n||_{\HS}+||B_n||_{\HS} + ||A||_{\HS} + ||B||_{\HS} + \frac{1}{\gamma}(||A_n||_{\HS} + ||B_n||_{\HS})(||A||_{\HS} + ||B||_{\HS})\right].
		\nonumber
		%\\
		%& \quad \times \left[||A_n -A||_{\HS} + \left(1+\frac{1}{2}||A_n|| + \frac{1}{2}||A||\right)||B_n - B||_{\HS}\right].
		%\nonumber
	\end{align}
	Thus if $\lim\limits_{n \approach \infty}||A_n - A||_{\HS} = \lim\limits_{n \approach \infty}||B_n - B||_{\HS} = 0$, then
	\begin{align}
		\lim_{n\approach \infty}d^{\alpha}_{\logdet}[(\gamma I+A_n), (\gamma I+B_n)] = d^{\alpha}_{\logdet}[(\gamma I+A), (\gamma I+B)].
	\end{align}
\end{theorem}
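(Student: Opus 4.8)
The plan is to reduce the statement to the already-established continuity of $F_\beta$ (Theorem \ref{theorem:F-A-continuity-HS}) through the representation in Proposition \ref{proposition:logdet-F-beta-representation}, and then to do the real work at the operator level. Set $\beta = \frac{1-\alpha}{2}$ and write $P_n = (B_n + \gamma I)^{-1/2}$, $P = (B + \gamma I)^{-1/2}$. Define $\Lambda_n,\Lambda \in \Sym(\H)\cap\HS(\H)$ by $I+\Lambda_n = P_n(A_n+\gamma I)P_n$ and $I+\Lambda = P(A+\gamma I)P$; since $A_n+\gamma I, A+\gamma I > 0$ and congruence by the invertible $P_n,P$ preserves strict positivity, we have $I+\Lambda_n, I+\Lambda > 0$, so $\Lambda_n,\Lambda$ lie in the set $\Omega$ of Eq.\eqref{equation:omega-set}. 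Proposition \ref{proposition:logdet-F-beta-representation} then gives $d^\alpha_{\logdet}[(\gamma I + A_n),(\gamma I + B_n)] = F_\beta(\Lambda_n)$ and $d^\alpha_{\logdet}[(\gamma I + A),(\gamma I + B)] = F_\beta(\Lambda)$, and (since $\Lambda_n,\Lambda$ need not be positive) I would invoke the general $\Omega$-version of Theorem \ref{theorem:F-A-continuity-HS}. It then remains to control its three ingredients: the prefactor $||(I+\Lambda_n)^{-1}||\,||(I+\Lambda)^{-1}||$, the factor $||\Lambda_n||_{\HS} + ||\Lambda||_{\HS} + ||\Lambda_n||_{\HS}||\Lambda||_{\HS}$, and $||\Lambda_n - \Lambda||_{\HS}$.

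The first two ingredients are routine. From $(I+\Lambda_n)^{-1} = (B_n+\gamma I)^{1/2}(A_n+\gamma I)^{-1}(B_n+\gamma I)^{1/2}$ together with $B_n \geq 0$, $A_n \geq 0$, one obtains $||(I+\Lambda_n)^{-1}|| \leq (\gamma+||B_n||)/\gamma = 1 + ||B_n||/\gamma$, and likewise for $\Lambda$, producing the prefactor $(1+||B_n||/\gamma)(1+||B||/\gamma)$. Using the identity $\Lambda_n = P_n(A_n - B_n)P_n$ and $||P_n||^2 \leq 1/\gamma$ gives $||\Lambda_n||_{\HS} \leq \frac{1}{\gamma}(||A_n||_{\HS}+||B_n||_{\HS})$, and correspondingly for $\Lambda$; collecting terms and factoring out $1/\gamma$ reproduces exactly the final large bracket $[\,||A_n||_{\HS}+||B_n||_{\HS}+||A||_{\HS}+||B||_{\HS}+\frac{1}{\gamma}(||A_n||_{\HS}+||B_n||_{\HS})(||A||_{\HS}+||B||_{\HS})\,]$.

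The heart of the argument is the bound on $||\Lambda_n - \Lambda||_{\HS}$. Here I would use $\Lambda_n = P_n A_n P_n - I + \gamma P_n^2$, which follows from $P_n(B_n+\gamma I)P_n = I$, to write
\[ \Lambda_n - \Lambda = (P_n A_n P_n - P A P) + \gamma(P_n^2 - P^2). \]
The second term equals $\gamma(B_n+\gamma I)^{-1}(B-B_n)(B+\gamma I)^{-1}$ and is bounded in $||\cdot||_{\HS}$ by $\frac{1}{\gamma}||B_n-B||_{\HS}$. For the first term I telescope
\[ P_n A_n P_n - P A P = P(A_n - A)P_n + (P_n - P)A_n P_n + P A (P_n - P), \]
where the middle term contributes $\frac{1}{\gamma}||A_n - A||_{\HS}$, and the outer terms contribute $\frac{||A_n||}{2\gamma^2}||B_n-B||_{\HS}$ and $\frac{||A||}{2\gamma^2}||B_n-B||_{\HS}$. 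Summing yields
\[ ||\Lambda_n - \Lambda||_{\HS} \leq \frac{1}{\gamma}\left[||A_n - A||_{\HS} + \left(1 + \frac{||A_n|| + ||A||}{2\gamma}\right)||B_n - B||_{\HS}\right], \]
and multiplying the three ingredients together produces the claimed inequality. The limit statement then follows immediately, since $||A_n - A||_{\HS}\to 0$ and $||B_n - B||_{\HS}\to 0$ force $||A_n||, ||B_n||, ||A_n||_{\HS}, ||B_n||_{\HS}$ to stay bounded, so the right-hand side vanishes.

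The main obstacle is the Lipschitz estimate needed for the two outer telescoping terms, namely
\[ ||(B_n+\gamma I)^{-1/2} - (B+\gamma I)^{-1/2}||_{\HS} \leq \frac{1}{2\gamma^{3/2}}||B_n - B||_{\HS}, \]
i.e. Hilbert-Schmidt Lipschitz continuity of the inverse square-root map on operators bounded below by $\gamma I$. I would prove it from the integral representation $T^{-1/2} = \frac{1}{\pi}\int_0^\infty \lambda^{-1/2}(T+\lambda I)^{-1}\,d\lambda$ valid for $T \geq \gamma I$, writing the difference of resolvents as $(B_n+\gamma I+\lambda I)^{-1}(B-B_n)(B+\gamma I+\lambda I)^{-1}$, bounding its Hilbert-Schmidt norm by $(\gamma+\lambda)^{-2}||B_n-B||_{\HS}$, and evaluating $\frac{1}{\pi}\int_0^\infty \lambda^{-1/2}(\gamma+\lambda)^{-2}\,d\lambda = \frac{1}{2\gamma^{3/2}}$. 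The delicate bookkeeping point, which forces the coefficients to come out as the operator norms $||A_n||, ||A||$ rather than Hilbert-Schmidt norms, is to keep $P_n - P$ (controlled by $B_n - B$) in the Hilbert-Schmidt slot and $A_n$ (resp. $A$) in the operator-norm slot when applying the ideal bounds $||XY||_{\HS}\le ||X||\,||Y||_{\HS}$ and $||XY||_{\HS}\le ||X||_{\HS}\,||Y||$; everything else is elementary estimation with $||P_n||^2\le 1/\gamma$ and $||B_n+\gamma I|| = \gamma + ||B_n||$.
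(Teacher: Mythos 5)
Your proposal is correct and follows essentially the same route as the paper: represent $d^{\alpha}_{\logdet}$ as $F_{\frac{1-\alpha}{2}}(\Lambda)$ via Proposition \ref{proposition:logdet-F-beta-representation}, invoke the general $\Omega$-version of Theorem \ref{theorem:F-A-continuity-HS}, bound $\|(I+\Lambda_n)^{-1}\|$ and $\|\Lambda_n\|_{\HS}$ exactly as the paper does, and reduce everything to a perturbation estimate on $\|\Lambda_n-\Lambda\|_{\HS}$; your constants match the paper's (which works at $\gamma=1$ and rescales, whereas you work at general $\gamma$ directly). The one genuine difference is the key sub-lemma $\|(B_n+\gamma I)^{-1/2}-(B+\gamma I)^{-1/2}\|_{\HS}\le \tfrac{1}{2\gamma^{3/2}}\|B_n-B\|_{\HS}$: the paper obtains this from a cited operator inequality of Kittaneh ($\|(I+A)^r-(I+B)^r\|_p\le |r|\,\|A-B\|_p$ for $|r|\le 1$, its Lemma \ref{lemma:I+A-power-r-Schatten-norm}), while you give a self-contained proof for $r=-1/2$ via the resolvent integral $T^{-1/2}=\tfrac{1}{\pi}\int_0^\infty \lambda^{-1/2}(T+\lambda I)^{-1}\,d\lambda$, whose constant $\tfrac{1}{2\gamma^{3/2}}$ you evaluate correctly. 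Your slightly different algebraic splitting of $\Lambda_n-\Lambda$ (separating $\gamma(P_n^2-P^2)$ rather than telescoping with $I+A_n$ in the middle slot) redistributes the terms but yields the identical final coefficient $1+\tfrac{1}{2\gamma}\|A_n\|+\tfrac{1}{2\gamma}\|A\|$.
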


\section{Log-Det divergences in RKHS}
\label{section:RKHS}
We now consider the Log-Det divergences between two RKHS covariance operators
induced by {\it two Borel probability measures} via {\it one positive definite kernel} on the same metric space.
In this setting, the Log-Det divergences admit closed form expressions in terms of the corresponding kernel Gram matrices.
This setting has been applied practically, see e.g. \citep{MinhSB:NIPS2014,Covariance:CVPR2014} for computer vision applications.
Throughout this section, we assume the following
\begin{enumerate}
	\item {\bf (A1) } {$\Xcal$} is a complete separable metric space.
	\item {\bf (A2) }{$K: \Xcal \times \Xcal \mapto \R$} is a continuous positive definite kernel.
	\item {\bf (A3)} $\rho, \rho_1,\rho_2$ are Borel probability measures on $\Xcal$, such that
	\begin{align}
		\int_{\Xcal}K(x,x)d\rho(x) < \infty, \;\;\; \int_{\Xcal}K(x,x)d\rho_j(x) < \infty, \;\; j=1,2.
	\end{align}
	% on {$\Xcal \times \Xcal$}. 
\end{enumerate}
Let $\H_K$ be the reproducing kernel Hilbert space (RKHS) induced by {$K$}, then $\H_K$ is separable (\citep{Steinwart:SVM2008}, Lemma 4.33).
%which
%gives rise to a feature space $\H_K$ (the RKHS induced by K).
Let {$\Phi: \Xcal \mapto \H_K$} be the corresponding canonical feature map
%, so that 
\begin{align}
	\label{equation:feat-map}
	\Phi(x) = K_x, \;\;\text{where  $K_x:\Xcal \mapto \R$ is defined by } K_x(y) = K(x,y) \forall y \in \Xcal.
	%$K(x,y) = \la \Phi(x), \Phi(y)\ra_{\H_K}$ $\forall (x,y) \in \Xcal \times \Xcal$.
\end{align}
%Assume furthermore that $\X$ is a complete separable metric space. 
Then $K(x,y) = \la \Phi(x), \Phi(y)\ra_{\H_K}$ $\forall (x,y) \in \Xcal \times \Xcal$
and the probability measure $\rho$ satisfies 
%$\rho$ be a Borel probability measure on $\Xcal$ such that
\begin{align}
	\int_{\Xcal}||\Phi(x)||_{\H_K}^2d\rho(x) = \int_{\Xcal}K(x,x)d\rho(x) < \infty.
\end{align}
The following RKHS mean vector $\mu_{\Phi} \in \H_K$ and RKHS covariance operators {$L_K, C_{\Phi}:\H_K \mapto \H_K$} induced by the feature map $\Phi$ are well-defined and are given by
%by and is a positive trace class operator on {\small$\H_K$}
\begin{align}
	\mu_{\Phi} &= \mu_{\Phi,\rho} = \int_{\Xcal}\Phi(x)d\rho(x) \in \H_K, \;\;\;
	\\
	L_K &= L_{K, \rho} = \int_{\Xcal}\Phi(x) \otimes \Phi(x)d\rho(x) = \int_{\Xcal}K_x \otimes K_x d\rho(x),
	\\
	%\\\;\;\;
	C_{\Phi} &= C_{\Phi,\rho} = \int_{\Xcal}(\Phi(x)-\mu_{\Phi})\otimes (\Phi(x)-\mu_{\Phi})d\rho(x) = L_K - \mu_{\Phi} \otimes \mu_{\Phi}.
\end{align}
Both $L_K$ and $C_{\Phi}$ are positive, trace class operators on $\H_K$.
%Here the rank-one operator $u \otimes v$ is defined by $(u\otimes v)w = \la v,w\ra_{\H_K}u$, $u,v,w \in \H_K$.
%Then {$C_{\Phi}$} is a positive trace class operator on $\H_K$ (see e.g. \citep{Minh:Covariance2017}).
%where {\small$\Phi(\x)^{*}:\H_K \mapto \R^m$} is the adjoint operator of {\small$\Phi(\x)$} and {\small$J_m$} is the centering matrix, defined by
%\begin{equation}
%{\small$J_m = I_m -\frac{1}{m}\1_m\1_m^T$}
%, \;\;\; \
%text
%with
%\;\;\; 
%{\small$\1_m = (1, \ldots, 1)^T \in \R^m$}.
%\;\;\; \text{then}\;\;\;\overline{\x} = \x J_m.
%\end{equation}
%The matrix $J_m$ is symmetric,  
%with $\rank(J_m) = m-1$, and satisfying $J_m^2 = J_m$. 
%The covariance operator $C_{\Phi(\x)}$  can be viewed as a 
%(infinite) 
%covariance matrix in the feature space $\H_K$, with rank at most $m-1$.
Let {$\Xbf =[x_1, \ldots, x_m]$,$m \in \N$,} be a data matrix randomly sampled from {$\Xcal$} according to
%a Borel probability distribution 
$\rho$, where {$m \in \Nbb$} is the number of observations.
The feature map {$\Phi$} on {$\Xbf$} 
defines
the bounded linear operator
%\begin{align}
$\Phi(\Xbf): \R^m \mapto \H_K, \Phi(\Xbf)\b = \sum_{j=1}^mb_j\Phi(x_j) , \b \in \R^m$.
The corresponding RKHS empirical mean vector and RKHS covariance operators for {$\Phi(\Xbf)$}
% (RKHS covariance operators are described in more detail in \citep{Minh:LogDet2016})
are defined to be
\begin{align}
	\mu_{\Phi(\Xbf)} &= \frac{1}{m}\sum_{j=1}^m\Phi(x_j) = \frac{1}{m}\Phi(\Xbf)\1_m,
	%\label{equation:mean-RKHS}
	\\
	L_{K,\Xbf} & = \frac{1}{m}\sum_{j=1}^m\Phi(x_j)\otimes \Phi(x_j) = \frac{1}{m}\sum_{j=1}^m K_{x_j} 
	\otimes K_{x_j} =  \frac{1}{m}\Phi(\Xbf)\Phi(\Xbf)^{*}: \H_K \mapto \H_K
	\\
	C_{\Phi(\Xbf)} &= \frac{1}{m}\sum_{j=1}^m(\Phi(x_j) - \mu_{\Phi(\Xbf)})\otimes(\Phi(x_j) - \mu_{\Phi(\Xbf)}) = \frac{1}{m}\Phi(\Xbf)J_m\Phi(\Xbf)^{*}: \H_K \mapto \H_K,
	\label{equation:covariance-operator}
	% \text{$J_m = I_m -\frac{1}{m}\1_m\1_m^T$}
\end{align}
%where {$\Phi(\Xbf)^{*}:\H_K \mapto \R^m$} is the adjoint operator of {$\Phi(\Xbf)$} and 
where $J_m = I_m -\frac{1}{m}\1_m\1_m^T,\1_m = (1, \ldots, 1)^T \in \R^m$, is the centering matrix
with $J_m^2 = J_m$.

{\bf Log-Det divergences between RKHS covariance operators}.
Let $\rho_1,\rho_2$ be two Borel probability measures on $\Xcal$ satisfying Assumption A3.
Let $\Xbf^i = (x^i_j)_{j=1}^{m_i}$, $i=1,2$, be randomly sampled from $\Xcal$ according to $\rho_i$.
We now present the closed form formulas for
the Log-Det divergences $d^{\alpha}_{\logdet}[(C_{\Phi(\Xbf^1)} + \gamma I_{\H_K}), (C_{\Phi(\Xbf^2)} + \mu I_{\H_K})]$, for $\gamma,\mu > 0$.

%
%{\bf Prior results in \citep{Minh:LogDet2016}}.

In the following, for $- 1 < \alpha < 1$, define the quantities
\begin{align}
	%C(\gamma, \mu) = \frac{4\log([(1-\alpha)\gamma + (1+\alpha)\mu]/2)}{1-\alpha^2} 
	s(\gamma, \mu, \alpha) &= (1-\alpha)\gamma + (1+\alpha)\mu,\;
	%\;\;\;\;\;\\
	c(\alpha, \gamma, \mu) = \frac{4\log(s(\gamma, \mu, \alpha)/2)}{1-\alpha^2} 
	- \frac{2\log\gamma}{1+\alpha} - \frac{2\log\mu}{1-\alpha}.
\end{align}

%In terms of the Fredholm determinant, we then have
%\begin{align}\label{equation:LogDet-infinite}
%	&d^{\alpha}_{\logdet}[(A+\gamma I), (B+\mu I)] 
%\nonumber
%\\
%&
%	= \frac{4}{1-\alpha^2}\logdet\left[\frac{(1-\alpha)A+(1+\alpha)B}
%{(1-\alpha)\gamma + (1+\alpha)\mu}
%	{s(\gamma, \mu, \alpha)}
%	+ I\right]
%	\nonumber
%	\\
%	&
%	-\frac{4\beta}{1-\alpha^2}\logdet\left(\frac{A}{\gamma} + I\right)
%\\
%&
%\quad 
%	- \frac{4(1-\beta)}{1-\alpha^2}\logdet\left(\frac{B}{\mu} + I\right) 
%\nonumber
%\\
%& 
%	+ c(\alpha, \gamma, \mu),
%&+\frac{4\log([(1-\alpha)\gamma + (1+\alpha)\mu]/2) - 4\beta\log\gamma - 4(1-\beta)\log\mu + 4\beta - 2(1-\alpha)(\log\gamma - \log\mu)}{1-\alpha^2}.
%\end{align}
We first consider the more general form $d^{\alpha}_{\logdet}[(AA^{*} + \gamma I_{\H}), (BB^{*} + \mu I_{\H})]$,
where $A:\H_1 \mapto \H$, $B:\H_2 \mapto \H$. {\it The following results generalize those in \citep{Minh:LogDet2016},
	which assume $\H_1 = \H_2$. Furthermore, they provide expressions in terms of the Hilbert-Carleman determinant $\dettwo$, which are used subsequently for the sample complexity analysis}.
The case $\H_1 \neq \H_2$ is also necessary for the Gaussian process setting later in Sections \ref{section:estimate-Gaussian-process} and \ref{section:Gaussian-process-full}.
\begin{theorem}\label{theorem:LogDet-infinite-2}
	Let {$\H_1, \H_2,\H$} be separable Hilbert spaces. 
	Let $\gamma, \mu\in \R, \gamma > 0, \mu > 0$ be fixed.
	Let {$A:\H_1 \mapto \H$}, $B:\H_2 \mapto \H$ be compact  operators
	such that $A^{*}A \in \Tr(\H_1)$, $B^{*}B \in \Tr(\H_2)$.
	%	$\H_1 \mapto \H_1$} are trace class operators. 
	Assume that {$\dim(\H) = \infty$}. For {$-1 < \alpha < 1$},
	% the divergence 
	{$d^{\alpha}_{\logdet}[(AA^{*}+ \gamma I_{\H}),(BB^{*}+ \mu I_{\H})]$} is given by
	{%\small
		\begin{align}
			&d^{\alpha}_{\logdet}
			[(AA^{*}+ \gamma I_{\H}),(BB^{*}+ \mu I_{\H})] 
			%\nonumber
			\\
			%\nonumber
			%\\
			%\begin{align}
			%\nonumber
			%\\
			%\log\det \left(\frac{A+B}{\gamma + \mu} + I\right) 
			%
			&
			=\frac{4}{1-\alpha^2}\logdet
			\left[
			%\frac{1}{\gamma + \mu}
			\frac{1}{s(\gamma, \mu, \alpha)}
			\begin{pmatrix}
				(1-\alpha)A^{*}A & \sqrt{1-\alpha^2}A^{*}B\\
				\sqrt{1-\alpha^2}B^{*}A & (1+\alpha)B^{*}B
			\end{pmatrix}
			%}
			%{(1-\alpha)\gamma + (1+\alpha)\mu}
			+ I_{\H_1 \oplus \H_2}
			\right]
			\nonumber
			\\
			&\quad-\frac{4\beta}{1-\alpha^2}
			%\logdet(\frac{A^{*}A}{\gamma} + I_{\H_1}) 
			\logdet\left(\frac{A^{*}A}{\gamma} + I_{\H_1}\right) 
			%\\
			%&
			- \frac{4(1-\beta)}{1-\alpha^2}
			%\logdet(\frac{B^{*}B}{\mu} + I_{\H_1})
			\logdet\left(\frac{B^{*}B}{\mu} + I_{\H_2}\right)
			%\nonumber
			%\\
			%&
			%\\ 
			%&
			+ c(\alpha, \gamma, \mu)
			\nonumber
			\\
				&
			=\frac{4}{1-\alpha^2}\log\dettwo
			\left[
			%\frac{1}{\gamma + \mu}
			\frac{1}{s(\gamma, \mu, \alpha)}
			\begin{pmatrix}
				(1-\alpha)A^{*}A & \sqrt{1-\alpha^2}A^{*}B\\
				\sqrt{1-\alpha^2}B^{*}A & (1+\alpha)B^{*}B
			\end{pmatrix}
			%}
			%{(1-\alpha)\gamma + (1+\alpha)\mu}
			+ I_{\H_1 \oplus \H_2}
			\right]
			\nonumber
			\\
			&\quad -\frac{4\beta}{1-\alpha^2}
			%\logdet(\frac{A^{*}A}{\gamma} + I_{\H_1}) 
			\log\dettwo\left(\frac{A^{*}A}{\gamma} + I_{\H_1}\right) 
			%\\
			%&
			- \frac{4(1-\beta)}{1-\alpha^2}
			%\logdet(\frac{B^{*}B}{\mu} + I_{\H_1})
			\log\dettwo\left(\frac{B^{*}B}{\mu} + I_{\H_2}\right)
			%\nonumber
			%\\
			%&
			%\\ 
			%&
			+ c(\alpha, \gamma, \mu).
			%&+\frac{4\log([(1-\alpha)\gamma + (1+\alpha)\mu]/2)}{1-\alpha^2}-\frac{2\log\gamma}{1+\alpha} - \frac{2\log\mu}{1-\alpha}.
		\end{align}
	}
\end{theorem}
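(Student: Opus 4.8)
The plan is to evaluate Definition~\ref{def:logdet} directly on the positive trace class operators $AA^{*}, BB^{*} \in \Sym^{+}(\H)\cap\Tr(\H)$. Since $A^{*}A \in \Tr(\H_1)$ forces $A$ to be Hilbert--Schmidt, $AA^{*}$ is trace class on $\H$ with $\trace(AA^{*}) = \trace(A^{*}A)$ (and similarly for $B$), so $AA^{*}+\gamma I_{\H}, BB^{*}+\mu I_{\H} \in \PC_1(\H)$ and Eq.\eqref{equation:det1} applies with the three extended Fredholm determinants rewritten via $\detX(C+\gamma I) = \gamma\det(I+C/\gamma)$. The convex combination inside the numerator collapses to
\begin{align*}
\tfrac{1-\alpha}{2}(AA^{*}+\gamma I_{\H}) + \tfrac{1+\alpha}{2}(BB^{*}+\mu I_{\H})
= \tfrac{1-\alpha}{2}AA^{*} + \tfrac{1+\alpha}{2}BB^{*} + \tfrac{s}{2}I_{\H},
\end{align*}
with $s = s(\gamma,\mu,\alpha)$, so its extended determinant equals $\tfrac{s}{2}\det\bigl(I_{\H} + \tfrac{1}{s}[(1-\alpha)AA^{*}+(1+\alpha)BB^{*}]\bigr)$.

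The key step is to move all three determinants off $\H$ and onto the domains $\H_1$, $\H_2$, and $\H_1\oplus\H_2$. Introducing $M:\H_1\oplus\H_2\to\H$, $M(u,v) = \sqrt{\tfrac{1-\alpha}{s}}\,Au + \sqrt{\tfrac{1+\alpha}{s}}\,Bv$, one computes $MM^{*} = \tfrac{1}{s}[(1-\alpha)AA^{*}+(1+\alpha)BB^{*}]$, while $M^{*}M$ is exactly the block operator $\tfrac{1}{s}\bigl(\begin{smallmatrix}(1-\alpha)A^{*}A & \sqrt{1-\alpha^2}A^{*}B\\ \sqrt{1-\alpha^2}B^{*}A & (1+\alpha)B^{*}B\end{smallmatrix}\bigr)$ appearing in the statement. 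Applying the Weinstein--Aronszajn/Sylvester identity $\det(I_{\H}+MM^{*}) = \det(I_{\H_1\oplus\H_2}+M^{*}M)$ converts the numerator determinant into the block form; the same identity applied to $A$ and $B$ singly gives $\detX(AA^{*}+\gamma I) = \gamma\det(I_{\H_1}+A^{*}A/\gamma)$ and $\detX(BB^{*}+\mu I) = \mu\det(I_{\H_2}+B^{*}B/\mu)$. Taking logarithms, separating the scalar prefactors $\tfrac{s}{2},\gamma,\mu$ from the determinants, and collecting the $\log\gamma$ and $\log\mu$ contributions using $\beta = \tfrac{(1-\alpha)\gamma}{s}$ and $1-\beta = \tfrac{(1+\alpha)\mu}{s}$ shows the residual constant equals exactly $c(\alpha,\gamma,\mu)$, which establishes the first displayed expression.

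For the second, Hilbert--Carleman form, I would substitute $\log\det(I+X) = \log\dettwo(I+X) + \trace(X)$ into each of the three determinant terms. The resulting trace corrections carry coefficients $\tfrac{4}{1-\alpha^2}\bigl(\tfrac{1-\alpha}{s}-\tfrac{\beta}{\gamma}\bigr)$ for $\trace(A^{*}A)$ and $\tfrac{4}{1-\alpha^2}\bigl(\tfrac{1+\alpha}{s}-\tfrac{1-\beta}{\mu}\bigr)$ for $\trace(B^{*}B)$, both of which vanish identically by the definitions of $\beta$ and $1-\beta$; hence the $\det$ and $\dettwo$ forms coincide. The main obstacle is justifying the determinant manipulations in infinite dimensions: one must verify that the block operator $M^{*}M$ is trace class on $\H_1\oplus\H_2$ — its off-diagonal block $A^{*}B$ is a product of two Hilbert--Schmidt operators and hence trace class — so that all Fredholm determinants are well-defined, and that the identity $\det(I+MM^{*}) = \det(I+M^{*}M)$ is valid in this trace class setting. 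Everything else is bookkeeping of scalar factors.
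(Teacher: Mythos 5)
Your proposal is correct and follows essentially the same route as the paper: the numerator determinant is moved from $\H$ to $\H_1\oplus\H_2$ via Sylvester's identity $\det(I+MM^{*})=\det(I+M^{*}M)$ applied to the stacked (suitably scaled) operator $(A\;B):\H_1\oplus\H_2\to\H$ (the paper's Lemmas \ref{lemma:Sylvester} and \ref{lemma:det-1}), and the $\dettwo$ form follows from $\log\det(I+X)=\log\dettwo(I+X)+\trace(X)$ with the trace terms cancelling exactly by the definitions of $\beta$ and $1-\beta$. The only cosmetic difference is that you rederive the intermediate formula for $d^{\alpha}_{\logdet}$ on $\PC_1(\H)$ directly from Definition \ref{def:logdet}, whereas the paper invokes it as a prior result (Theorem \ref{theorem:LogDet-infinite}).
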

%\vspace{-0.3cm}
%\subsection
The following is the corresponding result for the case $\dim(\H) < \infty$.

\begin{theorem}\label{theorem:LogDet-finite-2}
	Let {$\H_1, \H_2,\H$} be 
	%two 
	separable Hilbert spaces. 
	Let $\gamma, \mu\in \R, \gamma > 0, \mu > 0$ be fixed. Let $A:\H_1 \mapto \H$, $B:\H_1 \mapto \H$ be compact  operators
	such that $A^{*}A \in \Tr(\H_1)$, $B^{*}B\in \Tr(\H_2)$. Assume that {$\dim(\H) < \infty$}.
	For {$-1 < \alpha < 1$},
	% the divergence 
	{$d^{\alpha}_{\logdet}[(AA^{*}+ \gamma I_{\H}),(BB^{*}+ \mu I_{\H})]$} is given by
	{%\small
		\begin{align}
			&d^{\alpha}_{\logdet}[(AA^{*}+ \gamma I_{\H}),(BB^{*}+ \mu I_{\H})]
			\\
			&=\nonumber
			%\\
			%\begin{align}
			%\nonumber
			%\\
			%\log\det \left(\frac{A+B}{\gamma + \mu} + I\right) 
			%&
			\frac{4}{1-\alpha^2}\logdet
			\left[
			%\frac{1}{\gamma + \mu}
			\frac{1}{s(\gamma, \mu, \alpha)}
			\begin{pmatrix}
				(1-\alpha)A^{*}A & \sqrt{1-\alpha^2}A^{*}B\\
				\sqrt{1-\alpha^2}B^{*}A & (1+\alpha)B^{*}B
			\end{pmatrix}
			%}
			%{(1-\alpha)\gamma + (1+\alpha)\mu}
			+ I_{\H_1 \oplus \H_2}
			\right]
			\nonumber
			\\
			&-\frac{2}{1+\alpha}\logdet\left(\frac{A^{*}A}{\gamma} + I_{\H_1}\right) - \frac{2}{1-\alpha}\logdet\left(\frac{B^{*}B}{\mu} + I_{\H_2}\right)
			%\nonumber
			%\\
			%& 
			+ c(\alpha, \gamma, \mu)\dim(\H).
			\nonumber
			%&+\frac{4\log([(1-\alpha)\gamma + (1+\alpha)\mu]/2)}{1-\alpha^2}-\frac{2\log\gamma}{1+\alpha} - \frac{2\log\mu}{1-\alpha}.
		\end{align}
	}
\end{theorem}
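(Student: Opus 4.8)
The plan is to start directly from the finite-dimensional definition \eqref{equation:logdet-alpha-finite}: since $\dim(\H) < \infty$, the operators $P := AA^{*} + \gamma I_{\H}$ and $Q := BB^{*} + \mu I_{\H}$ are genuine positive definite matrices, so
\[
d^{\alpha}_{\logdet}[P,Q] = \frac{4}{1-\alpha^2}\log\frac{\det\!\left(\frac{1-\alpha}{2}P + \frac{1+\alpha}{2}Q\right)}{\det(P)^{\frac{1-\alpha}{2}}\det(Q)^{\frac{1+\alpha}{2}}}
\]
with ordinary determinants. Writing $s := s(\gamma,\mu,\alpha)$, I would first expand the numerator operator as $\frac{1-\alpha}{2}AA^{*} + \frac{1+\alpha}{2}BB^{*} + \frac{s}{2}I_{\H}$ and factor out the scalar $\frac{s}{2}$, which produces a factor $(s/2)^{\dim(\H)}$ and leaves $\det[I_{\H} + \frac{1-\alpha}{s}AA^{*} + \frac{1+\alpha}{s}BB^{*}]$.

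The key step is the Weinstein--Aronszajn / Sylvester determinant identity. Introducing $T:\H_1\oplus\H_2\to\H$, $T(u,v) = \sqrt{\tfrac{1-\alpha}{s}}\,Au + \sqrt{\tfrac{1+\alpha}{s}}\,Bv$ (well-defined since $1\pm\alpha>0$ and $s>0$ for $-1<\alpha<1$, $\gamma,\mu>0$), one computes $TT^{*} = \frac{1-\alpha}{s}AA^{*} + \frac{1+\alpha}{s}BB^{*}$ on $\H$ and $T^{*}T = \frac{1}{s}\left(\begin{smallmatrix}(1-\alpha)A^{*}A & \sqrt{1-\alpha^2}A^{*}B\\ \sqrt{1-\alpha^2}B^{*}A & (1+\alpha)B^{*}B\end{smallmatrix}\right)$ on $\H_1\oplus\H_2$. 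Since $\dim(\H)<\infty$, the operator $TT^{*}$ is finite rank and $T^{*}T$ is trace class with the same nonzero eigenvalues counted with multiplicity, so $\det(I_{\H}+TT^{*})$ equals the Fredholm determinant $\det(I_{\H_1\oplus\H_2}+T^{*}T)$; this is exactly the block-matrix term in the statement. The same device applied to the two denominator factors gives $\det(P) = \gamma^{\dim(\H)}\det(I_{\H_1}+\frac{A^{*}A}{\gamma})$ and $\det(Q) = \mu^{\dim(\H)}\det(I_{\H_2}+\frac{B^{*}B}{\mu})$, converting $AA^{*}, BB^{*}$ on $\H$ into $A^{*}A, B^{*}B$ on $\H_1,\H_2$.

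It then remains to take logarithms and collect terms. Using $1-\alpha^2 = (1-\alpha)(1+\alpha)$, the three determinant terms acquire coefficients $\frac{4}{1-\alpha^2}$, $-\frac{4}{1-\alpha^2}\cdot\frac{1-\alpha}{2} = -\frac{2}{1+\alpha}$, and $-\frac{4}{1-\alpha^2}\cdot\frac{1+\alpha}{2} = -\frac{2}{1-\alpha}$, matching the claim. The leftover scalar exponents assemble into $\dim(\H)\big[\frac{4\log(s/2)}{1-\alpha^2} - \frac{2\log\gamma}{1+\alpha} - \frac{2\log\mu}{1-\alpha}\big] = c(\alpha,\gamma,\mu)\dim(\H)$, the additive term that distinguishes this from Theorem \ref{theorem:LogDet-infinite-2}.

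I expect the only genuine subtlety to be justifying the mixed determinant identity, where the left-hand side is a finite-dimensional determinant on $\H$ while the right-hand side is an infinite Fredholm determinant on the possibly infinite-dimensional $\H_1\oplus\H_2$; the clean argument is through equality of the nonzero spectra of $TT^{*}$ and $T^{*}T$. Everything else is bookkeeping of scalar exponents, and the structural point to track is the factor $\dim(\H)$, which is absent in the infinite-dimensional formulation precisely because there the extended determinant $\detX$ absorbs the scalar multiples of the identity rather than contributing a power of $\dim(\H)$.
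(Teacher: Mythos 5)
Your proposal is correct and follows essentially the same route as the paper: the paper's proof simply combines Lemma \ref{lemma:det-1} (the Sylvester identity $\det(I_{\H}+AA^{*}+BB^{*})=\det(I_{\H_1\oplus\H_2}+T^{*}T)$ via equality of the nonzero spectra of $TT^{*}$ and $T^{*}T$, which is exactly your operator $T$) with the finite-dimensional formula of Theorem \ref{theorem:LogDet-finite}. The only difference is that you rederive that intermediate formula, including the $c(\alpha,\gamma,\mu)\dim(\H)$ bookkeeping, directly from Eq.\eqref{equation:logdet-alpha-finite}, whereas the paper cites it from \citep{Minh:LogDet2016}; your scalar accounting checks out.
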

{\bf Special case}. If $\gamma = \mu$, then $\beta = \frac{1-\alpha}{2}$, $s(\gamma, \gamma, \alpha) = 2\gamma$, and $c(\alpha, \gamma, \gamma) = 0$.
In this case,
the expressions in Theorems \ref{theorem:LogDet-infinite-2} and \ref{theorem:LogDet-finite-2} are identical.

Applying Theorems \ref{theorem:LogDet-infinite-2}, \ref{theorem:LogDet-finite-2} to $C_{\Phi(\Xbf^i)}$, $i=1,2$,
we obtain $d^{\alpha}_{\logdet}$ between RKHS covariance operators.
Define the following kernel Gram matrices $K[\Xbf^i]$, $K[\Xbf^1,\Xbf^2]$ by
\begin{align}
	K[\Xbf^i] &= \Phi(\Xbf^i)^{*}\Phi(\Xbf^i),\;
	(K[\Xbf^i])_{jk} = K(x^i_j, x^i_k), \; 1\leq j,k \leq m_i, i=1,2,
	\\
	K[\Xbf^1,\Xbf^2] &= \Phi(\Xbf^1)^{*}\Phi(\Xbf^2), \; (K[\Xbf^1,\Xbf^2])_{jk} = K(x^1_j,x^2_k), \; 1 \leq j \leq m_1, 1\leq k \leq m_2,
	\\
	\; K[\Xbf^2,\Xbf^1]&= (K[\Xbf^1,\Xbf^2])^T.
\end{align}

{\it The following results generalize those in \citep{Minh:LogDet2016}, which assume $m_1 = m_2 = m$}.
\begin{theorem}
	(\textbf{Alpha Log-Det divergences between RKHS covariance operators, $-1 < \alpha < 1$, infinite-dimensional case})
	\label{theorem:LogDet-RKHS-infinite} 
	Assume Assumptions A1-A3.
	Assume that {$\dim(\H_K) = \infty$}. Let {$\gamma > 0$, $\mu > 0$}.
	For {$-1 < \alpha < 1$},
	% the divergence 
	{$d_{\logdet}^{\alpha}[(C_{\Phi(\Xbf^1)} + \gamma I_{\H_K}), (C_{\Phi(\Xbf^2)} + \mu I_{\H_K})]$}  is given by
	%\footnote{The case {$\dim(\H_K) < \infty$} is given in the Supplementary Material, which {$\approach \infty$} as {$\dim(\H_K) \approach \infty$}.}
	%
	%\footnotesize
	%\scriptsize
	{
		\small
		\begin{align}
			\label{equation:d-alpha-logdet-RKHS-infinite}
			&d_{\logdet}^{\alpha}[(C_{\Phi(\Xbf^1)} + \gamma I_{\H_K}), (C_{\Phi(\Xbf^2)} + \mu I_{\H_K})]
			%=
			%[(C_{\Phi(\x)} + \gamma I), (C_{\Phi(\y)} + \mu I)]
			\\
			&=\frac{4}{1-\alpha^2}
			% *
			%\nonumber
			%\\
			%& 
			%\\
			%&
			%\frac{4}{1-\alpha^2}
			\logdet
			\left[
			%\frac{1}{m(\gamma + \mu)}
			%\frac
			{\frac{1}{
					%((1-\alpha)\gamma + (1+\alpha) \mu)
					s(\gamma, \mu, \alpha)
				}
				\begin{pmatrix}
					\frac{(1-\alpha)J_{m_1}K[\Xbf^1]J_{m_1}}{m_1} & \frac{\sqrt{1-\alpha^2}J_{m_1}K[\Xbf^1,\Xbf^2]J_{m_2}}{\sqrt{m_1m_2}}\\
					\frac{\sqrt{1-\alpha^2}J_{m_2}K[\Xbf^2,\Xbf^1]J_{m_1}}{\sqrt{m_1m_2}} & \frac{(1+\alpha)J_{m_2}K[\Xbf^2]J_{m_2}}{m_2}
				\end{pmatrix}
			}{}
			+ I_{m_1 + m_2}
			\right]
			\nonumber
			\\
			&\quad -\frac{4\beta}{1-\alpha^2}
			\logdet
			%(\frac{J_mK[\x]J_m}{m\gamma} + I_{m}) 
			\left(\frac{J_{m_1}K[\Xbf^1]J_{m_1}}{m_1\gamma} + I_{m_1}\right)
			%\nonumber
			%\\
			%&
			%\\
			%&
			-\frac{4(1-\beta)}{1-\alpha^2}
			%\logdet(\frac{J_mK[\y]J_m }{m\mu}+ I_{m}) 
			\left(\frac{J_{m_2}K[\Xbf^2]J_{m_2} }{m_2\mu}+ I_{m_2}\right) 
			+ c(\alpha,\gamma,\mu).
			\nonumber
			%\frac{4\log([(1-\alpha)\gamma + (1+\alpha)\mu]/2)}{1-\alpha^2}-\frac{2\log\gamma}{1+\alpha} - \frac{2\log\mu}{1-\alpha}.
			%
			%+\log\left(\frac{\gamma + \mu}{2}\right) - \frac{1}{2}\log(\gamma \mu). 
		\end{align}
	}
\end{theorem}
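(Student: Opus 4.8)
The plan is to recognize the two empirical RKHS covariance operators as operators of the form $AA^{*}$ and $BB^{*}$, invoke Theorem \ref{theorem:LogDet-infinite-2} verbatim, and then translate the resulting operator products into the kernel Gram matrices. In this way the statement becomes essentially a direct corollary of Theorem \ref{theorem:LogDet-infinite-2}.

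First I would exploit the structure of the centering matrix. Since $J_{m_i}$ is symmetric and idempotent, we have $J_{m_i} = J_{m_i}J_{m_i}^{T}$, so from Eq.\eqref{equation:covariance-operator} one can factor $C_{\Phi(\Xbf^1)} = AA^{*}$ and $C_{\Phi(\Xbf^2)} = BB^{*}$, where $A = \frac{1}{\sqrt{m_1}}\Phi(\Xbf^1)J_{m_1}:\R^{m_1}\mapto \H_K$ and $B = \frac{1}{\sqrt{m_2}}\Phi(\Xbf^2)J_{m_2}:\R^{m_2}\mapto \H_K$. These are finite-rank, hence compact operators; and because the domains $\H_1 = \R^{m_1}$, $\H_2 = \R^{m_2}$ are finite-dimensional, the conditions $A^{*}A\in\Tr(\H_1)$ and $B^{*}B\in\Tr(\H_2)$ required by Theorem \ref{theorem:LogDet-infinite-2} hold automatically. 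The remaining hypothesis $\dim(\H)=\infty$ of that theorem is exactly the standing assumption $\dim(\H_K)=\infty$ of the present statement, so all its conditions are met with $\H=\H_K$.

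Next I would compute the four operator products entering Theorem \ref{theorem:LogDet-infinite-2} directly from the definitions of the Gram matrices. Using $K[\Xbf^i]=\Phi(\Xbf^i)^{*}\Phi(\Xbf^i)$ and $K[\Xbf^1,\Xbf^2]=\Phi(\Xbf^1)^{*}\Phi(\Xbf^2)$, together with the symmetry of $J_{m_i}$, one obtains $A^{*}A=\frac{1}{m_1}J_{m_1}K[\Xbf^1]J_{m_1}$, $B^{*}B=\frac{1}{m_2}J_{m_2}K[\Xbf^2]J_{m_2}$, $A^{*}B=\frac{1}{\sqrt{m_1m_2}}J_{m_1}K[\Xbf^1,\Xbf^2]J_{m_2}$, and $B^{*}A=(A^{*}B)^{*}$. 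Substituting these into the block-operator term and the two single-operator terms of Theorem \ref{theorem:LogDet-infinite-2} reproduces exactly the matrices appearing in Eq.\eqref{equation:d-alpha-logdet-RKHS-infinite}, while the constant $c(\alpha,\gamma,\mu)$ and the exponents $\beta$, $1-\beta$ carry over unchanged.

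Finally, I would observe that although Theorem \ref{theorem:LogDet-infinite-2} is stated with the Fredholm determinant $\det$ applied to operators of the form $I+(\text{finite rank})$ on $\H_1\oplus\H_2$ and on $\H_1$, $\H_2$ separately, the relevant domains $\R^{m_1}$, $\R^{m_2}$, and $\R^{m_1+m_2}$ are finite-dimensional; hence each Fredholm determinant coincides with the ordinary matrix determinant, and the $\logdet$ terms reduce to the log-determinants of the Gram matrices displayed in the statement. There is no genuine analytic obstacle here: the only steps requiring care are the factorization via the idempotency and symmetry of $J_{m_i}$ and the bookkeeping that identifies the operator products $A^{*}A$, $B^{*}B$, $A^{*}B$, $B^{*}A$ with the stated kernel Gram matrices.
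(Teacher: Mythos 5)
Your proposal is correct and follows essentially the same route as the paper: the paper's proof also sets $A = \frac{1}{\sqrt{m_1}}\Phi(\Xbf^1)J_{m_1}$, $B = \frac{1}{\sqrt{m_2}}\Phi(\Xbf^2)J_{m_2}$, identifies $AA^{*}=C_{\Phi(\Xbf^1)}$, $BB^{*}=C_{\Phi(\Xbf^2)}$ and the products $A^{*}A$, $B^{*}B$, $A^{*}B$, $B^{*}A$ with the centered Gram matrices, and then applies Theorem \ref{theorem:LogDet-infinite-2}. Your additional remarks on the idempotency of $J_{m_i}$, the automatic trace-class conditions in finite rank, and the reduction of the Fredholm determinant to the matrix determinant are exactly the (implicit) details the paper relies on.
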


The corresponding formula when $\dim(\H_K) < \infty$ is given by the following result.

\begin{theorem}
	(\textbf{Alpha Log-Det divergences between RKHS covariance operators, $-1 < \alpha < 1$ - finite-dimensional case})
	\label{theorem:LogDet-RKHS-finite} 
	Assume Assumptions A1-A3.
	Assume that {$\dim(\H_K) < \infty$}. Let {$\gamma > 0$, $\mu > 0$}.
	Then for {$-1 < \alpha < 1$},
	{\small
		\begin{align}\label{equation:LogDet-RKHS-finite}
			&d_{\logdet}^{\alpha}[(C_{\Phi(\Xbf^1)} + \gamma I_{\H_K}), (C_{\Phi(\Xbf^2)} + \mu I_{\H_K})]  \nonumber
			\\
			&=\frac{4}{1-\alpha^2}  \nonumber
			%\\
			%&
			%\frac{4}{1-\alpha^2}
			\logdet
			\left[
			%\frac{1}{m(\gamma + \mu)}
			%\frac
			\frac{1}{
				%((1-\alpha)\gamma + (1+\alpha) \mu)
				s(\gamma, \mu, \alpha)}
			%{
			\begin{pmatrix}
				\frac{(1-\alpha)J_{m_1}K[\Xbf^1]J_{m_1}}{m_1} & \frac{\sqrt{1-\alpha^2}J_{m_1}K[\Xbf^1,\Xbf^2]J_{m_2}}{\sqrt{m_1m_2}}\\
				\frac{\sqrt{1-\alpha^2}J_{m_2}K[\Xbf^2,\Xbf^1]J_{m_1}}{\sqrt{m_1m_2}} & \frac{(1+\alpha)J_{m_2}K[\Xbf^2]J_{m_2}}{m_2}
			\end{pmatrix}
			%}
			%{m((1-\alpha)\gamma + (1+\alpha) \mu)}
			+ I_{m_1 + m_2}
			\right]
			\nonumber
			\\
			&\quad -\frac{2}{1+\alpha}\log\det\left(\frac{J_{m_1}K[\x]J_{m_1}}{m_1\gamma} + I_{m_1}\right) \nonumber
			%\\
			%&
			-\frac{2}{1-\alpha}\logdet\left(\frac{J_{m_2}K[\Xbf^2]J_{m_2}}{m_2\mu} + I_{m_2}\right) \nonumber
			\\
			&\quad + c(\alpha, \gamma, \mu)\dim(\H_K).
			%\frac{4\log([(1-\alpha)\gamma + (1+\alpha)\mu]/2)}{1-\alpha^2}-\frac{2\log\gamma}{1+\alpha} - \frac{2\log\mu}{1-\alpha}.
			%
			%+\log\left(\frac{\gamma + \mu}{2}\right) - \frac{1}{2}\log(\gamma \mu). 
		\end{align}
	}
\end{theorem}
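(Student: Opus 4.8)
The plan is to derive Theorem~\ref{theorem:LogDet-RKHS-finite} as a direct specialization of Theorem~\ref{theorem:LogDet-finite-2}, by exhibiting each centered RKHS covariance operator as an operator of the form $AA^{*}$. First I would use the idempotence and symmetry of the centering matrix, $J_{m_i}^2 = J_{m_i} = J_{m_i}^{T}$, to rewrite Eq.\eqref{equation:covariance-operator} as $C_{\Phi(\Xbf^1)} = AA^{*}$ and $C_{\Phi(\Xbf^2)} = BB^{*}$, where $A = \frac{1}{\sqrt{m_1}}\Phi(\Xbf^1)J_{m_1}:\R^{m_1}\mapto\H_K$ and $B = \frac{1}{\sqrt{m_2}}\Phi(\Xbf^2)J_{m_2}:\R^{m_2}\mapto\H_K$. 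Indeed, $AA^{*} = \frac{1}{m_1}\Phi(\Xbf^1)J_{m_1}J_{m_1}^{T}\Phi(\Xbf^1)^{*} = \frac{1}{m_1}\Phi(\Xbf^1)J_{m_1}\Phi(\Xbf^1)^{*}$, which is exactly $C_{\Phi(\Xbf^1)}$, and likewise for $B$. Since $\R^{m_1}$ and $\R^{m_2}$ are finite-dimensional, $A$ and $B$ are automatically compact with $A^{*}A$ and $B^{*}B$ of finite rank, hence trace class, so the hypotheses of Theorem~\ref{theorem:LogDet-finite-2} are met with $\H_1 = \R^{m_1}$, $\H_2 = \R^{m_2}$, and $\H = \H_K$.

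Next I would compute the four operator products appearing in Theorem~\ref{theorem:LogDet-finite-2} in terms of kernel Gram matrices, using the reproducing identity $\Phi(\Xbf^i)^{*}\Phi(\Xbf^j) = K[\Xbf^i,\Xbf^j]$, whose $(k,l)$ entry is $K(x^i_k, x^j_l)$. This yields $A^{*}A = \frac{1}{m_1}J_{m_1}K[\Xbf^1]J_{m_1}$, $B^{*}B = \frac{1}{m_2}J_{m_2}K[\Xbf^2]J_{m_2}$, $A^{*}B = \frac{1}{\sqrt{m_1 m_2}}J_{m_1}K[\Xbf^1,\Xbf^2]J_{m_2}$, and $B^{*}A = (A^{*}B)^{T} = \frac{1}{\sqrt{m_1 m_2}}J_{m_2}K[\Xbf^2,\Xbf^1]J_{m_1}$. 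Substituting these into the block matrix, into the two single-operator determinant terms, and replacing $I_{\H_1\oplus\H_2}$ by $I_{m_1+m_2}$, $I_{\H_1}$ by $I_{m_1}$, and $I_{\H_2}$ by $I_{m_2}$, reproduces, term by term, precisely the right-hand side of Eq.\eqref{equation:LogDet-RKHS-finite}.

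The step requiring the most care is the factorization $C_{\Phi(\Xbf^i)} = AA^{*}$ with the correct domains: because I allow $m_1 \neq m_2$, the two feature operators act on different Euclidean spaces, so $A$ and $B$ genuinely map from $\R^{m_1}$ and $\R^{m_2}$ respectively. This is exactly the generalization over \citep{Minh:LogDet2016}, where $m_1 = m_2 = m$ is assumed, and it is the reason Theorem~\ref{theorem:LogDet-finite-2} was stated for operators $A:\H_1\mapto\H$, $B:\H_2\mapto\H$ with possibly $\H_1 \neq \H_2$. Beyond this bookkeeping, the remaining work is the routine algebraic substitution of the Gram-matrix identities. One should note that, since we are in the regime $\dim(\H_K) < \infty$, it is the finite-dimensional Theorem~\ref{theorem:LogDet-finite-2}, with its explicit $c(\alpha,\gamma,\mu)\dim(\H)$ term, that is invoked rather than its infinite-dimensional counterpart Theorem~\ref{theorem:LogDet-infinite-2}; this is what produces the constant term $c(\alpha,\gamma,\mu)\dim(\H_K)$ in Eq.\eqref{equation:LogDet-RKHS-finite}.
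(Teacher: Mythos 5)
Your proposal is correct and matches the paper's own argument: the paper also sets $A = \frac{1}{\sqrt{m_1}}\Phi(\Xbf^1)J_{m_1}$, $B = \frac{1}{\sqrt{m_2}}\Phi(\Xbf^2)J_{m_2}$, identifies $AA^{*} = C_{\Phi(\Xbf^1)}$, $BB^{*} = C_{\Phi(\Xbf^2)}$ together with the Gram-matrix expressions for $A^{*}A$, $B^{*}B$, $A^{*}B$, $B^{*}A$, and then invokes Theorem~\ref{theorem:LogDet-finite-2}. Your added remarks on the idempotence of $J_{m_i}$ and on the need for distinct domains $\R^{m_1}$, $\R^{m_2}$ when $m_1 \neq m_2$ are consistent with, and slightly more explicit than, the paper's exposition.
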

Note that the formulas in Theorems \ref{theorem:LogDet-RKHS-infinite} and \ref{theorem:LogDet-RKHS-finite} coincide when $\gamma = \mu$.
Consider the limiting cases $\alpha = \pm 1$. It suffices to consider $\alpha =1$, since
$d^{-1}[(A+\gamma I), (B+\mu I)] = d^{1}_{\logdet}[(B+\mu I), (A+\gamma I)]$.
The following is the infinite-dimensional version
\begin{theorem}
	\label{theorem:logdet-alpha-1-AAstar-BBstar-representation-switch}
	Let $\H_1, \H_2, \H$ be separable Hilbert spaces.
	Assume that $\dim(\H) = \infty$.
	Let $\gamma,\mu \in \R, \gamma > 0, \mu > 0$ be fixed.
	Let $A: \H_1 \mapto \H$, $B: \H_2 \mapto \H$ be compact, such that $A^{*}A\in \Tr(\H_1)$, $B^{*}B \in \Tr(\H_2)$, then $AA^{*}, BB^{*}\in \Tr(\H)$, and
	%\begin{align}
	%&d^1_{\logdet}[(AA^{*} + \gamma I), (BB^{*} + \gamma I)] 
	%=
	%\frac{1}{\gamma}\trace\left[A^{*}A\right] 
	%- \trace\left[\left(\frac{B^{*}B}{\gamma} + \frac{B^{*}AA^{*}B}{\gamma^2}\right)\left(I_{\H_2}+\frac{B^{*}B}{\gamma}\right)^{-1}\right]
	%\nonumber
	%\\
	%&\quad 
	%+\frac{1}{\gamma}\trace\left[A^{*}A\right] -\log\det\left(\frac{A^{*}A}{\gamma}+I_{\H_1}\right) + \log\det\left(\frac{B^{*}B}{\gamma}+I_{\H_2}\right)
	%\nonumber
	%\\
	%&= - \log\dettwo(I+A^{*}A) - \log\dettwo[(I+B^{*}B)^{-1}] 
	%\nonumber
	%\\
	%&\quad - \trace[B^{*}AA^{*}B(I+B^{*}B)^{-1}].
	%\\
	%& =  - \log\dettwo\left(I+\frac{A^{*}A}{\gamma}\right) - \log\dettwo\left(I+\frac{B^{*}B}{\gamma}\right)^{-1} - \trace\left[\frac{B^{*}AA^{*}B}{\gamma^2}\left(I+\frac{B^{*}B}{\gamma}\right)^{-1}\right].
	%\nonumber
	%\end{align}
	\begin{align}
		&d^1_{\logdet}[(AA^{*} + \gamma I_{\H}), (BB^{*} + \mu I_{\H})]
		\nonumber 
		\\
		&=\left(\frac{\gamma}{\mu}-1 - \log\frac{\gamma}{\mu}\right) +\frac{\gamma}{\mu}\trace\left[\frac{A^{*}A}{\gamma}\right] - \frac{\gamma}{\mu}\trace\left[\left(\frac{B^{*}B}{\mu} + \frac{B^{*}AA^{*}B}{\gamma\mu}\right)\left(I+\frac{B^{*}B}{\mu}\right)^{-1}\right]
		\\
		&\quad +\frac{\gamma}{\mu}\log\det\left(\frac{B^{*}B}{\mu}+I_{\H_2}\right) 
		-\frac{\gamma}{\mu} \log\det\left(\frac{A^{*}A}{\gamma}+I_{\H_1}\right)
		\nonumber
		\\
		&= \left(\frac{\gamma}{\mu}-1 - \log\frac{\gamma}{\mu}\right) - \frac{\gamma}{\mu}\log\dettwo\left(I_{\H_1}+\frac{A^{*}A}{\gamma}\right) 
		\\
		&\quad  - \frac{\gamma}{\mu}\log\dettwo\left[\left(I_{\H_2}+\frac{B^{*}B}{\mu}\right)^{-1}\right] 
		- \trace\left[\frac{B^{*}AA^{*}B}{\gamma\mu}\left(I_{\H_2}+\frac{B^{*}B}{\mu}\right)^{-1}\right].
		\nonumber
	\end{align}
\end{theorem}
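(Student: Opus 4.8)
The plan is to specialize the limiting definition \eqref{equation:alpha+1} of $d^1_{\logdet}$ to the pair $(AA^*+\gamma I_{\H},\,BB^*+\mu I_{\H})$. Since $A^*A\in\Tr(\H_1)$ and $B^*B\in\Tr(\H_2)$ force $AA^*,BB^*\in\Tr(\H)$ (the nonzero spectra of $A^*A$ and $AA^*$ coincide), both operators lie in $\PC_1(\H)$, so \eqref{equation:alpha+1} applies and gives
\begin{align}
d^1_{\logdet}[(AA^*+\gamma I_{\H}),(BB^*+\mu I_{\H})]
&= \left(\frac{\gamma}{\mu}-1\right)\log\frac{\gamma}{\mu} + \trX\!\left[(BB^*+\mu I)^{-1}(AA^*+\gamma I)-I\right] \nonumber \\
&\quad - \frac{\gamma}{\mu}\log\detX\!\left[(BB^*+\mu I)^{-1}(AA^*+\gamma I)\right].
\end{align}
I would then treat the extended-trace term and the extended-Fredholm-determinant term separately and recombine at the end.

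For the determinant term I would use $\detX(M+\nu I)=\nu\det(I+M/\nu)$ together with the multiplicativity of the extended Fredholm determinant to obtain $\log\detX[(BB^*+\mu I)^{-1}(AA^*+\gamma I)] = \log\frac{\gamma}{\mu} + \log\det(I+AA^*/\gamma) - \log\det(I+BB^*/\mu)$. The decisive step is the identity $\det(I+AA^*/\gamma)=\det(I+A^*A/\gamma)$ for Fredholm determinants, valid because the nonzero spectra of $AA^*$ and $A^*A$ coincide, and likewise for $B$; this transports each determinant from $\H$ onto the finite-trace spaces $\H_1,\H_2$. This produces the terms $-\frac{\gamma}{\mu}\log\det(I+A^*A/\gamma)$ and $+\frac{\gamma}{\mu}\log\det(I+B^*B/\mu)$ of the first claimed expression.

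For the extended-trace term I would first isolate the scalar part via the resolvent identity $\gamma(BB^*+\mu I)^{-1}=\frac{\gamma}{\mu}I-\frac{\gamma}{\mu}BB^*(BB^*+\mu I)^{-1}$, so that $(BB^*+\mu I)^{-1}(AA^*+\gamma I)-I$ has scalar part $\frac{\gamma}{\mu}-1$ and a trace-class remainder whose ordinary trace I then compute. The two workhorse identities are the intertwining relation $BB^*(BB^*+\mu I)^{-1}=B(B^*B+\mu I)^{-1}B^*$ and the Woodbury identity $(BB^*+\mu I)^{-1}=\frac{1}{\mu}[I-B(B^*B+\mu I)^{-1}B^*]$; combined with cyclicity of the trace, these collapse $\trace[BB^*(BB^*+\mu I)^{-1}]$ into $\trace[B^*B(B^*B+\mu I)^{-1}]$ and $\trace[(BB^*+\mu I)^{-1}AA^*]$ into $\frac{1}{\mu}\trace[A^*A]-\frac{1}{\mu}\trace[B^*AA^*B(B^*B+\mu I)^{-1}]$, yielding exactly $\frac{\gamma}{\mu}\trace[\frac{A^*A}{\gamma}]$ and $-\frac{\gamma}{\mu}\trace[(\frac{B^*B}{\mu}+\frac{B^*AA^*B}{\gamma\mu})(I+B^*B/\mu)^{-1}]$. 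Recombining with the determinant term and collecting the two $\log\frac{\gamma}{\mu}$ contributions into $-\log\frac{\gamma}{\mu}$ establishes the first displayed formula.

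To pass to the second formula I would convert Fredholm determinants into Hilbert-Carleman determinants through $\log\dettwo(I+M)=\log\det(I+M)-\trace(M)$ for $M\in\Tr$: this pairs $-\frac{\gamma}{\mu}\log\det(I+A^*A/\gamma)$ with $\frac{\gamma}{\mu}\trace(A^*A/\gamma)$ to give $-\frac{\gamma}{\mu}\log\dettwo(I+A^*A/\gamma)$. The dual identity $\log\dettwo[(I+M)^{-1}]=-\log\det(I+M)+\trace[M(I+M)^{-1}]$ pairs $\frac{\gamma}{\mu}\log\det(I+B^*B/\mu)$ with the $-\frac{\gamma}{\mu}\trace[\frac{B^*B}{\mu}(I+B^*B/\mu)^{-1}]$ part to give $-\frac{\gamma}{\mu}\log\dettwo[(I+B^*B/\mu)^{-1}]$, while the mixed trace $\trace[\frac{B^*AA^*B}{\gamma\mu}(I+B^*B/\mu)^{-1}]$ is untouched by the determinant conversion and carries over directly. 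The main obstacle is purely organizational: one must track the scalar parts consistently when moving between $\trace$ and $\trX$ and between $\det$ and $\detX$, and apply the Sylvester and Woodbury identities in the correct direction so that every surviving operator acts on $\H_1$ or $\H_2$ rather than on $\H$. All convergence questions (trace-class membership, finiteness of both determinants) are guaranteed a priori by $A^*A,B^*B\in\Tr$.
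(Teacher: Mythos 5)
Your proposal is correct and follows essentially the same route as the paper's own proof: both start from the limiting formula for $d^{1}_{\logdet}$, use the Sherman–Morrison–Woodbury identity and cyclicity of the trace to move everything onto $\H_1$ and $\H_2$, invoke the Sylvester identity $\det(I+AA^{*}/\gamma)=\det(I+A^{*}A/\gamma)$ for the determinant term, and convert to Hilbert–Carleman determinants at the end, the only cosmetic difference being that the paper first treats $\gamma=\mu=1$ and then rescales. One small point of care: carrying the mixed trace ``over directly'' keeps its $\frac{\gamma}{\mu}$ prefactor from the first formula, so its coefficient is $\frac{1}{\mu^{2}}$ rather than the $\frac{1}{\gamma\mu}$ printed in the second displayed formula (the two agree only when $\gamma=\mu$, which is the case used in all downstream applications).
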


The following is the corresponding finite-dimensional version
\begin{theorem}
	\label{theorem:logdet-alpha-1-AAstar-BBstar-representation-switch-finite}
	Let $\H_1, \H_2, \H$ be separable Hilbert spaces.
	Assume that $\dim(\H) < \infty$.
	Let $\gamma,\mu \in \R, \gamma > 0, \mu > 0$ be fixed.
	Let $A: \H_1 \mapto \H$, $B: \H_2 \mapto \H$ be compact, such that $A^{*}A\in \Tr(\H_1)$, $B^{*}B \in \Tr(\H_2)$, then $AA^{*}, BB^{*}\in \Tr(\H)$, and
	\begin{align}
		&d^1_{\logdet}[(AA^{*} + \gamma I_{\H}), (BB^{*} + \mu I_{\H})]	
		= \left(\frac{\gamma}{\mu}-1 - \log\frac{\gamma}{\mu}\right)\dim(\H)
		\\
		&\quad + \trace\left[\frac{A^{*}A}{\mu}\right] - \frac{\gamma}{\mu}\trace\left[\left(\frac{B^{*}B}{\mu} + \frac{B^{*}AA^{*}B}{\gamma \mu}\right)\left(I_{\H_2}+\frac{B^{*}B}{\mu}\right)^{-1}\right]
		\nonumber
		\\
		&\quad+ \log\det\left(\frac{B^{*}B}{\mu} + I_{\H_2}\right)
		- \log\det\left(\frac{A^{*}A}{\gamma} + I_{\H_1}\right).
		\nonumber 	
	\end{align}
\end{theorem}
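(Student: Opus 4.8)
The plan is to start from the genuine finite-dimensional Log-Det divergence of order $\alpha = 1$ between the two SPD operators $P = AA^{*}+\gamma I_{\H}$ and $Q = BB^{*}+\mu I_{\H}$ on the finite-dimensional space $\H$. Since $\dim(\H) < \infty$, the operators $AA^{*}$ and $BB^{*}$ are automatically trace class, every trace and determinant below is the ordinary finite-dimensional one, and by Eq.\eqref{equation:alpha+1-finite},
\begin{align}
d^1_{\logdet}[(AA^{*}+\gamma I),(BB^{*}+\mu I)] = \trace[(BB^{*}+\mu I)^{-1}(AA^{*}+\gamma I) - I_{\H}] - \log\det[(BB^{*}+\mu I)^{-1}(AA^{*}+\gamma I)]. \nonumber
\end{align}
I would then treat the log-determinant term and the trace term separately.

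For the log-determinant term, I would split it as $\log\det(AA^{*}+\gamma I) - \log\det(BB^{*}+\mu I)$, factor out $\gamma^{\dim(\H)}$ and $\mu^{\dim(\H)}$, and apply the Sylvester (Weinstein--Aronszajn) determinant identity $\det(I_{\H}+\frac{1}{\gamma}AA^{*}) = \det(I_{\H_1}+\frac{1}{\gamma}A^{*}A)$, and likewise for $B$. This immediately produces the two determinant terms $\pm\log\det(I+\cdots)$ over $\H_1$ and $\H_2$ together with the scalar $-\dim(\H)\log\frac{\gamma}{\mu}$ appearing in the statement.

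For the trace term, I would invoke the push-through (Woodbury) identity $(BB^{*}+\mu I_{\H})^{-1} = \frac{1}{\mu}[I_{\H} - B(B^{*}B+\mu I_{\H_2})^{-1}B^{*}]$, expand $\trace[(BB^{*}+\mu I)^{-1}(AA^{*}+\gamma I) - I_{\H}]$ into its four contributions, and repeatedly use cyclicity of the trace to transport every operator from $\H$ onto $\H_1$ or $\H_2$. This turns the piece coming from $\gamma(BB^{*}+\mu I)^{-1}$ into $\frac{\gamma}{\mu}\dim(\H) - \frac{\gamma}{\mu}\trace[\frac{B^{*}B}{\mu}(I_{\H_2}+\frac{B^{*}B}{\mu})^{-1}]$ and the piece coming from $(BB^{*}+\mu I)^{-1}AA^{*}$ into $\trace[\frac{A^{*}A}{\mu}] - \trace[\frac{B^{*}AA^{*}B}{\mu^{2}}(I_{\H_2}+\frac{B^{*}B}{\mu})^{-1}]$. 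Collecting the scalar multiples of $\dim(\H)$ from these two terms together with the $-\dim(\H)\log\frac{\gamma}{\mu}$ above assembles $(\frac{\gamma}{\mu}-1-\log\frac{\gamma}{\mu})\dim(\H)$, and grouping the two $\H_2$-traces assembles the factor $-\frac{\gamma}{\mu}\trace[(\frac{B^{*}B}{\mu}+\frac{B^{*}AA^{*}B}{\gamma\mu})(I_{\H_2}+\frac{B^{*}B}{\mu})^{-1}]$, matching the claimed expression.

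I expect the main obstacle to be organizational rather than conceptual: because $\H_1,\H_2,\H$ are genuinely distinct spaces, one must track carefully on which space each operator lives when applying cyclicity and the push-through identity, and verify that the cross-term $B^{*}AA^{*}B$ emerges with coefficient exactly $\frac{1}{\gamma\mu}$ after regrouping. In contrast to the infinite-dimensional companion result (Theorem \ref{theorem:logdet-alpha-1-AAstar-BBstar-representation-switch}), no extended Hilbert--Carleman determinant or convergence argument is required here, since finite-dimensionality keeps every trace and determinant finite and renders the whole computation purely algebraic.
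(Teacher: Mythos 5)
Your proposal is correct and follows essentially the same route as the paper: start from the plain finite-dimensional formula $\trace[(BB^{*}+\mu I)^{-1}(AA^{*}+\gamma I)-I]-\log\det[(BB^{*}+\mu I)^{-1}(AA^{*}+\gamma I)]$, handle the determinant term via Sylvester's identity after extracting the scalar factors $\gamma^{\dim\H}$ and $\mu^{\dim\H}$, and handle the trace term via the push-through identity plus cyclicity to land everything on $\H_1$ and $\H_2$. The only cosmetic difference is that the paper first normalizes to $\frac{\gamma}{\mu}\trace[(I+\frac{BB^{*}}{\mu})^{-1}(I+\frac{AA^{*}}{\gamma})]$ before applying the same Woodbury computation, whereas you expand the unnormalized product directly; the bookkeeping of the $\dim(\H)$ multiples and the $\frac{1}{\gamma\mu}$ coefficient on $B^{*}AA^{*}B$ comes out identically either way.
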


The formulas for $d^1_{\logdet}$ in Theorems \ref{theorem:logdet-alpha-1-AAstar-BBstar-representation-switch} and 
\ref{theorem:logdet-alpha-1-AAstar-BBstar-representation-switch-finite} coincide when $\gamma = \mu$.
Applying Theorems \ref{theorem:logdet-alpha-1-AAstar-BBstar-representation-switch},
\ref{theorem:logdet-alpha-1-AAstar-BBstar-representation-switch-finite} to $C_{\Phi(\Xbf^i)}$,
$i=1,2$, we obtain
% the formulas for 
$d^1_{\logdet}$ between RKHS covariance operators, as follows.
\begin{theorem}
	(\textbf{Alpha Log-Det divergences between RKHS covariance operators, $\alpha=1$ - infinite-dimensional case})
	\label{theorem:LogDet-RKHS-alpha=1-infinite}
	Assume Assumptions A1-A3.
	Let $\gamma, \mu \in \R, \gamma >0, \mu > 0$ be fixed. Assume that $\dim(\H_K) = \infty$. Then
	{\small
		\begin{align}
			\label{equation:d1-logdet-RKHS}
			&d^1_{\logdet}[(C_{\Phi(\Xbf^1)} + \gamma I_{\H_K}), (C_{\Phi(\Xbf^2)} + \mu I_{\H_K})] 
			=\left(\frac{\gamma}{\mu}-1 - \log\frac{\gamma}{\mu}\right) +\frac{\gamma}{\mu}\trace\left[\frac{J_{m_1}K[\Xbf^1]J_{m_1}}{m_1\gamma}\right]
			\\
			& - \frac{\gamma}{\mu}\trace\left[\left(\frac{J_{m_2}K[\Xbf^2]J_{m_2}}{m_2\mu} + \frac{J_{m_2}K[\Xbf^2,\Xbf^1]J_{m_1}K[\Xbf^1,\Xbf^2]J_{m_2}}{m_1m_2\gamma\mu}\right)\left(I_{m_2}+\frac{J_{m_2}K[\Xbf^2]J_{m_2}}{m_2\mu}\right)^{-1}\right]
			\nonumber
			\\
			&\quad +\frac{\gamma}{\mu}\log\det\left(\frac{J_{m_2}K[\Xbf^2]J_{m_2}}{m_2\mu}+I_{m_2}\right) 
			-\frac{\gamma}{\mu} \log\det\left(\frac{J_{m_1}K[\Xbf^1]J_{m_1}}{m_1\gamma}+I_{m_1}\right)
			\nonumber
			\\
			&= \left(\frac{\gamma}{\mu}-1 - \log\frac{\gamma}{\mu}\right)
			%\nonumber
			\\
			& \quad - \frac{\gamma}{\mu}\log\dettwo\left(I_{m_1}+\frac{J_{m_1}K[\Xbf^1]J_{m_1}}{m_1\gamma}\right) 
			%\\
			%&\quad  -
			-\frac{\gamma}{\mu}\log\dettwo\left[\left(I_{m_2}+\frac{J_{m_2}K[\Xbf^2]J_{m_2}}{m_2\mu}\right)^{-1}\right] 
			\nonumber
			\\
			&\quad - \trace\left[\frac{J_{m_2}K[\Xbf^2,\Xbf^1]J_{m_1}K[\Xbf^1,\Xbf^2]J_{m_2}}{m_1m_2\gamma\mu}\left(I_{m_2}+\frac{J_{m_2}K[\Xbf^2]J_{m_2}}{m_2\mu}\right)^{-1}\right].
			\nonumber
		\end{align}
	}	
\end{theorem}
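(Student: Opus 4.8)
The plan is to realize each centered RKHS covariance operator as a product $AA^{*}$ (resp.\ $BB^{*}$) of a finite-rank operator with its adjoint, and then invoke Theorem~\ref{theorem:logdet-alpha-1-AAstar-BBstar-representation-switch} with $\H_1 = \R^{m_1}$, $\H_2 = \R^{m_2}$, and $\H = \H_K$. The key algebraic observation is that the centering matrix $J_m$ is symmetric and idempotent, $J_m^{*} = J_m = J_m^2$, so that the expression $C_{\Phi(\Xbf^1)} = \frac{1}{m_1}\Phi(\Xbf^1)J_{m_1}\Phi(\Xbf^1)^{*}$ from Eq.~\eqref{equation:covariance-operator} factors as $C_{\Phi(\Xbf^1)} = AA^{*}$ with $A = \frac{1}{\sqrt{m_1}}\Phi(\Xbf^1)J_{m_1}:\R^{m_1}\mapto \H_K$, and similarly $C_{\Phi(\Xbf^2)} = BB^{*}$ with $B = \frac{1}{\sqrt{m_2}}\Phi(\Xbf^2)J_{m_2}:\R^{m_2}\mapto \H_K$.

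First I would verify the hypotheses of Theorem~\ref{theorem:logdet-alpha-1-AAstar-BBstar-representation-switch}: since $\dim(\H_K)=\infty$ the ambient-space assumption holds, and because $\H_1=\R^{m_1}$, $\H_2=\R^{m_2}$ are finite-dimensional, both $A$ and $B$ are finite-rank, hence compact, with $A^{*}A\in\Tr(\R^{m_1})$ and $B^{*}B\in\Tr(\R^{m_2})$ automatically. Next, using $J_{m_i}^2 = J_{m_i}$ together with $\Phi(\Xbf^i)^{*}\Phi(\Xbf^j) = K[\Xbf^i,\Xbf^j]$, I would compute the three finite matrices that enter the formula: $A^{*}A = \frac{1}{m_1}J_{m_1}K[\Xbf^1]J_{m_1}$, $B^{*}B = \frac{1}{m_2}J_{m_2}K[\Xbf^2]J_{m_2}$, and $B^{*}AA^{*}B = (B^{*}A)(A^{*}B) = \frac{1}{m_1 m_2}J_{m_2}K[\Xbf^2,\Xbf^1]J_{m_1}K[\Xbf^1,\Xbf^2]J_{m_2}$, the last again using idempotency of $J_{m_1}$. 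Substituting $A^{*}A/\gamma$, $B^{*}B/\mu$, and $B^{*}AA^{*}B/(\gamma\mu)$ into both displayed expressions of Theorem~\ref{theorem:logdet-alpha-1-AAstar-BBstar-representation-switch} then reproduces Eq.~\eqref{equation:d1-logdet-RKHS} term by term.

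The only point requiring care, rather than a genuine obstacle, is the reduction of the infinite-dimensional spectral quantities to ordinary matrix operations. The Fredholm determinant $\det$, the Hilbert--Carleman determinant $\dettwo$, and the traces appearing in Theorem~\ref{theorem:logdet-alpha-1-AAstar-BBstar-representation-switch} all act on operators on the index spaces $\H_1=\R^{m_1}$ and $\H_2=\R^{m_2}$; since these are finite-dimensional, $\det(I+M)$ coincides with the usual matrix determinant and $\dettwo(I+M)=\det(I+M)\exp(-\trace M)$ reduces to its finite-matrix form, so no convergence issues arise and the substitution is purely algebraic. I would close by noting that the scalar $\frac{\gamma}{\mu}-1-\log\frac{\gamma}{\mu}$ enters without a factor of $\dim(\H_K)$, precisely because the $d^1_{\logdet}$ formula of Theorem~\ref{theorem:logdet-alpha-1-AAstar-BBstar-representation-switch} is the $\dim(\H)=\infty$ version, consistent with the hypothesis $\dim(\H_K)=\infty$.
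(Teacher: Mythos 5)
Your proposal is correct and follows exactly the paper's route: the paper also obtains this result by applying Theorem~\ref{theorem:logdet-alpha-1-AAstar-BBstar-representation-switch} with $A = \frac{1}{\sqrt{m_1}}\Phi(\Xbf^1)J_{m_1}$ and $B = \frac{1}{\sqrt{m_2}}\Phi(\Xbf^2)J_{m_2}$, using $J_{m_i}^2 = J_{m_i}$ to identify $AA^{*} = C_{\Phi(\Xbf^1)}$, $BB^{*} = C_{\Phi(\Xbf^2)}$ and the Gram-matrix expressions for $A^{*}A$, $B^{*}B$, and $B^{*}AA^{*}B$. Your verification of the compactness/trace-class hypotheses and the reduction of $\det$, $\dettwo$ to finite-matrix operations is a correct (if routine) filling-in of details the paper leaves implicit.
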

The following is the corresponding version when $\dim(\H_K) < \infty$.
\begin{theorem}
	(\textbf{Alpha Log-Det divergences between RKHS covariance operators, $\alpha=1$ - finite-dimensional case})
	\label{theorem:LogDet-RKHS-alpha=1-finite}
	Assume Assumptions A1-A3.
	Let $\gamma, \mu \in \R, \gamma >0, \mu > 0$ be fixed. Assume that $\dim(\H_K) < \infty$. Then
	{\small
		\begin{align}
			&d^1_{\logdet}[(C_{\Phi(\Xbf^1)} + \gamma I_{\H_K}), (C_{\Phi(\Xbf^2)} + \mu I_{\H_K})] 
			\nonumber
			\\
			&= \left(\frac{\gamma}{\mu}-1 - \log\frac{\gamma}{\mu}\right)\dim(\H_K) +
			\trace\left[\frac{J_{m_1}K[\Xbf^1]J_{m_1}}{m_1\mu}\right] 
			%\\
			%&\quad 
			%+ 
			\\
			& \quad - \frac{\gamma}{\mu}\trace\left[\left(\frac{J_{m_2}K[\Xbf^2]J_{m_2}}{m_2\mu} + \frac{J_{m_2}K[\Xbf^2,\Xbf^1]J_{m_1}K[\Xbf^1, \Xbf^2]J_{m_2}}{m_1m_2\gamma \mu}\right)\left(I_{m_2}+\frac{J_{m_2}K[\Xbf^2]J_{m_2}}{m_2\mu}\right)^{-1}\right]
			\nonumber
			\\
			&\quad + \log\det\left(\frac{J_{m_2}K[\Xbf^2]J_{m_2}}{m_2\mu} + I_{m_2}\right)
			- \log\det\left(\frac{J_{m_1}K[\Xbf^1]J_{m_1}}{m_1\gamma} + I_{m_1}\right).
			\nonumber 	
		\end{align}
	}
\end{theorem}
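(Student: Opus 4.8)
The plan is to realize the two empirical RKHS covariance operators as $AA^{*}$ and $BB^{*}$ for suitable finite-rank operators and then apply Theorem \ref{theorem:logdet-alpha-1-AAstar-BBstar-representation-switch-finite} verbatim. The key algebraic fact is that the centering matrix $J_m$ is symmetric and idempotent, so from Eq.\eqref{equation:covariance-operator},
\[
C_{\Phi(\Xbf)} = \frac{1}{m}\Phi(\Xbf)J_m\Phi(\Xbf)^{*} = \frac{1}{m}\Phi(\Xbf)J_mJ_m\Phi(\Xbf)^{*} = \left(\tfrac{1}{\sqrt{m}}\Phi(\Xbf)J_m\right)\left(\tfrac{1}{\sqrt{m}}\Phi(\Xbf)J_m\right)^{*}.
\]
Accordingly I would set $\H_1 = \R^{m_1}$, $\H_2 = \R^{m_2}$, $\H = \H_K$, and define the finite-rank operators $A = \frac{1}{\sqrt{m_1}}\Phi(\Xbf^1)J_{m_1}:\R^{m_1}\mapto\H_K$ and $B = \frac{1}{\sqrt{m_2}}\Phi(\Xbf^2)J_{m_2}:\R^{m_2}\mapto\H_K$, so that $C_{\Phi(\Xbf^1)} = AA^{*}$ and $C_{\Phi(\Xbf^2)} = BB^{*}$, with $\gamma$ attached to the first argument and $\mu$ to the second. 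Being finite-rank, $A$ and $B$ are compact and $A^{*}A$, $B^{*}B$ are trace class (indeed $m_1\times m_1$ and $m_2\times m_2$ matrices), so with $\dim(\H_K)<\infty$ all hypotheses of Theorem \ref{theorem:logdet-alpha-1-AAstar-BBstar-representation-switch-finite} are satisfied.

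The next step is to rewrite each operator appearing in that theorem in terms of Gram matrices. Using $K[\Xbf^i] = \Phi(\Xbf^i)^{*}\Phi(\Xbf^i)$, $K[\Xbf^1,\Xbf^2] = \Phi(\Xbf^1)^{*}\Phi(\Xbf^2)$ (which follow from the reproducing property), together with $J_m^2 = J_m$, a short computation yields
\[
A^{*}A = \frac{J_{m_1}K[\Xbf^1]J_{m_1}}{m_1}, \qquad B^{*}B = \frac{J_{m_2}K[\Xbf^2]J_{m_2}}{m_2},
\]
\[
B^{*}AA^{*}B = \frac{J_{m_2}K[\Xbf^2,\Xbf^1]J_{m_1}K[\Xbf^1,\Xbf^2]J_{m_2}}{m_1m_2}.
\]
Substituting these into the formula of Theorem \ref{theorem:logdet-alpha-1-AAstar-BBstar-representation-switch-finite}, dividing by $\gamma$ and $\mu$ as prescribed, and identifying $I_{\H_1} = I_{m_1}$, $I_{\H_2} = I_{m_2}$, and $\dim(\H) = \dim(\H_K)$, reproduces the claimed expression term by term: the leading $\left(\frac{\gamma}{\mu}-1-\log\frac{\gamma}{\mu}\right)\dim(\H_K)$ contribution, the $\trace[J_{m_1}K[\Xbf^1]J_{m_1}/(m_1\mu)]$ term, the combined $\frac{\gamma}{\mu}\trace[\cdots]$ term carrying the cross-Gram product $K[\Xbf^2,\Xbf^1]J_{m_1}K[\Xbf^1,\Xbf^2]$, and the two $\log\det$ terms.

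I do not expect a genuine obstacle; the content is a bookkeeping substitution reducing the operator identity of Theorem \ref{theorem:logdet-alpha-1-AAstar-BBstar-representation-switch-finite} to kernel matrices. The points needing care will be (i) verifying that $C_{\Phi(\Xbf^1)}+\gamma I_{\H_K}$ and $C_{\Phi(\Xbf^2)}+\mu I_{\H_K}$ are positive definite, which is immediate since $AA^{*}, BB^{*}\in\Sym^{+}(\H_K)$ while $\gamma,\mu>0$; and (ii) tracking the domains of the identity operators and determinants, so that $\det$ and $\trace$ in the final formula act on the coordinate spaces $\R^{m_1},\R^{m_2}$ rather than on $\H_K$. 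It is precisely the factorization through $J_m = J_m^2$ that allows the cross term $B^{*}AA^{*}B$ to collapse to the single Gram-matrix product displayed above.
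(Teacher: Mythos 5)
Your proposal is correct and matches the paper's own argument: the paper likewise sets $A = \frac{1}{\sqrt{m_1}}\Phi(\Xbf^1)J_{m_1}$, $B = \frac{1}{\sqrt{m_2}}\Phi(\Xbf^2)J_{m_2}$, uses $J_m^2 = J_m$ to get $AA^{*} = C_{\Phi(\Xbf^1)}$, $BB^{*} = C_{\Phi(\Xbf^2)}$, and substitutes the resulting Gram-matrix identities into Theorem \ref{theorem:logdet-alpha-1-AAstar-BBstar-representation-switch-finite}. No gaps.
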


\subsection{Consistency of Alpha Log-Det divergences in the RKHS setting}
\label{section:estimate-RKHS}
We recall that $C_{\Phi(\Xbf^i)}$ is the empirical version of $C_{\Phi,\rho_i}$, $i=1,2$.
In general, since $\rho_1,\rho_2$ are unknown, $d^{\alpha}_{\logdet}[(C_{\Phi,\rho_1} + \gamma I_{\H_K}), (C_{\Phi, \rho_2}+\gamma I_{\H_K})]$ is unknown. 
On the other hand, from the sample data $\Xbf^1,\Xbf^2$,
we have explicit expressions for 
$d^{\alpha}_{\logdet}[(C_{\Phi(\Xbf^1),\rho_1} + \gamma I_{\H_K}), (C_{\Phi(\Xbf^2)}+\gamma I_{\H_K})]$.
In this section, we show that
\begin{align*}
	\text{ As $m_1, m_2 \approach \infty$,  }\;\;\;&d^{\alpha}_{\logdet}[(C_{\Phi(\Xbf^1),\rho_1} + \gamma I_{\H_K}), (C_{\Phi(\Xbf^2)}+\gamma I_{\H_K})]
	\\
	\text{ consistently estimates  }\;\;\;
	&d^{\alpha}_{\logdet}[(C_{\Phi,\rho_1} + \gamma I_{\H_K}), (C_{\Phi, \rho_2}+\gamma I_{\H_K})]
\end{align*} 
with dimension-independent sample complexities.
For simplicity, we assume the following
% and $AJ_m$ is the matrix obtai,ned from the (possibly infinite) matrix $A$ by subtracting the mean column.
\begin{align}
\text{(Assumption \textbf{A4}) }\;\;\;\sup_{x \in \Xcal}K(x,x) \leq \kappa^2.
\end{align}
The convergence of $\mu_{\Phi(\Xbf)}$ and $C_{\Phi(\Xbf)}$ towards $\mu_{\Phi}$ and $C_{\Phi}$, respectively, is %quantified
given 
by the following
\begin{theorem}
	(\textbf{Convergence of RKHS mean and covariance operators - bounded kernels \citep{Minh:2021EntropicConvergenceGaussianMeasures}})
	\label{theorem:CPhi-concentration}
	Assume Assumptions A1-A4.
%	Assume that $\sup_{x \in \Xcal}K(x,x)\leq \kappa^2$.
	%Assumptions 1-6.
	Let $\Xbf = (x_i)_{i=1}^m$, $m \in \Nbb$, be independently  sampled from $(\Xcal, \rho)$. Then
	%\begin{align}
	$||\mu_{\Phi}||_{\H_K} \leq \kappa$, $||\mu_{\Phi(\Xbf)}||_{\H_K} \leq \kappa$ $\forall \Xbf \in \Xcal^m$,
	%\\
	$||C_{\Phi}||_{\HS(\H_K)} \leq 2\kappa^2$,
	%,\;\;\;
	$||C_{\Phi(\Xbf)}||_{\HS(\H_K)}  \leq 2\kappa^2$ $\forall \Xbf \in \Xcal^m$.
	%\end{align}
	For any $0 < \delta <1$, with probability at least $1-\delta$, 
	%both of the following hold
	%let $U \subset \X^m$ be such that both of the following hold  
	%with probability at least $1-\delta$,
	{\small
		\begin{align}
			||\mu_{\Phi(\Xbf)} - \mu_{\Phi}||_{\H_K} & \leq \kappa\left(\frac{2\log\frac{4}{\delta}}{m} + \sqrt{\frac{2\log\frac{4}{\delta}}{m}}\right),\;\;
			\\
			||C_{\Phi(\Xbf)} - C_{\Phi}||_{\HS(\H_K)} &\leq 3\kappa^2\left(\frac{2\log\frac{4}{\delta}}{m} + \sqrt{\frac{2\log\frac{4}{\delta}}{m}}\right).
		\end{align}
	}
	%	$\forall \Xbf \in U$. Then $\rho^m(U) \geq 1-\delta$.
\end{theorem}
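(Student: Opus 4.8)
The plan is to reduce both concentration inequalities to two applications of Proposition \ref{proposition:Pinelis} — one for the mean embedding $\mu_{\Phi}$ and one for the uncentered second-moment operator $L_K$ — and then to absorb the discrepancy between the centered and uncentered empirical covariances into the mean deviation. First I would dispose of the deterministic norm bounds. Since $\mu_{\Phi} = \int_{\Xcal}\Phi(x)d\rho(x)$, Jensen's inequality gives $||\mu_{\Phi}||_{\H_K} \leq \int_{\Xcal}||\Phi(x)||_{\H_K}d\rho(x) = \int_{\Xcal}\sqrt{K(x,x)}d\rho(x) \leq \kappa$ by Assumption A4, and the identical estimate applies to the empirical average $\mu_{\Phi(\Xbf)}$. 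For the covariance I would use the rank-one identity $||u \otimes u||_{\HS} = ||u||^2$ together with $C_{\Phi} = L_K - \mu_{\Phi}\otimes\mu_{\Phi}$: since $||\Phi(x)\otimes\Phi(x)||_{\HS} = K(x,x) \leq \kappa^2$ one gets $||L_K||_{\HS} \leq \kappa^2$, and the triangle inequality yields $||C_{\Phi}||_{\HS} \leq \kappa^2 + \kappa^2 = 2\kappa^2$, again verbatim for the empirical versions.

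Next I would invoke Proposition \ref{proposition:Pinelis} twice, each at confidence level $1-\delta/2$. Taking $\xi(x) = \Phi(x)$ gives $M = \kappa$ and $\sigma^2(\xi) = \bE K(x,x) \leq \kappa^2$, with $\bE\xi = \mu_{\Phi}$ and empirical average $\mu_{\Phi(\Xbf)}$; the substitution $\delta \mapsto \delta/2$ is what produces the $\log\frac{4}{\delta}$ in the stated mean bound. Taking $\xi(x) = \Phi(x)\otimes\Phi(x) \in \HS(\H_K)$ gives $M = \kappa^2$ and $\sigma^2(\xi) = \bE K(x,x)^2 \leq \kappa^4$, with $\bE\xi = L_K$ and empirical average $L_{K,\Xbf}$, hence $||L_{K,\Xbf} - L_K||_{\HS} \leq \kappa^2 t$ where I abbreviate $t = \frac{2\log\frac{4}{\delta}}{m} + \sqrt{\frac{2\log\frac{4}{\delta}}{m}}$.

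The key step is then to pass from the uncentered $L_{K,\Xbf}$ to the centered empirical covariance $C_{\Phi(\Xbf)}$. Writing $C_{\Phi(\Xbf)} - C_{\Phi} = (L_{K,\Xbf} - L_K) - (\mu_{\Phi(\Xbf)}\otimes\mu_{\Phi(\Xbf)} - \mu_{\Phi}\otimes\mu_{\Phi})$ and using the telescoping identity $\mu_{\Phi(\Xbf)}\otimes\mu_{\Phi(\Xbf)} - \mu_{\Phi}\otimes\mu_{\Phi} = \mu_{\Phi(\Xbf)}\otimes(\mu_{\Phi(\Xbf)} - \mu_{\Phi}) + (\mu_{\Phi(\Xbf)} - \mu_{\Phi})\otimes\mu_{\Phi}$, the second term is bounded in $\HS$-norm by $(||\mu_{\Phi(\Xbf)}|| + ||\mu_{\Phi}||)\,||\mu_{\Phi(\Xbf)} - \mu_{\Phi}|| \leq 2\kappa\,||\mu_{\Phi(\Xbf)} - \mu_{\Phi}||_{\H_K}$. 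On the intersection of the two Pinelis events, which has probability at least $1-\delta$ by a union bound, I may feed in both $||\mu_{\Phi(\Xbf)} - \mu_{\Phi}||_{\H_K} \leq \kappa t$ and $||L_{K,\Xbf} - L_K||_{\HS} \leq \kappa^2 t$ to obtain $||C_{\Phi(\Xbf)} - C_{\Phi}||_{\HS} \leq \kappa^2 t + 2\kappa\cdot\kappa t = 3\kappa^2 t$, which is precisely the claimed factor of $3$.

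The only genuine subtlety is that the empirical mean enters the empirical covariance nonlinearly, so one cannot present $C_{\Phi(\Xbf)}$ itself as an average of i.i.d.\ Hilbert-Schmidt variables and apply Proposition \ref{proposition:Pinelis} directly; the telescoping decomposition above is exactly what decouples the nonlinearity, at the cost of coupling the covariance estimate to the mean estimate. This coupling is harmless because both controlling events are secured simultaneously by the single union bound, and it is that union bound, splitting the failure probability as $\delta/2 + \delta/2$, that accounts for the $\frac{4}{\delta}$ appearing inside every logarithm in the statement.
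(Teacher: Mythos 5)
Your proof is correct, and since the paper only cites this result from \citep{Minh:2021EntropicConvergenceGaussianMeasures} without reproducing a proof, your argument is exactly the canonical one: the constants in the statement (the $\log\frac{4}{\delta}$ from a union bound over two applications of Proposition \ref{proposition:Pinelis}, and the factor $3\kappa^2 = \kappa^2 + 2\kappa\cdot\kappa$ from combining the second-moment deviation with the telescoped rank-one mean terms) are precisely the fingerprints of the decomposition $C_{\Phi(\Xbf)} - C_{\Phi} = (L_{K,\Xbf}-L_K) - \bigl(\mu_{\Phi(\Xbf)}\otimes(\mu_{\Phi(\Xbf)}-\mu_{\Phi}) + (\mu_{\Phi(\Xbf)}-\mu_{\Phi})\otimes\mu_{\Phi}\bigr)$ that you use. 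No gaps.
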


Combining Theorem \ref{theorem:CPhi-concentration} with Theorem \ref{theorem:logdet-approx-infinite-sequence}, we obtain the following finite sample estimate of the Alpha Log-Det divergences between RKHS covariance operators

\begin{theorem}
	(\textbf{Estimation of Alpha Log-Det divergences between RKHS covariance operators})
	\label{theorem:logdet-approximate-RKHS-infinite}
	Assume Assumptions A1-A4. 
	Let $-1 \leq \alpha \leq 1$ be fixed.
	Let $\gamma \in \R, \gamma > 0$ be fixed.
	Let $\Xbf^i = (x^i_j)_{j=1}^{m_i}$, $i=1,2$, be independently sampled
	from $(\Xcal, \rho_i)$.
	%	Assume $\dim(\H_K) = \infty$. 
	For any $0 < \delta < 1$, with probability at least $1-\delta$,
	{\small
		\begin{align}
			&\left|d^{\alpha}_{\logdet}[(C_{\Phi(\Xbf^1)} + \gamma I_{\H_K}), (C_{\Phi(\Xbf^2)} + \gamma I_{\H_K}) ] 
			-d^{\alpha}_{\logdet}[(C_{\Phi,\rho_1} + \gamma I_{\H_K}), (C_{\Phi, \rho_2} + \gamma I_{\H_K})]
			\right|
			\nonumber
			\\
			%&\leq
			%\frac{12}{\gamma^2}\left(1+ \frac{2\kappa_2^2}{\gamma}\right)^2\left(\kappa_1^2 + \kappa_2^2 + \frac{\kappa_1^2\kappa_2^2}{\gamma}\right)
			%\\
			%&\quad \times \left[\kappa_1^2\left(\frac{2\log\frac{8}{\delta}}{m_1} + \sqrt{\frac{2\log\frac{8}{\delta}}{m_1}}\right)
			%+ \left(1+\frac{\kappa_1^2}{\gamma}\right)\kappa_2^2\left(\frac{2\log\frac{8}{\delta}}{m_2} + \sqrt{\frac{2\log\frac{8}{\delta}}{m_2}}\right)
			%\right]. 
			%\nonumber
			%\\
			&\leq \frac{12\kappa^4}{\gamma^2}\left(1+ \frac{2\kappa^2}{\gamma}\right)^2\left(2 +  \frac{\kappa^2}{\gamma}\right)
			%\\
			%&\quad \times 
			\left[\left(\frac{2\log\frac{8}{\delta}}{m_1} + \sqrt{\frac{2\log\frac{8}{\delta}}{m_1}}\right)
			+ \left(1+\frac{2\kappa^2}{\gamma}\right)\left(\frac{2\log\frac{8}{\delta}}{m_2} + \sqrt{\frac{2\log\frac{8}{\delta}}{m_2}}\right)
			\right].
		\end{align}
	}
\end{theorem}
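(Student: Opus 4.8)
The plan is to combine the deterministic Lipschitz-type bound of Theorem~\ref{theorem:logdet-approx-infinite-sequence} with the probabilistic concentration estimate of Theorem~\ref{theorem:CPhi-concentration}. First I would set $A_n = C_{\Phi(\Xbf^1)}$, $A = C_{\Phi,\rho_1}$, $B_n = C_{\Phi(\Xbf^2)}$, $B = C_{\Phi,\rho_2}$. All four operators lie in $\Sym^{+}(\H_K) \cap \HS(\H_K)$, so Theorem~\ref{theorem:logdet-approx-infinite-sequence} applies verbatim and, for every $-1 \leq \alpha \leq 1$ (including the limiting cases $\alpha = \pm 1$), bounds the target quantity $|d^{\alpha}_{\logdet}[(C_{\Phi(\Xbf^1)}+\gamma I_{\H_K}), (C_{\Phi(\Xbf^2)}+\gamma I_{\H_K})] - d^{\alpha}_{\logdet}[(C_{\Phi,\rho_1}+\gamma I_{\H_K}),(C_{\Phi,\rho_2}+\gamma I_{\H_K})]|$ by an explicit expression in the operator norms, the Hilbert--Schmidt norms, and the Hilbert--Schmidt distances $\|A_n - A\|_{\HS}$, $\|B_n - B\|_{\HS}$ of these four operators.

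Next I would insert the uniform bounds furnished by Theorem~\ref{theorem:CPhi-concentration}, namely $\|C_{\Phi,\rho_i}\|_{\HS} \leq 2\kappa^2$ and $\|C_{\Phi(\Xbf^i)}\|_{\HS} \leq 2\kappa^2$, and hence the same bound for the operator norms, which are dominated by the Hilbert--Schmidt norms. Substituting these into the prefactors of Theorem~\ref{theorem:logdet-approx-infinite-sequence} collapses the products $(1 + \gamma^{-1}\|B_n\|)(1 + \gamma^{-1}\|B\|)$ into $(1 + 2\kappa^2/\gamma)^2$ and the factor $(1 + (2\gamma)^{-1}\|A_n\| + (2\gamma)^{-1}\|A\|)$ into $(1 + 2\kappa^2/\gamma)$, while the final product-of-norms bracket is bounded by a quantity of order $\kappa^2(2 + \kappa^2/\gamma)$. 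At this stage only the two sampling errors $\|A_n-A\|_{\HS}$ and $\|B_n-B\|_{\HS}$ remain to be controlled.

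To control these I would apply Theorem~\ref{theorem:CPhi-concentration} to each data set separately at confidence level $\delta/2$; a union bound then guarantees that, with probability at least $1-\delta$, both inequalities $\|C_{\Phi(\Xbf^i)}-C_{\Phi,\rho_i}\|_{\HS} \leq 3\kappa^2 \left(\frac{2\log(8/\delta)}{m_i} + \sqrt{\frac{2\log(8/\delta)}{m_i}}\right)$, $i=1,2$, hold simultaneously, the replacement $\delta \mapsto \delta/2$ turning the $\log(4/\delta)$ of Theorem~\ref{theorem:CPhi-concentration} into the $\log(8/\delta)$ of the claim. Factoring the common $3\kappa^2$ out of the weighted bracket $\|A_n-A\|_{\HS} + (1+2\kappa^2/\gamma)\|B_n-B\|_{\HS}$ produces exactly the two sampling-error terms with the asymmetric weight $(1+2\kappa^2/\gamma)$ on the $m_2$ contribution, and assembles the stated bound.

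The routine but delicate part will be the constant bookkeeping: faithfully propagating the operator-norm and Hilbert--Schmidt-norm bounds through the several nested factors of Theorem~\ref{theorem:logdet-approx-infinite-sequence} and grouping them so that the overall coefficient reduces to $\tfrac{12\kappa^4}{\gamma^2}(1+2\kappa^2/\gamma)^2(2+\kappa^2/\gamma)$. Since Theorem~\ref{theorem:logdet-approx-infinite-sequence} is valid uniformly over $-1 \leq \alpha \leq 1$ and the concentration estimate of Theorem~\ref{theorem:CPhi-concentration} is dimension-free, the resulting sample complexity is independent of $\dim(\H_K)$, as asserted.
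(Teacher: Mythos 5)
Your proposal is correct and follows exactly the paper's own argument: apply the deterministic Hilbert--Schmidt Lipschitz bound of Theorem~\ref{theorem:logdet-approx-infinite-sequence} with $A_n=C_{\Phi(\Xbf^1)}$, $A=C_{\Phi,\rho_1}$, $B_n=C_{\Phi(\Xbf^2)}$, $B=C_{\Phi,\rho_2}$, substitute the uniform bounds $\|\cdot\|_{\HS}\leq 2\kappa^2$ and the two concentration inequalities from Theorem~\ref{theorem:CPhi-concentration} at level $\delta/2$ each (whence $\log\frac{8}{\delta}$), and collect constants. The remaining work is precisely the bookkeeping you describe, and it yields the stated coefficient.
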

If $\kappa$ is an absolute constant, e.g. for exponential kernels such as Gaussian kernels,
then the convergence in Theorem \ref{theorem:logdet-approximate-RKHS-infinite} is completely {\it dimension-independent}.

\section{Regularized Kullback-Leibler and R\'enyi divergences in RKHS}
\label{section:renyi-RKHS-infinite}

Let us now apply the formulation of regularized R\'enyi and KL divergences stated in Definition 
\ref{definition:renyi-regularized} to the RKHS setting. Assume Assumptions A1-A3.
Two Borel probability measures $\rho_1,\rho_2$ on $\Xcal$ satisfying A3 give rise to two corresponding Gaussian measures on $\H_K$, 
namely $\Ncal(\mu_{\Phi,\rho_1}, C_{\Phi,\rho_1})$, $\Ncal(\mu_{\Phi,\rho_2}, C_{\Phi,\rho_2})$. The following shows that, for characteristic kernels, for any $\gamma \in \R, \gamma > 0$, the divergence
$D^{\gamma}_{\Rrm,r}[\Ncal(\mu_{\Phi,\rho_1}, C_{\Phi,\rho_1})||\Ncal(\mu_{\Phi,\rho_2}, C_{\Phi,\rho_2})]$ is a divergence between $\rho_1$ and $\rho_2$.

\begin{theorem}
	\label{theorem:divergence-RKHS-characteristic}
	Assume Assumptions A1-A3. Assume further that $K$ is characteristic. Let $\gamma \in \R, \gamma > 0$ be fixed. Let $0 \leq r \leq 1$ be fixed. Then
	\begin{align}
		D^{\gamma}_{\Rrm,r}[\Ncal(\mu_{\Phi,\rho_1}, C_{\Phi,\rho_1})||\Ncal(\mu_{\Phi,\rho_2}, C_{\Phi,\rho_2})]&\geq 0,
		\\
		D^{\gamma}_{\Rrm,r}[\Ncal(\mu_{\Phi,\rho_1}, C_{\Phi,\rho_1})||\Ncal(\mu_{\Phi,\rho_2}, C_{\Phi,\rho_2})] &= 0 \equivalent \rho_1 = \rho_2.
	\end{align}
\end{theorem}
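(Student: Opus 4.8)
The plan is to decompose the regularized R\'enyi divergence of Definition \ref{definition:renyi-regularized} into its ``mean part'' and its ``covariance part'' and treat the two separately. Writing $m_i = \mu_{\Phi,\rho_i}$ and $C_i = C_{\Phi,\rho_i}$ for $i=1,2$, recall that each $C_i$ is a positive trace class, hence Hilbert-Schmidt, operator, so that $C_i \in \Sym^{+}(\H_K)\cap \HS(\H_K)$. Setting $M_r = (1-r)(C_1+\gamma I) + r(C_2+\gamma I)$, I would write
\[
D^{\gamma}_{\Rrm,r}[\Ncal(m_1,C_1)||\Ncal(m_2,C_2)] = T_1 + T_2, \qquad T_1 = \tfrac{1}{2}\la m_1-m_2, M_r^{-1}(m_1-m_2)\ra, \quad T_2 = \tfrac{1}{2}d^{2r-1}_{\logdet}[(C_1+\gamma I),(C_2+\gamma I)].
\]

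First I would establish nonnegativity of both terms. Since $0\le r\le 1$, we have $M_r = (1-r)C_1 + rC_2 + \gamma I \ge \gamma I > 0$, so $M_r$ is positive definite and invertible with $M_r^{-1}$ positive definite; hence $T_1\ge 0$. For $T_2$, the parameter $\alpha = 2r-1$ lies in $[-1,1]$, so Theorem \ref{theorem:Log-Det-positive-HS} applies and gives $d^{2r-1}_{\logdet}[(C_1+\gamma I),(C_2+\gamma I)]\ge 0$, with equality if and only if $C_1 = C_2$. Adding the two, $D^{\gamma}_{\Rrm,r}\ge 0$, which is the first assertion.

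For the characterization of equality, the reverse implication is immediate: if $\rho_1=\rho_2$ then $m_1=m_2$ and $C_1=C_2$, so $T_1=T_2=0$ and the divergence vanishes. For the forward implication, suppose $D^{\gamma}_{\Rrm,r}=0$. Being a sum of two nonnegative quantities, both $T_1$ and $T_2$ must vanish. Now $M_r$ is bounded with $M_r \le (\gamma + ||C_1|| + ||C_2||)I =: cI$, so $M_r^{-1}\ge c^{-1}I$ and therefore $T_1 \ge \tfrac{1}{2c}||m_1-m_2||^2_{\H_K}$. Hence $T_1=0$ forces $m_1=m_2$, i.e. $\mu_{\Phi,\rho_1}=\mu_{\Phi,\rho_2}$. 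Since $\mu_{\Phi,\rho}=\int_{\Xcal}K_x\,d\rho(x)$ is exactly the kernel mean embedding of $\rho$, and $K$ is assumed characteristic, the map $\rho\mapsto\mu_{\Phi,\rho}$ is injective on Borel probability measures; equality of the embeddings therefore yields $\rho_1=\rho_2$.

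The only substantive ingredient is the characteristic property of $K$, which is precisely what converts equality of the mean embeddings into equality of the measures; everything else is routine positivity bookkeeping built on Theorem \ref{theorem:Log-Det-positive-HS}. I would emphasize that the mean term alone already pins down $\rho_1=\rho_2$, so the covariance term is needed only for the nonnegativity statement and not for the equality direction. The one point to verify with care is the upper bound $M_r \le cI$ used to produce a lower bound on $M_r^{-1}$ and thereby control $||m_1-m_2||_{\H_K}$; this follows from the boundedness of $C_1,C_2$, which are trace class under Assumption A3.
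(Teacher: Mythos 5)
Your proposal is correct and follows essentially the same route as the paper: nonnegativity of the mean term and of the Log-Det term separately, with the characteristic property of $K$ converting equality of the kernel mean embeddings into $\rho_1=\rho_2$. The only difference is that the paper simply cites the divergence properties of $D^{\gamma}_{\Rrm,r}$ from the earlier reference, whereas you prove them in-line (via the bound $M_r^{-1}\geq c^{-1}I$ and Theorem \ref{theorem:Log-Det-positive-HS}), and you correctly observe that the mean term alone already forces $\rho_1=\rho_2$ in the equality direction.
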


%In the RKHS setting,
Let $\Xbf^i=(x^i_j)_{j=1}^{m_i}$ be sampled from $\Xcal$ according to $\rho_i$, $i=1,2$. 
Using the Alpha Log-Det divergence formulas in RKHS, we show that
the quantity $D_{\Rrm,r}^{\gamma}[\Ncal(\mu_{\Phi(\Xbf^1)}, C_{\Phi(\Xbf^1)})|| \Ncal(\mu_{\Phi(\Xbf^2)}, C_{\Phi(\Xbf^2)})]$ admits a closed form formula in terms of the kernel Gram matrices, as follows.
\begin{theorem}
	[\textbf{Empirical R\'enyi divergences in RKHS}]
	\label{theorem:Renyi-RKHS}
	Assume Assumptions A1-A3. Let $\gamma \in \R, \gamma > 0$ be fixed. Let $0\leq r \leq 1$ be fixed. Then
	\begin{align}
		\label{equation:Renyi-RKHS}
		D_{\Rrm,r}^{\gamma}[\Ncal(\mu_{\Phi(\Xbf^1)}, C_{\Phi(\Xbf^1)})|| \Ncal(\mu_{\Phi(\Xbf^2)}, C_{\Phi(\Xbf^2)})]
		= \frac{1}{2\gamma}S_1 + \frac{1}{2}S_2.
	\end{align}
	Here $S_2 = d^{2r-1}_{\logdet}[(C_{\Phi(\Xbf^1)} + \gamma I_{\H_K}), (C_{\Phi(\Xbf^2)} + \gamma I_{\H_K})]$
	is as given in Theorems \ref{theorem:LogDet-RKHS-infinite} and \ref{theorem:LogDet-RKHS-finite} (note that they coincide when $\gamma =\mu$). 
	The quantity $S_1$ is given by
	{\small
		\begin{align}
			&S_1 = \frac{1}{m_1^2}\1_{m_1}^TK[\Xbf^1]\1_{m_1} + \frac{1}{m_2^2}\1_{m^2}^TK[\Xbf^2]\1_{m_2} -\frac{2}{m_1m_2}\1_{m_1}^TK[\Xbf^1,\Xbf^2]\1_{m_2}
			\\
			&-
			\begin{pmatrix}
				\v_1
				\\
				\v_2
			\end{pmatrix}^T
			\left[\gamma I_{m_1+m_2} + 
			\begin{pmatrix}
				\frac{1-r}{m_1}J_{m_1}K[\Xbf^1]J_{m_1} & \sqrt{\frac{r(1-r)}{m_1m_2}}J_{m_1}K[\Xbf^1,\Xbf^2]J_{m_2}
				\\
				\sqrt{\frac{r(1-r)}{m_1m_2}}J_{m_2}K[\Xbf^2,\Xbf^1]J_{m_1} &\frac{r}{m_2}J_{m_2}K[\Xbf^2]J_{m_2}
			\end{pmatrix}
			\right]^{-1}
			\begin{pmatrix}
				\v_1
				\\
				\v_2
			\end{pmatrix}
			.
			\nonumber
		\end{align}
	}
	The vectors $\v_1 \in \R^{m_1}, \v_2 \in \R^{m_2}$ are given by
	{\small
		\begin{equation}
			\begin{aligned}
				\v_1 = \sqrt{\frac{1-r}{m_1}}J_{m_1}
				\left(\frac{1}{m_1}K[\Xbf^1]\1_{m_1}  - \frac{1}{m_2}K[\Xbf^1,\Xbf^2]\1_{m_2}\right),
				\\
				\v_2 = \sqrt{\frac{r}{m_2}}J_{m_2}
				\left(\frac{1}{m_1}K[\Xbf^2,\Xbf^1]\1_{m_1}  - \frac{1}{m_2}K[\Xbf^2]\1_{m_2}\right).
			\end{aligned}
		\end{equation}
	}
\end{theorem}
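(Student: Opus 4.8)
The plan is to start from Definition~\ref{definition:renyi-regularized} applied to the empirical Gaussian measures $\Ncal(\mu_{\Phi(\Xbf^1)}, C_{\Phi(\Xbf^1)})$ and $\Ncal(\mu_{\Phi(\Xbf^2)}, C_{\Phi(\Xbf^2)})$. By that definition the divergence equals $\frac{1}{2}\la v, M^{-1}v\ra + \frac{1}{2}d^{2r-1}_{\logdet}[(C_{\Phi(\Xbf^1)}+\gamma I_{\H_K}), (C_{\Phi(\Xbf^2)}+\gamma I_{\H_K})]$, where I write $v = \mu_{\Phi(\Xbf^1)} - \mu_{\Phi(\Xbf^2)}$ and $M = \gamma I_{\H_K} + (1-r)C_{\Phi(\Xbf^1)} + r\,C_{\Phi(\Xbf^2)}$. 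The second term is exactly $\frac{1}{2}S_2$, already evaluated in Theorems~\ref{theorem:LogDet-RKHS-infinite} and \ref{theorem:LogDet-RKHS-finite}, so the entire task reduces to showing that $\gamma\la v, M^{-1}v\ra = S_1$.

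The key step is to factor the weighted covariance sum into $GG^{*}$ for a single finite-rank operator. Using $C_{\Phi(\Xbf^i)} = \frac{1}{m_i}\Phi(\Xbf^i)J_{m_i}\Phi(\Xbf^i)^{*}$ together with $J_{m_i} = J_{m_i}^{*} = J_{m_i}^2$, I would set
$$G = \left[\sqrt{\tfrac{1-r}{m_1}}\,\Phi(\Xbf^1)J_{m_1}\;\;\;\sqrt{\tfrac{r}{m_2}}\,\Phi(\Xbf^2)J_{m_2}\right]: \R^{m_1}\oplus\R^{m_2}\mapto\H_K,$$
so that $GG^{*} = (1-r)C_{\Phi(\Xbf^1)} + r\,C_{\Phi(\Xbf^2)}$ and hence $M = \gamma I_{\H_K} + GG^{*}$. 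Since $\gamma > 0$, both $M$ and $\gamma I_{m_1+m_2} + G^{*}G$ are strictly positive, hence invertible, and the push-through (Woodbury) identity gives
$$M^{-1} = \frac{1}{\gamma}I_{\H_K} - \frac{1}{\gamma}G\left(\gamma I_{m_1+m_2} + G^{*}G\right)^{-1}G^{*},$$
which collapses the infinite-dimensional inverse onto the finite $(m_1+m_2)\times(m_1+m_2)$ matrix $G^{*}G$. Consequently,
$$\gamma\la v, M^{-1}v\ra = ||v||^2_{\H_K} - \la G^{*}v, \left(\gamma I_{m_1+m_2} + G^{*}G\right)^{-1}G^{*}v\ra.$$

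It then remains to translate $||v||^2$, $G^{*}v$, and $G^{*}G$ into kernel Gram matrices via $\Phi(\Xbf^i)^{*}\Phi(\Xbf^j) = K[\Xbf^i,\Xbf^j]$ and $\mu_{\Phi(\Xbf^i)} = \frac{1}{m_i}\Phi(\Xbf^i)\1_{m_i}$. Expanding $||v||^2 = ||\mu_{\Phi(\Xbf^1)}||^2 + ||\mu_{\Phi(\Xbf^2)}||^2 - 2\la \mu_{\Phi(\Xbf^1)}, \mu_{\Phi(\Xbf^2)}\ra$ reproduces the first three scalar terms of $S_1$. Computing the two blocks of $G^{*}v$ from $\Phi(\Xbf^1)^{*}v = \frac{1}{m_1}K[\Xbf^1]\1_{m_1} - \frac{1}{m_2}K[\Xbf^1,\Xbf^2]\1_{m_2}$ and the analogous expression for $\Phi(\Xbf^2)^{*}v$ yields precisely $\v_1$ and $\v_2$, while the four blocks of $G^{*}G$ yield the stated $2\times 2$ block matrix. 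Substituting these back gives the claimed closed form. The main obstacle is not a genuine difficulty but careful bookkeeping: justifying the push-through identity in the RKHS (which holds because $G$ has finite rank and $\gamma I + GG^{*}$ is strictly positive) and tracking the centering matrices $J_{m_i}$ together with the weights $\sqrt{(1-r)/m_1}$ and $\sqrt{r/m_2}$, so that the off-diagonal factor $\sqrt{r(1-r)/(m_1 m_2)}$ emerges consistently in both $G^{*}G$ and the vectors $\v_1,\v_2$.
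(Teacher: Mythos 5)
Your proposal is correct and follows essentially the same route as the paper: the paper also writes $(1-r)C_{\Phi(\Xbf^1)}+rC_{\Phi(\Xbf^2)}$ as $AA^{*}+BB^{*}$ with $A=\sqrt{(1-r)/(m_1\gamma)}\,\Phi(\Xbf^1)J_{m_1}$, $B=\sqrt{r/(m_2\gamma)}\,\Phi(\Xbf^2)J_{m_2}$ (your $G$ is exactly the block operator $(A\;B)$ up to the $1/\sqrt{\gamma}$ scaling) and invokes the same Sherman--Morrison--Woodbury push-through identity (its Lemma~\ref{lemma:inverse-AAstar-plus-BBstar}) before converting to Gram matrices via Lemma~\ref{lemma:muPhi-norm-HK}. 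The bookkeeping you outline for $\|v\|^2$, $G^{*}v$, and $G^{*}G$ reproduces $S_1$, $\v_1$, $\v_2$ exactly as in the paper's proof.
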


The quantity $S_1$ in Theorem \ref{theorem:Renyi-RKHS} reduces to a much simpler form
when $r=1$, in which case we obtain the closed form formula for the KL divergence in RKHS, as follows.
\begin{theorem}
	[\textbf{Empirical KL divergence in RKHS}]
	\label{theorem:KL-RKHS}
	Assume Assumptions A1-A3. Let $\gamma \in \R, \gamma > 0$ be fixed. Then
	\begin{align}
		\label{equation:KL-RKHS}
		\KL^{\gamma}[\Ncal(\mu_{\Phi(\Xbf^1)}, C_{\Phi(\Xbf^1)})|| \Ncal(\mu_{\Phi(\Xbf^2)}, C_{\Phi(\Xbf^2)})]
		= \frac{1}{2\gamma}S_1 + \frac{1}{2}S_2.
	\end{align}
	Here $S_2 = d^{1}_{\logdet}[(C_{\Phi(\Xbf^1)} + \gamma I_{\H_K}), (C_{\Phi(\Xbf^2)} + \gamma I_{\H_K})]$
	is as given in Theorems \ref{theorem:LogDet-RKHS-alpha=1-infinite} and \ref{theorem:LogDet-RKHS-alpha=1-finite}
	(note that they coincide when $\gamma =\mu$). The quantity $S_1$ is given by
	\begin{align}
		S_1 &= \frac{1}{m_1^2}\1_{m_1}^TK[\Xbf^1]\1_{m_1} + \frac{1}{m_2^2}\1_{m^2}^TK[\Xbf^2]\1_{m_2} -\frac{2}{m_1m_2}\1_{m_1}^TK[\Xbf^1,\Xbf^2]\1_{m_2}
		\\
		&\quad- \v_2^T\left(\gamma I_{m_2} + \frac{1}{m_2}J_{m_2}K[\Xbf^2]J_{m_2}\right)^{-1}\v_2.
		\nonumber
	\end{align}
	Here $\v_2 \in \R^{m_2}$ is given by $\v_2 = \sqrt{\frac{1}{m_2}}J_{m_2}
	\left(\frac{1}{m_1}K[\Xbf^2,\Xbf^1]\1_{m_1}  - \frac{1}{m_2}K[\Xbf^2]\1_{m_2}\right)$.
\end{theorem}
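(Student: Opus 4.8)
The plan is to recognize the statement as the specialization of Theorem \ref{theorem:Renyi-RKHS} at $r=1$. Indeed, by Definition \ref{definition:renyi-regularized} the regularized KL divergence is exactly the regularized R\'enyi divergence $D^{\gamma}_{\Rrm,r}$ evaluated at $r=1$, so the overall decomposition $\KL^{\gamma}[\Ncal(\mu_{\Phi(\Xbf^1)}, C_{\Phi(\Xbf^1)})|| \Ncal(\mu_{\Phi(\Xbf^2)}, C_{\Phi(\Xbf^2)})] = \frac{1}{2\gamma}S_1 + \frac{1}{2}S_2$ is immediate from Eq.\eqref{equation:Renyi-RKHS}. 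Likewise the term $S_2 = d^{2r-1}_{\logdet}[(C_{\Phi(\Xbf^1)} + \gamma I_{\H_K}), (C_{\Phi(\Xbf^2)} + \gamma I_{\H_K})]$ becomes $d^{1}_{\logdet}$ at $r=1$, whose closed form is supplied by Theorems \ref{theorem:LogDet-RKHS-alpha=1-infinite} and \ref{theorem:LogDet-RKHS-alpha=1-finite}. The only genuine work is therefore to show that the mean-difference term $S_1$ of Theorem \ref{theorem:Renyi-RKHS} collapses to the claimed simpler form.

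For this I would substitute $r=1$ directly into the general $S_1$. The three quadratic terms $\frac{1}{m_1^2}\1_{m_1}^TK[\Xbf^1]\1_{m_1} + \frac{1}{m_2^2}\1_{m_2}^TK[\Xbf^2]\1_{m_2} -\frac{2}{m_1m_2}\1_{m_1}^TK[\Xbf^1,\Xbf^2]\1_{m_2}$ are $r$-independent and carry over verbatim. In the remaining quadratic form, the prefactor $\sqrt{\frac{1-r}{m_1}}$ defining $\v_1$ vanishes at $r=1$, so $\v_1 = 0$; simultaneously every block of the central $(m_1+m_2)\times(m_1+m_2)$ matrix carrying a factor $(1-r)$ (the top-left block and both off-diagonal blocks) vanishes, leaving the block-diagonal operator whose only nonzero block is $\frac{1}{m_2}J_{m_2}K[\Xbf^2]J_{m_2}$; and $\v_2$ reduces to $\sqrt{\frac{1}{m_2}}J_{m_2}\left(\frac{1}{m_1}K[\Xbf^2,\Xbf^1]\1_{m_1}-\frac{1}{m_2}K[\Xbf^2]\1_{m_2}\right)$. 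Because the matrix is block diagonal and $\v_1=0$, the $\R^{m_1}$-component contributes nothing and the quadratic form reduces to $\v_2^T\left(\gamma I_{m_2} + \frac{1}{m_2}J_{m_2}K[\Xbf^2]J_{m_2}\right)^{-1}\v_2$, which is precisely the subtracted term in the asserted $S_1$.

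As a self-contained cross-check (and the mechanism underlying Theorem \ref{theorem:Renyi-RKHS} itself), I would derive $S_1$ from scratch. Write $C_{\Phi(\Xbf^2)} = BB^{*}$ with $B = \frac{1}{\sqrt{m_2}}\Phi(\Xbf^2)J_{m_2}:\R^{m_2}\mapto \H_K$, using the idempotence and symmetry $J_{m_2}^2 = J_{m_2} = J_{m_2}^T$; then apply the push-through identity $(BB^{*}+\gamma I)^{-1} = \frac{1}{\gamma}\left[I - B(B^{*}B + \gamma I_{m_2})^{-1}B^{*}\right]$ to the term $\frac{1}{2}\la \mu_{\Phi(\Xbf^1)}-\mu_{\Phi(\Xbf^2)}, (C_{\Phi(\Xbf^2)}+\gamma I_{\H_K})^{-1}(\mu_{\Phi(\Xbf^1)}-\mu_{\Phi(\Xbf^2)})\ra$. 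Expanding $\|\mu_{\Phi(\Xbf^1)}-\mu_{\Phi(\Xbf^2)}\|^2_{\H_K}$ through $K[\Xbf^1], K[\Xbf^2], K[\Xbf^1,\Xbf^2]$ yields the three quadratic terms, while $B^{*}(\mu_{\Phi(\Xbf^1)}-\mu_{\Phi(\Xbf^2)}) = \v_2$ and $B^{*}B = \frac{1}{m_2}J_{m_2}K[\Xbf^2]J_{m_2}$ yield the resolvent term, with the prefactor $\frac{1}{\gamma}$ accounting for the $\frac{1}{2\gamma}$ normalization. The main point requiring care is not analytical but structural: verifying that $\v_1=0$ together with the block-diagonal degeneration of the $r=1$ matrix genuinely removes the $\R^{m_1}$ cross terms, equivalently that the push-through identity is applied with the correct idempotent centering $J_{m_2}$; everything else is routine Gram-matrix bookkeeping.
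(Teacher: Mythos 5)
Your proposal is correct, and your ``self-contained cross-check'' is in fact exactly the paper's own proof: the paper writes $C_{\Phi(\Xbf^2)}+\gamma I_{\H_K} = \gamma(BB^{*}+I_{\H_K})$ with $B = \sqrt{\tfrac{1}{m_2\gamma}}\Phi(\Xbf^2)J_{m_2}$ and applies the single-operator case of Lemma \ref{lemma:inverse-AAstar-plus-BBstar} (the push-through identity), obtaining the same $\Delta_1$ and the resolvent term $\v_2^T(\gamma I_{m_2}+\tfrac{1}{m_2}J_{m_2}K[\Xbf^2]J_{m_2})^{-1}\v_2$. Your primary route --- specializing Theorem \ref{theorem:Renyi-RKHS} at $r=1$, where $\v_1=0$ and the central matrix degenerates to block-diagonal form --- is a legitimate shortcut since that theorem is stated for $0\le r\le 1$, but it lands in the same place and adds nothing the direct derivation does not already supply.
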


From Theorems \ref{theorem:logdet-approximate-RKHS-infinite}, \ref{theorem:Renyi-RKHS}, and \ref{theorem:KL-RKHS}, we obtain the following sample complexity for the empirical estimates of 
the R\'enyi divergences (with $r=1$ corresponding to the KL divergence) 
\begin{theorem}
	[\textbf{Estimation of R\'enyi and KL divergences in RKHS}]
	\label{theorem:Renyi-approximate-RKHS-infinite}
	Assume Assumptions A1-A4.
	Let $0 \leq r \leq 1$ be fixed.
	Let $\gamma \in \R, \gamma > 0$ be fixed.
	Let $\Xbf^i = (x^i_j)_{j=1}^{m_i}$ be independently sampled
	from $(\Xcal, \rho)$. For any $0 < \delta < 1$, with probability at least $1-\delta$,
	{\small
		\begin{align}
			&\left|D_{\Rrm,r}^{\gamma}[\Ncal(\mu_{\Phi(\Xbf^1)}, C_{\Phi(\Xbf^1)})|| \Ncal(\mu_{\Phi(\Xbf^2)}, C_{\Phi(\Xbf^2)})] - D_{\Rrm,r}^{\gamma}[\Ncal(\mu_{\Phi,\rho_1}, C_{\Phi, \rho_1})|| \Ncal(\mu_{\Phi,\rho_2}, C_{\Phi, \rho_2})]\right|
			\nonumber
			\\
			& \leq C_1(\kappa,\gamma,r)  \left(\frac{2\log\frac{16}{\delta}}{m_1} + \sqrt{\frac{2\log\frac{16}{\delta}}{m_1}}\right)
			+ C_2(\kappa,\gamma,r)\left(\frac{2\log\frac{16}{\delta}}{m_2} + \sqrt{\frac{2\log\frac{16}{\delta}}{m_2}}\right),
		\end{align}
	}
	%The quantities $C_1(\kappa_1,\kappa_2,\gamma,r)$ and $C_2(\kappa_1,\kappa_2,\gamma,r)$
	%are given by
	%{\small
	%	\begin{align}
	%		C_1(\kappa_1,\kappa_2,\gamma,r) &=\frac{3(\kappa_1 + \kappa_2)^2(1-r)\kappa_1^2}{2\gamma^2} + \frac{(\kappa_1 + \kappa_2)\kappa_1}{\gamma} + \frac{6}{\gamma^2}\left(1+ \frac{2\kappa_2^2}{\gamma}\right)^2\left(\kappa_1^2 + \kappa_2^2 + \frac{\kappa_1^2\kappa_2^2}{\gamma}\right)\kappa_1^2,
	%		\nonumber
	%		\\
	%		C_2(\kappa_1,\kappa_2,\gamma,r)&=\frac{3(\kappa_1 + \kappa_2)^2r\kappa_2^2}{2\gamma^2} + \frac{(\kappa_1 + \kappa_2)\kappa_2}{\gamma}
	%		+ \frac{6}{\gamma^2}\left(1+ \frac{2\kappa_2^2}{\gamma}\right)^2\left(\kappa_1^2 + \kappa_2^2 + \frac{\kappa_1^2\kappa_2^2}{\gamma}\right) \left(1+\frac{\kappa_1^2}{\gamma}\right)\kappa_2^2.
	%	\end{align}
	%}
where $C_1(\kappa, \gamma, r) = \frac{6\kappa^4(1-r)}{\gamma^2} + \frac{2\kappa^2}{\gamma} + \frac{6\kappa^4}{\gamma^2}\left(1+ \frac{2\kappa^2}{\gamma}\right)^2\left(2 +  \frac{\kappa^2}{\gamma}\right)$, 
$C_2(\kappa, \gamma, r) = \frac{6\kappa^4r}{\gamma^2} + \frac{2\kappa^2}{\gamma} + \frac{6\kappa^4}{\gamma^2}\left(1+ \frac{2\kappa^2}{\gamma}\right)^2\left(2 +  \frac{\kappa^2}{\gamma}\right)
\left(1+\frac{2\kappa^2}{\gamma}\right)$.
\end{theorem}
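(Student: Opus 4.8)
The plan is to split $D^{\gamma}_{\Rrm,r}$ into its two constituent terms according to Definition \ref{definition:renyi-regularized} and to estimate each separately. Writing $u = \mu_{\Phi(\Xbf^1)} - \mu_{\Phi(\Xbf^2)}$, $\bar u = \mu_{\Phi,\rho_1} - \mu_{\Phi,\rho_2}$, and
\begin{align*}
M_{\Xbf} = \gamma I + (1-r)C_{\Phi(\Xbf^1)} + rC_{\Phi(\Xbf^2)}, \qquad M = \gamma I + (1-r)C_{\Phi,\rho_1} + rC_{\Phi,\rho_2},
\end{align*}
the quantity to be controlled is
\begin{align*}
\tfrac12\bigl(\la u, M_{\Xbf}^{-1}u\ra - \la \bar u, M^{-1}\bar u\ra\bigr) + \tfrac12\bigl(d^{2r-1}_{\logdet}[(C_{\Phi(\Xbf^1)}+\gamma I),(C_{\Phi(\Xbf^2)}+\gamma I)] - d^{2r-1}_{\logdet}[(C_{\Phi,\rho_1}+\gamma I),(C_{\Phi,\rho_2}+\gamma I)]\bigr).
\end{align*}
The log-det term I would bound immediately by Theorem \ref{theorem:logdet-approximate-RKHS-infinite}: invoked with confidence $\delta/2$ it holds with probability at least $1-\delta/2$, and after the factor $\tfrac12$ it contributes precisely the third summands of $C_1$ and $C_2$ (the constant $12\kappa^4/\gamma^2$ being halved to $6\kappa^4/\gamma^2$, with the asymmetric factor $1+2\kappa^2/\gamma$ on the $m_2$ term producing the extra factor in $C_2$).

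For the mean term I would use the resolvent-plus-polarization decomposition
\begin{align*}
\la u, M_{\Xbf}^{-1}u\ra - \la \bar u, M^{-1}\bar u\ra = \la u, (M_{\Xbf}^{-1}-M^{-1})u\ra + \la u - \bar u, M^{-1}(u+\bar u)\ra,
\end{align*}
the second identity using self-adjointness of $M^{-1}$. Since both $M_{\Xbf}$ and $M$ are bounded below by $\gamma I$, we have $\|M_{\Xbf}^{-1}\|,\|M^{-1}\|\le 1/\gamma$, and the resolvent identity $M_{\Xbf}^{-1}-M^{-1} = M_{\Xbf}^{-1}(M-M_{\Xbf})M^{-1}$ together with $\|u\|\le 2\kappa$ (from $\|\mu_{\Phi(\Xbf^i)}\|\le\kappa$, Theorem \ref{theorem:CPhi-concentration}) bounds the first piece by $\frac{4\kappa^2}{\gamma^2}\bigl[(1-r)\|C_{\Phi,\rho_1}-C_{\Phi(\Xbf^1)}\| + r\|C_{\Phi,\rho_2}-C_{\Phi(\Xbf^2)}\|\bigr]$. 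The second piece is bounded by $\frac{1}{\gamma}\|u-\bar u\|(\|u\|+\|\bar u\|)\le \frac{4\kappa}{\gamma}\|u-\bar u\|$, with $\|u-\bar u\|\le \|\mu_{\Phi(\Xbf^1)}-\mu_{\Phi,\rho_1}\| + \|\mu_{\Phi(\Xbf^2)}-\mu_{\Phi,\rho_2}\|$.

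I would then substitute the concentration bounds of Theorem \ref{theorem:CPhi-concentration}, applied to $\Xbf^1$ and $\Xbf^2$ each at confidence $\delta/4$, so that on the good event (probability $\ge 1-\delta/2$) one has $\|\mu_{\Phi(\Xbf^i)}-\mu_{\Phi,\rho_i}\|\le \kappa\eta_i$ and $\|C_{\Phi(\Xbf^i)}-C_{\Phi,\rho_i}\|\le\|C_{\Phi(\Xbf^i)}-C_{\Phi,\rho_i}\|_{\HS}\le 3\kappa^2\eta_i$, where $\eta_i = \frac{2\log(16/\delta)}{m_i}+\sqrt{\frac{2\log(16/\delta)}{m_i}}$ (since $4/(\delta/4)=16/\delta$). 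After the factor $\tfrac12$, the resolvent piece yields $\frac{6\kappa^4(1-r)}{\gamma^2}\eta_1 + \frac{6\kappa^4 r}{\gamma^2}\eta_2$ and the polarization piece yields $\frac{2\kappa^2}{\gamma}(\eta_1+\eta_2)$, matching the first two summands of $C_1$ and $C_2$. A union bound over the log-det failure event ($\delta/2$) and the mean failure event ($\delta/2$) gives total confidence $1-\delta$; crucially the log-det invocation at $\delta/2$ also produces the argument $\log(16/\delta)$, so every error term carries the common factor $\eta_i$ and the three contributions add to exactly $C_1\eta_1 + C_2\eta_2$.

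The main obstacle is not any single estimate but the confidence bookkeeping that makes a single $\log(16/\delta)$ appear uniformly in every term: one must allot $\delta/2$ to the log-det step (which internally already splits $\delta/4$ per sample through Theorem \ref{theorem:CPhi-concentration}) and $\delta/4$ to each of the two mean/covariance concentration events, rather than reusing one good event, which would instead produce $\log(8/\delta)$ and fail to match the stated constants. The remaining steps — the resolvent identity, self-adjoint polarization, the inequality $\|\cdot\|\le\|\cdot\|_{\HS}$, and the a priori bounds $\|\mu_{\Phi(\Xbf)}\|\le\kappa$ and $\|M_{\Xbf}^{-1}\|\le 1/\gamma$ — are entirely routine.
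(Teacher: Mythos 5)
Your proposal is correct and follows essentially the same route as the paper: split $D^{\gamma}_{\Rrm,r}$ into the mean term and the Log-Det term, bound the latter by Theorem \ref{theorem:logdet-approximate-RKHS-infinite} at confidence $\delta/2$, and bound the former via the resolvent identity together with a polarization/telescoping step that is in substance identical to Lemma \ref{lemma:mean-term-difference} and Proposition \ref{proposition:Renyi-term-RKHS-approximate}. The constants and the $\log(16/\delta)$ bookkeeping come out exactly as in the paper's proof.
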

As in Theorem \ref{theorem:logdet-approximate-RKHS-infinite}, if $\kappa$ is an absolute constant,
then the sample complexity in Theorem \ref{theorem:Renyi-approximate-RKHS-infinite} is completely {\it dimension-independent}.
Theorem \ref{theorem:Renyi-approximate-RKHS-infinite} provides the theoretical basis for the following Algorithm \ref{algorithm:estimate-divergence-RKHS} for estimating regularized R\'enyi and KL divergences between Borel probability measures on $\Xcal$.

\begin{algorithm}[h]
	\caption{Consistent estimates of regularized R\'enyi and KL divergences between Borel probability measures via RKHS Gaussian measures}
	\label{algorithm:estimate-divergence-RKHS}
	\begin{algorithmic}
		%\textbf{Input}: 
		\REQUIRE Kernel $K$
		\REQUIRE Sample data sets $\Xbf^i = (x^i_j)_{j=1}^{m_i}$, $i=1,2$, from two (unknown) Borel probability measures $\rho_1,\rho_2$ 
		%Finite covariance matrices $K^i[\Xbf]$ from processes $\xi^i$, $i=1,2$, sampled at $m$ points
		%$\Xbf = (x_j)_{j=1}^m$. 
		%Finite samples $\{\xi^i_k(x_j)\}$,
		%from $N_i$ realizations $\xi^i_k$, $1 \leq k \leq N_i$, of processes $\xi^i$, $i=1,2$, sampled at $m$ points $x_j$, $1 \leq j \leq m$. 
		\REQUIRE Regularization parameter $\gamma > 0$, $0< r < 1$ (for the R\'enyi divergence)
		
		\textbf{Procedure}
		%\begin{algorithmic}
		%\STATE{Form $m \times N_i$ data matrices $Z_i$ , with $(Z_i)_{jk} = \xi^i_k(x_j)$, $i=1,2$, $1\leq j \leq m, 1\leq k \leq N_i$}
		%\STATE{Compute $m \times m$ empirical covariance matrices $\hat{K}^i = \frac{1}{N}Z_iZ_i^{T}$, $i=1,2$} 
		%	$\hat{K}^2 = \frac{1}{M}Z_2Z_2^T$}
		\STATE{Compute $d^{2r-1}_{\logdet}$ according to Eq.\eqref{equation:d-alpha-logdet-RKHS-infinite}}
		\STATE{Compute $D^{\gamma}_{\Rrm,r}$ according to Eq. \eqref{equation:Renyi-RKHS}}
			% = \frac{1}{2}d^{2r-1}_{\logdet}\left[\left(\gamma I_m+ \frac{1}{m}{K}^1[\Xbf]\right), \left(\gamma I_m+ \frac{1}{m}{K}^2[\Xbf]\right)\right]$}
		% = D^{\gamma}_{\aiE}\left[\Ncal\left(0, \frac{1}{m}\hat{K}^1\right), \Ncal\left(0, \frac{1}{m}\hat{K}^2\right)\right]
		%= \left\|\log\left[\left(\gamma I+ \frac{1}{m}\hat{K}^1\right)^{-1/2}\left(\gamma I+ \frac{1}{m}\hat{K}^2\right)\left(\gamma I + \frac{1}{m}\hat{K}^1\right)^{-1/2}\right]\right\|_F$}
		\STATE{Compute $d^1_{\logdet}$ according to Eq.\eqref{equation:d1-logdet-RKHS}}
		\STATE{Compute $\KL^{\gamma}$ according to Eq. \eqref{equation:KL-RKHS}}
			% = \frac{1}{2}d^{1}_{\logdet}\left[\left(\gamma I_m+ \frac{1}{m}{K}^1[\Xbf]\right), \left(\gamma I_m+ \frac{1}{m}{K}^2[\Xbf]\right)\right]$}
		% = D^{\gamma}_{\logE}\left[\Ncal\left(0, \frac{1}{m}\hat{K}^1\right), \Ncal\left(0, \frac{1}{m}\hat{K}^2\right)\right]
		%= \left\|\log\left(\gamma I+ \frac{1}{m}\hat{K}^1\right) - \log\left(\gamma I + \frac{1}{m}\hat{K}^2\right) \right\|_F$}
		\RETURN{$D^{\gamma}_{\Rrm,r}$ and $\KL^{\gamma}$}
	\end{algorithmic}
\end{algorithm}

{\bf Discussion of related work}. In \citep{sledge2021estimating},
%{\bf Discussion} In \citep{sledge2021estimating}, 
the authors define a so-called {\it univariate Gram operator} for a universal kernel $K$, which is
% precisely
the operator $M_K = \mu_K\otimes \mu_K:\H_K \mapto \H_K$, where $\mu_K = \int_{T}K_td\rho(t)$. 
This is a rank-one operator, with eigenvector $\mu_K$ and eigenvalue $||\mu_K||^2_{\H_K}$.
For $\alpha > 0, \alpha \neq 1$, the following quantity is defined
\begin{align*}
	C_{\alpha}(M_{K^1}||M_{K^2}) = \frac{1}{\alpha-1}\log\trace[M_{K^{1}}^{\alpha}M_{K^2}^{1-\alpha}] - \frac{1}{\alpha-1}\log\trace(M_{K^{1}}), \;\;\text{if $\supp(M_{K^2}) \subseteq \supp(M_{K^1})$} 
\end{align*}
and $C_{\alpha}(M_{K^1}||M_{K^2}) = \infty$ otherwise. 
Since the RKHS $\H_{K^1}, \H_{K^2}$ are generally different,
the product $M_{K^{1}}^{\alpha}M_{K^2}^{1-\alpha}$ is well-defined only if $\mu_{K^2} \in \H_{K^1}$.
%For an empirical estimate, if we use $M_{K^1,\Xbf^1}$ and $M_{K^2, \Xbf^2}$, then
%$\Xbf^1$ and $\Xbf^2$ must have the same size.
Moreover, the quantity $C_{\alpha}(A||B)$ is generally {\it not} guaranteed to be positive, 
even in the finite-dimensional setting, unless we assume in addition that $\trace(A) = \trace(B) = 1$
(in that case it is known as the {\it quantum R\'enyi divergence} or {\it quantum R\'enyi relative entropy}, see e.g. \citep{mosonyi2011quantum}).
The following is a counter-example on $\R^3$: if 
%\begin{align}
$	A =
	\begin{pmatrix} 0.2623  &  0.4491 &   0.3993\\
		0.4491  &  0.9155 &   0.3544\\
		0.3993  &  0.3544  &  1.5180
	\end{pmatrix}$,
	$B =
	\begin{pmatrix}
		0.6968  &  0.9409 &   0.9471\\
		0.9409  &  1.6211  &  1.4961\\
		0.9471  &  1.4961  &  1.6847\\
	\end{pmatrix}
$,
%\end{align}
then $C_{1/2}(A||B) = -0.2006$.
% Moreover, if we use empirical versions

\section{Regularized divergences between Gaussian processes}
\label{section:estimate-Gaussian-process}
We now turn to the closely related but different setting of Gaussian processes with squared integrable sample paths.
We first consider the zero-mean processes. We show that regularized R\'enyi and KL divergences between
centered Gaussian processes are well-defined and can be consistently and efficiently estimated from their
finite-dimensional versions, all with {\it dimension-independent} sample complexities. 
Since covariance functions of stochastic processes are positive definite kernels, RKHS methodology also plays a crucial role
in the theoretical analysis here. Compared with Sections \ref{section:RKHS} and \ref{section:renyi-RKHS-infinite},
apart from RKHS covariance operators, we also need to exploit RKHS {\it cross-covariance operators}
(more detail below).

\begin{remark}
For general stochastic processes with zero-mean and squared integrable sample paths, the divergences considered here become divergences 
between their corresponding covariance operators.
\end{remark}

Throughout this section, we assume the following
\begin{enumerate}
	\item {\bf B1} $T$ is a $\sigma$-compact metric space, that is
	$T = \cup_{i=1}^{\infty}T_i$, where $T_1 \subset T_2 \subset \cdots$, with each $T_i$ being compact.
	
	\item {\bf B2} $\nu$ is a non-degenerate Borel probability measure on $T$, that is $\nu(B) > 0$
	for each open set $B \subset T$.
	\item {\bf B3} $K, K^1, K^2: T \times T \mapto \R$ are continuous, symmetric, positive definite kernels
	and $\exists \kappa > 0, \kappa_1 > 0, \kappa_2 > 0$ such that
	\begin{align}
		\label{equation:Assumption-3}
		\int_{T}K(x,x)d\nu(x) \leq \kappa^2, \;\; \int_{T}K^i(x,x)d\nu(x) \leq \kappa_i^2.
	\end{align}
	%\begin{align}
	%\sup_{x \in T}K(x,x) \leq \kappa^2, \;\; \sup_{x \in T}K^i(x,x) \leq \kappa^2, i=1,2.
	%\end{align}
	\item {\bf B4} $\xi\sim \GP(0, K)$, $\xi^i \sim \GP(0, K^i)$, $i=1,2$, are {\it centered} Gaussian processes
	with covariance functions $K, K^i$, respectively.
	% (more detail below).
\end{enumerate}
%For $K$ satisfying assumption A3, 
Let $\H_K$ denote the reproducing kernel Hilbert space (RKHS) of functions on $T$ induced by $K$.
Let $K_x:T \mapto \R$ be defined by $K_x(t) = K(x,t)$. Assumption A3 implies that
% in particular that
%
%positivity implies $K(x,t)^2 \leq K(x,x)K(t,t)$ $\forall x,t \in T$, thus
\begin{align}
	\int_{T}K(x,t)^2d\nu(t) < \infty \; \forall x \in T, \;\;\; \int_{T \times T}K(x,t)^2d\nu(x)d\nu(t) < \infty.
\end{align}
%The first inequality means 
It follows that $K_x \in \Lcal^2(T,\nu)$ $\forall x \in T$, 
%where $K_x:T \mapto \R$ is defined by $K_x(t) = K(x,t)$.
%so that
%For any $K$ satisfying Assumption 3, 
%Let $\H_K$ denote the corresponding reproducing kernel Hilbert space (RKHS),
hence $\H_K \subset \Lcal^2(T,\nu)$ \citep{
	%CuckerSmale,
	Sun2005MercerNoncompact}. Define the following linear operator
\begin{align}
	&R_K = R_{K,\nu}: \Lcal^2(T, \nu) \mapto \H_K,\;\;
	%\\
	%&
	R_Kf = \int_{T}K_tf(t)d\nu(t), \;\;\; (R_Kf)(x) = \int_{T}K(x,t)f(t)d\nu(t).
	\label{equation:RK}
\end{align}
%Since $||R_Kf||_{\H_K}\leq \int_{T}||K_t||_{\H_K}|f(t)|d\nu(t)\leq \sqrt{\int_{T}K(t,t)d\nu(t)}||f||_{\Lcal^2(T,\nu)}$,
%the operator 
The operator $R_K$ is bounded, with
%\begin{align}
$||R_K:\Lcal^2(T,\nu) \mapto \H_K|| \leq \sqrt{\int_{T}K(t,t)d\nu(t)} \leq \kappa$.
%\end{align} 
%\cite{CuckerSmale,Sun2005MercerNoncompact}
%and 
Its adjoint operator is 
$R_K^{*}: \H_K \mapto \Lcal^2(T, \nu) = J:\H_K \inclusion \Lcal^2(T, \nu)$, the inclusion operator
from $\H_K$ into $\Lcal^2(T,\nu)$ (see e.g. \cite{Rosasco:IntegralOperatorsJMLR2010}). 
%The operator 
$R_K$ and $R_{K}^{*}$ together
% then
%gives rise to
induce 
the following self-adjoint, positive, trace class operator (e.g. \citep{CuckerSmale,Sun2005MercerNoncompact,Rosasco:IntegralOperatorsJMLR2010})
%Hilbert-Schmidt operator
%
%\begin{lemma}
%	$R_K: \Lcal^2(T, \nu) \mapto \H_K$,
%	$J: \H_K \inclusion \Lcal^2(T, \nu)$ be the inclusion operator. Then
%	$R_K^{*}: \H_K \mapto \Lcal^2(T, \nu) = J$.
%	
%\end{lemma}
%\begin{proof}
%	\begin{align*}
%		\la R_Kf, g\ra_{\H_K} = \left\la \int_{T}K_tf(t)d\nu(t), g\right\ra_{\H_K} = \int_{T}\la K_t, g\ra_{\H_K}f(t)d\nu(t)
%		= \int_{T}f(t)g(t)d\nu(t) = \la f, Jg\ra_{\Lcal^2(T,\nu)}
%	\end{align*}
%	so that $R_K^{*} = J$.
%	Hence $R_KR_K^{*}: \H_K \mapto \H_K$ and $R_K^{*}R_K: \Lcal^2(T,\nu) \mapto \Lcal^2(T, \nu)$. They have the same eigenvalues.
%\end{proof}
\begin{align}
	\label{equation:CK}
	&C_K = C_{K,\nu} = R_K^{*}R_K: \Lcal^2(T,\nu) \mapto \Lcal^2(T, \nu),
	%\\
	%\\
	%L_K &= L_{K,\nu} = R_KR_K^{*}:\H_K \mapto \H_K.
	\;\;
	(C_Kf)(x)  = \int_{T}K(x,t)f(t)d\nu(t), \;\; \forall f \in \Lcal^2(T,\nu),
	\\
	&\trace(C_K) = \int_{T}K(x,x)d\nu(x) \leq \kappa^2, \;\;||C_K||_{\HS(\Lcal^2(T,\nu))}^2  = \int_{T \times T}K(x,t)^2d\nu(x)d\nu(t) \leq \kappa^4.
\end{align}
Let $\{\lambda_k\}_{k \in \Nbb}$ be the eigenvalues
of $C_K$, 
with normalized eigenfunctions $\{\phi_k\}_{k\in \Nbb}$
forming an orthonormal basis in $\Lcal^2(T,\nu)$.
% A fundamental result for positive definite kernels is 
Mercer's Theorem (see version in \citep{Sun2005MercerNoncompact}) states that 
\begin{align}
	K(x,y) = \sum_{k=1}^{\infty}\lambda_k \phi_k(x)\phi_k(y)\;\;\;\forall (x,y) \in T \times T,
\end{align}
where the series converges absolutely for each pair $(x,y) \in T \times T$ and uniformly on any compact subset of $T$.
%A consequence of 
%By Mercer's Theorem, 
Thus $K$ and $C_K$ completely determine each other.
% $C_K$.
% and vice versa.

%The operator $C_K$
%$:\Lcal^2(T,\nu) \mapto \Lcal^2(T,\nu)$ 
%has been studied extensively,
% in the literature
%e.g. \cite{CuckerSmale,Sun2005MercerNoncompact,Rosasco:IntegralOperatorsJMLR2010}.
Consider now the correspondence between the trace class operator $C_K$ as defined in 
Eq.\eqref{equation:CK}
% is closely connected with 
and measurable Gaussian processes with paths in $\Lcal^2(T,\nu)$,
%, as follows.
%Consider
% now 
%the correspondence between
%Gaussian measures, covariance operators $C_K$ as defined in Eq.\eqref{equation:CK}, and Gaussian processes with paths in $\Lcal^2(T,\nu)$
as
established 
in 
\citep{Rajput1972gaussianprocesses}.
Let $(\Omega, \Fcal, P)$ be a probability space, 
% Let $T$ be an index set.
%Let $(T, \Acal, \nu)$ be a measurable space, $\nu$ nonnegative, $\sigma$-finite, such that $\Lcal^2(T, \Acal, \nu)=\Lcal^2(T,\nu)$ is separable (e.g. $T \subset \R^n$ measurable, $\Acal = \Bsc(T)$, $\nu$ is the Lebesgue measure).
%Let 
$\xi = (\xi(t))_{t \in T}
% = (\xi_t)_{t \in T}
= (\xi(\omega,t))_{t \in T}$ be 
a real
% $\Fcal/\Acal$-measurable 
Gaussian process on $(\Omega, \Fcal,P)$, with mean $m$ and covariance function $K$, denoted by $\xi\sim \GP(m,K)$, where 
\begin{align}
	m(t) = \bE{\xi(t)},\;\;K(s,t) = \bE[(\xi(s) - m(s))(\xi(t) - m(t))], \;\;s,t \in T.
\end{align}
The stochastic process $\xi$ is said to be {\it (jointly) measurable} if the map
$\xi:\Omega \times T \mapto \R$, defined by $(\omega,t) \mapto \xi(\omega,t)$ is measurable with respect to
the $\sigma$-algebras $\Fcal \times \Bsc(T)$ and $\Bsc(\R)$. 
By  (\cite{Rajput1972gaussianprocesses}, Theorem 2 and Corollary 1),
for a measurable Gaussian process $\xi$,
the sample paths
$\xi(\omega,\cdot) \in \H = \Lcal^2(T, \nu)$ almost $P$-surely, i.e.
%\begin{align}
$\int_{T}\xi^2(\omega,t)d\nu(t) < \infty$ almost $P$-surely
%\end{align}
if and only if
\begin{align}
	\label{equation:condition-Gaussian-process-paths}
	\int_{T}m^2(t)d\nu(t) < \infty, \;\;\; \int_{T}K(t,t)d\nu(t) < \infty.
\end{align}
The condition for $K$ in Eq.\eqref{equation:condition-Gaussian-process-paths} is precisely assumption A3.
In this case, $\xi$ induces the following {\it Gaussian measure} $P_{\xi}$ on $(\H, \Bsc(\H))$:
%\begin{align}
$	P_{\xi}(B) = P\{\omega \in \Omega: \xi(\omega, \cdot) \in B\}, \; B \in \Bsc(\H)$,
%\end{align}
with mean $m \in \H$ and covariance operator
$C_K: \H \mapto \H$, defined by Eq.\eqref{equation:CK}.
%\begin{align}
%	(C_Kf)(s) = \int_{T}K(s,t)f(t)d\nu(t), \;\;\; f \in \H.
%\end{align}
Conversely, let $\mu$ be a Gaussian measure on $(\H
% = \Lcal^2(T, \nu)
, \Bsc(\H))$, then
there is a measurable
%$\Fcal/\Acal$-measurable 
Gaussian process $\xi = (\xi(t))_{t \in T}$
% on $(\Omega, \Fcal, P)$
with sample paths in $\H$,
% such that the 
with induced probability measure $P_{\xi} = \mu$.

Thus let $D$ be a divergence on the set $\Gauss(\H)$ of Gaussian measures on $\H$, then for two measurable Gaussian processes
$\xi^i \sim \GP(m_i, K^i)$ with paths in $\H=\Lcal^2(T,\nu)$, we can define the divergence between them by
(see also \citep{Panaretos:jasa2010,Fremdt:2013testing,Pigoli:2014,
	%Mallasto:NIPS2017Wasserstein,
	masarotto2019procrustes})
\begin{align}
D_{\GP}(\xi^1||\xi^2) = D[\Ncal(m_1, C_{K^1})||\Ncal(m_2,C_{K^2})].
\end{align}
In the following, we focus on the case $D = \KL^{\gamma}, D^{\gamma}_{\Rrm,r}$, with $m_1 =m_2 = 0$,
and their finite-dimensional approximations. We note that in our setting, this definition makes sense even without
the measurability assumption, since by Mercer's Theorem, a distance/divergence between the covariance operators $C_{K^i}$
corresponds to a distance/divergence between the covariance functions $K^i$, $i=1,2$. 

{\bf Comparison with Sections \ref{section:RKHS} and \ref{section:renyi-RKHS-infinite}}. In Sections \ref{section:RKHS} and \ref{section:renyi-RKHS-infinite}, we study divergences between covariance operators/Gaussian measures defined on an RKHS
	$\H_K$ induced by a positive definite kernel $K$.
In the current section, we study $D[\Ncal(0, C_{K^1})||\Ncal(0,C_{K^2})]$, expressed as Alpha Log-Det divergences between covariance operators on $\Lcal^2(T,\nu)$. Note that if $\dim(\H_{K^i}) = \infty$, then the sample paths of the Gaussian process $\xi^i$ lie outside of $\H_{K^i}$ almost surely (see e.g. \citep{lukic2001stochastic}). Thus
the analysis in this section requires different techniques, specifically RKHS cross-covariance operators, which we discuss next.

\subsection{Estimation of divergences from finite covariance matrices}
\label{section:estimation-divergence-finite-covariance-marices}

In general, it is often necessary to approximate the divergences between the infinite-dimensional Gaussian measures
$\Ncal(0,C_{K^1})$ and $\Ncal(0,C_{K^2})$ on $\Lcal^2(T,\nu)$ from their finite-dimensional versions.
Let $\Xbf = (x_i)_{i=1}^m$ be independently sampled from $(T,\nu)$. 
The Gaussian process assumption $\xi^i \sim \GP(0,K^i)$ means that $(\xi^i(., x_j))_{j=1}^m$ are $m$-dimensional Gaussian random variables,
with $(\xi^i(., x_j))_{j=1}^m \sim \Ncal(0, K^i[\Xbf])$, where
$(K^i[\Xbf])_{jk} = K^i(x_j, x_k)$, $1 \leq j,k \leq m$.
%We first 
Assume for the moment that the covariance matrices $K^i[\Xbf]$ are {\it known}.
In this section, we show that the finite-dimensional divergences
%\small
	\begin{align*}
		&
		D_{\Rrm,r}^{\gamma}\left[\Ncal\left(0, \frac{1}{m}K^1[\Xbf]\right)\bigg\vert\bigg\vert \Ncal\left(0, \frac{1}{m}K^2[\Xbf]\right)\right] = D_{\Rrm,r}\left[\Ncal\left(0, K^1[\Xbf] + m\gamma I \right)\bigg\vert\bigg\vert \Ncal\left(0, K^2[\Xbf] + m\gamma I\right)\right],
		\\
		&{\KL}^{\gamma}\left[\Ncal\left(0, \frac{1}{m}K^1[\Xbf]\right)\bigg\vert\bigg\vert \Ncal\left(0, \frac{1}{m}K^2[\Xbf]\right)\right] = \KL[\Ncal(0, K^1[\Xbf]+m\gamma I)||\Ncal(0,K^2[\Xbf]+m\gamma I)]
		\end{align*}
consistently estimate the infinite-dimensional divergences
		 \begin{align*}
		&D_{\Rrm,r}^{\gamma}[\Ncal(0, C_{K^1})||\Ncal(0, C_{K^2})] \text{and }{\KL}^{\gamma}[\Ncal(0, C_{K^1})||\Ncal(0, C_{K^2})],
	\end{align*}
respectively,
as $m \approach \infty$, with {\it dimension-independent} sample complexities.

We first note that, as linear operators, $\frac{1}{m}K^{i}[\Xbf]: \R^m \mapto \R^m$ and
$C_{K^i}: \Lcal^2(T,\nu) \mapto \Lcal^2(T,\nu)$, with $\R^m$ and $\Lcal^2(T,\nu)$ being two different Hilbert spaces.
Thus Theorem \ref{theorem:logdet-approx-infinite-sequence} {\it cannot} be applied.
Instead, we first represent both the theoretical divergence $D_{\Rrm,r}^{\gamma}[\Ncal(0, C_{K^1})||\Ncal(0, C_{K^2})]$
and empirical divergence $D_{\Rrm,r}^{\gamma}\left[\Ncal\left(0, \frac{1}{m}K^1[\Xbf]\right)\bigg\vert\bigg\vert \Ncal\left(0, \frac{1}{m}K^2[\Xbf]\right)\right]$ (and similarly for the KL divergences) via RKHS covariance and cross-covariance operators acting on the same Hilbert spaces.
The convergence of these operators then leads to the consistency of $D_{\Rrm,r}^{\gamma}$ and $\KL^{\gamma}$.
This approach was used in \citep{Minh2021:FiniteEntropicGaussian} for the consistency of the entropic Wasserstein distances, using the trace class continuity of the Fredholm determinant. In the current work, we employ the representations of  $D_{\Rrm,r}^{\gamma}$ and $\KL^{\gamma}$ via the {\it Hilbert-Carleman determinant} and exploit its Hilbert-Schmidt continuity to obtain the consistency and sample complexities of the corresponding divergences (in the current setting, this is {\it not} possible with the representations using the Fredholm determinant).

{\bf RKHS covariance and cross-covariance operators}. Let us first review these operators (see also \citep{Minh2021:FiniteEntropicGaussian}).
%For our purpose, we also need to employ the following {\it cross-covariance operators} between two RKHS.
Let $K^1,K^2$ be two kernels satisfying Assumptions B1-B4, and $\H_{K^1}, \H_{K^2}$ the corresponding
RKHS. Let $R_{K^i}:\Lcal^2(T,\nu) \mapto \H_{K^i}$, $i=1,2$ be as defined in Eq.\eqref{equation:RK}.
Together, they define the following {\it RKHS cross-covariance operators}
%These are integral operators defined by
%\begin{align*}
%	(R_{K^2}^{*}R_{K^2})R_{K^1}^{*}f(y) = \int_{T}K^2(y,t)f(t)d\nu(t)
%\end{align*}
%\begin{align}
%(T_{K^1K^2}f)(x)
%=[R_{K^1}(R_{K^2}^{*}R_{K^2})R_{K^1}^{*}f](x)
%&= \int_{T}\int_{T}K^1(x,y)K^2(y,t)f(t)d\nu(y)d\nu(t),
%\\
%\end{align}
%
%$R_{K^2}(R_{K^1}^{*}R_{K^1})R_{K^2}^{*}: \H_{K^2} \mapto \H_{K^2}$
%
%\begin{align}
%(T_{K^2K^1}f)(x)
%=[R_{K^2}(R_{K^1}^{*}R_{K^1})R_{K^2}^{*}f](x)
%&= \int_{T}\int_{T}K^2(x,y)K^1(y,t)f(t)d\nu(y)d\nu(t).
%\end{align}
%For our purposes, it suffices to consider one of the above operators, e.g. 
%$T_{K^2K^1}:\H_{K^2} \mapto \H_{K^2}$.  
%Here $R_{12}$ and
%$:\H_{K^2} \mapto \H_{K^1}$, 
%$R_{21}$
%$\H_{K^1} \mapto \H_{K^2}$
%are {\it cross-covariance operators},
%defined by
% We define the following operators
{
	%\small
	\begin{align}
		\label{equation:R12-operator}
		R_{ij}&= R_{K^i}R_{K^j}^{*}: \H_{K^j} \mapto \H_{K^i},\;\;\;i,j=1,2,
		\\
		R_{ji} &= R_{K^j}R_{K^i}^{*}: \H_{K^i}\mapto \H_{K^j} = R_{ij}^{*},
		\\
		%	\label{equation:R21-operator}
		%\end{align}
		%Both $L_{K}$ and $R_{12},R_{21}$ are encompassed in 
		%the following, which is straightforward to verify.
		%\begin{lemma}
		%	\label{lemma:R12-theoretical}
		%The operators $R_{12},R_{21}$ in Eq.\eqref{equation:R12-operator}
		%\eqref{equation:R21-operator}
		%The operators $R_{ij} = R_{K^i}R_{K^j}^{*}:\H_{K^j} \mapto \H_{K^i}$, $i,j=1,2$,
		%are given by
		%bounded and 
		%	\begin{align}
		%	&||R_{12}:\H_{K^2} \mapto \H_{K^1}||\leq \kappa_1\kappa_2, \;\;||R_{21}:\H_{K^1} \mapto \H_{K^2}||\leq \kappa_1\kappa_2,
		%		 \\
		%		R_{12} & = \int_{T}(K^1_t \otimes K^2_t)d\nu(t),\;\;
		%\\
		%		R_{12}f = \int_{T}K^1_t \la f, K^2_t\ra_{\H_{K^2}}d\nu(t), 
		%\;\; f\in \H_{K^2},
		%		\\
		%		R_{12}f(x) &= \int_{T}K^1_t(x)f(t)d\nu(t) = \int_{T}K^1(x,t)f(t)d\nu(t),\;\; f\in \H_{K^2},
		%		\\
		%		R_{21} & = \int_{T}(K^2_t \otimes K^1_t)d\nu(t), \;\;
		%\\
		%		R_{21}g = \int_{T}K^2_t \la g, K^1_t\ra_{\H_{K^1}}d\nu(t), 
		%		\\
		%		R_{21}g(x) & = \int_{T}K^2_t(x)g(t)d\nu(t) = \int_{T}K^2(x,t)g(t)d\nu(t),\;\; g \in \H_{K^1}.
		%\\
		R_{ij} & = \int_{T}(K^i_t \otimes K^j_t)d\nu(t),\;\;
		%\\
		R_{ij}f = \int_{T}K^i_t \la f, K^j_t\ra_{\H_{K^j}}d\nu(t),  
		%\;\; f\in \H_{K^2},
		\\
		R_{ij}f(x) &= \int_{T}K^i_t(x)f(t)d\nu(t) = \int_{T}K^i(x,t)f(t)d\nu(t),\;\; f\in \H_{K^j}.
	\end{align}
}
In particular, $R_{ii} = L_{K^{i}}$, with the {\it RKHS covariance operator} $L_{K}$ defined by
{
	%\small
	\begin{align}
		\label{equation:LK}
		L_K &= R_KR_K^{*}: \H_K \mapto \H_K, \;\;L_K = \int_{T}(K_t \otimes K_t) d\nu(t),\;
		\\
		L_Kf(x) &= \int_{T}K_t(x)\la f, K_t\ra_{\H_K}d\nu(t) = \int_{T}K(x,t)f(t)d\nu(t),\;\; f\in \H_K.
	\end{align}
}
%It is clear that the nonzero eigenvalues of $L_K$ are the same as those of $C_K$, and thus
$L_K$ has the same nonzero eigenvalues as $C_K$ and thus $L_K \in \Sym^{+}(\H_K) \cap \Tr(\H_K)$, with 
%a positive trace class operator on $\H_K$, with the same nonzero eigenvalues as $C_K$ and
{
	%\small
	\begin{align}
		\trace(L_K) = \trace(C_K) \leq \kappa^2,\;\;||L_K||_{\HS(\H_K)} = ||C_K||_{\HS(\Lcal^2(T,\nu))} \leq \kappa^2.
	\end{align}
}
%Furthermore,
%	Then $R_{ii} = L_{K^i}$, $R_{12}^{*} = R_{21}$
%, and the operator $R_{12}^{*}R_{12}:\H_{K^2} \mapto \H_{K^2}$ is given by
%\begin{align}
%	R_{12}^{*}R_{12}f 
%= R_{21}R_{12}f &= \left[\int_{T}(K^2_t \otimes K^1_t)d\nu(t)\right]R_{12}f = \int_{T}K^2_t(R_{12}f)(t)d\nu(t)
%\\
%	& = \int_{T}K^2_t\int_{T}K^1(t,u)f(u)d\nu(u)d\nu(t), \;\; f \in \H_{K^2},
%	\\
%	(R_{12}^{*}R_{12}f)(x) &= \int_{T \times T}K^2(x,t)K^1(t,u)f(u)d\nu(u)d\nu(t), \;\;\forall x \in T.
%	\nonumber
%\end{align}
%\end{lemma}
%We remark that with $f \in \Lcal^2(T,\nu)$, 
%\begin{align*}
%$C_{K^1}f(t) = \int_{T}K^1(t,u)f(u)d\nu(u)$ and
% hence
%\end{align*}
%\begin{align}
%	(C_{K^2}C_{K^1}f)(x) &= \int_{T}K^2(x,t)(C_{K^1}f)(t)d\nu(t) 
%\nonumber
%\\
%&= 
%	=\int_{T\times T}K^2(x,t)K^1(t,u)f(u)d\nu(u)d\nu(t).
%\end{align}
%Thus for $f \in \H_{K^2}$, $C_{K^2}C_{K^1}f = R_{12}^{*}R_{12}f \in \H_{K^2}$, however
% the operator 
%$C_{K^2}C_{K^1}: \Lcal^2(T,\nu) \mapto \Lcal^2(T,\nu)$ is generally {\it not} self-adjoint, whereas $R_{12}^{*}R_{12} 
%\in \Sym^{+}(\H_{K^2})$.
%\begin{proof}
%\end{proof}
%\begin{align}
%	(R_{K^1}R_{K^2}^{*}f)(x) &= \int_{T}K^1(x,t)f(t)d\nu(t) = \int_{T}K^1(x,t)\la f, K^2_t\ra_{\H_{K^2}} d\nu(t)
%	\\
%	&=\int_{T} K^1_t(x)\la f, K^2_t\ra_{H_{K^2}}d\nu(t) 
%\end{align}
\begin{lemma}
	%[\textbf{Hilbert-Schmidt norm}]
	[\citep{Minh2021:FiniteEntropicGaussian}]
	\label{lemma:R12-HS}
	Under Assumptions A1-A3,
	$R_{ij} \in \HS(\H_{K^j}, \H_{K^i})$, with
	$||R_{ij}||_{\HS(\H_{K^j}, \H_{K^i})} \leq \kappa_i\kappa_j$, $i,j=1,2$.
	%$R_{12} \in \HS(\H_{K^2}, \H_{K^1})$ with $||R_{12}||_{\HS(\H_{K^2}, \H_{K^1})} \leq \kappa_1\kappa_2$.
	%Similarly, $R_{21} \in \HS(\H_{K^1}, \H_{K^2})$, with $||R_{21}||_{\HS(\H_{K^1}, \H_{K^2})} \leq \kappa_1\kappa_2$.
\end{lemma}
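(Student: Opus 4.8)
The plan is to exploit the factorization $R_{ij} = R_{K^i}R_{K^j}^{*}$ together with the operator-ideal property that the composition of a bounded operator with a Hilbert-Schmidt operator is again Hilbert-Schmidt. First I would show that each $R_{K^i}:\Lcal^2(T,\nu)\mapto \H_{K^i}$ is itself Hilbert-Schmidt with $\|R_{K^i}\|_{\HS}\leq \kappa_i$. This is immediate from $\|R_{K^i}\|_{\HS}^2 = \trace(R_{K^i}^{*}R_{K^i}) = \trace(C_{K^i})$ combined with the bound $\trace(C_{K^i}) = \int_T K^i(x,x)\,d\nu(x)\leq \kappa_i^2$ recorded in Eq.\eqref{equation:CK}. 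Passing to adjoints, $R_{K^j}^{*}:\H_{K^j}\mapto \Lcal^2(T,\nu)$ has the same Hilbert-Schmidt norm, so $\|R_{K^j}^{*}\|_{\HS}\leq \kappa_j$ as well.

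Second, I would apply the ideal estimate $\|AB\|_{\HS}\leq \|A\|\,\|B\|_{\HS}$, valid whenever $A$ is bounded and $B$ is Hilbert-Schmidt with composable domains. Taking $A = R_{K^i}$, which is bounded with $\|R_{K^i}\|\leq \kappa_i$ (as noted just below Eq.\eqref{equation:RK}), and $B = R_{K^j}^{*}$, which is Hilbert-Schmidt with $\|R_{K^j}^{*}\|_{\HS}\leq \kappa_j$, gives
\[
\|R_{ij}\|_{\HS(\H_{K^j},\H_{K^i})} = \|R_{K^i}R_{K^j}^{*}\|_{\HS} \leq \|R_{K^i}\|\,\|R_{K^j}^{*}\|_{\HS}\leq \kappa_i\kappa_j,
\]
which is exactly the claimed bound.

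As a cross-check one can instead compute the norm directly via the trace: since $R_{ij}^{*}R_{ij} = R_{K^j}R_{K^i}^{*}R_{K^i}R_{K^j}^{*} = R_{K^j}C_{K^i}R_{K^j}^{*}$, cyclicity of the trace yields $\|R_{ij}\|_{\HS}^2 = \trace(C_{K^i}C_{K^j})$, and the inequality $\trace(C_{K^i}C_{K^j})\leq \|C_{K^i}\|\,\trace(C_{K^j})$ for positive operators, with $\|C_{K^i}\| = \|R_{K^i}\|^2\leq \kappa_i^2$ and $\trace(C_{K^j})\leq \kappa_j^2$, again produces $\|R_{ij}\|_{\HS}^2\leq \kappa_i^2\kappa_j^2$. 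The argument is essentially routine; the only point requiring care is the justification that $R_{K^i}$ is genuinely Hilbert-Schmidt, which rests on the trace-class property of $C_{K^i}$ (equivalently $L_{K^i}$) guaranteed by Assumption B3 and on the separability of $\H_{K^i}$, so that the trace identities and ideal estimates are valid. No real obstacle arises beyond keeping track of which Hilbert spaces the operators act between.
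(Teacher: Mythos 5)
Your argument is correct. Note that the paper itself does not prove this lemma; it imports it from the cited reference, so there is no in-paper proof to compare against. Your route—establishing that $R_{K^i}$ is itself Hilbert--Schmidt from $\trace(R_{K^i}^{*}R_{K^i}) = \trace(C_{K^i}) \leq \kappa_i^2$, then applying the ideal estimate $\|AB\|_{\HS} \leq \|A\|\,\|B\|_{\HS}$ to the factorization $R_{ij} = R_{K^i}R_{K^j}^{*}$—is sound, and the trace cross-check via $\trace(C_{K^i}C_{K^j}) \leq \|C_{K^i}\|\trace(C_{K^j})$ is also valid (cyclicity is justified since the relevant products are trace class). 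An equally natural alternative, closer to how such bounds are usually derived from the representation $R_{ij} = \int_{T}(K^i_t \otimes K^j_t)\,d\nu(t)$, is to bound the Bochner integral directly: $\|R_{ij}\|_{\HS} \leq \int_{T}\|K^i_t\|_{\H_{K^i}}\|K^j_t\|_{\H_{K^j}}\,d\nu(t) = \int_{T}\sqrt{K^i(t,t)K^j(t,t)}\,d\nu(t) \leq \kappa_i\kappa_j$ by Cauchy--Schwarz; this avoids invoking trace-class properties altogether. The only cosmetic issue in your write-up is the reference to ``Assumption B3'' where the lemma statement says A1--A3 (the statement's labeling appears to be a typo in the paper, since the lemma sits in the Gaussian process section governed by B1--B5), but the hypothesis you actually use, $\int_T K^i(x,x)\,d\nu(x)\leq\kappa_i^2$, is the right one.
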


{\bf Empirical RKHS covariance and cross-covariance operators}. Let $\Xbf = (x_i)_{i=1}^m$ be 
%a set of points 
independently sampled from $T$ according to $\nu$. 
%{\bf Sampling operators}
%Let $\Xbf = (x_j)_{j=1}^m \in T^m$, then  
It gives rise to
%defines 
the following bounded {\it sampling operator}
(see e.g. \citep{SmaleZhou2007})
{
	%\small
	\begin{align}
		&S_{\Xbf}: \H_{K} \mapto \R^m, \;\;\; S_{\Xbf}f = (f(x_i))_{i=1}^m = (\la f, K_{x_i}\ra)_{i=1}^m
		%\end{align}
		%\begin{align}
		\;\;
		\\
		&\text{with adjoint operator }
		S_{\Xbf}^{*}: \R^m \mapto \H_{K},\;\;\; S_{\Xbf}^{*}\b = \sum_{i=1}^mb_iK_{x_i}.
	\end{align}
}
For $K^i$, $i=1,2$, denote the corresponding
sampling operators by $S_{i,\Xbf}:\H_{K^i}\mapto \R^m$, $i=1,2$. Together, they define the following empirical version
of $R_{ij}$
{
	%\small
	\begin{align}
		R_{ij,\Xbf} & = \frac{1}{m}S_{i,\Xbf}^{*}S_{j, \Xbf} = \frac{1}{m}\sum_{k=1}^m(K^i_{x_k} \otimes K^j_{x_k}): \H_{K^j} \mapto \H_{K^i},
		%\\
		%\;\;\;	R_{21,\Xbf}  = \frac{1}{m}\sum_{i=1}^m(K^2_{x_i} \otimes K^1_{x_i}),
		\\
		%\;\;
		R_{ij,\Xbf}f &= \frac{1}{m}\sum_{k=1}^mK^i_{x_k}\la f, K^j_{x_k}\ra_{\H_{K^j}} = \frac{1}{m}\sum_{k=1}^mf(x_k)K^i_{x_k}, \; f\in \H_{K^j}.
		%\\
		%R_{12,\Xbf} & = \frac{1}{m}\sum_{i=1}^m(K^1_{x_i} \otimes K^2_{x_i}),\;\;\;	R_{21,\Xbf}  = \frac{1}{m}\sum_{i=1}^m(K^2_{x_i} \otimes K^1_{x_i}),
		%\\
		%R_{12,\Xbf}f &= \frac{1}{m}\sum_{i=1}^mK^1_{x_i}\la f, K^2_{x_i}\ra_{\H_{K^2}} = \frac{1}{m}\sum_{i=1}^mf(x_i)K^1_{x_i}, \; f\in \H_{K^2},
		%		\\
		%\end{align}
		%\begin{align}
		%	
		%\\
		%R_{21,\Xbf}g &= \frac{1}{m}\sum_{i=1}^mK^2_{x_i}\la g, K^1_{x_i}\ra_{\H_{K^1}} =
		%\frac{1}{m}\sum_{i=1}^mg(x_i)K^2_{x_i}, \; g \in \H_{K^1}.
	\end{align}
}
In particular, $R_{ii,\Xbf} = L_{K^i,\Xbf}$, with the empirical RKHS covariance operator $L_{K,\Xbf}:\H_K \mapto \H_K$ being self-adjoint and defined by
%The operators $S_{\Xbf}^{*}S_{\Xbf}: \H_K \mapto \H_K$
%\begin{align}
%\la f, S_{\Xbf}^{*}\b\ra_{\H_K} = \la f, \sum_{i=1}^mb_iK_{x_i}\ra_{\H_K} = \sum_{i=1}^m b_if(x_i) = \la \b, S_{\Xbf}f\ra_{\R^m}
%\end{align}
{
	%\small
	\begin{align}
		&L_{K,\Xbf} = \frac{1}{m}S_{\Xbf}^{*}S_{\Xbf} =\frac{1}{m}\sum_{i=1}^m (K_{x_i} \otimes K_{x_i}):\H_K \mapto \H_K, 
		\\
		\;\;
		&
		L_{K,\Xbf}f = \frac{1}{m}S_{\Xbf}^{*}(f(x_i))_{i=1}^m = \frac{1}{m}\sum_{i=1}^mf(x_i)K_{x_i} = \frac{1}{m}\sum_{i=1}^m\la f, K_{x_i}\ra_{\H_K}K_{x_i}.
	\end{align}
}
Furthermore, the operator $S_{\Xbf}S_{\Xbf}^{*}:\R^m \mapto \R^m$ is given by
{
	%\small
	\begin{align}
		%&S_{\Xbf}S_{\Xbf}^{*}:\R^m \mapto \R^m,\;\;	
		S_{\Xbf}S_{\Xbf}^{*}\b = S_{\Xbf}\sum_{i=1}^mb_iK_{x_i} = (\sum_{i=1}^mb_iK(x_i, x_1), \ldots, \sum_{i=1}^mb_iK(x_i, x_m)) = K[\Xbf]\b.
		%\end{align}
	\end{align}
}
%\begin{align}
%	L_{K,\Xbf} = \frac{1}{m}S_{\Xbf}^{*}S_{\Xbf}
%\end{align}
%\begin{align}
%	S_{i,\Xbf}^{*}S_{j,\Xbf}:\H_{K^j} \mapto \H_{K^i}, S_{i,\Xbf}^{*}S_{j,\Xbf}f = \sum_{k=1}^m\la f, K^j_{x_k}\ra_{\H_{K^j}}K^i_{x_k} = \sum_{k=1}^mf(x_k)K^{i}_{x_k}
%\end{align}
%\begin{align}
%	R_{ij,\Xbf} = \frac{1}{m}S_{i,\Xbf}^{*}S_{j,\Xbf}
%\end{align}
%This defines the following empirical version of $L_K$
%\begin{align}
%	L_{K,\Xbf} &= \frac{1}{m}\sum_{i=1}^m (K_{x_i} \otimes K_{x_i})
%	: \H_{K} \mapto \H_{K},
%	\\
%	L_{K,\Xbf}f  &= \frac{1}{m}\sum_{i=1}^m K_{x_i}\la f, K_{x_i}\ra_{\H_K} = \frac{1}{m}\sum_{i=1}^m f(x_i)K_{x_i}, \; f \in \H_K.
%\end{align}
%Let $\H_{K,\Xbf} = \myspan\{K_{x_i}\}_{i=1}^m \subset \H_K$,
%then
% $\dim(\H_{K,\Xbf}) \leq m$ and
%$L_{K,\Xbf}: \H_K \mapto \H_{K,\Xbf}$. In particular, $L_{K,\Xbf}:\H_{K,\Xbf} \mapto \H_{K,\Xbf}$ and
%$\forall j, 1 \leq j \leq m$,
%\begin{align}
%$L_{K,\Xbf}K_{x_j} = \frac{1}{m}\sum_{i=1}^mK(x_j, x_i)K_{x_i}$.
%\end{align}
%Let $K[\Xbf]$ denote the $m \times m$ {\it Gram matrix}, with $(K[\Xbf])_{ij} = K(x_i,x_j)$, then
%this implies that 
%the matrix representation of $L_{K,\Xbf}:\H_{K,\Xbf} \mapto \H_{K,\Xbf}$
%in $\myspan\{K_{x_i}\}_{i=1}^m$ is $\frac{1}{m}K[\Xbf]$.
We note that while $S_{i,\Xbf}^{*}S_{j, \Xbf}:\H_{K^j} \mapto \H_{K^i}$ is always well-defined,
 $S_{j, \Xbf}S_{i,\Xbf}^{*}:\R^m \mapto \R^m$ is only well-defined when $\H_{K^i} \subseteq \H_{K^j}$.
In particular,
the nonzero eigenvalues of $L_{K,\Xbf} = \frac{1}{m}S_{\Xbf}^{*}S_{\Xbf}$ are the same as those of $\frac{1}{m}K[\Xbf] = \frac{1}{m}\S_{\Xbf}S_{\Xbf}^{*}$, corresponding to 
eigenvectors that must lie in $\H_{K,\Xbf}=\myspan\{K_{x_i}\}_{i=1}^m$.
Thus, the nonzero eigenvalues of $C_K:\Lcal^2(T, \nu) \mapto \Lcal^2(T,\nu)$, $\trace(C_K)$, $||C_K||_{\HS}$,
which are the same as those of $L_K:\H_K \mapto \H_K$, can be empirically estimated from those of the $m \times m$ matrix $\frac{1}{m}K[\Xbf]$ (see \citep{Rosasco:IntegralOperatorsJMLR2010}).

In terms of the RKHS covariance and cross-covariance operators, $D_{\Rrm,r}^{\gamma}$ and $\KL^{\gamma}$,
both the theoretical and empirical versions, admit the following representations.

\begin{proposition}
		[\textbf{Renyi divergence representation in RKHS operators}]
	\label{proposition:Renyi-RKHS-operator-representation}
	Let $\gamma \in \R$, $\gamma > 0$ be fixed. Let $0 < r < 1$ be fixed. Then
\begin{align}
&D^{\gamma}_{\Rrm,r}[\Ncal(0, C_{K^1})||\Ncal(0, C_{K^2})]
\nonumber
\\
& =  \frac{1}{2r(1-r)}\logdet
\left[\frac{1}{\gamma}\begin{pmatrix}
	(1-r)L_{K^1} & \sqrt{r(1-r)}R_{12}
	\\
	\sqrt{r(1-r)}R_{12}^{*} & rL_{K^2}
\end{pmatrix} + I_{\H_{K^1} \oplus \H_{K^2}}\right]
\nonumber
\\
& \quad -\frac{1}{2r}\logdet\left(\frac{L_{K^1}}{\gamma} + I_{\H_{K^1}}\right)
-\frac{1}{2(1-r)}\logdet\left(\frac{L_{K^2}}{\gamma} + I_{\H_{K^2}}\right)
\\
&=  \frac{1}{2r(1-r)}\log\dettwo
\left[\frac{1}{\gamma}\begin{pmatrix}
	(1-r)L_{K^1} & \sqrt{r(1-r)}R_{12}
	\\
	\sqrt{r(1-r)}R_{12}^{*} & rL_{K^2}
\end{pmatrix} + I_{\H_{K^1} \oplus \H_{K^2}}\right]
\nonumber
\\
& \quad -\frac{1}{2r}\log\dettwo\left(\frac{L_{K^1}}{\gamma} + I_{\H_{K^1}}\right)
-\frac{1}{2(1-r)}\log\dettwo\left(\frac{L_{K^2}}{\gamma} + I_{\H_{K^2}}\right). 
\end{align}
The empirical version is given by
\begin{align}
	&D^{\gamma}_{\Rrm,r}\left[\Ncal\left(0, \frac{1}{m}K^1[\Xbf]\right),\Ncal\left(0,\frac{1}{m}K^2[\Xbf]\right)\right]
	\nonumber
	%\\
	%&= \frac{1}{2}d^{2r-1}_{\logdet}\left[\left(\frac{1}{m}K^1[\Xbf] + \gamma I_{\R^m}\right), \left(\frac{1}{m}K^2[\Xbf] + \gamma I_{\R^m}\right)\right]
	\\
	&=\frac{1}{2r(1-r)}\logdet
	\left[\frac{1}{\gamma}\begin{pmatrix}
		(1-r)L_{K^1,\Xbf} & \sqrt{r(1-r)}R_{12,\Xbf}
		\\
		\sqrt{r(1-r)}R_{12,\Xbf}^{*} & rL_{K^2,\Xbf}
	\end{pmatrix} + I_{\H_{K^1} \oplus \H_{K^2}}\right]
\nonumber
	\\
	& \quad -\frac{1}{2r}\logdet\left(\frac{L_{K^1,\Xbf}}{\gamma} + I_{\H_{K^1}}\right)
	-\frac{1}{2(1-r)}\logdet\left(\frac{L_{K^2,\Xbf}}{\gamma} + I_{\H_{K^2}}\right)
	\\
	&=\frac{1}{2r(1-r)}\log\dettwo
	\left[\frac{1}{\gamma}\begin{pmatrix}
		(1-r)L_{K^1,\Xbf} & \sqrt{r(1-r)}R_{12,\Xbf}
		\\
		\sqrt{r(1-r)}R_{12,\Xbf}^{*} & rL_{K^2,\Xbf}
	\end{pmatrix} + I_{\H_{K^1} \oplus \H_{K^2}}\right]
	\nonumber
	\\
	& \quad -\frac{1}{2r}\log\dettwo\left(\frac{L_{K^1,\Xbf}}{\gamma} + I_{\H_{K^1}}\right)
	-\frac{1}{2(1-r)}\log\dettwo\left(\frac{L_{K^2,\Xbf}}{\gamma} + I_{\H_{K^2}}\right).
\end{align}
\end{proposition}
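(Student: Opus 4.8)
The plan is to reduce both the theoretical and empirical identities to single applications of Theorems \ref{theorem:LogDet-infinite-2} and \ref{theorem:LogDet-finite-2}, after a suitable operator factorization. Since the processes are centered, Definition \ref{definition:renyi-regularized} kills the mean term and leaves $D^{\gamma}_{\Rrm,r}[\Ncal(0,C_{K^1})||\Ncal(0,C_{K^2})] = \frac{1}{2}d^{2r-1}_{\logdet}[(C_{K^1}+\gamma I),(C_{K^2}+\gamma I)]$, so it suffices to evaluate this Alpha Log-Det divergence with $\alpha = 2r-1$ and equal regularization parameters $\gamma = \mu$.

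The key step is the factorization $C_{K^i} = R_{K^i}^{*}R_{K^i}$ from Eq.\eqref{equation:CK}. Setting $A = R_{K^1}^{*}:\H_{K^1}\mapto \Lcal^2(T,\nu)$ and $B = R_{K^2}^{*}:\H_{K^2}\mapto \Lcal^2(T,\nu)$, I get $AA^{*} = C_{K^1}$ and $BB^{*} = C_{K^2}$ on $\H = \Lcal^2(T,\nu)$, while the companion operators on the RKHS side are exactly $A^{*}A = R_{K^1}R_{K^1}^{*} = L_{K^1}$, $B^{*}B = L_{K^2}$ from Eq.\eqref{equation:LK}, and crucially $A^{*}B = R_{K^1}R_{K^2}^{*} = R_{12}$, $B^{*}A = R_{12}^{*}$, recovering the cross-covariance operator of Eq.\eqref{equation:R12-operator}. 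The hypotheses of Theorem \ref{theorem:LogDet-infinite-2} hold: $\trace(C_{K^i}) < \infty$ forces $R_{K^i}$, hence $A$ and $B$, to be Hilbert-Schmidt and therefore compact, and $A^{*}A = L_{K^i}\in \Tr(\H_{K^i})$. Applying Theorem \ref{theorem:LogDet-infinite-2} in the case $\dim(\Lcal^2(T,\nu)) = \infty$ and substituting $\gamma=\mu$ (so that $\beta = 1-r$, $s(\gamma,\gamma,2r-1) = 2\gamma$, $c(2r-1,\gamma,\gamma) = 0$) together with $\frac{4}{1-\alpha^2} = \frac{1}{r(1-r)}$, $\frac{4\beta}{1-\alpha^2} = \frac{1}{r}$, $\frac{4(1-\beta)}{1-\alpha^2} = \frac{1}{1-r}$ yields both the Fredholm ($\logdet$) and Hilbert-Carleman ($\dettwo$) forms of the theoretical representation at once, since Theorem \ref{theorem:LogDet-infinite-2} states both.

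For the empirical version I run the identical argument with finite-rank sampling operators. Taking $A = \frac{1}{\sqrt{m}}S_{1,\Xbf}:\H_{K^1}\mapto \R^m$ and $B = \frac{1}{\sqrt{m}}S_{2,\Xbf}:\H_{K^2}\mapto \R^m$ gives $AA^{*} = \frac{1}{m}S_{1,\Xbf}S_{1,\Xbf}^{*} = \frac{1}{m}K^1[\Xbf]$, $BB^{*} = \frac{1}{m}K^2[\Xbf]$ on $\R^m$, and $A^{*}A = L_{K^1,\Xbf}$, $B^{*}B = L_{K^2,\Xbf}$, $A^{*}B = R_{12,\Xbf}$. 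Here $\H = \R^m$ is finite-dimensional, so I invoke Theorem \ref{theorem:LogDet-finite-2}; because $\gamma = \mu$ makes $c(2r-1,\gamma,\gamma) = 0$, the otherwise present $c(\alpha,\gamma,\mu)\dim(\H)$ term vanishes, producing the empirical Fredholm representation directly. To pass to the empirical $\dettwo$ form, I use $\log\dettwo(I+X) = \logdet(I+X) - \trace(X)$ in $\R^m$ and verify that the trace corrections cancel: the first block contributes $-\frac{1}{2r(1-r)}\trace(M) = -\frac{1}{2r\gamma}\trace(L_{K^1,\Xbf}) - \frac{1}{2(1-r)\gamma}\trace(L_{K^2,\Xbf})$, which is exactly offset by the corrections $+\frac{1}{2r\gamma}\trace(L_{K^1,\Xbf})$ and $+\frac{1}{2(1-r)\gamma}\trace(L_{K^2,\Xbf})$ coming from the two single-operator terms.

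The argument is essentially bookkeeping once the factorizations are in place, and I expect the only genuinely substantive point to be this last trace cancellation. It is the finite-dimensional shadow of why the Hilbert-Carleman representation, rather than the Fredholm one, is the form that survives in infinite dimensions and carries the Hilbert-Schmidt continuity exploited later for the consistency estimates.
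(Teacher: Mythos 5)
Your proposal is correct and follows essentially the same route as the paper's proof: the same factorizations $A=R_{K^1}^{*}$, $B=R_{K^2}^{*}$ (resp. $A=\tfrac{1}{\sqrt{m}}S_{1,\Xbf}$, $B=\tfrac{1}{\sqrt{m}}S_{2,\Xbf}$) feeding into Theorems \ref{theorem:LogDet-infinite-2} and \ref{theorem:LogDet-finite-2} with $\gamma=\mu$, $\alpha=2r-1$. Your explicit verification of the trace cancellation in the empirical $\dettwo$ form is a slightly more careful rendering of what the paper handles by noting that the two theorems' formulas coincide when $\gamma=\mu$, but it is not a different argument.
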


\begin{proposition}
	[\textbf{KL divergence representation in RKHS operators}]
	\label{proposition:KL-RKHS-operator-representation}
	Let $\gamma \in \R$, $\gamma > 0$ be fixed. Then
	{%\small
		\begin{align}
			&{\KL}^{\gamma}[\Ncal(0, C_{K^1})||\Ncal(0, C_{K^2})] = 
			\frac{1}{2\gamma}\trace[L_{K^1}] 
			-\frac{1}{2}\trace\left[\left(\frac{L_{K^2}}{\gamma} + \frac{R_{12}^{*}R_{12}}{\gamma^2}\right)\left(I_{\H_{K^2}} + \frac{L_{K^2}}{\gamma}\right)^{-1}\right]
			\nonumber
			\\
			&\quad -\frac{1}{2}\log\det\left(\frac{L_{K^1}}{\gamma}+I_{\H_{K^1}}\right) + \frac{1}{2}\log\det\left(\frac{L_{K^2}}{\gamma} + I_{\H_{K^2}}\right)
			%\nonumber
			\\
			& =  - \frac{1}{2}\log\dettwo\left(I_{\H_{K^1}}+\frac{L_{K^1}}{\gamma}\right) - \frac{1}{2}\log\dettwo\left(I_{\H_{K^2}}+\frac{L_{K^2}}{\gamma}\right)^{-1} 
			\nonumber
			\\
			&\quad - \trace\left[\frac{R_{12}^{*}R_{12}}{2\gamma^2}\left(I_{\H_{K^2}}+\frac{L_{K^2}}{\gamma}\right)^{-1}\right].
		\end{align}
	}
	The empirical version is given by
	{
		%\small
		\begin{align}
			&{\KL}^{\gamma}\left[\Ncal\left(0, \frac{1}{m}K^1[\Xbf]\right)\bigg\vert\bigg\vert \Ncal\left(0, \frac{1}{m}K^2[\Xbf]\right)\right]
			\nonumber
			\\
			%&
			&=\frac{1}{2\gamma}\trace[L_{K^1,\Xbf}] -\frac{1}{2}\trace\left[\left(\frac{L_{K^2,\Xbf}}{\gamma} + \frac{R_{12,\Xbf}^{*}R_{12,\Xbf}}{\gamma^2}\right)\left(I_{\H_{K^2}} + \frac{L_{K^2,\Xbf}}{\gamma}\right)^{-1}\right]
			%\nonumber
			\\
			&\quad -\frac{1}{2}\log\det\left(\frac{L_{K^1,\Xbf}}{\gamma}+I_{\H_{K^1}}\right) + \frac{1}{2}\log\det\left(\frac{L_{K^2,\Xbf}}{\gamma} + I_{\H_{K^2}}\right)
			\nonumber
			\\
			%\nonumber
			%\\
			%&\quad +
			%\nonumber
			%\\
			& =  - \frac{1}{2}\log\dettwo\left(I_{\H_{K^1}}+\frac{L_{K^1,\Xbf}}{\gamma}\right) - \frac{1}{2}\log\dettwo\left(I_{\H_{K^2}}+\frac{L_{K^2,\Xbf}}{\gamma}\right)^{-1}
			\\
			&\quad
			- \trace\left[\frac{R_{12,\Xbf}^{*}R_{12,\Xbf}}{2\gamma^2}\left(I_{\H_{K^2}}+\frac{L_{K^2,\Xbf}}{\gamma}\right)^{-1}\right].
			\nonumber
		\end{align}
	} 
\end{proposition}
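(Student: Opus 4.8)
The plan is to obtain both propositions (theoretical and empirical) as direct corollaries of the operator-switching formulas in Theorem \ref{theorem:logdet-alpha-1-AAstar-BBstar-representation-switch} (infinite-dimensional ambient space) and Theorem \ref{theorem:logdet-alpha-1-AAstar-BBstar-representation-switch-finite} (finite-dimensional ambient space). First I would note that by Definition \ref{definition:renyi-regularized} with $m_1 = m_2 = 0$ the mean term vanishes, so
\[
\KL^{\gamma}[\Ncal(0, C_{K^1})||\Ncal(0, C_{K^2})] = \tfrac{1}{2}\,d^1_{\logdet}[(C_{K^1} + \gamma I), (C_{K^2} + \gamma I)],
\]
reducing the claim to expressing this $d^1_{\logdet}$ between covariance operators on $\Lcal^2(T,\nu)$ through RKHS quantities living on $\H_{K^1}$ and $\H_{K^2}$.

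The key step is the factorization of the ambient covariance operators through the operators $R_{K^i}$. Setting $A = R_{K^1}^{*}:\H_{K^1} \mapto \Lcal^2(T,\nu)$ and $B = R_{K^2}^{*}:\H_{K^2} \mapto \Lcal^2(T,\nu)$ gives $AA^{*} = R_{K^1}^{*}R_{K^1} = C_{K^1}$ and $BB^{*} = C_{K^2}$, hence $d^1_{\logdet}[(C_{K^1}+\gamma I),(C_{K^2}+\gamma I)] = d^1_{\logdet}[(AA^{*}+\gamma I),(BB^{*}+\gamma I)]$. I would then record the adjoint identities $A^{*}A = R_{K^1}R_{K^1}^{*} = L_{K^1}$, $B^{*}B = L_{K^2}$, $A^{*}B = R_{K^1}R_{K^2}^{*} = R_{12}$, and therefore $B^{*}AA^{*}B = (A^{*}B)^{*}(A^{*}B) = R_{12}^{*}R_{12}$. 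Since $L_{K^1},L_{K^2}$ are positive trace class (so $A^{*}A,B^{*}B\in\Tr$, which forces $A,B$ compact) and $R_{12}\in\HS$ by Lemma \ref{lemma:R12-HS}, the hypotheses of Theorem \ref{theorem:logdet-alpha-1-AAstar-BBstar-representation-switch} hold with $\H_1=\H_{K^1}$, $\H_2=\H_{K^2}$, $\H=\Lcal^2(T,\nu)$. Applying it with the specialization $\mu=\gamma$ (so $\gamma/\mu=1$ and the prefactor $\gamma/\mu-1-\log(\gamma/\mu)$ vanishes), then multiplying by $\tfrac{1}{2}$, yields both the trace/Fredholm form and the Hilbert-Carleman form of the statement, the two displayed equalities corresponding precisely to the two forms provided by that theorem.

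For the empirical version I would repeat the argument verbatim over the finite-dimensional ambient space $\H=\R^m$. Writing $A = \tfrac{1}{\sqrt{m}}S_{1,\Xbf}:\H_{K^1}\mapto\R^m$ and $B = \tfrac{1}{\sqrt{m}}S_{2,\Xbf}:\H_{K^2}\mapto\R^m$ gives $AA^{*} = \tfrac{1}{m}S_{1,\Xbf}S_{1,\Xbf}^{*} = \tfrac{1}{m}K^1[\Xbf]$, $BB^{*} = \tfrac{1}{m}K^2[\Xbf]$, while the adjoints reproduce the empirical operators $A^{*}A = L_{K^1,\Xbf}$, $B^{*}B = L_{K^2,\Xbf}$, $A^{*}B = R_{12,\Xbf}$, and $B^{*}AA^{*}B = R_{12,\Xbf}^{*}R_{12,\Xbf}$. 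Invoking Theorem \ref{theorem:logdet-alpha-1-AAstar-BBstar-representation-switch-finite} and again setting $\mu=\gamma$ makes the only dimension-dependent term $(\gamma/\mu-1-\log(\gamma/\mu))\dim(\H)$ equal to zero, so the finite- and infinite-dimensional formulas coincide and the empirical expression acquires exactly the same shape as the theoretical one with $L_{K^i},R_{12}$ replaced by their sampled versions.

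I do not expect a genuine obstacle, since the result is a corollary of theorems already established in the paper; the only point requiring care is the bookkeeping of which Hilbert space each operator acts on. Concretely, the hard part will be verifying that the ``switch'' in Theorem \ref{theorem:logdet-alpha-1-AAstar-BBstar-representation-switch} correctly transports the computation from the ambient space $\Lcal^2(T,\nu)$ (respectively $\R^m$) down to the RKHS factors $\H_{K^1},\H_{K^2}$, and confirming that the factorizations $C_{K^i}=R_{K^i}^{*}R_{K^i}$ and $\tfrac{1}{m}K^i[\Xbf]=\tfrac{1}{m}S_{i,\Xbf}S_{i,\Xbf}^{*}$ together with the stated adjoint relations are exactly the $AA^{*}$ and $A^{*}A$ objects appearing in the cited theorems.
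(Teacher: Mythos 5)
Your proposal is correct and follows essentially the same route as the paper's own proof: both reduce the claim to $\tfrac{1}{2}d^1_{\logdet}$ and apply Theorems \ref{theorem:logdet-alpha-1-AAstar-BBstar-representation-switch} and \ref{theorem:logdet-alpha-1-AAstar-BBstar-representation-switch-finite} with $\mu=\gamma$, using the factorizations $A = R_{K^1}^{*}$, $B = R_{K^2}^{*}$ (so $AA^{*}=C_{K^1}$, $A^{*}A=L_{K^1}$, $A^{*}B=R_{12}$) for the theoretical version and $A=\tfrac{1}{\sqrt{m}}S_{1,\Xbf}$, $B=\tfrac{1}{\sqrt{m}}S_{2,\Xbf}$ for the empirical one. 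The bookkeeping of spaces and adjoints you flag as the delicate point is handled exactly as in the paper.
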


The RKHS operator representations in Propositions \ref{proposition:Renyi-RKHS-operator-representation} and \ref{proposition:KL-RKHS-operator-representation} suggest that,
as $m \approach \infty$, 	$D^{\gamma}_{\Rrm,r}\left[\Ncal\left(0, \frac{1}{m}K^1[\Xbf]\right),\Ncal\left(0,\frac{1}{m}K^2[\Xbf]\right)\right]$
and $\KL^{\gamma}\left[\Ncal\left(0, \frac{1}{m}K^1[\Xbf]\right),\Ncal\left(0,\frac{1}{m}K^2[\Xbf]\right)\right]$
should converge, in some sense, to $D^{\gamma}_{\Rrm,r}[\Ncal(0, C_{K^1})||\Ncal(0, C_{K^2})]$ and ${\KL}^{\gamma}[\Ncal(0, C_{K^1})||\Ncal(0, C_{K^2})]$, respectively.
To obtain the rate of convergence, we first need to have rates of convergence of
$L_{K^1,\Xbf},L_{K^2,\Xbf},R_{12,\Xbf}$ to $L_{K^1},L_{K^2}, R_{12}$, respectively. For simplicity, 
in the following we use the following additional assumption
%These are given in the following
\begin{align}
\text{(Assumption \textbf{B5}) }\;\;\; \sup_{x \in T}K^i(x,x) \leq \kappa_i^2, \;\;\;i=1,2.
\end{align}
\begin{proposition}
	(\textbf{Convergence of RKHS empirical covariance and cross-covariance operators},\cite{Minh2021:FiniteEntropicGaussian})
	\label{proposition:concentration-TK2K1-empirical}
	%Assume 
	Under Assumptions B1-B5,
	%$R_{12} \in \HS(\H_{K^2}, \H_{K^1})$, 
	%$R_{12,\Xbf} \in \HS(\H_{K^2}, \H_{K^1})$, with
	%	\begin{align}
	%$||R_{ij}||_{\HS(\H_{K^j},\H_{K^i})} \leq \kappa_i\kappa_j$,
	$||R_{ij,\Xbf}||_{\HS(\H_{K^j}, \H_{K^i})} \leq \kappa_i \kappa_j$, $i,j=1,2$, $\forall \Xbf \in T^m$.
	%\end{align}
	Let $\Xbf = (x_i)_{i=1}^m$ be independently sampled from $(T,\nu)$.
	$\forall 0 < \delta < 1$, with probability at least $1-\delta$,
	\begin{align}
		||R_{ij,\Xbf} - R_{ij}||_{\HS(\H_{K^j}, \H_{K^i})} \leq \kappa_i\kappa_j\left[ \frac{2\log\frac{2}{\delta}}{m} + \sqrt{\frac{2\log\frac{2}{\delta}}{m}}\right],
		\\
		||R_{ij,\Xbf}^{*}R_{ij,\Xbf} - R_{ij}^{*}R_{ij}||_{\tr(\H_{K^j})} \leq 2\kappa_i^2\kappa_j^2\left[ \frac{2\log\frac{2}{\delta}}{m} + \sqrt{\frac{2\log\frac{2}{\delta}}{m}}\right].
	\end{align}	
	In particular, 
	%for $K^i = K^j = K$,  
	%$||L_{K^i}||_{\HS(\H_{K^i})} \leq \kappa_i^2$, 
	$||L_{K^i,\Xbf}||_{\HS(\H_{K^i})} \leq \kappa_i^2$, 
	and $\forall 0 <\delta <1$, 
	with probability at least $1-\delta$,
	\begin{align}
		\label{equation:LKX-LK-bounded}
		\left\|L_{K^i,\Xbf} - L_{K^i}\right\|_{\HS(\H_{K^i})} \leq \kappa_i^2\left(\frac{2\log\frac{2}{\delta}}{m} + \sqrt{\frac{2\log\frac{2}{\delta}}{m}}\right). 
	\end{align}
	%with probability at least $1-\delta$.
\end{proposition}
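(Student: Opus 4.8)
The plan is to recognize the empirical operator $R_{ij,\Xbf}$ as a sample average of independent, identically distributed Hilbert-space-valued random variables and then invoke the Pinelis concentration bound in Proposition \ref{proposition:Pinelis}, applied in the Hilbert space $\HS(\H_{K^j},\H_{K^i})$. Concretely, I would set $\xi(x) = K^i_x \otimes K^j_x \in \HS(\H_{K^j},\H_{K^i})$, so that $R_{ij,\Xbf} = \frac{1}{m}\sum_{k=1}^m \xi(x_k)$ while $R_{ij} = \int_T \xi(t)\,d\nu(t) = \bE_{\nu}[\xi]$ as a Bochner integral. Since $\xi(x)$ is rank one, $\|\xi(x)\|_{\HS(\H_{K^j},\H_{K^i})} = \|K^i_x\|_{\H_{K^i}}\|K^j_x\|_{\H_{K^j}} = \sqrt{K^i(x,x)}\sqrt{K^j(x,x)} \le \kappa_i\kappa_j$ by the reproducing property and Assumption B5. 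This uniform bound already yields the deterministic estimate $\|R_{ij,\Xbf}\|_{\HS}\le \frac{1}{m}\sum_k\|\xi(x_k)\|_{\HS}\le\kappa_i\kappa_j$ for every $\Xbf\in T^m$, and in particular $\|L_{K^i,\Xbf}\|_{\HS}\le\kappa_i^2$ upon taking $i=j$.

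Next I would assemble the two ingredients Proposition \ref{proposition:Pinelis} requires: the almost-sure bound $M=\kappa_i\kappa_j$ from the previous step, and the second moment $\sigma^2(\xi)=\bE_\nu\|\xi\|^2_{\HS} = \int_T K^i(t,t)K^j(t,t)\,d\nu(t)\le\kappa_i^2\kappa_j^2$, again using B5 and that $\nu$ is a probability measure. Inserting $M=\kappa_i\kappa_j$ and $\sigma^2\le\kappa_i^2\kappa_j^2$ into the inequality gives, with probability at least $1-\delta$, the bound $\|R_{ij,\Xbf}-R_{ij}\|_{\HS}\le\frac{2\kappa_i\kappa_j\log\frac2\delta}{m}+\sqrt{\frac{2\kappa_i^2\kappa_j^2\log\frac2\delta}{m}}$, which factors as $\kappa_i\kappa_j[\frac{2\log(2/\delta)}{m}+\sqrt{2\log(2/\delta)/m}]$. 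This is exactly the first stated inequality, and specializing to $i=j$ (where $R_{ii,\Xbf}=L_{K^i,\Xbf}$, $R_{ii}=L_{K^i}$) produces Eq.\eqref{equation:LKX-LK-bounded}.

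For the trace-norm estimate on the products I would work deterministically on the event from the previous step, using the telescoping identity $R_{ij,\Xbf}^{*}R_{ij,\Xbf}-R_{ij}^{*}R_{ij} = R_{ij,\Xbf}^{*}(R_{ij,\Xbf}-R_{ij}) + (R_{ij,\Xbf}-R_{ij})^{*}R_{ij}$ together with the submultiplicativity $\|AB\|_{\tr}\le\|A\|_{\HS}\|B\|_{\HS}$. Since $\|R_{ij,\Xbf}^{*}\|_{\HS}=\|R_{ij,\Xbf}\|_{\HS}\le\kappa_i\kappa_j$ and $\|R_{ij}\|_{\HS}\le\kappa_i\kappa_j$ by Lemma \ref{lemma:R12-HS}, the two terms are each controlled by $\kappa_i\kappa_j\|R_{ij,\Xbf}-R_{ij}\|_{\HS}$, so the trace-norm difference is at most $2\kappa_i\kappa_j\|R_{ij,\Xbf}-R_{ij}\|_{\HS}$; substituting the first inequality yields the claimed prefactor $2\kappa_i^2\kappa_j^2$ on the same $1-\delta$ event.

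I do not expect a serious obstacle: the argument is essentially bookkeeping around a single concentration inequality. The only points needing care are verifying that $\xi$ is a genuine measurable, Bochner-integrable $\HS$-valued random variable so that Proposition \ref{proposition:Pinelis} applies, which follows from the continuity of the kernels and the uniform norm bound, and the passage from the Hilbert-Schmidt to the trace norm via $\|AB\|_{\tr}\le\|A\|_{\HS}\|B\|_{\HS}$, where Lemma \ref{lemma:R12-HS} is precisely what supplies the bound $\|R_{ij}\|_{\HS}\le\kappa_i\kappa_j$.
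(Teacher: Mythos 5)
Your proof is correct and is essentially the standard argument: the paper does not reprove this proposition (it is imported from the cited reference), but every analogous concentration result the paper does prove (e.g.\ Lemma \ref{lemma:RK-concentration}, Lemma \ref{lemma:mean-square-concentration}) follows exactly your template of realizing the empirical operator as a sample mean of bounded rank-one $\HS$-valued random variables and invoking Proposition \ref{proposition:Pinelis}, with the trace-norm bound for the products obtained from the same telescoping plus $\|AB\|_{\tr}\le\|A\|_{\HS}\|B\|_{\HS}$ as in Lemma \ref{lemma:A2-inverseI+A-inequality}. No gaps.
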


\begin{theorem}
	(\textbf{Estimation of Renyi divergences between centered Gaussian processes from finite covariance matrices})
	\label{theorem:Renyi-estimate-finite-covariance}
	Assume Assumptions B1-B5. 
	Let $\gamma \in \R, \gamma > 0$ be fixed. Let $0 < r < 1$ be fixed.
	Let $\Xbf = (x_j)_{j=1}^m$ be independently sampled from $(T,\nu)$.
	For any $0 < \delta < 1$, with probability at least $1-\delta$,
	\begin{align}
		&\left|D_{\Rrm,r}^{\gamma}\left[\Ncal\left(0, \frac{1}{m}K^1[\Xbf]\right)\bigg\vert\bigg\vert \Ncal\left(0, \frac{1}{m}K^2[\Xbf]\right)\right] 
		-D_{\Rrm,r}^{\gamma}[\Ncal(0, C_{K^1})||\Ncal(0, C_{K^2})]
		\right|
		\nonumber
		\\
%		& \leq
%\begin{align*}
	%\Delta 
	&\leq \frac{1}{2\gamma^2}\left[\frac{\kappa_1^4}{r} + \frac{\kappa_2^4}{1-r} + \frac{[(1-r)\kappa_1^2 + r\kappa_2^2]^2}{r(1-r)}\right]\left(\frac{2\log\frac{6}{\delta}}{m} + \sqrt{\frac{2\log\frac{6}{\delta}}{m}}\right).
%\end{align*}
	\end{align}
\end{theorem}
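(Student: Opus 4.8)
The plan is to reduce the entire estimate to the Hilbert--Schmidt continuity of the Hilbert--Carleman log-determinant established in Theorem~\ref{theorem:logdet-Hilbert-Carleman-bound}, applied to the RKHS operator representation of Proposition~\ref{proposition:Renyi-RKHS-operator-representation}. Writing $\theta = \frac{2\log(6/\delta)}{m} + \sqrt{\frac{2\log(6/\delta)}{m}}$, I would first invoke Proposition~\ref{proposition:concentration-TK2K1-empirical} three times, once each for $L_{K^1,\Xbf}$, $L_{K^2,\Xbf}$, and $R_{12,\Xbf}$, each with confidence parameter $\delta/3$. A union bound then guarantees that, with probability at least $1-\delta$, the three estimates $\|L_{K^i,\Xbf} - L_{K^i}\|_{\HS} \le \kappa_i^2\theta$ ($i=1,2$) and $\|R_{12,\Xbf} - R_{12}\|_{\HS}\le\kappa_1\kappa_2\theta$ hold simultaneously; this is where the constant $6$ inside the logarithm originates. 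I write $\Delta L_i = L_{K^i} - L_{K^i,\Xbf}$ and $\Delta R = R_{12} - R_{12,\Xbf}$ for the three perturbations.

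Next, using the $\log\dettwo$ form in Proposition~\ref{proposition:Renyi-RKHS-operator-representation}, I would express the difference of the two R\'enyi divergences as a sum of exactly three differences of Hilbert--Carleman log-determinants, carrying the weights $\frac{1}{2r(1-r)}$, $-\frac{1}{2r}$, and $-\frac{1}{2(1-r)}$. The key structural observation is that the $2\times 2$ block operator in the first term factorizes as $\gamma^{-1}WW^{*}$, where $W:\Lcal^2(T,\nu)\to\H_{K^1}\oplus\H_{K^2}$ is given by $Wf=(\sqrt{1-r}\,R_{K^1}f,\ \sqrt{r}\,R_{K^2}f)$; indeed $R_{12}=R_{K^1}R_{K^2}^{*}$ and $L_{K^i}=R_{K^i}R_{K^i}^{*}$ make the four blocks of $WW^{*}$ agree with the stated matrix, and similarly for the empirical block operator. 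This exhibits both block operators, call them $M$ and $M_{\Xbf}$, as positive Hilbert--Schmidt operators on $\H_{K^1}\oplus\H_{K^2}$, so that the positive-case bound $|\log\dettwo(I+A) - \log\dettwo(I+B)| \le \tfrac12(\|A\|_{\HS}+\|B\|_{\HS})\|A-B\|_{\HS}$ of Theorem~\ref{theorem:logdet-Hilbert-Carleman-bound} applies to all three terms.

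For the norm-sum factors I would exploit positivity: since $M\ge 0$, $\|M\|_{\HS}\le\trace(M) = \gamma^{-1}[(1-r)\trace(L_{K^1}) + r\trace(L_{K^2})]\le\gamma^{-1}[(1-r)\kappa_1^2 + r\kappa_2^2]$, and the same bound holds for $M_{\Xbf}$ using Assumption B5; for the diagonal terms simply $\|L_{K^i}/\gamma\|_{\HS},\,\|L_{K^i,\Xbf}/\gamma\|_{\HS}\le\kappa_i^2/\gamma$. For the perturbation factor of the block term I would compute the block Hilbert--Schmidt norm directly, $\|M-M_{\Xbf}\|_{\HS}^2 = \gamma^{-2}\big[(1-r)^2\|\Delta L_1\|_{\HS}^2 + 2r(1-r)\|\Delta R\|_{\HS}^2 + r^2\|\Delta L_2\|_{\HS}^2\big]$, and substitute the concentration estimates; the three contributions recombine as the perfect square $[(1-r)\kappa_1^2 + r\kappa_2^2]^2\theta^2$, giving $\|M-M_{\Xbf}\|_{\HS}\le\gamma^{-1}[(1-r)\kappa_1^2 + r\kappa_2^2]\theta$, while the diagonal perturbations give $\|\Delta L_i/\gamma\|_{\HS}\le\kappa_i^2\theta/\gamma$. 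Multiplying each term's norm-sum by its perturbation and by its weight yields precisely $\frac{\kappa_1^4}{r}$, $\frac{\kappa_2^4}{1-r}$, and $\frac{[(1-r)\kappa_1^2+r\kappa_2^2]^2}{r(1-r)}$, all scaled by $\frac{\theta}{2\gamma^2}$, which is the claimed bound.

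The main obstacle I anticipate is the treatment of the block term. One must recognize the factorization $M=\gamma^{-1}WW^{*}$ to guarantee positivity, so that the sharp continuity estimate of Theorem~\ref{theorem:logdet-Hilbert-Carleman-bound} (rather than the weaker operator-norm-dependent version) is available, and one must use $\|M\|_{\HS}\le\trace(M)$ for positive operators to obtain the clean factor $(1-r)\kappa_1^2 + r\kappa_2^2$ instead of a coarser square-root-of-sum expression. The recombination of the three block perturbations into a single perfect square, which is what makes the final constant take the stated closed form, is the other point requiring care; everything else is routine bookkeeping of norms and weights.
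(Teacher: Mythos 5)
Your proposal is correct and follows essentially the same route as the paper: the same decomposition into three Hilbert--Carleman log-determinant differences via Proposition~\ref{proposition:Renyi-RKHS-operator-representation}, the same union bound over the three concentration events from Proposition~\ref{proposition:concentration-TK2K1-empirical} (yielding $\log\tfrac{6}{\delta}$), and the same application of the positive-case Lipschitz bound of Theorem~\ref{theorem:logdet-Hilbert-Carleman-bound} with the block perturbations recombining into the perfect square $[(1-r)\kappa_1^2+r\kappa_2^2]^2$. The only cosmetic deviation is that you bound the block operator's Hilbert--Schmidt norm by its trace, whereas the paper computes the block Hilbert--Schmidt norm directly via Lemma~\ref{lemma:HS-norm-block-operators}; both give the identical constant.
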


\begin{theorem}
	(\textbf{Estimation of KL divergence between centered Gaussian processes from finite covariance matrices})
	\label{theorem:KL-estimate-finite-covariance}
	Assume Assumptions B1-B5. 
	Let $\gamma \in \R, \gamma > 0$ be fixed.
	Let $\Xbf = (x_j)_{j=1}^m$ be independently sampled from $(T,\nu)$.
	For any $0 < \delta < 1$, with probability at least $1-\delta$,
	\begin{align}
		&\left|{\KL}^{\gamma}\left[\Ncal\left(0, \frac{1}{m}K^1[\Xbf]\right)\bigg\vert\bigg\vert \Ncal\left(0, \frac{1}{m}K^2[\Xbf]\right)\right] 
		-{\KL}^{\gamma}[\Ncal(0, C_{K^1})||\Ncal(0, C_{K^2})]
		\right|
		\nonumber
		\\
		& \leq \frac{1}{2\gamma^2}\left[\kappa_1^4 + \kappa_2^4 + \kappa_1^2\kappa_2^2\left(2+\frac{\kappa_2^2}{\gamma}\right)\right]\left(\frac{2\log\frac{6}{\delta}}{m} + \sqrt{\frac{2\log\frac{6}{\delta}}{m}}\right).
		% \frac{\kappa_1^4 + \kappa_2^4}{2\gamma^2}\left(\frac{2\log\frac{6}{\delta}}{m} + \sqrt{\frac{2\log\frac{6}{\delta}}{m}}\right) + \frac{\kappa_1^2\kappa_2^2}{2\gamma^2}\left(2+\frac{\kappa_2^2}{\gamma}\right)\left(\frac{2\log\frac{12}{\delta}}{m} + \sqrt{\frac{2\log\frac{12}{\delta}}{m}}\right).
	\end{align}
\end{theorem}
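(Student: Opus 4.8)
The plan is to start from the Hilbert-Carleman representations in Proposition~\ref{proposition:KL-RKHS-operator-representation}, which express both $\KL^{\gamma}[\Ncal(0,C_{K^1})||\Ncal(0,C_{K^2})]$ and its empirical counterpart as a sum of three pieces: two Hilbert-Carleman log-determinant terms built from $L_{K^1}$ and $L_{K^2}$ (respectively $L_{K^1,\Xbf}$, $L_{K^2,\Xbf}$), and one trace term $\frac{1}{2\gamma^2}\trace[R_{12}^{*}R_{12}(I+L_{K^2}/\gamma)^{-1}]$ (respectively with $R_{12,\Xbf}^{*}R_{12,\Xbf}$ and $(I+L_{K^2,\Xbf}/\gamma)^{-1}$). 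Subtracting the empirical from the theoretical version and applying the triangle inequality, I would bound the total discrepancy by three separate discrepancies, each controlled by a stability estimate combined with the concentration bounds of Proposition~\ref{proposition:concentration-TK2K1-empirical}.

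For the two log-determinant terms, I would invoke the Hilbert-Schmidt continuity of the Hilbert-Carleman determinant from Theorem~\ref{theorem:logdet-Hilbert-Carleman-bound}. Since $L_{K^i}, L_{K^i,\Xbf} \in \Sym^{+}(\H_{K^i}) \cap \HS(\H_{K^i})$ with $\|L_{K^i}\|_{\HS}, \|L_{K^i,\Xbf}\|_{\HS} \leq \kappa_i^2$, applying the bound with $A = L_{K^1}/\gamma$, $B = L_{K^1,\Xbf}/\gamma$ (and analogously for the inverse determinant with $K^2$) gives a log-determinant difference of order $\frac{\kappa_i^2}{\gamma^2}\|L_{K^i,\Xbf}-L_{K^i}\|_{\HS}$; after incorporating the $\tfrac12$ prefactor from the representation and the concentration bound $\|L_{K^i,\Xbf}-L_{K^i}\|_{\HS}\leq\kappa_i^2(\tfrac{2\log(2/\delta)}{m}+\sqrt{2\log(2/\delta)/m})$, these contribute $\frac{\kappa_1^4}{2\gamma^2}$ and $\frac{\kappa_2^4}{2\gamma^2}$ times the concentration rate factor, matching the first two terms in the stated constant.

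The main work lies in the trace term. Writing $P = R_{12}^{*}R_{12}$, $P' = R_{12,\Xbf}^{*}R_{12,\Xbf}$, $M = (I+L_{K^2}/\gamma)^{-1}$, $M' = (I+L_{K^2,\Xbf}/\gamma)^{-1}$, I would split $\trace[PM - P'M'] = \trace[(P-P')M] + \trace[P'(M-M')]$. For the first piece, the duality $|\trace(AB)| \leq \|A\|_{\tr}\|B\|$ together with $\|M\|\leq 1$ (as $L_{K^2}\geq 0$ forces $I+L_{K^2}/\gamma \geq I$) and the trace-norm convergence $\|P'-P\|_{\tr}\leq 2\kappa_1^2\kappa_2^2(\cdots)$ from Proposition~\ref{proposition:concentration-TK2K1-empirical} yields a bound of $2\kappa_1^2\kappa_2^2$ times the rate factor. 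For the second piece, the resolvent identity $M - M' = M\cdot\frac{L_{K^2,\Xbf}-L_{K^2}}{\gamma}\cdot M'$ gives $\|M-M'\|\leq\frac{1}{\gamma}\|L_{K^2,\Xbf}-L_{K^2}\|_{\HS}\leq\frac{\kappa_2^2}{\gamma}(\cdots)$, which combined with $\|P'\|_{\tr}=\|R_{12,\Xbf}\|_{\HS}^2\leq\kappa_1^2\kappa_2^2$ produces $\frac{\kappa_1^2\kappa_2^4}{\gamma}$ times the rate factor. With the $\frac{1}{2\gamma^2}$ prefactor, the trace term thus contributes $\frac{\kappa_1^2\kappa_2^2}{2\gamma^2}(2+\frac{\kappa_2^2}{\gamma})$ times the rate factor, the third term in the constant.

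Finally, the three estimates rely on the three concentration events of Proposition~\ref{proposition:concentration-TK2K1-empirical} (for $L_{K^1,\Xbf}$, $L_{K^2,\Xbf}$, and $R_{12,\Xbf}^{*}R_{12,\Xbf}$). Requiring each to fail with probability at most $\delta/3$ and taking a union bound makes all three hold simultaneously with probability at least $1-\delta$; substituting $\delta/3$ for $\delta$ converts each $\log\frac{2}{\delta}$ into $\log\frac{6}{\delta}$, reproducing exactly the factor in the statement. The delicate point, and the one I would check most carefully, is keeping the trace/operator-norm duality and the resolvent identity aligned with the right norms (trace norm on the factors $P,P'$, operator norm on the resolvents), so that no term is overcounted; with that bookkeeping in place, the rest is a routine assembly of the continuity and concentration estimates already established.
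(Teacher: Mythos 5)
Your proposal is correct and follows essentially the same route as the paper: the same three-way decomposition via Proposition \ref{proposition:KL-RKHS-operator-representation}, the Hilbert--Schmidt continuity of the Hilbert--Carleman log-determinants from Theorem \ref{theorem:logdet-Hilbert-Carleman-bound} for the first two terms, a trace/operator-norm splitting of the trace term (the paper packages this as Lemma \ref{lemma:A-I+B-inverse-trace-norm}, pairing the resolvent difference with the theoretical $R_{12}^{*}R_{12}$ rather than the empirical one, which is immaterial since both have the same norm bounds), and a union bound over the three concentration events producing the $\log\frac{6}{\delta}$ factor.
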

We note that if both $\kappa_1,\kappa_2$ are absolute constants, e.g. with exponential kernels,
then the sample complexities in Theorems \ref{theorem:Renyi-estimate-finite-covariance} and \ref{theorem:KL-estimate-finite-covariance} are both completely {\it dimension-independent}.
Theorems \ref{theorem:Renyi-estimate-finite-covariance} and \ref{theorem:KL-estimate-finite-covariance}
provide the theoretical basis for the following Algorithm \ref{algorithm:estimate-divergence-finite-COV}, which consistently estimates the  R\'enyi and KL divergences between covariance operators/centered Gaussian processes
from finite-dimensional zero-mean Gaussian measures.
\begin{algorithm}
	\caption{Consistent estimates of R\'enyi and KL divergences between covariance operators/centered Gaussian processes
		from finite-dimensional zero-mean Gaussian measures}
	\label{algorithm:estimate-divergence-finite-COV}
	\begin{algorithmic}
		%\textbf{Input}: 
		\REQUIRE Finite covariance matrices $K^i[\Xbf]$ from processes $\xi^i$, $i=1,2$, sampled at $m$ points
		$\Xbf = (x_j)_{j=1}^m$. 
		%Finite samples $\{\xi^i_k(x_j)\}$,
		%from $N_i$ realizations $\xi^i_k$, $1 \leq k \leq N_i$, of processes $\xi^i$, $i=1,2$, sampled at $m$ points $x_j$, $1 \leq j \leq m$. 
		\REQUIRE Regularization parameter $\gamma > 0$, $0< r < 1$ (for the R\'enyi divergence)
		
		\textbf{Procedure}
		%\begin{algorithmic}
		%\STATE{Form $m \times N_i$ data matrices $Z_i$ , with $(Z_i)_{jk} = \xi^i_k(x_j)$, $i=1,2$, $1\leq j \leq m, 1\leq k \leq N_i$}
		%\STATE{Compute $m \times m$ empirical covariance matrices $\hat{K}^i = \frac{1}{N}Z_iZ_i^{T}$, $i=1,2$} 
		%	$\hat{K}^2 = \frac{1}{M}Z_2Z_2^T$}
		\STATE{Compute $D^{\gamma}_{\Rrm,r} = \frac{1}{2}d^{2r-1}_{\logdet}\left[\left(\gamma I_m+ \frac{1}{m}{K}^1[\Xbf]\right), \left(\gamma I_m+ \frac{1}{m}{K}^2[\Xbf]\right)\right] = D_{\Rrm,r}[\Ncal(0,K^1[\Xbf]+m\gamma I_m)||\Ncal(0,K^2[\Xbf]+m\gamma I_m)]$}
		% = D^{\gamma}_{\aiE}\left[\Ncal\left(0, \frac{1}{m}\hat{K}^1\right), \Ncal\left(0, \frac{1}{m}\hat{K}^2\right)\right]
		%= \left\|\log\left[\left(\gamma I+ \frac{1}{m}\hat{K}^1\right)^{-1/2}\left(\gamma I+ \frac{1}{m}\hat{K}^2\right)\left(\gamma I + \frac{1}{m}\hat{K}^1\right)^{-1/2}\right]\right\|_F$}
		\STATE{Compute $\KL^{\gamma} = \frac{1}{2}d^{1}_{\logdet}\left[\left(\gamma I_m+ \frac{1}{m}{K}^1[\Xbf]\right), \left(\gamma I_m+ \frac{1}{m}{K}^2[\Xbf]\right)\right] = \KL[\Ncal(0,K^1[\Xbf]+m\gamma I_m)||\Ncal(0,K^2[\Xbf]+m\gamma I_m)]$}
		% = D^{\gamma}_{\logE}\left[\Ncal\left(0, \frac{1}{m}\hat{K}^1\right), \Ncal\left(0, \frac{1}{m}\hat{K}^2\right)\right]
		%= \left\|\log\left(\gamma I+ \frac{1}{m}\hat{K}^1\right) - \log\left(\gamma I + \frac{1}{m}\hat{K}^2\right) \right\|_F$}
		\RETURN{$D^{\gamma}_{\Rrm,r}$ and $\KL^{\gamma}$}
	\end{algorithmic}
\end{algorithm}

{\bf Connection with previous work}. Let us connect the current results with those in \citep{Sun2019functionalKL} (see also discussion in \citep{burt2020understanding}).
%For simplicity, in this section we assume $T \subset \R$ (in general $T$ can be an arbitrary index set). 
Let $\R^T$ be the set of all real-valued functions $x = (x_t)_{t \in T}$. 
Let $\Bsc(\R^T)$ be the smallest $\sigma$-algebra corresponding to the cylinder sets
%\begin{align}
$\Ical_{t_1, \ldots, t_n}(B_1 \times \cdots \times B_n) = \{x: x_{t_1} \in B_1, \ldots, x_{t_n} \in B_n\}$
%\end{align}
where $B_i$ is a Borel set on $\R$.
Let $\xi = (\xi_t)_{t \in T}$ be a real-valued stochastic process defined on a probability space $(\Omega, \Fcal,P)$, then its most important characteristic is the set of finite-dimensional distribution functions $\{F_{t_1,\ldots, t_n}\}$ on $\R^n$, which are defined by 
\begin{align}
	F_{t_1, \ldots, t_n}(B_1 \times \cdots \times B_n) = P(\{\omega: \xi_{t_1}(\omega) \in B_1, \ldots, \xi_{t_n}(\omega) \in B_n\}),
\end{align}
where $B_1, \ldots, B_n$ are Borel sets in $\R$.
The following fundamental theorem shows that the existence of a stochastic process $\xi = (\xi_t)_{t \in T}$ is fully guaranteed by the consistency of its finite-dimensional distribution functions
\begin{theorem}
	(\textbf{Kolmogorov Theorem on the Existence of a Stochastic Process}, see e.g. \cite[Theorem 2.1.5]{oksendal2013stochastic}, \citep[Theorem II.9.1]{Shiryaev:Probability1996})
	\label{theorem:Kolmogorov}
	Let $\{F_{t_1, \ldots, t_n}\}$, $t_i \in T$,
	be a given family of finite-dimensional distributions satisfying the following consistency conditions 
	\begin{align}
		\label{equation:consistency-1}
		F_{t_{\sigma(1)}, \ldots, t_{\sigma(n)}}(B_1 \times \cdots \times B_n) = F_{t_1, \ldots, t_n}(B_{\sigma^{-1}(1)} \times \cdots \times B_{\sigma^{-1}(n)}),
	\end{align}
	for all permutations $\sigma$ of $\{1, \ldots, n\}$, and
	\begin{align}
		\label{equation:consistency-2}
		F_{t_1, \ldots, t_n}(B_1 \times \cdots \times B_n) = F_{t_1, \ldots, t_n, t_{n+1}, \ldots, t_{n+k}}(B_1 \times \cdots \times B_n \times \R \times \cdots \times \R),
	\end{align}
	where on the right hand side, $\R$ appears $k$ times.
	Then there exist a probability space $(\Omega, \Fcal, P)$ and a stochastic process
	$\xi = (\xi_t)_{t \in T}$ such that
	\begin{align}
		P(\{\omega: \xi_{t_1}(\omega) \in B_1, \ldots, \xi_{t_n}(\omega) \in B_n\}) = F_{t_1, \ldots, t_n}(B_1\times \cdots \times B_n).
	\end{align}
	
\end{theorem}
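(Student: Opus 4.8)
The plan is to realize the process on the canonical path space and invoke the Carath\'eodory extension theorem, with the two consistency conditions \eqref{equation:consistency-1} and \eqref{equation:consistency-2} serving precisely to guarantee that the natural set function on cylinder sets is well-defined. Concretely, I would take $\Omega = \R^T$, the space of all real-valued functions on $T$, equip it with the coordinate maps $\xi_t(\omega) = \omega(t)$, and let $\Fcal = \Bsc(\R^T)$ be the $\sigma$-algebra generated by the cylinder sets. Interpreting each $F_{t_1, \ldots, t_n}$ as the probability measure it determines on $\R^n$, I would define on the algebra $\Acal_0$ of finite disjoint unions of cylinder sets a set function $P_0$ by
\begin{align}
P_0(\Ical_{t_1, \ldots, t_n}(B_1 \times \cdots \times B_n)) = F_{t_1, \ldots, t_n}(B_1 \times \cdots \times B_n).
\end{align}

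First I would verify that $P_0$ is well-defined and finitely additive. The subtlety is that a single cylinder set admits many representations: the index tuple may be reordered, and it may be padded with extra coordinates whose Borel sets are all of $\R$. The permutation-consistency condition \eqref{equation:consistency-1} handles reorderings and the marginalization condition \eqref{equation:consistency-2} handles padding, so that $P_0$ assigns the same value to every representation. Finite additivity then reduces, after bringing any finite collection of cylinder sets to a common index set via \eqref{equation:consistency-2}, to the additivity of the genuine finite-dimensional measure $F_{t_1, \ldots, t_n}$ on $\R^n$.

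The main obstacle is upgrading finite additivity to countable additivity on $\Acal_0$, which is exactly the hypothesis Carath\'eodory extension requires. I would establish this in the equivalent form of continuity at the empty set: if $\{A_k\}_{k \in \Nbb} \subset \Acal_0$ is decreasing with $\inf_k P_0(A_k) = \epsilon > 0$, then $\bigcap_k A_k \neq \emptyset$. Here the topology of $\R$ enters through the fact that every $F_{t_1, \ldots, t_n}$ is a Radon measure on the Polish space $\R^n$, hence inner regular. Using this, I would shrink the finite-dimensional base of each $A_k$ from inside by a compact set, obtaining cylinder sets $C_k \subset A_k$ with compact base and $P_0(A_k \setminus C_k) \leq \epsilon 2^{-k-1}$. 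Setting $D_k = C_1 \cap \cdots \cap C_k$, one checks $P_0(D_k) \geq \epsilon/2 > 0$, so each $D_k$ is nonempty; picking $\omega^{(k)} \in D_k$ and extracting, by a diagonal argument over the countably many coordinates appearing in the $C_i$, a subsequence whose coordinates converge inside the relevant compact bases, I would produce a limit point $\omega^{*}$ lying in every $C_i$ and hence in every $A_i$, so that $\bigcap_k A_k \neq \emptyset$. This compactness step is the crux of the whole argument.

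With countable additivity in hand, the Carath\'eodory extension theorem extends $P_0$ uniquely to a probability measure $P$ on $\Fcal = \Bsc(\R^T)$, and by construction the coordinate process $\xi = (\xi_t)_{t \in T}$ satisfies
\begin{align}
P(\{\omega: \xi_{t_1}(\omega) \in B_1, \ldots, \xi_{t_n}(\omega) \in B_n\}) = F_{t_1, \ldots, t_n}(B_1 \times \cdots \times B_n),
\end{align}
which is the desired conclusion. Everything apart from the inner-regularity and diagonal-compactness step is routine bookkeeping driven by the two consistency conditions.
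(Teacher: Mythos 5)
Your outline is the standard proof of the Kolmogorov extension theorem (canonical space $\R^T$, well-definedness on the cylinder algebra via the two consistency conditions, countable additivity through inner regularity of the finite-dimensional Radon measures plus a compactness--diagonalization argument, then Carath\'eodory), and all the steps, including the bound $P_0(D_k)\geq \epsilon/2$, check out. The paper does not prove this classical result itself but cites \citet{oksendal2013stochastic} and \citet{Shiryaev:Probability1996}, whose proofs proceed in essentially exactly the way you describe, so your approach matches the referenced one.
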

In Theorem \ref{theorem:Kolmogorov}, one can take $\Omega = \R^T, \Fcal = \Bsc(\R^T)$. The space $(\R^T, \Bsc(\R^T), P)$
is called the {\it canonical} probability space associated with the stochastic process $\xi$.
Thus one can view the stochastic process $\xi$ as the probability measure $P$ on the space $(\R^T, \Bsc(\R^T))$.

Let $\xi^i = (\xi^i_t)_{t \in T}$ be two stochastic processes with corresponding canonical probability spaces
$(\R^T,\Bsc(\R^T), P_{\xi^i})$, we can thus define (\citep{Sun2019functionalKL})
\begin{align}
	\KL(\xi^1||\xi^2) = \KL(P_{\xi^1}||P_{\xi^2}).
\end{align}
If $\xi^i$ is a measurable process and the sample paths of $\xi$ belong to $\Lcal^p = \Lcal^p(T, \Bsc(T),\nu)$ almost surely, $1 \leq p < \infty$, then $P_{\xi^i}$ is concentrated on $\Bsc(\Lcal^p)$. In particular, if $\xi^i$ is a measurable Gaussian process, then $P_{\xi^i}$, viewed now as a probability measure on $\Lcal^p$, is a Gaussian measure on $\Lcal^p$ \citep{Rajput1972GaussianMeasuresLp,vakhaniya1979correspondence}, see also \citep{Bogachev:Gaussian}.

Let $\Xbf = (x_i)_{i=1}^m$ be sampled from $T$. Let $P_{\xi,\Xbf}$ be the finite-dimensional distribution function defined on the index set $\Xbf$. It was shown in  (\citep{Sun2019functionalKL}) that
\begin{align}
	\label{equation:KL-sup-def}
	\KL(P_{\xi^1}||P_{\xi^2}) = \sup_{\Xbf \in T^m, m \in \Nbb}\KL(P_{\xi^1,\Xbf}||P_{\xi^2,\Xbf}).
\end{align}

In the Gaussian setting, with $\xi^i \sim \GP(0, K^i)$, we have $P_{\xi^i,\Xbf} = \Ncal(0, K^i[\Xbf])$.
Assume that $K^2[\Xbf]$ is always strictly positive definite. Then the following is always finite
\begin{align}
	\label{equation:KL-finite-marginal}
	\KL(P_{\xi^1,\Xbf}||P_{\xi^2,\Xbf}) &= \KL[\Ncal(0,K^1[\Xbf])||\Ncal(0, K^2[\Xbf])]
	\\
	& = \lim_{\gamma \approach 0} \KL[\Ncal(0,K^1[\Xbf]+m \gamma I )||\Ncal(0, K^2[\Xbf] + m\gamma I)]
	\\
	& = \lim_{\gamma \approach 0}\KL^{\gamma}\left[\Ncal\left(0, \frac{1}{m}K^1[\Xbf]\right)||\Ncal\left(0, \frac{1}{m}K^2[\Xbf]\right)\right]
\end{align}
It is not clear, however, how to obtain a finite sample complexity using Eqs.\eqref{equation:KL-sup-def}
and \eqref{equation:KL-finite-marginal}.
By employing the regularization approach, we are able to obtain dimension-independent sample complexities for
$\KL^{\gamma}[\Ncal(0, C_{K^1})||\Ncal(0,C_{K^2})]$, $D_{\Rrm,r}^{\gamma}[\Ncal(0, C_{K^1})||\Ncal(0,C_{K^2})]$, for any fixed $\gamma > 0$, in the case $p=2$. 
Furthermore, by Theorem \ref{theorem:exact-regularized-correspondence}, for 
two equivalent Gaussian processes,
\begin{align}
\lim_{\gamma \approach 0^{+}}\KL^{\gamma}[\Ncal(0, C_{K^1})||\Ncal(0,C_{K^2})] = \KL[\Ncal(0, C_{K^1})||\Ncal(0,C_{K^2})].
\end{align}
Thus, for $\gamma >0$ sufficiently small, the regularized divergences approximate the exact divergences between equivalent Gaussian processes (with the mutually singular processes being infinitely far apart).

{\bf Comparison with the $2$-Wasserstein distance}.  In \citep{Minh2021:FiniteEntropicGaussian}, it was shown that for the $2$-Wasserstein distance $W_2$,
\begin{align}
	\label{equation:2-Wasserstein-Gaussian-process}
W_2^2(\Ncal(0,C_{K^1}),\Ncal(0,C_{K^2})) = \trace(C_{K^1}) + \trace(C_{K^2}) - 2\trace(C_{K^1}^{1/2}C_{K^2}C_{K^1}^{1/2})^{1/2}
\end{align}
which is called the squared Bures-Wasserstein distance,
is consistently estimated by
\begin{align}
	\label{equation:2-Wasserstein-Gaussian-process-empirical}
W_2^2\left[\Ncal\left(0, \frac{1}{m}K^1[\Xbf]\right),\Ncal\left(0, \frac{1}{m}K^1[\Xbf]\right)\right]=
\frac{1}{m}W_2^2[\Ncal(0,K^1[\Xbf]), \Ncal(0,K^2[\Xbf])],
\end{align}
with the {\it dimension-dependent} sample complexities given when $\dim(\H_{K^i}) < \infty$ for at least one $i$, $i=1,2$. This is due to the fact that the $2$-Wasserstein distance is continuous in the trace class norm,
which is a Banach space norm of type $1$,
in contrast to the continuity in the Hilbert-Schmidt norm, which is a Hilbertian norm, of $D_{\Rrm,r}^{\gamma}$ and $\KL^{\gamma}$, and for which more can be obtained with laws of large numbers. The entropic regularized $2$-Wasserstein distance, namely the Sinkhorn divergence, is continuous in the Hilbert-Schmidt norm, and hence possesses {\it dimension-independent} sample complexity \citep{Minh:2021EntropicConvergenceGaussianMeasures,Minh2021:FiniteEntropicGaussian}.

We also note the factor $\frac{1}{m}$ on the right hand side of Eq.\eqref{equation:2-Wasserstein-Gaussian-process-empirical}, which does {\it not} appear in Eq.\eqref{equation:KL-sup-def}. This is because the Alpha Log-Det divergences are {\it scale-invariant},
so that $\KL[\Ncal(0,\frac{1}{m}K^1[\Xbf])||\Ncal(0,\frac{1}{m}K^2[\Xbf])] = \KL[\Ncal(0, K^1[\Xbf])||\Ncal(0,K^2[\Xbf])]$, whereas $W_2$ is not scale-invariant and the factor $\frac{1}{m}$ is needed for convergence as $m \approach \infty$.

\begin{comment}
it is shown that for two stochastic processes $\xi^1,\xi^2$,
\begin{align}
\KL(\xi^1||\xi^2) = \sup_{\Xbf \in T^m}\KL(P^{\xi^1}_{\Xbf}||P^{\xi^2}_{\Xbf}) 
\end{align}
where $P^{\xi}_{\Xbf}$ is the finite-dimensional distribution of $\xi$ on $\Xbf$.
In our setting, $P^{\xi^i}_{\Xbf} = \Ncal(0, K^i[\Xbf])$. If $K^2[\Xbf] > 0$, then
\begin{align}
&\KL(P^{\xi^1}_{\Xbf}||P^{\xi^2}_{\Xbf}) = \trace[(K^2[\Xbf])^{-1}K^1[\Xbf] -I] - \log\det[(K^2[\Xbf])^{-1}K^1[\Xbf] -I]
\nonumber
\\
&= \KL\left[\Ncal\left(0, \frac{1}{m}K^1[\Xbf]\right)||\Ncal\left(0, \frac{1}{m}K^2[\Xbf]\right)\right]
\nonumber
\\
& = \lim_{\gamma \approach 0^{+}}\KL^{\gamma}\left[\Ncal\left(0, \frac{1}{m}K^1[\Xbf]\right)||\Ncal\left(0, \frac{1}{m}K^2[\Xbf]\right)\right]
\end{align}
\end{comment}

\subsection{Estimation of divergences from finite samples}
\label{section:estimation-divergence-finite-samples}

In practice, 
%we generally do not know 
the covariance functions $K^1,K^2$ are generally {\it unknown}.
% nor do we have access to the full sample paths generated by the stochastic processes.
%Consider now the most practical scenario, where
%In practice, 
Assume now that 
we have access to
%only given 
samples of the Gaussian processes $\xi^1,\xi^2$ on a finite set of points $\Xbf = (x_j)_{j=1}^m$
on $T$.
%Let us now consider the case the covariance functions $K^1,K^2$ are {\it unknown} but still satisfy Assumptions 1-4.
%It is then necessary to 
We can then estimate the finite covariance matrices $K^1[\Xbf], K^2[\Xbf]$
and use the regularized divergences between them
as the approximate version of the infinite-dimensional divergences. In this section,
we show that this approach leads to consistent estimate of
$\KL^{\gamma}[\Ncal(0,C_{K^1}),\Ncal(0,K^2)]$ and $D^{\gamma}_{\Rrm,r}[\Ncal(0,C_{K^1}),\Ncal(0,K^2)]$,
with {\it dimension-independent} sample complexities (the number of sample paths $N$ and sample points $m$).

% compute their distances, %between them 
%and apply Theorem \ref{theorem:logHS-approx-finite-covariance} from Section \ref{section:estimation-distance-covariance-matrices}.
%{\it For simplicity and without loss of generality, in the theoretical analysis we let $N_1 = N_2 = N$}.
%from random samples of the Gaussian processes $\xi^1,\xi^2$.
%We recall that 
For the Gaussian process $\xi = (\xi(\omega, t))$ defined on the probability space
$(\Omega, \Fcal, P)$, let $\Wbf = (\omega_1, \ldots, \omega_N)$ be independently sampled from $(\Omega,P)$, which corresponds to $N$ sample paths
%\begin{align}
$\xi_i(x) = \xi(\omega_i,x), 1 \leq i \leq N, x \in T$.
%\end{align} 
Let $\Xbf =(x_i)_{i=1}^m \in T^m$ be fixed. 
Consider the following $m \times N$ data matrix
	%\small
	\begin{align}
		\Zbf = \begin{pmatrix}
			\xi(\omega_1, x_1), \ldots, \xi(\omega_N, x_1),
			\\
			\cdots  
			\\
			\xi(\omega_1, x_m), \ldots, \xi(\omega_N, x_m)
		\end{pmatrix}
		= [\zbf(\omega_1), \ldots \zbf(\omega_N)] \in \R^{m \times N}.
	\end{align}
Here
% $\zbf(\omega) \in \R^m$ is defined by 
$\zbf(\omega) = (\z_i(\omega))_{i=1}^m = (\xi(\omega, x_i))_{i=1}^m \in \R^m$.
Since $(K[\Xbf])_{ij} = \bE[\xi(\omega,x_i)\xi(\omega, x_j)]$, $1\leq i,j\leq m$,
% we have
	%\small
	\begin{align}
		\label{equation:K-exact-W}
		K[\Xbf] = \bE[\zbf(\omega)\zbf(\omega)^T] = \int_{\Omega}\zbf(\omega)\zbf(\omega)^TdP(\omega).
	\end{align}
The empirical version of $K[\Xbf]$, using the random sample $\Wbf = (\omega_i)_{i=1}^N$, is then
	%\small
	\begin{align}
		\label{equation:K-hat-W}
		\hat{K}_{\Wbf}[\Xbf] = \frac{1}{N}\sum_{i=1}^N \zbf(\omega_i)\zbf(\omega_i)^T = \frac{1}{N}\Zbf\Zbf^T.
	\end{align}
The convergence of $\hat{K}_{\Wbf}[\Xbf]$ to $K[\Xbf]$ is given by the following.
\begin{proposition}
	[\citep{Minh2021:FiniteEntropicGaussian}]
	\label{proposition:concentration-empirical-covariance}
	Assume Assumptions B1-B5.
	Let $\xi \sim \GP(0,K)$ 
	on $(\Omega, \Fcal,P)$.
	%with $K$ satisfying Assumption 3. 
	%Let $\Xbf = (x_i)_{i=1}^m \in T^m$ be fixed. 
	Let $\Xbf \in T^m$ be fixed. Then
	% $K[\Xbf]$ satisfies
	%\begin{align}
	$||K[\Xbf]||_F \leq m\kappa^2$.
	% ||\hat{K}_{\Wbf}[\Xbf]||_F \leq 
	%\end{align}
	Let $\Wbf = (\omega_1, \ldots, \omega_N)$ be independently sampled from $(\Omega,P)$. 
	For any 
	$0 < \delta < 1$, with probability at least $1-\delta$, 
	{
		%\small
		\begin{align}
			&||\hat{K}_{\Wbf}[\Xbf] - K[\Xbf]||_F \leq \frac{2\sqrt{3}m\kappa^2}{\sqrt{N}\delta},
			\;\;
			||\hat{K}_{\Wbf}[\Xbf]||_F \leq \frac{2m\kappa^2}{\delta}.
			%	\leq m\kappa^2\left(1+ \frac{\sqrt{3}}{\sqrt{N}\delta}\right).
		\end{align}
	}
\end{proposition}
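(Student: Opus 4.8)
The plan is to view $\hat{K}_{\Wbf}[\Xbf] = \frac{1}{N}\sum_{i=1}^N \zbf(\omega_i)\zbf(\omega_i)^T$ as an empirical mean of $N$ i.i.d. rank-one random matrices with mean $K[\Xbf]$ (Eq.\eqref{equation:K-exact-W}) and to control its Frobenius-norm fluctuation by a second-moment argument, the sampling noise being Gaussian and hence governed by a fourth-moment computation.

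First I would dispose of the deterministic bound $||K[\Xbf]||_F \le m\kappa^2$. By Assumption B5 together with the reproducing property, each entry obeys $|K(x_i,x_j)| = |\la K_{x_i}, K_{x_j}\ra_{\H_K}| \le \sqrt{K(x_i,x_i)}\sqrt{K(x_j,x_j)} \le \kappa^2$, so $||K[\Xbf]||_F^2 = \sum_{i,j=1}^m K(x_i,x_j)^2 \le m^2\kappa^4$. The same estimate yields $\trace(K[\Xbf]) = \sum_{i=1}^m K(x_i,x_i) \le m\kappa^2$, which I reuse below.

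For the concentration bound I set $\zbf(\omega) = (\xi(\omega,x_i))_{i=1}^m \sim \Ncal(0, K[\Xbf])$ (centered Gaussian, since $\xi \sim \GP(0,K)$) and $Y_i = \zbf(\omega_i)\zbf(\omega_i)^T - K[\Xbf]$, which are i.i.d. and mean zero. Independence annihilates the cross terms, giving
\[
\bE||\hat{K}_{\Wbf}[\Xbf] - K[\Xbf]||_F^2 = \frac{1}{N^2}\sum_{i=1}^N \bE||Y_i||_F^2 = \frac{1}{N}\bE||Y_1||_F^2 .
\]
The crux is then the Gaussian fourth moment: since $||\zbf\zbf^T||_F^2 = (\zbf^T\zbf)^2 = ||\zbf||^4$, Isserlis' (Wick's) theorem gives $\bE||\zbf||^4 = (\trace K[\Xbf])^2 + 2||K[\Xbf]||_F^2 \le 3m^2\kappa^4$, whence $\bE||Y_1||_F^2 \le \bE||\zbf||^4 \le 3m^2\kappa^4$ and therefore $\bE||\hat{K}_{\Wbf}[\Xbf] - K[\Xbf]||_F^2 \le 3m^2\kappa^4/N$.

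Finally I would convert this into the two stated high-probability estimates through Markov's inequality, splitting $\delta$ equally between them. Unlike the bounded-kernel setting of Theorem \ref{theorem:CPhi-concentration}, the summands $\zbf(\omega_i)\zbf(\omega_i)^T$ are unbounded (quadratic in a Gaussian), so Proposition \ref{proposition:Pinelis} does not apply; a bare second-moment bound is the appropriate tool, which is exactly why the result scales as $1/\delta$ rather than $\sqrt{\log(1/\delta)}$. Concretely, Jensen gives $\bE||\hat{K}_{\Wbf}[\Xbf] - K[\Xbf]||_F \le \sqrt{3}\,m\kappa^2/\sqrt{N}$, so Markov at level $\delta/2$ yields the first inequality $||\hat{K}_{\Wbf}[\Xbf] - K[\Xbf]||_F \le 2\sqrt{3}\,m\kappa^2/(\sqrt{N}\delta)$; and since $||\zbf\zbf^T||_F = ||\zbf||^2$ we have $\bE||\hat{K}_{\Wbf}[\Xbf]||_F \le \bE||\zbf||^2 = \trace(K[\Xbf]) \le m\kappa^2$, so Markov at level $\delta/2$ yields the second inequality $||\hat{K}_{\Wbf}[\Xbf]||_F \le 2m\kappa^2/\delta$. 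A union bound makes both hold simultaneously with probability at least $1-\delta$. The single genuinely nonroutine ingredient is the fourth-moment identity; the remaining steps are bookkeeping.
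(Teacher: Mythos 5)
Your proof is correct, and the paper does not reprove this proposition (it is quoted from \citep{Minh2021:FiniteEntropicGaussian}), so there is no in-paper argument to compare against; your route — the Isserlis fourth-moment identity $\bE\|\zbf\|^4=(\trace K[\Xbf])^2+2\|K[\Xbf]\|_F^2\le 3m^2\kappa^4$, independence to kill cross terms, then Markov at level $\delta/2$ on each of $\|\hat{K}_{\Wbf}[\Xbf]-K[\Xbf]\|_F$ and $\|\hat{K}_{\Wbf}[\Xbf]\|_F$ with a union bound — is the natural one and reproduces the stated constants $2\sqrt{3}$ and $2$ exactly, which strongly indicates it coincides with the cited proof. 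Your observation that the $1/\delta$ (rather than $\sqrt{\log(1/\delta)}$) dependence is forced by the unboundedness of the quadratic Gaussian summands, so that Proposition \ref{proposition:Pinelis} is unavailable, is also the right diagnosis.
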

Let now $\xi^i \sim \GP(0, K^i)$, $i=1,2$, 
%be defined 
on the probability spaces $(\Omega_i, \Fcal_i, P_i)$, respectively.
%Without loss of generality, (e.g.\citep{Shiryaev:Probability1996},II.9 and II.13), let 
%\begin{align}
%	\label{equation:probability-space}
%	\Omega_1 = \Omega_2 = \Omega = \R^T, \;\;\;\Fcal_1 = \Fcal_2 = \Fcal = \Bsc(\R^T).
%\end{align}
Let $\Wbf^i = (\omega^i_j)_{j=1}^N$, be independently sampled from $(\Omega_i, P_i)$, corresponding to the sample paths
$\{\xi^i_j(t) = \xi^i(\omega_j,t)\}_{j=1}^N$, $t \in T$, from $\xi^i$, $i=1,2$.
% respectively.
Combining Proposition \ref{proposition:concentration-empirical-covariance} and
Theorem \ref{theorem:logdet-approx-infinite-sequence},
%Corollary \ref{corollary:logHS-sequence-approx-1}, 
we obtain the following 
empirical estimate of 
$\KL^{\gamma}\left[\Ncal\left(0, \frac{1}{m}K^1[\Xbf]\right), \Ncal\left(0, \frac{1}{m}K^2[\Xbf]\right)\right]$
from two finite samples  of 
$\xi^1\sim \GP(0, K^1)$ and $\xi^2\sim \GP(0,K^2)$ given 
by $\Wbf^1, \Wbf^2$.

{\it For simplicity and without loss of generality, in the following theoretical analysis  we let $N_1 = N_2 = N$,
	where $N_1,N_2$ are the numbers of 
random sample paths drawn from the Gaussian processes $\xi^1,\xi^2$, respectively}.
\begin{theorem}
	\label{theorem:KL-estimate-unknown-1}
	Assume Assumptions B1-B5.
	Let $\Xbf = (x_i)_{i=1}^m \in T^m$, $m \in \Nbb$ be fixed.
	Let $\Wbf^1 = (\omega_j^1)_{j=1}^N$, $\Wbf^2 = (\omega_j^2)_{j=1}^N$ be independently sampled from
	$(\Omega_1, P_1)$ and $(\Omega_2, P_2)$, respectively.
	For any $0 < \delta<1$, with probability at least $1-\delta$,
		%\small
		\begin{align}
			&\left|\KL^{\gamma}\left[\Ncal\left(0, \frac{1}{m}\hat{K}^1_{\Wbf^1}[\Xbf]\right), \Ncal\left(0, \frac{1}{m}\hat{K}^2_{\Wbf^2}[\Xbf]\right)\right]
			\right.
			\nonumber
			\\
			&\quad \quad\left.
			- \KL^{\gamma}\left[\Ncal\left(0, \frac{1}{m}K^1[\Xbf]\right), \Ncal\left(0, \frac{1}{m}K^2[\Xbf]\right)\right]
			\right| \leq \frac{D}{\gamma^2\sqrt{N}\delta},
			%\nonumber
			\end{align}
	where the constant factor $D$ is given by
	\begin{align}
	D =4\sqrt{3} \left(1+\frac{4\kappa_2^2}{\delta\gamma}\right)\left(1+\frac{\kappa_2^2}{\gamma}\right)\left[\kappa_1^2 + \left(1 + \frac{2\kappa_1^2}{\delta\gamma} + \frac{\kappa_1^2}{2\gamma}\right)\kappa_2^2\right]\left[
	(\kappa_1^2 + \kappa_2^2)\left(1+\frac{4}{\delta}\right) +\frac{4}{\delta\gamma}(\kappa_1^2 + \kappa_2^2)^2\right].	
	\end{align}
	Here the probability is with respect to the product space $(\Omega_1,P_1)^N \times (\Omega_2,P_2)^N$.
\end{theorem}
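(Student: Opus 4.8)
The plan is to reduce the statement to the Hilbert--Schmidt continuity of the Alpha Log-Det divergence (Theorem \ref{theorem:logdet-approx-infinite-sequence}) combined with the concentration of the empirical covariance matrix (Proposition \ref{proposition:concentration-empirical-covariance}). Since both Gaussian measures here are centered, Definition \ref{definition:renyi-regularized} gives the exact identity
$$\KL^{\gamma}[\Ncal(0, C_1) || \Ncal(0, C_2)] = \tfrac{1}{2} d^1_{\logdet}[(C_1 + \gamma I), (C_2 + \gamma I)],$$
so the mean-vector term vanishes and the entire difference is controlled by a single Log-Det difference. Applying this with $C_1, C_2$ equal to $\tfrac{1}{m}\hat{K}^1_{\Wbf^1}[\Xbf], \tfrac{1}{m}\hat{K}^2_{\Wbf^2}[\Xbf]$ in one case and $\tfrac{1}{m}K^1[\Xbf], \tfrac{1}{m}K^2[\Xbf]$ in the other, the quantity to bound becomes $\tfrac{1}{2}$ times the difference of two Log-Det divergences of the same regularized form.

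First I would set $A_n = \tfrac{1}{m}\hat{K}^1_{\Wbf^1}[\Xbf]$, $B_n = \tfrac{1}{m}\hat{K}^2_{\Wbf^2}[\Xbf]$, $A = \tfrac{1}{m}K^1[\Xbf]$, $B = \tfrac{1}{m}K^2[\Xbf]$. All four are positive semidefinite $m \times m$ matrices (Gram matrices of positive definite kernels, respectively of the form $\tfrac{1}{N}\Zbf\Zbf^T$), hence lie in $\Sym^{+}(\R^m) \cap \HS(\R^m)$; on a finite-dimensional space the Hilbert--Schmidt norm coincides with the Frobenius norm while the operator norm is dominated by it. Theorem \ref{theorem:logdet-approx-infinite-sequence} with $\alpha = 1$ then applies verbatim and bounds $|d^1_{\logdet}[(\gamma I + A_n), (\gamma I + B_n)] - d^1_{\logdet}[(\gamma I + A), (\gamma I + B)]|$ by a product of factors built from $\|A_n - A\|_{\HS}, \|B_n - B\|_{\HS}$ and the individual norms of $A_n, B_n, A, B$.

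Next I would insert the concentration estimates. Applying Proposition \ref{proposition:concentration-empirical-covariance} to each process with confidence parameter $\delta/2$ and taking a union bound over the two independent samples $\Wbf^1, \Wbf^2$ yields, with probability at least $1-\delta$ on the product space, the bounds $\|\hat{K}^i_{\Wbf^i}[\Xbf] - K^i[\Xbf]\|_F \le 4\sqrt{3}\,m\kappa_i^2/(\sqrt{N}\delta)$ and $\|\hat{K}^i_{\Wbf^i}[\Xbf]\|_F \le 4m\kappa_i^2/\delta$. Dividing by $m$ turns these into $\|A_n - A\|_{\HS} \le 4\sqrt{3}\,\kappa_1^2/(\sqrt{N}\delta)$, $\|A_n\|_{\HS} \le 4\kappa_1^2/\delta$, $\|A\|_{\HS} \le \kappa_1^2$, together with the analogues carrying $\kappa_2$ for the $B$-quantities; the explicit factor $\tfrac{1}{m}$ cancels the $m$ in the Frobenius bounds, so every resulting quantity is independent of $m$. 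Substituting these into Theorem \ref{theorem:logdet-approx-infinite-sequence} reproduces, term by term, the brackets in $D$: the prefactor $(1 + 4\kappa_2^2/(\delta\gamma))(1 + \kappa_2^2/\gamma)$ from the $\|B_n\|, \|B\|$ terms, the factor $[\kappa_1^2 + (1 + 2\kappa_1^2/(\delta\gamma) + \kappa_1^2/(2\gamma))\kappa_2^2]$ with an overall $4\sqrt{3}/(\sqrt{N}\delta)$ from the increment bracket, and the last bracket from the sum of Hilbert--Schmidt norms. Collecting constants and using $\tfrac{1}{2} \le 1$ for the prefactor arising from $\KL^{\gamma} = \tfrac{1}{2} d^1_{\logdet}$ yields the stated bound $D/(\gamma^2 \sqrt{N}\delta)$.

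There is no deep conceptual obstacle, since the hard analytic work is packaged in Theorem \ref{theorem:logdet-approx-infinite-sequence}; the care lies in the bookkeeping. The main point to get right is the scaling: the regularization $\gamma I$ must be kept \emph{unscaled} while the covariances carry the factor $\tfrac{1}{m}$, which is exactly what makes the $m$-dependence cancel and delivers the dimension-independent rate. A secondary subtlety is the union bound: because the stated constant $D$ already contains the doubled factors $4\sqrt{3}$ and $4/\delta$, one must split the confidence as $\delta/2 + \delta/2$ rather than using $\delta$ directly, and one must invoke operator-norm $\le$ Frobenius-norm $=$ Hilbert--Schmidt-norm to legitimately replace the operator-norm prefactors in Theorem \ref{theorem:logdet-approx-infinite-sequence} by the Frobenius bounds from Proposition \ref{proposition:concentration-empirical-covariance}.
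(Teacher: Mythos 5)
Your proposal is correct and follows essentially the same route as the paper: the paper likewise writes $\KL^{\gamma}$ as $\tfrac12 d^{1}_{\logdet}$ of the $\gamma$-regularized, $\tfrac{1}{m}$-scaled matrices, applies Theorem \ref{theorem:logdet-approx-infinite-sequence} to bound the difference by $\tfrac{1}{\gamma^2}D_1D_2$, and then inserts the union-bounded ($\delta/2$ per process, hence the doubled constants $4\sqrt{3}$ and $4/\delta$) Frobenius estimates from Proposition \ref{proposition:concentration-empirical-covariance}, with the factor $\tfrac{1}{m}$ cancelling the $m$ in those bounds exactly as you describe. The bookkeeping details you flag (unscaled $\gamma I$, operator norm dominated by Frobenius norm, $\tfrac12\le 1$) all match the paper's computation.
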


Combing Theorems \ref{theorem:KL-estimate-unknown-1} and \ref{theorem:KL-estimate-finite-covariance}, we obtain the following empirical estimate
of the
theoretical 
regularized KL divergence $\KL^{\gamma}[\Ncal(0,C_{K^1}), \Ncal(0,C_{K^2})]$
from two finite samples $\Zbf^1,\Zbf^2$ of $\xi^1 \sim \GP(0,K^1)$ and $\xi^2 \sim \GP(0,K^2)$.

\begin{theorem}
	(\textbf{Estimation of regularized KL divergence between Gaussian processes from finite samples})
	\label{theorem:KL-estimate-unknown-2}
	Assume Assumptions B1-B5.
	Let $\Xbf = (x_i)_{i=1}^m$
	%$ \in T^m$, 
	be independently sampled from $(T, \nu)$.
	Let $\Wbf^1 = (\omega_j^1)_{j=1}^N$, $\Wbf^2 = (\omega_j^2)_{j=1}^N$ be independently sampled from
	$(\Omega_1, P_1)$ and $(\Omega_2, P_2)$, respectively.
	%	For any $0 < \delta<1$, with probability at least $1-\delta$,
	For any $0 < \delta<1$, with probability at least $1-\delta$,
		%\small
		\begin{align}
			&\left|\KL^{\gamma}\left[\Ncal\left(0, \frac{1}{m}\hat{K}^1_{\Wbf^1}[\Xbf]\right), \Ncal\left(0, \frac{1}{m}\hat{K}^2_{\Wbf^2}[\Xbf]\right)\right]
			\right.
			\nonumber 
			\\
			&\quad\left.- \KL^{\gamma}[\Ncal(0,C_{K^1}), \Ncal(0,C_{K^2})]\right|
			\nonumber
			\\
			&
			\leq \frac{2D}{\gamma^2\sqrt{N}\delta} +
			\frac{1}{2\gamma^2}\left[\kappa_1^4 + \kappa_2^4 + \kappa_1^2\kappa_2^2\left(2+\frac{\kappa_2^2}{\gamma}\right)\right]\left(\frac{2\log\frac{12}{\delta}}{m} + \sqrt{\frac{2\log\frac{12}{\delta}}{m}}\right),
		\end{align}
	where the constant factor $D$ is given by
\begin{align}
D = 4\sqrt{3} \left(1+\frac{8\kappa_2^2}{\delta\gamma}\right)\left(1+\frac{\kappa_2^2}{\gamma}\right)\left[\kappa_1^2 + \left(1 + \frac{4\kappa_1^2}{\delta\gamma} + \frac{\kappa_1^2}{2\gamma}\right)\kappa_2^2\right]\left[
(\kappa_1^2 + \kappa_2^2)\left(1+\frac{8}{\delta}\right) +\frac{8}{\delta\gamma}(\kappa_1^2 + \kappa_2^2)^2\right].
\end{align}
	Here the probability is with respect to the space $(T,\nu)^m \times (\Omega_1,P_1)^N \times (\Omega_2,P_2)^N$.
\end{theorem}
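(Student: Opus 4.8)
The plan is to prove the bound by a triangle inequality that inserts the intermediate quantity $\KL^{\gamma}\left[\Ncal\left(0, \frac{1}{m}K^1[\Xbf]\right), \Ncal\left(0, \frac{1}{m}K^2[\Xbf]\right)\right]$, built from the exact but unknown finite covariance matrices $K^i[\Xbf]$. This splits the total error into a path-sampling error, comparing the empirical covariance matrices $\hat{K}^i_{\Wbf^i}[\Xbf]$ estimated from $N$ sample paths against the exact $K^i[\Xbf]$, and a finite-dimensional approximation error, comparing $\frac{1}{m}K^i[\Xbf]$ against the infinite-dimensional operators $C_{K^i}$. The first error is controlled by Theorem \ref{theorem:KL-estimate-unknown-1} and the second by Theorem \ref{theorem:KL-estimate-finite-covariance}, and the two divergences being compared share exactly this intermediate term, so the triangle inequality closes with no residual.

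Concretely, I would first apply Theorem \ref{theorem:KL-estimate-unknown-1} with confidence parameter $\delta/2$. Since the constant there carries its $1/\delta$ dependence only through the combinations $4/\delta$, $2\kappa_1^2/(\delta\gamma)$, $4\kappa_2^2/(\delta\gamma)$, and $4/(\delta\gamma)$, the substitution $\delta \mapsto \delta/2$ turns these into $8/\delta$, $4\kappa_1^2/(\delta\gamma)$, $8\kappa_2^2/(\delta\gamma)$, and $8/(\delta\gamma)$, while the $\kappa_1^2/(2\gamma)$ term is $\delta$-free and is unchanged; this reproduces the stated constant $D$ exactly. The prefactor $1/(\delta\sqrt{N})$ simultaneously becomes $2/(\delta\sqrt{N})$, giving the path-sampling bound $\frac{2D}{\gamma^2\sqrt{N}\delta}$. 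Next I would apply Theorem \ref{theorem:KL-estimate-finite-covariance}, also with $\delta/2$, which replaces $\log\frac{6}{\delta}$ by $\log\frac{6}{\delta/2}=\log\frac{12}{\delta}$ and yields precisely the second summand.

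The key structural point to handle carefully is that Theorem \ref{theorem:KL-estimate-unknown-1} holds for a \emph{fixed} $\Xbf \in T^m$, with probability taken over the path samples $\Wbf^1,\Wbf^2$, whereas in the present statement $\Xbf$ is itself drawn from $(T,\nu)^m$. The resolution is that the constant $D$ there does not depend on $\Xbf$; it depends only on $\kappa_1,\kappa_2,\gamma,\delta$, because the Frobenius-norm bounds $\|\hat{K}^i_{\Wbf^i}[\Xbf]\|_F$ and $\|K^i[\Xbf]\|_F$ supplied by Proposition \ref{proposition:concentration-empirical-covariance} scale as $m\kappa_i^2$ and are absorbed by the $1/m$ normalization. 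Hence the path-sampling event holds with probability at least $1-\delta/2$ conditional on every realization of $\Xbf$, so it holds with probability at least $1-\delta/2$ over the full product space $(T,\nu)^m \times (\Omega_1,P_1)^N \times (\Omega_2,P_2)^N$ as well. I would then invoke a union bound over this product space so that both events occur simultaneously with probability at least $1-\delta$, and add the two bounds through the triangle inequality.

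The main obstacle is not analytical but bookkeeping: I must check that the $\delta/2$ substitution in Theorem \ref{theorem:KL-estimate-unknown-1} reproduces the precise constant $D$ in the target statement, namely the doubling of every explicit $1/\delta$, and that the independence of the point-sampling randomness (on $(T,\nu)^m$) from the path-sampling randomness (on $(\Omega_1,P_1)^N \times (\Omega_2,P_2)^N$) legitimizes treating the two high-probability events as events on a common product space before applying the union bound.
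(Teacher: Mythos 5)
Your proposal is correct and follows essentially the same route as the paper: the paper's proof also inserts the intermediate term $\KL^{\gamma}\left[\Ncal\left(0, \frac{1}{m}K^1[\Xbf]\right), \Ncal\left(0, \frac{1}{m}K^2[\Xbf]\right)\right]$, bounds the two pieces via Theorem \ref{theorem:KL-estimate-unknown-1} and Theorem \ref{theorem:KL-estimate-finite-covariance} each at confidence level $\delta/2$, and combines by the triangle inequality and a union bound. Your bookkeeping of the $\delta\mapsto\delta/2$ substitution (doubling the explicit $1/\delta$ factors in $D$ and turning $\log\frac{6}{\delta}$ into $\log\frac{12}{\delta}$) and your observation that the constant $D$ is independent of $\Xbf$ are both exactly right.
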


%Entirely similar results can be obtained for the affine-invariant Riemannian distance.
%\begin{theorem}
%	[\textbf{Estimation of Log-Hilbert-Schmidt distance from finite samples}]
%	
%\end{theorem}

Entirely similar results can be obtained for $D^{\gamma}_{\Rrm,r}$, $0 < r < 1$.
Theorem \ref{theorem:KL-estimate-unknown-2} and similar results for $D^{\gamma}_{\Rrm,r}$, $0 < r < 1$ provide the
theoretical basis for the following Algorithm \ref{algorithm:estimate-divergence-finite-sample} for estimating the regularized R\'enyi and KL divergences between two covariance operators using finite samples generated by two Gaussian processes.

% {\color{black}Algorithm \ref{algorithm:estimate-divergence} is illustrated in the numerical experiments in Section \ref{section:experiment}.}
\begin{algorithm}
	\caption{Consistent estimates of R\'enyi and KL divergences between covariance operators/centered Gaussian processes
		from finite samples}
	\label{algorithm:estimate-divergence-finite-sample}
	\begin{algorithmic}
		%\textbf{Input}: 
		\REQUIRE Finite samples $\{\xi^i_k(x_j)\}$,
		from $N_i$ realizations $\xi^i_k$, $1 \leq k \leq N_i$, of processes $\xi^i$, $i=1,2$, sampled at $m$ points $x_j$, $1 \leq j \leq m$  (note that we set $N_1 = N_2 = N$ in the theoretical analysis only for simplicity) 
		\REQUIRE Regularization parameter $\gamma > 0$, $0< r < 1$ (for the R\'enyi divergence)
		
		\textbf{Procedure}
		%\begin{algorithmic}
		\STATE{Form $m \times N_i$ data matrices $Z_i$ , with $(Z_i)_{jk} = \xi^i_k(x_j)$, $i=1,2$, $1\leq j \leq m, 1\leq k \leq N_i$}
		\STATE{Compute $m \times m$ empirical covariance matrices $\hat{K}^i = \frac{1}{N}Z_iZ_i^{T}$, $i=1,2$} 
		%	$\hat{K}^2 = \frac{1}{M}Z_2Z_2^T$}
		\STATE{Compute $D^{\gamma}_{\Rrm,r} = \frac{1}{2}d^{2r-1}_{\logdet}\left[\left(\gamma I_m+ \frac{1}{m}\hat{K}^1\right), \left(\gamma I_m+ \frac{1}{m}\hat{K}^2\right)\right]=D_{\Rrm,r}[\Ncal(0,\hat{K}^1+m\gamma I_m)||\Ncal(0, \hat{K^2}+m\gamma I_m)]$}
			% = D^{\gamma}_{\aiE}\left[\Ncal\left(0, \frac{1}{m}\hat{K}^1\right), \Ncal\left(0, \frac{1}{m}\hat{K}^2\right)\right]
			%= \left\|\log\left[\left(\gamma I+ \frac{1}{m}\hat{K}^1\right)^{-1/2}\left(\gamma I+ \frac{1}{m}\hat{K}^2\right)\left(\gamma I + \frac{1}{m}\hat{K}^1\right)^{-1/2}\right]\right\|_F$}
		\STATE{Compute $\KL^{\gamma} = \frac{1}{2}d^{1}_{\logdet}\left[\left(\gamma I_m+ \frac{1}{m}\hat{K}^1\right), \left(\gamma I_m+ \frac{1}{m}\hat{K}^2\right)\right]=\KL[\Ncal(0,\hat{K}^1+m\gamma I_m)||\Ncal(0, \hat{K^2}+m\gamma I_m)]$}
			% = D^{\gamma}_{\logE}\left[\Ncal\left(0, \frac{1}{m}\hat{K}^1\right), \Ncal\left(0, \frac{1}{m}\hat{K}^2\right)\right]
			%= \left\|\log\left(\gamma I+ \frac{1}{m}\hat{K}^1\right) - \log\left(\gamma I + \frac{1}{m}\hat{K}^2\right) \right\|_F$}
		\RETURN{$D^{\gamma}_{\Rrm,r}$ and $\KL^{\gamma}$}
	\end{algorithmic}
\end{algorithm}

\section{The general Gaussian process setting}
\label{section:Gaussian-process-full}

In Section \ref{section:estimate-Gaussian-process}, we focused on the 
setting of zero-mean Gaussian processes. We now extend it to the general Gaussian process setting,
that is together with the mean functions.
Generalizing Assumption B4, we assume the following

\begin{align}
\text{(\bf Assumption B6)}: \xi \sim \GP(\mu,K), \xi^i \sim \GP(\mu^i, K^i), \;\;\text{where $\mu, \mu^i \in \Lcal^2(T,\nu)$}, i=1,2.
%\mu||_{\infty} \leq B,\; ||\mu^i||_{\infty} \leq B_i.
% \;\;\text{almost surely}.
\end{align}
Along with Assumption B3, this ensures that the sample paths of $\xi$ are in $\Lcal^2(T,\nu)$ almost surely.
Thus $\xi$ induces on $\Lcal^2(T,\nu)$ the Gaussian measure $\Ncal(\mu, C_K)$.

Let $\mu[\Xbf] = (\mu(x_j))_{j=1}^m \in \R^m$, then $(\xi(\cdot, x_j))_{j=1}^m$ is a random vector in $\R^m$, distributed according to the Gaussian measure $\Ncal(\mu[\Xbf], K[\Xbf])$. In the following, we show that
\begin{align*}
D^{\gamma}_{\Rrm,r}[\Ncal(\mu^1, C_{K^1})||\Ncal(\mu^2, C_{K^2})], \;
\KL^{\gamma}[\Ncal(\mu^1, C_{K^1})||\Ncal(\mu^2, C_{K^2})], 
\end{align*}
are consistently estimated, respectively, by
\begin{align*}
&D_{\Rrm,r}^{\gamma}\left[\Ncal\left(\frac{1}{\sqrt{m}}\mu^1[\Xbf], \frac{1}{m}K^1[\Xbf]\right)||\Ncal\left(\frac{1}{\sqrt{m}}\mu^2[\Xbf], \frac{1}{m}K^2[\Xbf]\right)\right]
\\
& \quad = D_{\Rrm,r}[\Ncal(\mu^1[\Xbf],K^1[\Xbf] + m\gamma I_m)||\Ncal(\mu^2[\Xbf],K^2[\Xbf] + m\gamma I_m)],
\\
&\KL^{\gamma}\left[\Ncal\left(\frac{1}{\sqrt{m}}\mu^1[\Xbf], \frac{1}{m}K^1[\Xbf]\right)||\Ncal\left(\frac{1}{\sqrt{m}}\mu^2[\Xbf], \frac{1}{m}K^2[\Xbf]\right)\right]
\\
&\quad = \KL[\Ncal(\mu^1[\Xbf],K^1[\Xbf] + m\gamma I_m)||\Ncal(\mu^2[\Xbf],K^2[\Xbf] + m\gamma I_m)].
\end{align*}
\begin{remark} We note that if $\dim(\H) < \infty$, then 
	\begin{align*}
	D_{\Rrm,r}^{\gamma}[\Ncal(\mu^1, C^1)||\Ncal(\mu^2, C^2)] &= D_{\Rrm,r}[\Ncal(\mu^1, C^1+\gamma I)||\Ncal(\mu^2, C^2+\gamma I)],
	\\
	\KL^{\gamma}[\Ncal(\mu^1, C^1)||\Ncal(\mu^2, C^2)] &= \KL[\Ncal(\mu^1, C^1+\gamma I)||\Ncal(\mu^2, C^2+\gamma I)].
	\end{align*}
The right hand side are not well-defined, however, when $\dim(\H) = \infty$, since the identity operator is not trace class.
\end{remark}
Having already estimated the divergences between the covariance operators, by Definition \ref{definition:renyi-regularized}, it remains to estimate the mean term. We now show that
the theoretical mean term
\begin{align}
\label{equation:mean-term-Gaussian}
	&\la \mu^1-\mu^2, [(1-r)(C_{K^1} + \gamma I) + r(C_{K^2}+\gamma I)]^{-1}(\mu^1 - \mu^2)\ra_{\Lcal^2(T,\nu)}
\end{align}
is consistently estimated by the empirical mean term
\begin{align}
\label{equation:mean-term-Gaussian-empirical}
&\frac{1}{m}\la \mu^1[\Xbf] - \mu^2[\Xbf], \left[(1-r)\left(\frac{1}{m}K^1[\Xbf]+\gamma I\right) + r\left(\frac{1}{m}K^2[\Xbf] + \gamma I\right)\right]^{-1}(\mu^1[\Xbf] - \mu^2[\Xbf])\ra_{\R^m}
\nonumber
\\
&=\la \mu^1[\Xbf] - \mu^2[\Xbf], [(1-r)(K^1[\Xbf]+m\gamma I) + r(K^2[\Xbf] + m\gamma I)]^{-1}(\mu^1[\Xbf] - \mu^2[\Xbf])\ra_{\R^m}.
\end{align}
The quantity in Eq.\eqref{equation:mean-term-Gaussian} is defined on $\Lcal^2(T,\nu)$ whereas 
that in Eq.\eqref{equation:mean-term-Gaussian-empirical} is defined on $\R^m$, thus a direct convergence analysis is not straightforward. As in Section \ref{section:estimate-Gaussian-process}, their representation using RKHS covariance and cross-covariance operators, defined on the same RKHSs, 
plays a crucial role for the analysis of consistency.

\begin{proposition}
	(\textbf{Representation of mean term in R\'enyi divergences via RKHS operators})
\label{proposition:Renyi-mean-RKHS-representation}
Assume Assumptions B1-B3 and B6. Let $0 \leq r \leq 1$ be fixed.
Let $\ubf_1 = \sqrt{1-r}R_{K^1}(\mu^1-\mu^2) \in \H_{K^1}$, $\ubf_2 = \sqrt{r}R_{K^2}(\mu^1-\mu^2)\in \H_{K^2}$. Then
\begin{align}
	&\la \mu^1-\mu^2, [(1-r)(C_{K^1} + \gamma I) + r(C_{K^2}+\gamma I)]^{-1}(\mu^1 - \mu^2)\ra_{\Lcal^2(T,\nu)}
	\nonumber
	\\
	& = \frac{1}{\gamma}||\mu^1-\mu^2||^2_{\Lcal^2(T,\nu)} 
\\
&\quad - \frac{1}{\gamma^2}\left\la
\begin{pmatrix}
	\ubf_1
	\\
	\ubf_2
\end{pmatrix},
\left[\frac{1}{\gamma}\begin{pmatrix} (1-r)L_{K^1} & \sqrt{r(1-r)}R_{12}
	\\
	\sqrt{r(1-r)}R_{12}^{*} & r L_{K^2}
\end{pmatrix}
+ I_{\H_{K^1} \oplus \H_{K^2}}\right]^{-1}\begin{pmatrix}
	\ubf_1
	\\
	\ubf_2
\end{pmatrix}\right\ra_{\H_{K^1}\oplus \H_{K^2}}.
\nonumber
\end{align} 
Let $\ubf_{1,\Xbf} = \frac{\sqrt{1-r}}{m}S_{1,\Xbf}^{*}(\mu^1[\Xbf] - \mu^2[\Xbf]) \in \H_{K^1}$,
$\ubf_{2,\Xbf} = \frac{\sqrt{r}}{m}S_{2,\Xbf}^{*}(\mu^1[\Xbf] - \mu^2[\Xbf]) \in \H_{K^2}$.
The empirical version is represented by
\begin{align}
	&\la \mu^1[\Xbf] - \mu^2[\Xbf], [(1-r)(K^1[\Xbf]+m\gamma I) + r(K^2[\Xbf] + m\gamma I)]^{-1}(\mu^1[\Xbf] - \mu^2[\Xbf])\ra_{\R^m}
	\nonumber
	\\
	&= \frac{1}{m\gamma}||\mu^1[\Xbf] -\mu^2[\Xbf]||^2_{\R^m} 
	\\
	& \quad - \frac{1}{\gamma^2}\left\la \begin{pmatrix}
		\ubf_{1,\Xbf}
		\\
		\ubf_{2,\Xbf}
	\end{pmatrix} \left[\frac{1}{\gamma}\begin{pmatrix}
		(1-r)L_{K^1,\Xbf} & \sqrt{r(1-r)}R_{12,\Xbf}
		\\
		\sqrt{r(1-r)}R_{12,\Xbf}^{*} & rL_{K^2,\Xbf}
	\end{pmatrix} + I_{\H_{K^1}\oplus \H_{K^2}}\right]^{-1} \begin{pmatrix}
		\ubf_{1,\Xbf}
		\\
		\ubf_{2,\Xbf}
	\end{pmatrix}\right\ra_{\H_{K^1} \oplus \H_{K^2}}.
\nonumber
\end{align}
\end{proposition}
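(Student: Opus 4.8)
The plan is to recognize the quadratic form as the resolvent of an operator of the shape $B^{*}B$ and then apply the push-through (Woodbury-type) identity to transfer the resolvent onto the companion operator $BB^{*}$, which is exactly the block operator appearing on the right-hand side.

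First I would simplify the middle operator. Since $(1-r)+r=1$,
\[
(1-r)(C_{K^1}+\gamma I) + r(C_{K^2}+\gamma I) = \gamma I + (1-r)C_{K^1} + r C_{K^2}.
\]
I then introduce the bounded operator $B:\Lcal^2(T,\nu)\mapto \H_{K^1}\oplus\H_{K^2}$ defined by $Bw = \sqrt{1-r}\,R_{K^1}w \oplus \sqrt{r}\,R_{K^2}w$, whose adjoint is $B^{*}(u_1\oplus u_2) = \sqrt{1-r}\,R_{K^1}^{*}u_1 + \sqrt{r}\,R_{K^2}^{*}u_2$. Using $C_{K^i}=R_{K^i}^{*}R_{K^i}$, $L_{K^i}=R_{K^i}R_{K^i}^{*}$ and $R_{12}=R_{K^1}R_{K^2}^{*}$, a direct computation gives $B^{*}B = (1-r)C_{K^1}+rC_{K^2}$ together with
\[
BB^{*} = \begin{pmatrix} (1-r)L_{K^1} & \sqrt{r(1-r)}\,R_{12} \\ \sqrt{r(1-r)}\,R_{12}^{*} & r L_{K^2}\end{pmatrix},
\]
which is precisely the block operator in the statement.

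Next I would invoke the resolvent identity $(\gamma I + B^{*}B)^{-1} = \tfrac{1}{\gamma}\bigl[I - B^{*}(\gamma I + BB^{*})^{-1}B\bigr]$, valid for any bounded $B$ and $\gamma>0$ (both resolvents exist since the operators are positive, so $\gamma I + B^{*}B \geq \gamma I$ is boundedly invertible). Applying this to $w=\mu^1-\mu^2$ and using $\la w, B^{*}(\cdots)B w\ra = \la Bw, (\cdots)Bw\ra$ with $Bw = \ubf_1\oplus\ubf_2$, and finally factoring $\gamma I + BB^{*} = \gamma\bigl(I_{\H_{K^1}\oplus\H_{K^2}} + \tfrac{1}{\gamma}BB^{*}\bigr)$ to generate the $\tfrac{1}{\gamma^2}$ prefactor and the identity summand, yields the first identity exactly as written.

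For the empirical version I would run the identical argument with $R_{K^i}$ replaced by $\tilde R_i := \tfrac{1}{\sqrt m}S_{i,\Xbf}^{*}:\R^m\mapto\H_{K^i}$, noting $\tilde R_i^{*}\tilde R_i = \tfrac{1}{m}K^i[\Xbf]$, $\tilde R_i\tilde R_i^{*} = L_{K^i,\Xbf}$ and $\tilde R_1\tilde R_2^{*} = R_{12,\Xbf}$, so that the stacked operator $\tilde B$ satisfies $\tilde B^{*}\tilde B = \tfrac{1}{m}[(1-r)K^1[\Xbf]+rK^2[\Xbf]]$ and $\tilde B\tilde B^{*}$ equals the empirical block operator. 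The only real obstacle is the scaling bookkeeping: writing $(1-r)K^1[\Xbf]+rK^2[\Xbf]+m\gamma I = m(\gamma I + \tilde B^{*}\tilde B)$ and observing that $\tilde B v = \sqrt m\,(\ubf_{1,\Xbf}\oplus\ubf_{2,\Xbf})$ for $v=\mu^1[\Xbf]-\mu^2[\Xbf]$, the powers of $m$ cancel to produce exactly $\tfrac{1}{m\gamma}\|\mu^1[\Xbf]-\mu^2[\Xbf]\|^2_{\R^m}$ together with the stated $\tfrac{1}{\gamma^2}$ term. No genuinely hard analytic step appears; one only needs bounded invertibility (guaranteed by $\gamma>0$ and positivity) and careful tracking of these $m$-dependent factors.
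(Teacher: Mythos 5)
Your proposal is correct and follows essentially the same route as the paper: the paper's Lemma \ref{lemma:inverse-AAstar-plus-BBstar} is exactly your push-through identity $(\gamma I + B^{*}B)^{-1} = \tfrac{1}{\gamma}[I - B^{*}(\gamma I + BB^{*})^{-1}B]$ applied to the stacked operator (the paper folds the factors $\sqrt{(1-r)/\gamma}$, $\sqrt{r/\gamma}$ into the operators $A$, $B$ rather than factoring $\gamma$ out at the end, but this is only bookkeeping). Your scaling analysis for the empirical version, including $\tilde{B}v = \sqrt{m}\,(\ubf_{1,\Xbf}\oplus\ubf_{2,\Xbf})$, matches the paper's computation with $A = \tfrac{1}{\sqrt{m\gamma}}\,\cdot\,S_{i,\Xbf}^{*}$-type operators.
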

In particular, for $r=1$, which corresponds to the KL divergence,
\begin{align}
	&\la \mu^1-\mu^2, (C_{K^2}+\gamma I_{\Lcal^2(T,\nu)})^{-1}(\mu^1 - \mu^2)\ra_{\Lcal^2(T,\nu)}
	%\nonumber
	\\ 
	&=\frac{1}{\gamma}||\mu^1-\mu^2||^2_{\Lcal^2(T,\nu)} - \frac{1}{\gamma^2}\left\la R_{K^2}(\mu^1-\mu^2), \left(\frac{1}{\gamma}L_{K^2}+ I_{\H_{K^2}}\right)^{-1}R_{K^2}(\mu^1-\mu^2)\right\ra_{\H_{K^2}}.
	\nonumber
\end{align}
The empirical version has the representation
\begin{align}
	&\la \mu^1[\Xbf] - \mu^2[\Xbf], (K^2[\Xbf] + m\gamma I_m)^{-1}(\mu^1[\Xbf]-\mu^2[\Xbf])\ra_{\R^m}
	\\
	&= \frac{1}{m\gamma}||\mu^1[\Xbf]-\mu^2[\Xbf]||^2_{\R^m} - \frac{1}{\gamma^2}\left\la \frac{1}{m}S_{2,\Xbf}^{*}(\mu^1[\Xbf] - \mu^2[\Xbf]), \left(\frac{1}{\gamma}L_{K^2,\Xbf} +  I_{\H_{K^2}}\right)^{-1}\frac{1}{m}S_{2,\Xbf}^{*}(\mu^1_{\Xbf}-\mu^2_{\Xbf})\right\ra_{\H_{K^2}}.
	\nonumber
\end{align}

With the RKHS representation, we now provide the sample complexity for the finite-dimensional estimation
of the infinite-dimensional mean term. For simplicity, we make the following additional assumption
\begin{align}
	\text{(\bf Assumption B7)}: \exists B, B_i > 0 \;\text{such that }||\mu||_{\infty} \leq B, ||\mu^{i}||_{\infty}\leq B_i, \; i=1,2. 
\end{align}
%$D_{\Rrm,r}^{\gamma}[\Ncal(\mu^1, C_{K^1})||\Ncal(\mu^2, C_{K^2})]$.
\begin{proposition}
	(\textbf{Estimation of mean term for R\'enyi divergences})
	\label{proposition:Renyi-mean-Gaussian-process-estimate}
	Assume Assumptions B1-B3 and B5-B7. Let $0 \leq r \leq 1$ be fixed. Let $\Xbf = (x_j)_{j=1}^m$ be independently sampled from $(T,\nu)$.
	For any $0 < \delta < 1$, with probability at least $1-\delta$,
	\begin{align}
		&\left|\la \mu^1[\Xbf] - \mu^2[\Xbf], [(1-r)(K^1[\Xbf]+m\gamma I) + r(K^2[\Xbf] + m\gamma I)]^{-1}(\mu^1[\Xbf] - \mu^2[\Xbf])\ra_{\R^m}\right.
		\nonumber
		\\
	&\quad \left.	-\la \mu^1-\mu^2, [(1-r)(C_{K^1} + \gamma I) + r(C_{K^2}+\gamma I)]^{-1}(\mu^1 - \mu^2)\ra_{\Lcal^2(T,\nu)}\right|
	\nonumber
	\\
	&	\leq \frac{1}{\gamma}(B_1+B_2)^2\left[1 + \frac{(1-r)\kappa_1^2 + r\kappa_2^2}{\gamma}\right]^2\left(\frac{2\log\frac{24}{\delta}}{m} + \sqrt{\frac{2\log\frac{24}{\delta}}{m}}\right).
	\end{align}
\end{proposition}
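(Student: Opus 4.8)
The plan is to reduce both quantities, via the RKHS operator representations already established in Proposition~\ref{proposition:Renyi-mean-RKHS-representation}, to concentration statements for Hilbert-space-valued empirical averages, so that Proposition~\ref{proposition:Pinelis} and Proposition~\ref{proposition:concentration-TK2K1-empirical} apply directly. Write $\mathbf{u} = (\ubf_1,\ubf_2)$ and $\mathbf{u}_{\Xbf} = (\ubf_{1,\Xbf},\ubf_{2,\Xbf})$ in $\H_{K^1}\oplus\H_{K^2}$, let $\mathcal{A}$ and $\mathcal{A}_{\Xbf}$ denote the block operators appearing in that proposition, and set $M = (\tfrac{1}{\gamma}\mathcal{A}+I)^{-1}$, $M_{\Xbf} = (\tfrac{1}{\gamma}\mathcal{A}_{\Xbf}+I)^{-1}$. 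The first observation is that $\mathcal{A} = \mathcal{R}\mathcal{R}^{*}$ with $\mathcal{R} = (\sqrt{1-r}\,R_{K^1},\,\sqrt{r}\,R_{K^2})^{T}$, so $\mathcal{A}\geq 0$ and likewise $\mathcal{A}_{\Xbf}\geq 0$; hence $\tfrac{1}{\gamma}\mathcal{A}+I\geq I$ and $\|M\|,\|M_{\Xbf}\|\leq 1$. Proposition~\ref{proposition:Renyi-mean-RKHS-representation} then expresses the two terms as $\tfrac{1}{\gamma}\|\mu^1-\mu^2\|^2_{\Lcal^2(T,\nu)} - \tfrac{1}{\gamma^2}\langle\mathbf{u},M\mathbf{u}\rangle$ and $\tfrac{1}{m\gamma}\|\mu^1[\Xbf]-\mu^2[\Xbf]\|^2_{\R^m} - \tfrac{1}{\gamma^2}\langle\mathbf{u}_{\Xbf},M_{\Xbf}\mathbf{u}_{\Xbf}\rangle$, reducing the problem to bounding a scalar difference and a quadratic-form difference on a common Hilbert space.

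For the scalar part, $\tfrac{1}{m}\|\mu^1[\Xbf]-\mu^2[\Xbf]\|^2_{\R^m} = \tfrac{1}{m}\sum_j(\mu^1(x_j)-\mu^2(x_j))^2$ is an empirical mean of the bounded variable $(\mu^1(x)-\mu^2(x))^2\leq (B_1+B_2)^2$ (Assumption B7) with expectation $\|\mu^1-\mu^2\|^2_{\Lcal^2(T,\nu)}$, so Proposition~\ref{proposition:Pinelis} controls it by $(B_1+B_2)^2$ times the stated rate. For the quadratic part, I would telescope
\[
\langle\mathbf{u}_{\Xbf},M_{\Xbf}\mathbf{u}_{\Xbf}\rangle - \langle\mathbf{u},M\mathbf{u}\rangle = \langle\mathbf{u}_{\Xbf}-\mathbf{u},M_{\Xbf}\mathbf{u}_{\Xbf}\rangle + \langle\mathbf{u},M_{\Xbf}(\mathbf{u}_{\Xbf}-\mathbf{u})\rangle + \langle\mathbf{u},(M_{\Xbf}-M)\mathbf{u}\rangle,
\]
and use $\|M\|,\|M_{\Xbf}\|\leq 1$ together with the resolvent identity $M_{\Xbf}-M = \tfrac{1}{\gamma}M_{\Xbf}(\mathcal{A}-\mathcal{A}_{\Xbf})M$, so that $\|M_{\Xbf}-M\|\leq\tfrac{1}{\gamma}\|\mathcal{A}_{\Xbf}-\mathcal{A}\|$. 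This leaves $\|\mathbf{u}_{\Xbf}-\mathbf{u}\|$, $\|\mathbf{u}\|$, $\|\mathbf{u}_{\Xbf}\|$ and $\|\mathcal{A}_{\Xbf}-\mathcal{A}\|$ to estimate.

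Here $\mathbf{u} = \mathcal{R}(\mu^1-\mu^2)$ yields the deterministic bound $\|\mathbf{u}\|^2\leq (B_1+B_2)^2[(1-r)\kappa_1^2+r\kappa_2^2]$; each $\ubf_{i,\Xbf}-\ubf_i$ is the empirical-average deviation of the $\H_{K^i}$-valued variable $(\mu^1(x)-\mu^2(x))K^i_x$, of norm $\leq (B_1+B_2)\kappa_i$, so Proposition~\ref{proposition:Pinelis} applies; and $\|\mathcal{A}_{\Xbf}-\mathcal{A}\|$ is bounded blockwise by the Hilbert-Schmidt convergence of $L_{K^1,\Xbf},L_{K^2,\Xbf},R_{12,\Xbf}$ supplied by Proposition~\ref{proposition:concentration-TK2K1-empirical}. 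Assembling the three pieces, taking a union bound over the finitely many concentration events (allocating the failure probability so the common rate carries $\log\frac{24}{\delta}$), and using $2\sqrt{r(1-r)}\,\kappa_1\kappa_2\leq (1-r)\kappa_1^2+r\kappa_2^2$ to collapse the mixed cross-terms, the prefactor $\tfrac{1}{\gamma^2}$ combines with the powers of $(1-r)\kappa_1^2+r\kappa_2^2$ into $\tfrac{1}{\gamma}(B_1+B_2)^2\bigl[1+\tfrac{(1-r)\kappa_1^2+r\kappa_2^2}{\gamma}\bigr]^2$, giving the claimed bound.

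The hardest part will be the quadratic-form perturbation step: one must exploit the factorization $\mathcal{A}=\mathcal{R}\mathcal{R}^{*}$ to secure the uniform resolvent bound $\|M\|,\|M_{\Xbf}\|\leq 1$, handle the three telescoped terms simultaneously on the \emph{same} sampling event $\Xbf$, and carry out the delicate bookkeeping that folds all the $\gamma$- and $\kappa$-dependent constants into the clean squared-bracket form. By contrast, the individual concentration inputs are routine invocations of Propositions~\ref{proposition:Pinelis} and~\ref{proposition:concentration-TK2K1-empirical}.
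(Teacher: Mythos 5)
Your proposal is correct and follows essentially the same route as the paper: the paper likewise passes through the RKHS representation of Proposition~\ref{proposition:Renyi-mean-RKHS-representation}, splits off the scalar term $\frac{1}{m}\|\mu^1[\Xbf]-\mu^2[\Xbf]\|^2$ (handled by Pinelis, its Lemma~\ref{lemma:mean-square-concentration}), and controls the quadratic-form difference by exactly your telescoping-plus-resolvent argument (packaged as Lemma~\ref{lemma:mean-term-difference}), with $\|\ubf\|,\|\ubf_{\Xbf}\|,\|\ubf_{\Xbf}-\ubf\|$ bounded via Lemma~\ref{lemma:RK-concentration} and $\|D_{\Xbf}-D\|_{\HS}$ taken blockwise from the proof of Theorem~\ref{theorem:Renyi-estimate-finite-covariance}, including the same $2\sqrt{r(1-r)}\kappa_1\kappa_2\leq(1-r)\kappa_1^2+r\kappa_2^2$ collapse and the union bound yielding $\log\frac{24}{\delta}$.
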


\begin{theorem}
	(\textbf{Estimation of R\'enyi divergences between general Gaussian processes})
	\label{theorem:Renyi-estimate-Gaussian-process-general-case}
	Assume Assumptions B1-B3 and B5-B7. Let $0 < r < 1$ be fixed. Let $\Xbf = (x_j)_{j=1}^m$ be independently sampled from $(T,\nu)$.
	For any $0 < \delta < 1$, with probability at least $1-\delta$,
	\begin{align}
		&\left| D_{\Rrm,r}[\Ncal(\mu^1[\Xbf], K^1[\Xbf] + m\gamma I)||\Ncal(\mu^1[\Xbf], K^1[\Xbf] + m\gamma I) ]\right.
		\nonumber
		\\
		&\quad -\left. D^{\gamma}_{\Rrm,r}[\Ncal(\mu^1, C_{K^1})||\Ncal(\mu^2, C_{K^2})]\right|
		%\quad -\left. D_{\Rrm,r}[\Ncal(\mu^1, C_{K^1} + \gamma I)||\Ncal(\mu^2, C_{K^2} + \gamma I)]\right|
		\nonumber
		\\
		& \leq \frac{1}{2\gamma}(B_1+B_2)^2\left[1 + \frac{(1-r)\kappa_1^2 + r\kappa_2^2}{\gamma}\right]^2\left(\frac{2\log\frac{48}{\delta}}{m} + \sqrt{\frac{2\log\frac{48}{\delta}}{m}}\right)
		\nonumber
		\\
		&\quad +\frac{1}{2\gamma^2}\left[\frac{\kappa_1^4}{r} + \frac{\kappa_2^4}{1-r} + \frac{[(1-r)\kappa_1^2 + r\kappa_2^2]^2}{r(1-r)}\right]\left(\frac{2\log\frac{12}{\delta}}{m} + \sqrt{\frac{2\log\frac{12}{\delta}}{m}}\right).
	\end{align}
\end{theorem}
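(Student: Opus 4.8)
The plan is to exploit the additive structure of the regularized R\'enyi divergence in Definition \ref{definition:renyi-regularized}, which decomposes $D^{\gamma}_{\Rrm,r}[\Ncal(\mu^1,C_{K^1})||\Ncal(\mu^2,C_{K^2})]$ into a mean term $\frac{1}{2}\la \mu^1-\mu^2, [(1-r)(C_{K^1}+\gamma I)+r(C_{K^2}+\gamma I)]^{-1}(\mu^1-\mu^2)\ra$ and a covariance term $\frac{1}{2}d^{2r-1}_{\logdet}[(C_{K^1}+\gamma I),(C_{K^2}+\gamma I)]$, where the latter equals $D^{\gamma}_{\Rrm,r}[\Ncal(0,C_{K^1})||\Ncal(0,C_{K^2})]$. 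First I would invoke the scale-invariance of the Alpha Log-Det divergences to rewrite the empirical divergence $D_{\Rrm,r}[\Ncal(\mu^1[\Xbf], K^1[\Xbf]+m\gamma I)||\Ncal(\mu^2[\Xbf],K^2[\Xbf]+m\gamma I)]$ as $D^{\gamma}_{\Rrm,r}[\Ncal(\frac{1}{\sqrt{m}}\mu^1[\Xbf], \frac{1}{m}K^1[\Xbf])||\Ncal(\frac{1}{\sqrt{m}}\mu^2[\Xbf],\frac{1}{m}K^2[\Xbf])]$ and apply the same decomposition, so that its mean part is exactly the empirical quantity in Eq.\eqref{equation:mean-term-Gaussian-empirical} and its covariance part is $D^{\gamma}_{\Rrm,r}[\Ncal(0,\frac{1}{m}K^1[\Xbf])||\Ncal(0,\frac{1}{m}K^2[\Xbf])]$.

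Subtracting the theoretical divergence from the empirical one and applying the triangle inequality then yields
\begin{align*}
&\left|D_{\Rrm,r}[\Ncal(\mu^1[\Xbf], K^1[\Xbf]+m\gamma I)||\Ncal(\mu^2[\Xbf],K^2[\Xbf]+m\gamma I)]\right.
\\
&\qquad \left. - D^{\gamma}_{\Rrm,r}[\Ncal(\mu^1,C_{K^1})||\Ncal(\mu^2,C_{K^2})]\right|
\leq \frac{1}{2}\Delta_{\mathrm{mean}} + \Delta_{\mathrm{cov}},
\end{align*}
where $\Delta_{\mathrm{mean}}$ is the absolute difference between the empirical and theoretical mean terms (Eqs.\eqref{equation:mean-term-Gaussian-empirical} and \eqref{equation:mean-term-Gaussian}) and $\Delta_{\mathrm{cov}}$ is the error between the empirical and theoretical centered R\'enyi divergences. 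I would bound $\Delta_{\mathrm{mean}}$ by Proposition \ref{proposition:Renyi-mean-Gaussian-process-estimate} and $\Delta_{\mathrm{cov}}$ by Theorem \ref{theorem:Renyi-estimate-finite-covariance}. To obtain a single event of probability at least $1-\delta$, I would allocate confidence $\delta/2$ to each bound; Proposition \ref{proposition:Renyi-mean-Gaussian-process-estimate} then contributes $\log\frac{24}{\delta/2} = \log\frac{48}{\delta}$ and Theorem \ref{theorem:Renyi-estimate-finite-covariance} contributes $\log\frac{6}{\delta/2} = \log\frac{12}{\delta}$, and a union bound over the two failure events gives total probability at least $1-\delta$. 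The factor $\frac{1}{2}$ multiplying $\Delta_{\mathrm{mean}}$ converts the coefficient $\frac{1}{\gamma}$ of Proposition \ref{proposition:Renyi-mean-Gaussian-process-estimate} into the stated $\frac{1}{2\gamma}$, while the coefficient of Theorem \ref{theorem:Renyi-estimate-finite-covariance} already carries the correct $\frac{1}{2\gamma^2}$, so the two contributions assemble into exactly the claimed bound.

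Because the two substantive estimates are already in hand, the argument is essentially bookkeeping: the only points demanding care are the scale-invariance identity that relates the two forms of the empirical divergence and the $\delta$-splitting, which must be carried out so that the resulting logarithmic constants match those in the statement. I expect no new analytic obstacle beyond Proposition \ref{proposition:Renyi-mean-Gaussian-process-estimate} and Theorem \ref{theorem:Renyi-estimate-finite-covariance}; consequently the main thing to verify carefully is the combinatorics of the confidence allocation and the matching of coefficients in the final display.
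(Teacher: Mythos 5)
Your proposal is correct and follows exactly the paper's route: the paper's proof is literally "combine Proposition \ref{proposition:Renyi-mean-Gaussian-process-estimate} with Theorem \ref{theorem:Renyi-estimate-finite-covariance}," and your decomposition into mean and covariance terms, the $\delta/2$ union bound producing $\log\frac{48}{\delta}$ and $\log\frac{12}{\delta}$, and the tracking of the $\frac{1}{2}$ prefactors all match. No gaps.
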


\begin{theorem}
	(\textbf{Estimation of KL divergence between general Gaussian processes})
	\label{theorem:KL-estimate-Gaussian-process-general-case}
	Assume Assumptions B1-B3 and B5-B7. Let $\Xbf = (x_j)_{j=1}^m$ be independently sampled from $(T,\nu)$.
	For any $0 < \delta < 1$, with probability at least $1-\delta$,
	\begin{align}
		&\left| \KL[\Ncal(\mu^1[\Xbf], K^1[\Xbf] + m\gamma I)||\Ncal(\mu^1[\Xbf], K^1[\Xbf] + m\gamma I) ]\right.
		\nonumber
		\\
		&\quad- \left. \KL^{\gamma}[\Ncal(\mu^1, C_{K^1})||\Ncal(\mu^2, C_{K^2})]\right|
		%\quad- \left. \KL[\Ncal(\mu^1, C_{K^1} + \gamma I)||\Ncal(\mu^2, C_{K^2} + \gamma I)]\right|
		\nonumber
		\\
		& \leq \frac{1}{2\gamma}(B_1+B_2)^2\left[1 + \frac{\kappa_2^2}{\gamma}\right]^2\left(\frac{2\log\frac{48}{\delta}}{m} + \sqrt{\frac{2\log\frac{48}{\delta}}{m}}\right)
		\nonumber
		\\
		& \quad +  \frac{1}{2\gamma^2}\left[\kappa_1^4 + \kappa_2^4 + \kappa_1^2\kappa_2^2\left(2+\frac{\kappa_2^2}{\gamma}\right)\right]\left(\frac{2\log\frac{12}{\delta}}{m} + \sqrt{\frac{2\log\frac{12}{\delta}}{m}}\right).
	\end{align}
\end{theorem}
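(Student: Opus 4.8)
The plan is to exploit the additive decomposition of the regularized KL divergence furnished by Definition~\ref{definition:renyi-regularized}, which writes both the theoretical and the empirical divergence as a sum of a mean term and a covariance (Log-Det) term, and then to bound each piece using results already in hand. Concretely, Definition~\ref{definition:renyi-regularized} gives
\begin{align}
\KL^{\gamma}[\Ncal(\mu^1, C_{K^1})||\Ncal(\mu^2, C_{K^2})] &= \tfrac{1}{2}\la \mu^1-\mu^2, (C_{K^2}+\gamma I)^{-1}(\mu^1-\mu^2)\ra_{\Lcal^2(T,\nu)} \nonumber \\
&\quad + \tfrac{1}{2}d^1_{\logdet}[(C_{K^1}+\gamma I), (C_{K^2}+\gamma I)],
\end{align}
and, via the finite-dimensional formula for the empirical side, an entirely analogous split with $C_{K^i}$ replaced by $\tfrac{1}{m}K^i[\Xbf]$ and $\mu^i$ by $\tfrac{1}{\sqrt m}\mu^i[\Xbf]$. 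The triangle inequality then bounds the total estimation error by the sum of the errors committed in the two terms, reducing the theorem to two already-established estimates.

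First I would control the covariance term. Because $d^1_{\logdet}$ is scale-invariant, $\tfrac{1}{2}d^1_{\logdet}[(\tfrac{1}{m}K^1[\Xbf]+\gamma I),(\tfrac{1}{m}K^2[\Xbf]+\gamma I)]$ equals the zero-mean empirical divergence $\KL^{\gamma}[\Ncal(0,\tfrac{1}{m}K^1[\Xbf])||\Ncal(0,\tfrac{1}{m}K^2[\Xbf])]$, while the theoretical covariance term equals $\KL^{\gamma}[\Ncal(0,C_{K^1})||\Ncal(0,C_{K^2})]$. Hence Theorem~\ref{theorem:KL-estimate-finite-covariance} applies directly and, with confidence $1-\delta'$, controls this term by $\tfrac{1}{2\gamma^2}[\kappa_1^4+\kappa_2^4+\kappa_1^2\kappa_2^2(2+\kappa_2^2/\gamma)]$ times the usual rate factor in $\log(6/\delta')$. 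For the mean term I would specialize Proposition~\ref{proposition:Renyi-mean-Gaussian-process-estimate} to $r=1$, so that $(1-r)\kappa_1^2+r\kappa_2^2=\kappa_2^2$; this bounds the discrepancy between the empirical and theoretical mean terms, with confidence $1-\delta''$, by $\tfrac{1}{\gamma}(B_1+B_2)^2[1+\kappa_2^2/\gamma]^2$ times the rate factor in $\log(24/\delta'')$. The prefactor $\tfrac{1}{2}$ on the mean term in the decomposition then produces the coefficient $\tfrac{1}{2\gamma}(B_1+B_2)^2[1+\kappa_2^2/\gamma]^2$ appearing in the statement.

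Finally I would combine the two estimates by a union bound, setting $\delta'=\delta''=\delta/2$ so that both events hold simultaneously with probability at least $1-\delta$; this replaces $6/\delta'$ by $12/\delta$ and $24/\delta''$ by $48/\delta$ inside the logarithms, reproducing exactly the two terms of the claimed inequality. The argument is essentially bookkeeping once the component estimates are available; the only points demanding care are (i) verifying that the finite-dimensional KL divergence $\KL[\Ncal(\mu^1[\Xbf],K^1[\Xbf]+m\gamma I)||\Ncal(\mu^2[\Xbf],K^2[\Xbf]+m\gamma I)]$ decomposes, through scale invariance of $d^1_{\logdet}$, into precisely the empirical mean and covariance terms treated by Proposition~\ref{proposition:Renyi-mean-Gaussian-process-estimate} and Theorem~\ref{theorem:KL-estimate-finite-covariance}, and (ii) tracking the constants and the confidence-parameter splitting so that the two logarithmic factors emerge with the correct arguments. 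No new analytic estimate beyond those two previously established results is required.
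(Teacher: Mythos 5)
Your proposal is correct and follows exactly the paper's own route: the paper proves this theorem by combining Proposition \ref{proposition:Renyi-mean-Gaussian-process-estimate} with $r=1$ (mean term) and Theorem \ref{theorem:KL-estimate-finite-covariance} (covariance term) via a union bound, and your constant tracking ($\log\frac{48}{\delta}$ and $\log\frac{12}{\delta}$ from splitting $\delta$ in half) reproduces the stated bound precisely.
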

Similar to the centered Gaussian setting, if $B_1,B_2, \kappa_1,\kappa_2$ are absolute constants, then
the sample complexities in Theorems \ref{theorem:Renyi-estimate-Gaussian-process-general-case} and \ref{theorem:KL-estimate-Gaussian-process-general-case} are completely {\it dimension-independent}.
The following algorithms, which generalize Algorithms \ref{algorithm:estimate-divergence-finite-COV}
and \ref{algorithm:estimate-divergence-finite-sample}, respectively, consistently estimate the  R\'enyi and KL divergences between general Gaussian processes
from finite-dimensional Gaussian measures and finite samples generated by the processes.
\begin{algorithm}
	\caption{Consistent estimates of R\'enyi and KL divergences between general Gaussian processes
		from finite-dimensional Gaussian measures}
	\label{algorithm:estimate-divergence-finite-COV-full}
	\begin{algorithmic}
		%\textbf{Input}: 
		\REQUIRE Finite covariance matrices $K^i[\Xbf]$ from processes $\xi^i$, $i=1,2$, sampled at $m$ points
		$\Xbf = (x_j)_{j=1}^m$
		\REQUIRE Mean vectors $\mu^i[\Xbf]$ on the points $\Xbf = (x_j)_{j=1}^m$, i=1,2
		%Finite samples $\{\xi^i_k(x_j)\}$,
		%from $N_i$ realizations $\xi^i_k$, $1 \leq k \leq N_i$, of processes $\xi^i$, $i=1,2$, sampled at $m$ points $x_j$, $1 \leq j \leq m$. 
		\REQUIRE Regularization parameter $\gamma > 0$, $0< r < 1$ (for the R\'enyi divergence)
		
		\textbf{Procedure}
		%\begin{algorithmic}
		%\STATE{Form $m \times N_i$ data matrices $Z_i$ , with $(Z_i)_{jk} = \xi^i_k(x_j)$, $i=1,2$, $1\leq j \leq m, 1\leq k \leq N_i$}
		%\STATE{Compute $m \times m$ empirical covariance matrices $\hat{K}^i = \frac{1}{N}Z_iZ_i^{T}$, $i=1,2$} 
		%	$\hat{K}^2 = \frac{1}{M}Z_2Z_2^T$}
		\STATE{Compute $D^{\gamma}_{\Rrm,r} 
			%= \frac{1}{2}d^{2r-1}_{\logdet}\left[\left(\gamma I_m+ \frac{1}{m}{K}^1[\Xbf]\right), \left(\gamma I_m+ \frac{1}{m}{K}^2[\Xbf]\right)\right]
			 = D_{\Rrm,r}[\Ncal(\mu^1[\Xbf],K^1[\Xbf]+m\gamma I_m)||\Ncal(\mu^2[\Xbf],K^2[\Xbf]+m\gamma I_m)]$}
		% = D^{\gamma}_{\aiE}\left[\Ncal\left(0, \frac{1}{m}\hat{K}^1\right), \Ncal\left(0, \frac{1}{m}\hat{K}^2\right)\right]
		%= \left\|\log\left[\left(\gamma I+ \frac{1}{m}\hat{K}^1\right)^{-1/2}\left(\gamma I+ \frac{1}{m}\hat{K}^2\right)\left(\gamma I + \frac{1}{m}\hat{K}^1\right)^{-1/2}\right]\right\|_F$}
		\STATE{Compute $\KL^{\gamma}
			% = \frac{1}{2}d^{1}_{\logdet}\left[\left(\gamma I_m+ \frac{1}{m}{K}^1[\Xbf]\right), \left(\gamma I_m+ \frac{1}{m}{K}^2[\Xbf]\right)\right]
			 = \KL[\Ncal(\mu^1[\Xbf],K^1[\Xbf]+m\gamma I_m)||\Ncal(\mu^2[\Xbf],K^2[\Xbf]+m\gamma I_m)]$}
		% = D^{\gamma}_{\logE}\left[\Ncal\left(0, \frac{1}{m}\hat{K}^1\right), \Ncal\left(0, \frac{1}{m}\hat{K}^2\right)\right]
		%= \left\|\log\left(\gamma I+ \frac{1}{m}\hat{K}^1\right) - \log\left(\gamma I + \frac{1}{m}\hat{K}^2\right) \right\|_F$}
		\RETURN{$D^{\gamma}_{\Rrm,r}$ and $\KL^{\gamma}$}
	\end{algorithmic}
\end{algorithm}

\begin{algorithm}
	\caption{Consistent estimates of R\'enyi and KL divergences between general Gaussian processes
		from finite samples}
	\label{algorithm:estimate-divergence-finite-sample-full}
	\begin{algorithmic}
		%\textbf{Input}: 
		\REQUIRE Finite samples $\{\xi^i_k(x_j)\}$,
		from $N_i$ realizations $\xi^i_k$, $1 \leq k \leq N_i$, of processes $\xi^i$, $i=1,2$, sampled at $m$ points $x_j$, $1 \leq j \leq m$  (note that we set $N_1 = N_2 = N$ in the theoretical analysis only for simplicity) 
		\REQUIRE Regularization parameter $\gamma > 0$, $0< r < 1$ (for the R\'enyi divergence)
		
		\textbf{Procedure}
		%\begin{algorithmic}
		\STATE{Form $m \times N_i$ data matrices $Z_i$ , with $(Z_i)_{jk} = \xi^i_k(x_j)$, $i=1,2$, $1\leq j \leq m, 1\leq k \leq N_i$}
		\STATE{Compute $m \times 1$ empirical mean vectors $\hat{\mu}^i = \frac{1}{N}Z_i\1_N$, $i=1,2$} 
		\STATE{Compute $m \times m$ empirical covariance matrices $\hat{K}^i = \frac{1}{N}Z_iJ_NZ_i^{T}$, $i=1,2$, where $J_N = I_N - \frac{1}{N}\1_N\1_N^T$} 
		%	$\hat{K}^2 = \frac{1}{M}Z_2Z_2^T$}
		\STATE{Compute $D^{\gamma}_{\Rrm,r}
			 %= \frac{1}{2}d^{2r-1}_{\logdet}\left[\left(\gamma I_m+ \frac{1}{m}\hat{K}^1\right), \left(\gamma I_m+ \frac{1}{m}\hat{K}^2\right)\right]
			 =D_{\Rrm,r}[\Ncal(\hat{\mu}^1,\hat{K}^1+m\gamma I_m)||\Ncal(\hat{\mu}^2, \hat{K^2}+m\gamma I_m)]$}
		% = D^{\gamma}_{\aiE}\left[\Ncal\left(0, \frac{1}{m}\hat{K}^1\right), \Ncal\left(0, \frac{1}{m}\hat{K}^2\right)\right]
		%= \left\|\log\left[\left(\gamma I+ \frac{1}{m}\hat{K}^1\right)^{-1/2}\left(\gamma I+ \frac{1}{m}\hat{K}^2\right)\left(\gamma I + \frac{1}{m}\hat{K}^1\right)^{-1/2}\right]\right\|_F$}
		\STATE{Compute $\KL^{\gamma}
			% = \frac{1}{2}d^{1}_{\logdet}\left[\left(\gamma I_m+ \frac{1}{m}\hat{K}^1\right), \left(\gamma I_m+ \frac{1}{m}\hat{K}^2\right)\right]
			=\KL[\Ncal(\hat{\mu}^1,\hat{K}^1+m\gamma I_m)||\Ncal(\hat{\mu}^2, \hat{K^2}+m\gamma I_m)]$}
		% = D^{\gamma}_{\logE}\left[\Ncal\left(0, \frac{1}{m}\hat{K}^1\right), \Ncal\left(0, \frac{1}{m}\hat{K}^2\right)\right]
		%= \left\|\log\left(\gamma I+ \frac{1}{m}\hat{K}^1\right) - \log\left(\gamma I + \frac{1}{m}\hat{K}^2\right) \right\|_F$}
		\RETURN{$D^{\gamma}_{\Rrm,r}$ and $\KL^{\gamma}$}
	\end{algorithmic}
\end{algorithm}
\begin{remark}
In Algorithms \ref{algorithm:estimate-divergence-finite-sample} and \ref{algorithm:estimate-divergence-finite-sample-full}, we have presented the standard empirical sample mean and sample covariance matrix. They can certainly be substituted by better estimations of the mean and covariance functions. We will explore this in a future work.
\end{remark}

\section{Numerical experiments}
\label{section-experiments}

In this section, we illustrate the above theoretical analysis with several experiments on Gaussian measures on RKHS and Gaussian processes.

Figure \ref{figure:RKHS-divergence} illustrates Algorithm \ref{algorithm:estimate-divergence-RKHS}.
Here we generated two data samples $\Xbf^1,\Xbf^2$ from two random Gaussian mixtures in $\R^n$ for two values of $n$, namely $n=5$ and $n=100$.
We then computed the regularized KL and R\'enyi divergences in RKHS between them using the Gaussian kernel
$K(x,y) = \exp(-||x-y||^2)$, with $\gamma = 10^{-3}$. The dimension-independent convergence is clearly shown here, in agreement with theory.
  
Figures \ref{figure:Gaussian-process-sample-1} and \ref{figure:Gaussian-process-sample-2} illustrate Algorithm \ref{algorithm:estimate-divergence-finite-COV}. In Figure \ref{figure:Gaussian-process-sample-1},
we show the samples generated from the Gaussian processes
$\GP(0,K^1)$, $\GP(0,K^2)$, where {$K^1(x,y) = \exp(-a||x-y||)$, $a=1$}, {$K^2(x,y) = \exp\left(-\frac{||x-y||^2}{\sigma^2}\right)$, $\sigma = 0.1$}
%defined in Eq.\eqref{equation:GP-process-example}
on {$T = [0,1]$} and the approximate regularized divergences $\KL^{\gamma}$ and $D^{\gamma}_{\Rrm,r}$, $r=0.5$, $r=0.75$, between them using finite-dimensional Gaussian measures.
%Left: {$K^1(x,y) = \exp(-a||x-y||)$, $a=1$}. Middle: {$K^2(x,y) = \exp(-||x-y||^2/\sigma^2)$, $\sigma = 0.1$}.
Here the 
%number of sample paths is {$N =10,20,\ldots, 1000$}, 
number of sample points is $m = 10, 20, \ldots, 1000$, and regularization parameter is {$\gamma = 10^{-6}$}.
Figure \ref{figure:Gaussian-process-sample-2} shows a similar set up, except that $K^2(x,y) = \exp(-a||x-y||)$ with $a = 1.2$.
The empirical values of the regularized divergences all converge as $m$ gets larger, in agreement with theory.
One can also see that on Figure \ref{figure:Gaussian-process-sample-2}, where the Gaussian processes are much closer to each other than in Figure \ref{figure:Gaussian-process-sample-1}, the divergences have much smaller values, as should be expected.
In Figure \ref{figure:Gaussian-process-sample-3}, we repeat this experiment, with $\xi^2 \sim \GP(\mu^2,K^2)$, where $\mu^2(x) =1+x$. The values of the divergences increase by small amounts from the mean term contributions, as expected.

\begin{figure}[!t]
	%\begin{center}
	\begin{subfigure}{6cm}
		\includegraphics[width=6cm]{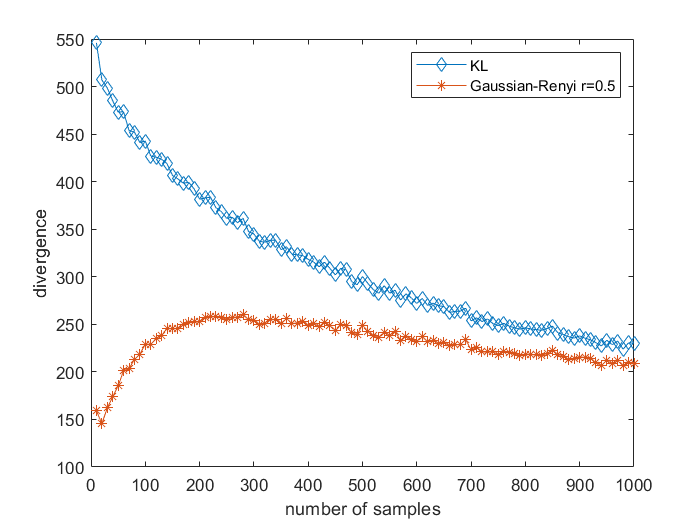}
		%\caption{}
	\end{subfigure}
	\begin{subfigure}{6cm}
		\includegraphics[width=6cm]{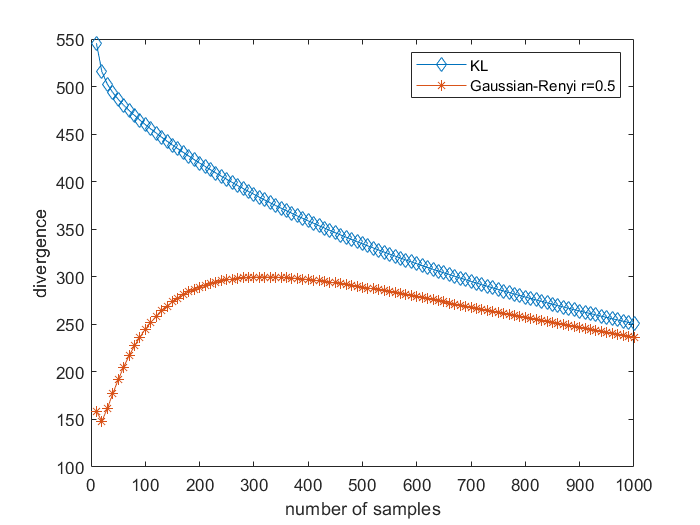}
		%\caption{(b)}
	\end{subfigure}
%	\begin{subfigure}{5cm}
%		\includegraphics[width=5cm]{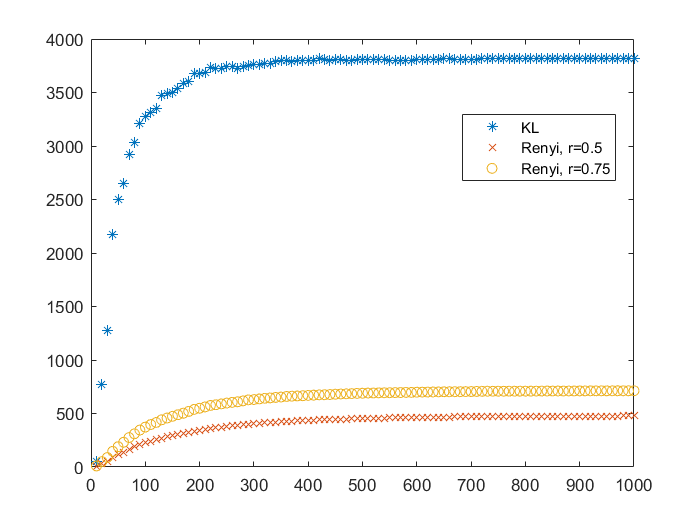}
%		%\caption{(c)}
%	\end{subfigure}
	\caption{Regularized KL and R\'enyi divergences in RKHS between two random Gaussian mixtures in $\R^n$.
		Left: $n=5$, right: $n=100$. Here $K$ is the Gaussian kernel $K(x,y) = \exp(-||x-y||^2)$ and the regularization parameter is $\gamma = 10^{-3}$. The numbers of sample points are $m_1 = m_2 = m =10,20, \ldots, 1000$.}
		%Samples of the centered Gaussian processes {$\GP(0,K^1)$, $\GP(0,K^2)$} 
		%defined in Eq.\eqref{equation:GP-process-example}
		%on {$T = [0,1]$} and approximate divergences between them using normalized, finite covariance matrices.
		%Left: {$K^1(x,y) = \exp(-a||x-y||)$, $a=1$}. Middle: {$K^2(x,y) = \exp(-||x-y||^2/\sigma^2)$, $\sigma = 0.1$}.
		%Here the 
		%number of sample paths is {$N =10,20,\ldots, 1000$}, 
		%number of sample points is $m = 10, 20, \ldots, 1000$, and regularization parameter is {$\gamma = 10^{-6}$}}
	\label{figure:RKHS-divergence}
	%\end{center}
\end{figure}

\begin{figure}[!t]
	%\begin{center}
	\begin{subfigure}{5cm}
		\includegraphics[width=5cm]{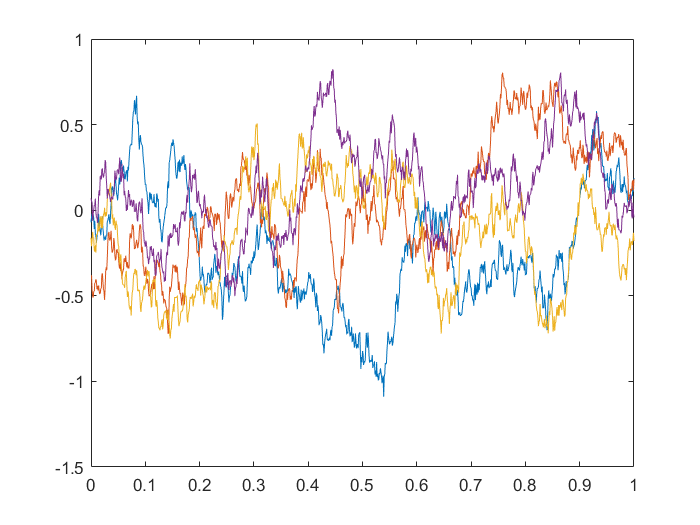}
		%\caption{}
	\end{subfigure}
	\begin{subfigure}{5cm}
		\includegraphics[width=5cm]{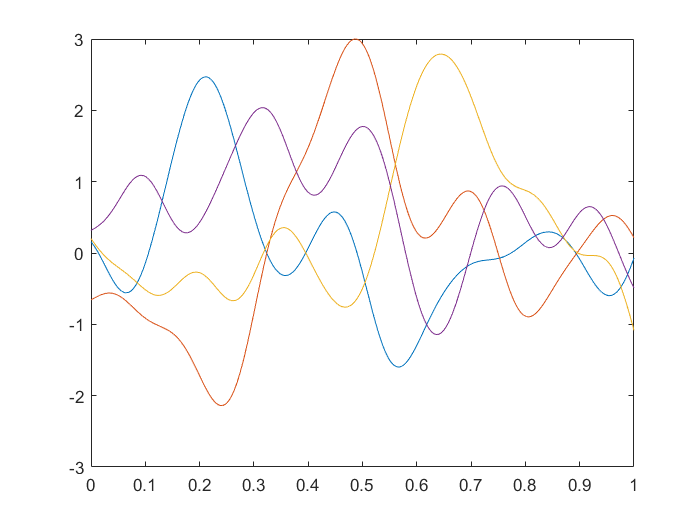}
		%\caption{(b)}
	\end{subfigure}
	\begin{subfigure}{5cm}
		\includegraphics[width=5cm]{images/KL_Renyi_GaussianLaplacian.png}
		%\caption{(c)}
	\end{subfigure}
	\caption{Samples of the centered Gaussian processes {$\GP(0,K^1)$, $\GP(0,K^2)$} 
		%defined in Eq.\eqref{equation:GP-process-example}
		on {$T = [0,1]$} and approximate divergences between them using finite-dimensional Gaussian measures.
		Left: {$K^1(x,y) = \exp(-a||x-y||)$, $a=1$}. Middle: {$K^2(x,y) = \exp(-||x-y||^2/\sigma^2)$, $\sigma = 0.1$}.
		Here the 
		%number of sample paths is {$N =10,20,\ldots, 1000$}, 
		number of sample points is $m = 10, 20, \ldots, 1000$, and regularization parameter is {$\gamma = 10^{-6}$}}
	\label{figure:Gaussian-process-sample-1}
	%\end{center}
\end{figure}

\begin{figure}[!t]
	%\begin{center}
	\begin{subfigure}{5cm}
		\includegraphics[width=5cm]{images/LaplacianKernel_GaussianProcess.png}
		%\caption{}
	\end{subfigure}
	\begin{subfigure}{5cm}
		\includegraphics[width=5cm]{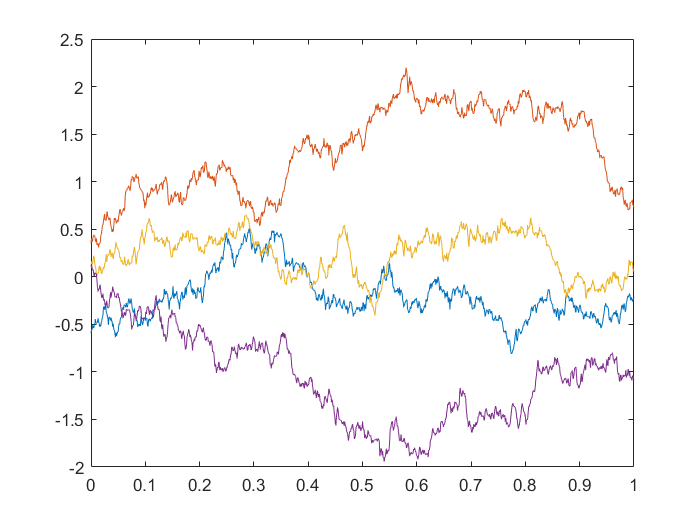}
		%\caption{(b)}
	\end{subfigure}
	\begin{subfigure}{5cm}
		\includegraphics[width=5cm]{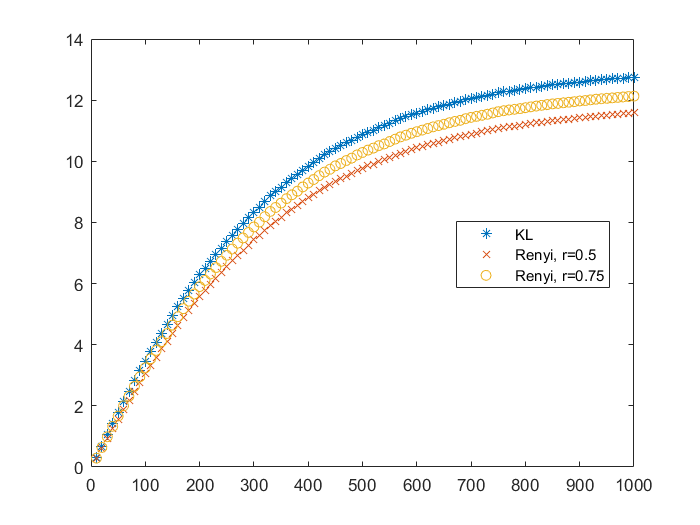}
		%\caption{(c)}
	\end{subfigure}
	\caption{Samples of the centered Gaussian processes {$\GP(0,K^1)$, $\GP(0,K^2)$} 
		%defined in Eq.\eqref{equation:GP-process-example}
		on {$T = [0,1]$} and approximate divergences between them, using finite-dimensional Gaussian measures.
		Left: {$K^1(x,y) = \exp(-a||x-y||)$, $a=1$}. Middle: {$K^2(x,y) = \exp(-a||x-y||)$, $a = 1.2$}.
		Here the 
		%number of sample paths is {$N =10,20,\ldots, 1000$}, 
		number of sample points is $m = 10,20, \ldots, 1000$, and regularization parameter is {$\gamma = 10^{-6}$}}
	\label{figure:Gaussian-process-sample-2}
	%\end{center}
\end{figure}

\begin{figure}[!t]
	%\begin{center}
	\begin{subfigure}{5cm}
		\includegraphics[width=5cm]{images/LaplacianKernel_GaussianProcess.png}
		%\caption{}
	\end{subfigure}
	\begin{subfigure}{5cm}
		\includegraphics[width=5cm]{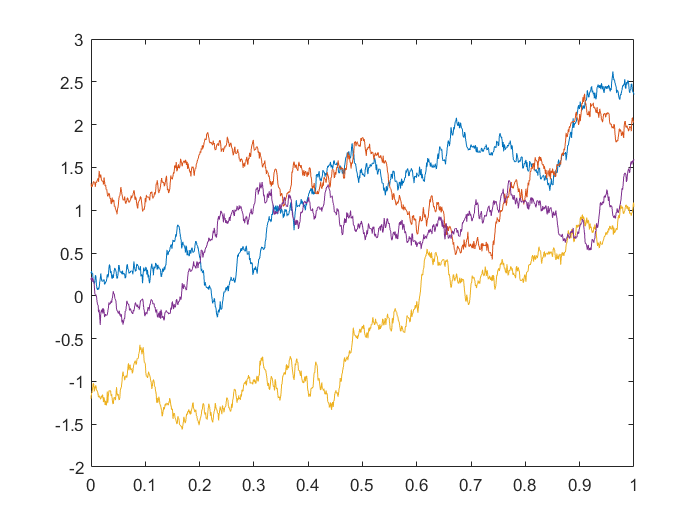}
		%\caption{(b)}
	\end{subfigure}
	\begin{subfigure}{5cm}
		\includegraphics[width=5cm]{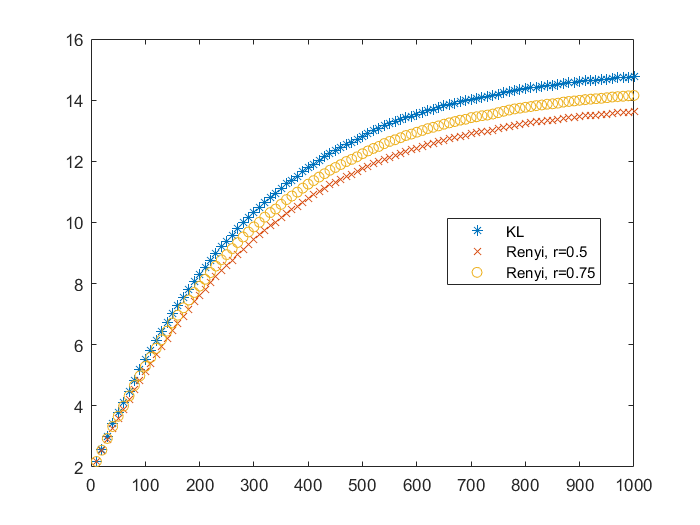}
		%\caption{(c)}
	\end{subfigure}
	\caption{Samples of the Gaussian processes {$\GP(0,K^1)$, $\GP(\mu^2,K^2)$} 
		%defined in Eq.\eqref{equation:GP-process-example}
		on {$T = [0,1]$} and approximate divergences between them, using finite-dimensional Gaussian measures.
		Left: {$K^1(x,y) = \exp(-a||x-y||)$, $a=1$}. Middle: {$\mu^2(x) = 1+x,K^2(x,y) = \exp(-a||x-y||)$, $a = 1.2$}.
		Here the 
		%number of sample paths is {$N =10,20,\ldots, 1000$}, 
		number of sample points is $m = 10,20, \ldots, 1000$, and regularization parameter is {$\gamma = 10^{-6}$}}
	\label{figure:Gaussian-process-sample-3}
	%\end{center}
\end{figure}

\section{Conclusion and future work}
We have presented the following results in the current work
\begin{enumerate}
 \item The approximation properties of the regularized R\'enyi and KL divergences 
on general, separable Hilbert spaces.
\item Consistency and sample complexities for the setting of RKHS covariance operators and Gaussian measures.
\item Consistency and sample complexities for the divergences between Gaussian processes, based on the corresponding finite-dimensional Gaussian measures.
\end{enumerate}
Potential applications of these results to problems in machine learning, statistics, in particular functional data analysis, etc, will be treated in subsequent future work.
%Avenues for future work include the incorporation of optimal estimations of the mean and covariance functions, along with practical applications.

\newpage

\appendix

\section{Proofs of main results}
\label{section-proofs}

\subsection{Proofs for the finite-rank/finite-dimensional approximations}
\label{section:proofs-sequence-approx}

\begin{lemma}
	\label{lemma:I+A-power-HS}
	Let $\alpha \in \R$ be fixed. 
	Let $I+A \in \PC_2(\H)$. Then $(I+A)^{\alpha}-I \in \Sym(\H)\cap\HS(\H)$.
	In other words, $(I+A)^{\alpha} = I+B$, where $B \in \Sym(\H) \cap \HS(\H)$.
\end{lemma}
\begin{proof}
	For $I+A \in \PC_2(\H)$, we have $\log(I+A)\in \Sym(\H) \cap \HS(\H)$ \citep[Lemma 12]{Minh:LogDetIII2018}. Thus
	$(I+A)^{\alpha}$ is well-defined $\forall \alpha \in \R$ via the expansion
	\begin{align*}
		(I+A)^{\alpha} = \exp(\alpha\log(I+A)) = I+ \sum_{k=1}^{\infty}\frac{\alpha^k}{k!}[\log(I+A)]^{k}.
	\end{align*}
	Since $\HS(\H)$ is a Banach algebra, it follows that
	\begin{align*}
		||(I+A)^{\alpha} - I||_{\HS} \leq \sum_{k=1}^{\infty}\frac{|\alpha|^k}{k!}||\log(I+A)||^{k}_{\HS} = \exp(||\log(I+A)||_{\HS}) -1 < \infty.
	\end{align*}
\end{proof}

\begin{proof}
	\textbf{of Proposition \ref{proposition:logdet-F-beta-representation}}
	For $-1< \alpha < 1$, by Definition \ref{def:logdet-HS}
	\begin{align*}
		d^{\alpha}_{\logdet}[(A+\gamma I), (B+\gamma I)] = \frac{4}{1-\alpha^2}\log\detX\left[\frac{1-\alpha}{2}(I+\Lambda)^{\frac{1+\alpha}{2}} + \frac{1+\alpha}{2}(I+\Lambda)^{-\frac{1-\alpha}{2}}\right].
	\end{align*}
Here $\log\detX$ is well-defined since $\frac{1-\alpha}{2}(I+\Lambda)^{\frac{1+\alpha}{2}} + \frac{1+\alpha}{2}(I+\Lambda)^{-\frac{1-\alpha}{2}} \in \PC_1(\H)$ by Proposition 2 in \citep{Minh:Positivity2020}. 
	By Lemma \ref{lemma:I+A-power-HS}, $(I+\Lambda)^{\frac{1+\alpha}{2}} = I+B$, $(I+\Lambda)^{-\frac{1-\alpha}{2}}
	= I+C$, for some $B,C \in \Sym(\H) \cap \HS(\H)$. Thus
	$\frac{1-\alpha}{2}(I+\Lambda)^{\frac{1+\alpha}{2}} + \frac{1+\alpha}{2}(I+\Lambda)^{-\frac{1-\alpha}{2}}
	= I + \frac{1-\alpha}{2}B + \frac{1+\alpha}{2}C$, 
	with $\frac{1-\alpha}{2}B + \frac{1+\alpha}{2}C\in \Tr(\H)$,
	so that $\detX$ reduces to $\det$, i.e.
	{\small
		\begin{align*}
			\detX\left[\frac{1-\alpha}{2}(I+\Lambda)^{\frac{1+\alpha}{2}} + \frac{1+\alpha}{2}(I+\Lambda)^{-\frac{1-\alpha}{2}}\right] = \det\left[\frac{1-\alpha}{2}(I+\Lambda)^{\frac{1+\alpha}{2}} + \frac{1+\alpha}{2}(I+\Lambda)^{-\frac{1-\alpha}{2}}\right].
		\end{align*}
	}
	For $\alpha = 1$, by Definition \ref{def:logdet-HS}
	\begin{align*}
		d^1_{\logdet}[(A+\gamma I), (B+\gamma I)] &= -\log\dettwoX[(B+\gamma I)^{-1}(A+\gamma I)] 
		\\
		&= -\log\dettwoX\left[\left(\frac{B}{\gamma}+I\right)^{-1}\left(\frac{A}{\gamma} + I\right)\right].
	\end{align*}
	This expression is defined entirely in terms of the nonzero eigenvalues
	of $\left(\frac{B}{\gamma}+I\right)^{-1}\left(\frac{A}{\gamma} + I\right)-I$, which
	are the same as those of $\left(\frac{B}{\gamma}+I\right)^{-1/2}\left(\frac{A}{\gamma} + I\right)\left(\frac{B}{\gamma}+I\right)^{-1/2}-I = \Lambda$. Thus
	\begin{align*}
		d^{1}_{\logdet}[(A+\gamma I),(B+\gamma I)] = -\log\dettwoX(I+\Lambda) = -\log\dettwo(I+\Lambda) = F_{0}(\Lambda). 
	\end{align*}
	The case $\alpha=-1$ is entirely similar.
\end{proof}

To prove Theorem \ref{theorem:F-A-continuity-HS}, 
we apply the following result from \citep{Kato1987VariationDiscreteSpectra},
specialized to the compact setting.
Let $A \in \Lcal(\H)$ be a self-adjoint, compact operator.
An eigenvalue of $A$ is called a {\it discrete eigenvalue}
if it has finite multiplicity. In particular, all nonzero eigenvalues of $A$ are discrete eigenvalues.
An {\it extended enumeration of discrete eigenvalues of $A$} is a sequence $\{\alpha_j\}_{j \in \Nbb}$
consisting of (i) every discrete eigenvalue with finite multiplicity $m$, appearing exactly $m$ times {\it (proper values)}, (ii)
all other values, if any, are zero {\it (improper values)}, which appears infinitely many times (if zero is an eigenvalue with
infinite multiplicity) or one time (if zero is not an eigenvalue but just the limiting point of the eigenvalue sequence).
If there are no improper values, then we simply have {\it enumeration}.

\begin{theorem}
	[\citep{Kato1987VariationDiscreteSpectra}, Theorem II, compact setting]\label{theorem:Kato-spectra}
	Let $A,B \in \Sym(\H)$ be compact.
	%$A,B\in \Lcal(\H)$ be compact, self-adjoint.
	Let $\{\gamma_k\}_{k\in \Nbb}$ be an enumeration of nonzero eigenvalues of $B-A$.
	Then there exist extended enumerations $\{\alpha_j\}_{j \in \Nbb}, \{\beta_j\}_{j \in \Nbb}$ of discrete eigenvalues for $A,B$, respectively, such that
	\begin{align}
		\sum_{j=1}^{\infty}\Phi(\beta_j - \alpha_j) \leq \sum_{k=1}^{\infty}\Phi(\gamma_k)
	\end{align}
	for any nonnegative convex function $\Phi$ on $\R$, with $\Phi(0) = 0$. In particular,
	\begin{align}
		\label{equation:Kato-p-norm-difference}
		\left(\sum_{j=1}^{\infty}|\beta_j - \alpha_j|^{p}\right)^{1/p} \leq ||B-A||_p = \left(\sum_{k=1}^{\infty}|\gamma_k|^p\right)^{1/p}, \;\;\; 1 \leq p \leq \infty.
	\end{align}
\end{theorem}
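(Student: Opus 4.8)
The plan is to deduce the general convex inequality from the finite-dimensional Lidskii–Wielandt majorization theorem, and then to transport it to the compact self-adjoint setting by finite-rank truncation. The organizing principle is the Hardy–Littlewood–Pólya representation of convex functions: a nonnegative convex $\Phi$ on $\R$ with $\Phi(0)=0$ is a nonnegative superposition of the hinge functions $(t-s)^+$ and $(-t-s)^+$, $s\ge 0$. Consequently the inequality $\sum_j\Phi(\beta_j-\alpha_j)\le\sum_k\Phi(\gamma_k)$ for \emph{all} such $\Phi$ is equivalent to the two weak-majorization conditions $\sum_j(d_j-s)^+\le\sum_k(\gamma_k-s)^+$ and $\sum_j(-d_j-s)^+\le\sum_k(-\gamma_k-s)^+$ for every $s\ge0$, where $d_j=\beta_j-\alpha_j$. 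This reformulation is what I would actually prove, and it has the advantage of not requiring $B-A$ to be trace class.

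First I would record the finite-dimensional model. For Hermitian $A,B$ with eigenvalues listed in decreasing order $\alpha_1\ge\cdots\ge\alpha_n$, $\beta_1\ge\cdots\ge\beta_n$, the Lidskii–Wielandt theorem states that the matched difference vector $(\beta_j-\alpha_j)_j$ is majorized by the eigenvalues $(\gamma_k)_k$ of $B-A$. Karamata's inequality then gives $\sum_j\Phi(\beta_j-\alpha_j)\le\sum_k\Phi(\gamma_k)$ for every convex $\Phi$, and specializing to $\Phi(t)=|t|^p$ for $1\le p<\infty$, together with the extremal bounds $\min_k\gamma_k\le\beta_j-\alpha_j\le\max_k\gamma_k$ for $p=\infty$, recovers \eqref{equation:Kato-p-norm-difference}. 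This settles the case $\operatorname{rank}(A),\operatorname{rank}(B)<\infty$.

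To reach the compact case I would note that $C=B-A$ is self-adjoint compact, so $\{\gamma_k\}$ is at most countable and accumulates only at $0$; if $\sum_k\Phi(\gamma_k)=\infty$ the assertion is vacuous, so I may assume it finite. I would then separate the nonzero spectra of $A$ and $B$ into positive eigenvalues (listed decreasingly) and negative eigenvalues (listed increasingly), pair the positive lists of $A$ and $B$, pair the negative lists, and pad the shorter member of each matched pair with the improper value $0$; this produces the extended enumerations $\{\alpha_j\},\{\beta_j\}$. For each $N$ I would compress $A,B$ to an $N$-dimensional spectral subspace, apply the finite-dimensional weak-majorization inequalities just established, and pass to the limit, using monotone convergence (or Fatou) on the left and dominated convergence for $\sum_k\Phi(\gamma_k)$ on the right. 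Each weak-majorization inequality involves only partial sums of the largest (respectively most negative) entries, so only finitely many terms matter for each threshold $s$, which is what makes the limit tractable.

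The hard part will be the bookkeeping of the extended enumerations and proving that the matching is preserved in the limit. In infinite dimensions the difference sequence $\{\beta_j-\alpha_j\}$ is \emph{not} a single monotone rearrangement, and $0$ may be an eigenvalue of infinite multiplicity, so I must show that the positive/negative separation yields precisely the pairing for which the truncated inequalities $d\prec_w\gamma$ (and $-d\prec_w-\gamma$) are mutually compatible and stable. The delicate point is controlling how eigenvalues migrate across $0$ under compression to a finite-dimensional spectral subspace: one must verify that no mass of the difference sequence is lost to, or spuriously created at, the accumulation point $0$, so that the finite-rank inequalities converge to the asserted inequality rather than to a strictly weaker one. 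Once this stability of the enumeration is secured, both the general convex inequality and its $p$-norm corollary follow at once.
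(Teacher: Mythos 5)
First, a point of context: the paper does not prove this statement at all. Theorem \ref{theorem:Kato-spectra} is imported verbatim from \cite{Kato1987VariationDiscreteSpectra} (Theorem II there, specialized to compact operators) and is used as a black box in the proofs of Theorems \ref{theorem:F-A-continuity-HS} and \ref{theorem:logdet-Hilbert-Carleman-bound}. So there is no in-paper proof to compare against; what you have written must stand on its own as a proof of Kato's theorem.

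As a proof it has a genuine gap, and you have in fact named it yourself. The reduction of the convex inequality to the two one-sided weak majorizations via hinge functions is fine, and the finite-dimensional base case via Lidskii--Wielandt plus Karamata is standard. But the entire content of Kato's theorem in the compact setting is the existence of extended enumerations that survive the passage to infinite dimensions, and your proposal defers exactly that step (``the hard part will be the bookkeeping\dots one must verify\dots once this stability is secured\dots''). Two concrete problems are left open. (1) The explicit pairing you propose --- positive eigenvalues of $A$ and $B$ matched in decreasing order, negative ones in increasing order, shorter lists padded with improper zeros --- is \emph{not} the pairing certified by the finite-dimensional Lidskii theorem, which matches eigenvalues by overall rank across the whole spectrum. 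When $A$ and $B$ have different numbers of positive eigenvalues, the two pairings genuinely differ, so the weak majorization for your pairing does not follow from the finite-dimensional statement by mere relabeling; an additional rearrangement argument (or a different construction of the enumerations) is required. (2) The compression-and-limit step is asserted, not performed: you would need to show that the by-rank pairings of the $N$-dimensional compressions converge to your proposed pairing without losing or creating differences at the accumulation point $0$, and this eigenvalue migration across $0$ is precisely where Kato's published proof does its real work. Until both points are closed, the argument establishes only the finite-rank case. For the way the theorem is actually used in this paper (with $A,B$ positive, or with $(I+A)^{-1}A^2$, $(I+B)^{-1}B^2$), a self-contained proof along your lines is more feasible because all nonzero eigenvalues lie on one side of $0$ and the decreasing-order pairing is the by-rank pairing; if you only need those cases, restricting the statement accordingly would make your strategy workable.
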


The following is a consequence of Corollary 3.2 in \citep{Kitta:InequalitiesV} (see \citep{Minh:2021RiemannianEstimation})
\begin{lemma}
	\label{lemma:I+A-power-r-Schatten-norm}
	Let $1 \leq p \leq \infty$ be fixed.
	For two operators $A,B \in \Sym^{+}(\H)\cap \Csc_p(\H)$,
	\begin{align*}
		||(I+A)^{r} - (I+B)^{r}||_p \leq |r|\; ||A-B||_p, \;\;\; |r| \leq 1.
	\end{align*} 
\end{lemma}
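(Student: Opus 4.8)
The plan is to reduce the claim to a Schatten-norm estimate for the scalar power $h(s)=s^r$ applied to the shifted operators $S=I+A$ and $T=I+B$, which satisfy $S,T\ge I$ because $A,B\ge 0$. Writing $(I+A)^r-(I+B)^r=S^r-T^r$ with $\|S-T\|_p=\|A-B\|_p$, I would first dispose of the three endpoints directly: $r=0$ gives $S^0-T^0=0$; $r=1$ gives $S-T=A-B$; and $r=-1$ gives $S^{-1}-T^{-1}=S^{-1}(T-S)T^{-1}$, whose $\Csc_p$-norm is at most $\|S^{-1}\|\,\|A-B\|_p\,\|T^{-1}\|\le\|A-B\|_p$ since $S,T\ge I$ force $\|S^{-1}\|,\|T^{-1}\|\le 1$. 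For the open range I would use an integral (resolvent) representation of fractional powers rather than Theorem \ref{theorem:Kato-spectra}: Kato's inequality bounds matched eigenvalue differences \emph{from below} by $\|S^r-T^r\|_p$, so it yields a lower bound on this norm and is the wrong tool for the desired upper estimate.

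For $0<r<1$ and $s\ge 1$ the reflection-formula identity $s^r=\frac{\sin(\pi r)}{\pi}\int_0^\infty \lambda^{r-1}\,s(s+\lambda)^{-1}\,d\lambda$ holds, which by the functional calculus lifts to $S^r=\frac{\sin(\pi r)}{\pi}\int_0^\infty \lambda^{r-1}\,S(S+\lambda I)^{-1}\,d\lambda$. Using $S(S+\lambda I)^{-1}=I-\lambda(S+\lambda I)^{-1}$ together with the resolvent identity $(T+\lambda I)^{-1}-(S+\lambda I)^{-1}=(T+\lambda I)^{-1}(A-B)(S+\lambda I)^{-1}$, I obtain
\begin{equation*}
S^r-T^r=\frac{\sin(\pi r)}{\pi}\int_0^\infty \lambda^{r}\,(T+\lambda I)^{-1}(A-B)(S+\lambda I)^{-1}\,d\lambda .
\end{equation*}
Since $S,T\ge I$ give $\|(S+\lambda I)^{-1}\|,\|(T+\lambda I)^{-1}\|\le(1+\lambda)^{-1}$, the two-sided ideal bound $\|XYZ\|_p\le\|X\|\,\|Y\|_p\,\|Z\|$ produces the pointwise estimate $\lambda^{r}\|(T+\lambda I)^{-1}(A-B)(S+\lambda I)^{-1}\|_p\le \lambda^{r}(1+\lambda)^{-2}\|A-B\|_p$. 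Pulling $\|A-B\|_p$ out leaves the scalar factor $\frac{\sin(\pi r)}{\pi}\int_0^\infty \lambda^{r}(1+\lambda)^{-2}\,d\lambda=\frac{\sin(\pi r)}{\pi}\,\Gamma(1+r)\Gamma(1-r)=\frac{\sin(\pi r)}{\pi}\cdot r\cdot\frac{\pi}{\sin(\pi r)}=r$, where I use the Beta integral and Euler's reflection formula $\Gamma(r)\Gamma(1-r)=\pi/\sin(\pi r)$. This gives exactly $\|S^r-T^r\|_p\le r\,\|A-B\|_p$.

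The range $-1<r<0$ is handled identically after setting $r=-s$, $0<s<1$, and using $t^{-s}=\frac{\sin(\pi s)}{\pi}\int_0^\infty \lambda^{-s}(t+\lambda)^{-1}\,d\lambda$; the same resolvent manipulation yields $S^{-s}-T^{-s}=-\frac{\sin(\pi s)}{\pi}\int_0^\infty \lambda^{-s}(S+\lambda I)^{-1}(A-B)(T+\lambda I)^{-1}\,d\lambda$, and the companion integral $\frac{\sin(\pi s)}{\pi}\int_0^\infty \lambda^{-s}(1+\lambda)^{-2}\,d\lambda=\frac{\sin(\pi s)}{\pi}\,\Gamma(1-s)\Gamma(1+s)=s$ delivers the constant $|r|=s$. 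The main point needing care is justifying that the operator-valued integral is a genuine $\Csc_p(\H)$-valued Bochner integral equal to $S^r-T^r$: the integrand lies in $\Csc_p(\H)$ for each $\lambda$ (because $A-B\in\Csc_p(\H)$ and the resolvents are bounded), and its $\Csc_p$-norm $\lambda^{r}(1+\lambda)^{-2}\|A-B\|_p$ is integrable on $(0,\infty)$ — at $0$ since $r>-1$ and at $\infty$ since $r<1$ — so the scalar identity on the spectrum promotes, via the spectral theorem, to the operator identity used above. A secondary observation is that nothing in the argument depends on $p$ beyond the ideal bound and operator-norm resolvent estimates, so the derivation is uniform in $1\le p\le\infty$. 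As an alternative, one could simply invoke Corollary 3.2 of \citep{Kitta:InequalitiesV} as the statement indicates, but the resolvent route above is self-contained and pins down the sharp constant $|r|$.
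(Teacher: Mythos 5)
Your proof is correct, but it takes a genuinely different route from the paper: the paper gives no proof of this lemma at all — it simply invokes Corollary 3.2 of \citep{Kitta:InequalitiesV}, which states that for positive operators bounded below by $aI$ and $0<r<1$ one has $\|A^r-B^r\|\leq r a^{r-1}\|A-B\|$ in every unitarily invariant norm (here $a=1$ since $I+A, I+B\geq I$). What you have done is essentially reprove Kittaneh's inequality in the special case needed, via the Balakrishnan resolvent representation $s^r=\frac{\sin(\pi r)}{\pi}\int_0^\infty \lambda^{r-1}s(s+\lambda)^{-1}\,d\lambda$, the resolvent identity, the two-sided ideal bound, and the Beta/reflection-formula evaluation of $\int_0^\infty\lambda^r(1+\lambda)^{-2}d\lambda = r\pi/\sin(\pi r)$; the endpoint cases $r=0,\pm 1$ and the negative range $-1<r<0$ are handled correctly, and your justification of the $\Csc_p$-valued Bochner integral (dominated by the integrable scalar $\lambda^{r}(1+\lambda)^{-2}\|A-B\|_p$, with the scalar identity lifted by the functional calculus on the spectrum contained in $[1,\|I+A\|]$) is sound. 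Your side remark that Theorem \ref{theorem:Kato-spectra} is the wrong tool here is also accurate: it controls eigenvalue differences \emph{by} $\|S^r-T^r\|_p$, not the reverse. What your approach buys is a self-contained argument that pins down the sharp constant $|r|$ and makes transparent why the bound is uniform in $1\leq p\leq\infty$; what the citation buys is brevity and access to the general statement for arbitrary unitarily invariant norms and arbitrary lower bounds $a>0$.
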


\begin{lemma}
	\label{lemma:I+Delta-Schatten-norm}
	Let $1\leq p \leq \infty$ be fixed. 
	Let $B \in \Sym^{+}(\H) \cap \Csc_p(\H)$. Then
	for any $A \in \Csc_p(\H)$, $(I+B)^{-1/2}(I+A)(I+B)^{-1/2}-I \in \Csc_p(\H)$, with
	\begin{align}
		||(I+B)^{-1/2}(I+A)(I+B)^{-1/2}-I||_p \leq ||A-B||_p.
	\end{align}
	Furthermore, let $B_n \in \Sym^{+}(\H) \cap \Csc_p(\H)$, $A_n \in \Csc_p(\H)$, then
	\begin{align}
		&||(I+B_n)^{-1/2}(I+A_n)(I+B_n)^{-1/2} - (I+B)^{-1/2}(I+A)(I+B)^{-1/2}||_{p}
		\nonumber
		\\
		& \leq
		||A_n - A||_p + \left(1+\frac{1}{2}||A_n|| + \frac{1}{2}||A||\right)||B_n - B||_p.
	\end{align}
	Assume further that $A_n, A \in \Sym^{+}(\H) \cap \Csc_p(\H)$, then
	\begin{align}
		&||(I+B_n)^{1/2}(I+A_n)^{-1}(I+B_n)^{1/2} - (I+B)^{1/2}(I+A)^{-1}(I+B)^{1/2}||_{p}
		\nonumber
		\\
		&\leq (1+||B_n||^{1/2})(1+||B||^{1/2})||A_n - A||_p + \left(1+\frac{1}{2}||B_n||^{1/2} + \frac{1}{2}||B||^{1/2}\right)||B_n - B||_p.
	\end{align}
\end{lemma}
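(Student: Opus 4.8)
The plan is to reduce all three inequalities to a single elementary device: the ideal property $||XYZ||_p \leq ||X||\,||Y||_p\,||Z||$ of the Schatten class (with $||\cdot||$ the operator norm), combined with the contraction estimate $||(I+B_n)^{\pm 1/2} - (I+B)^{\pm 1/2}||_p \leq \frac{1}{2}||B_n - B||_p$, which is the case $r = \pm\frac{1}{2}$ of Lemma \ref{lemma:I+A-power-r-Schatten-norm}. I would first record the operator-norm facts that, for $B \in \Sym^{+}(\H)$, $||(I+B)^{-1/2}|| \leq 1$ and $||(I+B)^{1/2}|| = (1+||B||)^{1/2} \leq 1 + ||B||^{1/2}$, and that, for $A \in \Sym^{+}(\H)$, $||(I+A)^{-1}|| \leq 1$; each follows from $I + B \geq I$ together with subadditivity of $t \mapsto t^{1/2}$.

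For the first inequality I would simply center the expression. Since $(I+B)^{-1/2}(I+B)(I+B)^{-1/2} = I$,
\[
(I+B)^{-1/2}(I+A)(I+B)^{-1/2} - I = (I+B)^{-1/2}(A-B)(I+B)^{-1/2},
\]
and the ideal property together with $||(I+B)^{-1/2}|| \leq 1$ gives the bound $||A-B||_p$ at once.

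For the two difference estimates I would use the same three-term telescoping in each case. Writing $P = (I+B)^{-1/2}$, $P_n = (I+B_n)^{-1/2}$, $M = I+A$, $M_n = I+A_n$, the second inequality follows from
\[
P_n M_n P_n - P M P = (P_n - P)M_n P_n + P(M_n - M)P_n + P M (P_n - P);
\]
the middle term equals $P(A_n - A)P_n$ and contributes $||A_n - A||_p$ (both outer factors have norm $\leq 1$), while the two flanking terms each contribute half of the $B$-variation via $||P_n - P||_p \leq \frac{1}{2}||B_n-B||_p$ and $||M||,||M_n|| \leq 1 + ||A||,\,1+||A_n||$, producing the symmetric coefficient $1 + \frac{1}{2}||A_n|| + \frac{1}{2}||A||$. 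For the third inequality, setting $Q = (I+B)^{1/2}$, $Q_n = (I+B_n)^{1/2}$, $R = (I+A)^{-1}$, $R_n = (I+A_n)^{-1}$, the analogous identity
\[
Q_n R_n Q_n - Q R Q = (Q_n - Q)R_n Q_n + Q(R_n - R)Q_n + Q R(Q_n - Q)
\]
is the key: the central difference is now flanked by one $Q$ on the left and one $Q_n$ on the right, which is exactly what produces the cross factor $(1+||B_n||^{1/2})(1+||B||^{1/2})$ multiplying $||R_n - R||_p$, and I would bound $||R_n - R||_p \leq ||A_n - A||_p$ by the resolvent identity $(I+A_n)^{-1} - (I+A)^{-1} = (I+A_n)^{-1}(A - A_n)(I+A)^{-1}$ (here the hypothesis $A_n, A \in \Sym^{+}(\H)$ guarantees $||R||,||R_n|| \leq 1$). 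The two outer terms again split the $B$-variation, giving the coefficient $1 + \frac{1}{2}||B_n||^{1/2} + \frac{1}{2}||B||^{1/2}$.

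The computations are routine once the telescopings are fixed; the only genuine choice, and hence the main obstacle, is selecting the three-term split so that the middle (the $A$-variation) carries exactly the stated coefficient. In particular, in the third inequality one must flank the central difference \emph{asymmetrically} by $Q$ and $Q_n$ to obtain the product $(1+||B_n||^{1/2})(1+||B||^{1/2})$ rather than a squared factor, and one must check that the two $B$-variation terms combine into the stated symmetric averages. Matching the subadditive bounds $||I+A|| \leq 1 + ||A||$ and $(1+||B||)^{1/2} \leq 1 + ||B||^{1/2}$ to the precise constants is the final bookkeeping step.
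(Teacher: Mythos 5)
Your proposal is correct and follows essentially the same route as the paper: the centering identity for the first bound, the three-term telescoping with Lemma \ref{lemma:I+A-power-r-Schatten-norm} (at $r=\pm\tfrac12$) for the two difference estimates, and the resolvent identity for $||(I+A_n)^{-1}-(I+A)^{-1}||_p\leq||A_n-A||_p$ (which the paper uses implicitly). The only cosmetic difference is the order in which the three factors are swapped in the telescoping, which does not affect the resulting constants.
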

\begin{proof}
	Since $(I+B)^{-1} = I - B(I+B)^{-1}$, we have for $B \in \Sym^{+}(\H)$,
	\begin{align*}
		||(I+B)^{-1/2}(I+A)(I+B)^{-1/2} - I ||_p &= ||(I+B)^{-1/2}(A-B)(I+B)^{-1/2}||_p 
		\\
		&\leq ||(I+B)^{-1/2}||^2||A-B||_p \leq ||A-B||_p.
	\end{align*}
	For the second inequality, by Lemma \ref{lemma:I+A-power-r-Schatten-norm},
	\begin{align*}
		&\Delta = ||(I+B_n)^{-1/2}(I+A_n)(I+B_n)^{-1/2}- (I+B)^{-1/2}(I+A)(I+B)^{-1/2}||_p
		\\
		&\leq ||(I+B_n)^{-1/2}(I+A_n)(I+B_n)^{-1/2}- (I+B_n)^{-1/2}(I+A_n)(I+B)^{-1/2}||_p
		\\
		&\quad + ||(I+B_n)^{-1/2}(I+A_n)(I+B)^{-1/2} - (I+B_n)^{-1/2}(I+A)(I+B)^{-1/2}||_p
		\\
		& \quad + ||(I+B_n)^{-1/2}(I+A)(I+B)^{-1/2} - (I+B)^{-1/2}(I+A)(I+B)^{-1/2}||_p
		\\
		& \leq ||(I+B_n)^{-1/2}||\;||I+A_n||\;||(I+B_n)^{-1/2} - (I+B)^{-1/2}||_p
		\\
		& \quad + ||(I+B_n)^{-1/2}||\;||A_n - A||_p||(I+B)^{-1/2}||
		\\
		&\quad + ||(I+B_n)^{-1/2} - (I+B)^{-1/2}||_p ||I+A||\;||(I+B)^{-1/2}||
		\\
		& \leq \frac{1}{2}(1+||A_n||)||B_n - B||_p + ||A_n - A||_p+ \frac{1}{2}(1+||A||)||B_n - B||_p
		\\
		& = ||A_n - A||_p + (1+\frac{1}{2}||A_n|| + \frac{1}{2}||A||)||B_n - B||_p.
	\end{align*}
	
	\begin{comment}
	\begin{align*}
	&\Delta = ||(I+B_n)^{-1/2}(A_n-B_n)(I+B_n)^{-1/2}- (I+B)^{-1/2}(A-B)(I+B)^{-1/2}||_p
	\\
	&\leq ||(I+B_n)^{-1/2}(A_n-B_n)(I+B_n)^{-1/2}- (I+B_n)^{-1/2}(A_n-B_n)(I+B)^{-1/2}||_p
	\\
	&\quad + ||(I+B_n)^{-1/2}(A_n-B_n)(I+B)^{-1/2} - (I+B_n)^{-1/2}(A-B)(I+B)^{-1/2}||_p
	\\
	& \quad + ||(I+B_n)^{-1/2}(A-B)(I+B)^{-1/2} - (I+B)^{-1/2}(A-B)(I+B)^{-1/2}||_p
	\\
	& \leq ||(I+B_n)^{-1/2}||\;||A_n-B_n||\;||(I+B_n)^{-1/2} - (I+B)^{-1/2}||_p
	\\
	& \quad + ||(I+B_n)^{-1/2}||\;||(A_n -B_n)- (A-B)||_p||(I+B)^{-1/2}||
	\\
	&\quad + ||(I+B_n)^{-1/2} - (I+B)^{-1/2}||_p ||A-B||\;||(I+B)^{-1/2}||
	\\
	& \leq \frac{1}{2}(||A_n||+||B_n||)||B_n - B||_p + (||A_n - A||_p+||B_n -B||_p) + \frac{1}{2}(||A||+||B||)||B_n - B||_p
	\\
	& = ||A_n - A||_p + (1+\frac{1}{2}||A_n||+\frac{1}{2}||A||+\frac{1}{2}||B_n||+\frac{1}{2}||B||)||B_n - B||_p.
	\end{align*}
	\end{comment}
	For the third inequality, with the additional assumption $A_n, A \in \Sym^{+}(\H)$,
	\begin{align*}
		&\Delta_2 = ||(I+B_n)^{1/2}(I+A_n)^{-1}(I+B_n)^{1/2}- (I+B)^{1/2}(I+A)^{-1}(I+B)^{1/2}||_p
		\\
		&\leq ||(I+B_n)^{1/2}(I+A_n)^{-1}(I+B_n)^{1/2}- (I+B_n)^{1/2}(I+A_n)^{-1}(I+B)^{1/2}||_p
		\\
		&\quad + ||(I+B_n)^{1/2}(I+A_n)^{-1}(I+B)^{1/2} - (I+B_n)^{1/2}(I+A)^{-1}(I+B)^{1/2}||_p
		\\
		& \quad + ||(I+B_n)^{1/2}(I+A)^{-1}(I+B)^{1/2} - (I+B)^{1/2}(I+A)^{-1}(I+B)^{1/2}||_p
		\\
		& \leq ||(I+B_n)^{1/2}||\;||(I+A_n)^{-1}||\;||(I+B_n)^{1/2}- (I+B)^{1/2}||_p
		\\
		&\quad + ||(I+B_n)^{1/2}||\;||(I+A_n)^{-1} - (I+A)^{-1}||_p||(I+B)^{1/2}||
		\\
		& \quad + ||(I+B_n)^{1/2} - (I+B)^{1/2}||_p||(I+A)^{-1}||\;||(I+B)^{1/2}||
		\\
		& \leq \frac{1}{2}(1+||B_n||^{1/2})||B_n - B||_p + (1+||B_n||^{1/2})(1+||B||^{1/2})||A_n - A||_p
		\\
		&\quad + \frac{1}{2}(1+||B||^{1/2})||B_n - B||_p
		\\
		& = (1+||B_n||^{1/2})(1+||B||^{1/2})||A_n - A||_p + (1+\frac{1}{2}||B_n||^{1/2} + \frac{1}{2}||B||^{1/2})||B_n - B||_p.
	\end{align*}
	%\begin{align*}
	%||(I+B_n)^{1/2}- (I+B)^{1/2}||_p 
	%\end{align*}
	%\begin{align*}
	%&||(I+B_n)^{-1/2} - (I+B)^{-1/2}||_p = ||(I+B_n)^{-1/2}[(I+B_n)^{1/2} - (I+B)^{1/2}](I+B)^{-1/2}||_p
	%\\
	%&\leq||(I+B_n)^{-1/2}||\;||(I+B_n)^{1/2} - (I+B)^{1/2}||_p||(I+B)^{-1/2}||
	%\\
	%& \leq ||(I+B_n)^{1/2} - (I+B)^{1/2}||_p \leq \frac{1}{2}||B_n - B||_p 
	%\end{align*}
\end{proof}

\begin{lemma}
	\label{lemma:A2-inverseI+A-inequality}
	For $A,B \in \HS(\H)$, with $I+A, I+B$ invertible,
	\begin{align}
		&||(I+A)^{-1}A^2 - (I+B)^{-1}B^2||_{\tr}
		\nonumber
		\\
		&\leq ||(I+A)^{-1}||\;||(I+B)^{-1}||[||A||_{\HS} + ||B||_{\HS} + ||A||_{\HS}||B||_{\HS}]||A-B||_{\HS}. 
		%||(I+A)^{-1}||\;||A-B||_{\HS}[||A||_{\HS} + ||(I+B)^{-1}||\;||B||_{\HS}].
	\end{align}
	%In particular, for $A,B \in \Sym^{+}(\H) \cap \HS(\H)$,
	%\begin{align}
	%||(I+A)^{-1}A^2 - (I+B)^{-1}B^2||_{\tr} \leq ||A-B||_{\HS}(||A||_{\HS} + ||B||_{\HS}).
	%\end{align}
\end{lemma}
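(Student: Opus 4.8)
The plan is to rewrite the difference as a single expression that is sandwiched between the two resolvents and is manifestly trace class, and then to estimate it with Hölder-type bounds for Schatten norms. Note first that since $A,B \in \HS(\H)$ the operators $A^2,B^2$ lie in $\Tr(\H)$, so both $(I+A)^{-1}A^2$ and $(I+B)^{-1}B^2$ are trace class and the left-hand side is well defined.

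First I would record a scalar-style operator identity. Writing $A^2 = (I+A)A - A$ gives $(I+A)^{-1}A^2 = A - (I+A)^{-1}A = A - I + (I+A)^{-1}$, and similarly for $B$; subtracting, the identity operators cancel and I obtain the clean relation, valid in $\Lcal(\H)$ even though the individual summands are not trace class,
\[
(I+A)^{-1}A^2 - (I+B)^{-1}B^2 = (A-B) + \big[(I+A)^{-1} - (I+B)^{-1}\big].
\]
Next I would apply the resolvent identity $(I+A)^{-1} - (I+B)^{-1} = (I+A)^{-1}(B-A)(I+B)^{-1}$. Setting $D = A-B$, the right-hand side becomes $D - (I+A)^{-1}D(I+B)^{-1}$. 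Inserting $D = (I+A)^{-1}(I+A)D(I+B)(I+B)^{-1}$ and expanding $(I+A)D(I+B) - D = AD + DB + ADB$, I reach the key representation
\[
(I+A)^{-1}A^2 - (I+B)^{-1}B^2 = (I+A)^{-1}\big[AD + DB + ADB\big](I+B)^{-1}.
\]
Each of $AD$, $DB$, $ADB$ is a product of Hilbert--Schmidt operators, hence trace class, confirming trace-class membership of the whole.

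Finally I would estimate in trace norm: pulling out the operator norms of the two resolvents gives $\|\cdot\|_{\tr} \le \|(I+A)^{-1}\|\,\|(I+B)^{-1}\|\,\|AD+DB+ADB\|_{\tr}$, and I bound the three pieces using $\|PQ\|_{\tr}\le\|P\|_{\HS}\|Q\|_{\HS}$. This yields $\|AD\|_{\tr}\le\|A\|_{\HS}\|D\|_{\HS}$ and $\|DB\|_{\tr}\le\|D\|_{\HS}\|B\|_{\HS}$, while for the triple product $\|ADB\|_{\tr}=\|(AD)B\|_{\tr}\le\|AD\|_{\HS}\|B\|_{\HS}\le\|A\|_{\HS}\|D\|_{\HS}\|B\|_{\HS}$. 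Summing produces the factor $\|A\|_{\HS}+\|B\|_{\HS}+\|A\|_{\HS}\|B\|_{\HS}$ multiplied by $\|D\|_{\HS}=\|A-B\|_{\HS}$, which is exactly the claimed inequality.

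The main obstacle is purely in the last step: making the degree-two term $\|ADB\|_{\tr}$ come out as $\|A\|_{\HS}\|B\|_{\HS}\|A-B\|_{\HS}$ rather than with a spurious operator-norm factor. This is handled by regrouping $ADB=(AD)B$, using that a product of two Hilbert--Schmidt operators is trace class (hence Hilbert--Schmidt), together with the chain $\|AD\|_{\HS}\le\|A\|\,\|D\|_{\HS}\le\|A\|_{\HS}\|D\|_{\HS}$ that relies on $\|A\|\le\|A\|_{\HS}$. Everything else is routine, so the proof is short once the key representation is in place.
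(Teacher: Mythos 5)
Your proof is correct and arrives at exactly the same key representation as the paper, namely $(I+A)^{-1}\left[A(A-B)+(A-B)B+A(A-B)B\right](I+B)^{-1}$, with the identical trace-norm estimates $\|PQ\|_{\tr}\le\|P\|_{\HS}\|Q\|_{\HS}$ applied to each of the three terms. The only (cosmetic) difference is how you reach that representation — the paper factors the resolvents out directly via $(I+A)^{-1}[A^2(I+B)-(I+A)B^2](I+B)^{-1}$, whereas you pass through the identity $(I+A)^{-1}A^2 = A - I + (I+A)^{-1}$ and the resolvent identity — so this is essentially the paper's proof.
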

\begin{proof}
	Applying the property $||AB||_{\tr} \leq ||A||_{\HS}||B||_{\HS}$ for $A,B \in \HS(\H)$ (see e.g. \citep{ReedSimon:Functional})
	%For the first inequality, we have
	\begin{align*}
		&||(I+A)^{-1}A^2 - (I+B)^{-1}B^2||_{\tr} = ||(I+A)^{-1}[A^2(I+B) - (I+A)B^2](I+B)^{-1}||_{\tr}
		\\
		& = ||(I+A)^{-1}[A^2-B^2 + A(A-B)B](I+B)^{-1}||_{\tr} 
		\\
		&\leq ||(I+A)^{-1}||\;[||A(A-B) + (A-B)B||_{\tr} + ||A(A-B)B||_{\tr}]||(I+B)^{-1}||
		\\
		& \leq ||(I+A)^{-1}||\;||(I+B)^{-1}||[||A||_{\HS} + ||B||_{\HS} + ||A||_{\HS}||B||_{\HS}]||A-B||_{\HS}.
	\end{align*}
	%\begin{align*}
	%&||(I+A)^{-1}A^2 - (I+B)^{-1}B^2||_{\tr} 
	%\\
	%&\leq ||(I+A)^{-1}A^2 - (I+A)^{-1}AB||_{\tr} + ||(I+A)^{-1}AB - (I+B)^{-1}B^2||_{\tr}
	%\\
	%& \leq ||(I+A)^{-1}A(A-B)||_{\tr} + ||(I+A)^{-1}A - (I+B)^{-1}B||_{\HS}||B||_{\HS}
	%\\
	%& \leq ||(I+A)^{-1}A||_{\HS}||A-B||_{\HS} + ||(I+A)^{-1}[A(I+B)-(I+A)B](I+B)^{-1}||_{\HS}||B||_{\HS}
	%\\
	%& \leq ||(I+A)^{-1}||\;||A||_{\HS}||A-B||_{\HS} + ||(I+A)^{-1}||\;||(I+B)^{-1}||\;||A-B||_{\HS}||B||_{\HS}
	%\\
	%& = ||(I+A)^{-1}||\;||A-B||_{\HS}[||A||_{\HS} + ||(I+B)^{-1}||\;||B||_{\HS}].
	%\end{align*}
	%For $A,B \in \Sym^{+}(\H)\cap \HS(\H)$, $||(I+A)^{-1}||\leq 1, ||(I+B)^{-1}||\leq 1$, giving us the second inequality.
\end{proof}

\begin{proof}
		\textbf{of Theorem \ref{theorem:F-A-continuity-HS}}
	%[\textbf{Proof of Theorem \ref{theorem:F-A-continuity-HS}}]
	We prove here the case $0 < \beta < 1$ and prove the cases $\beta =0,1$ separately in Theorem 
	\ref{theorem:logdet-Hilbert-Carleman-bound}.
	
	(i) Consider the general case $I+A > 0, I+B > 0$.
	Let $\{\lambda_k\}_{k \in \Nbb}$ be any extended enumeration
	of discrete eigenvalues of $A$. 
	By Lemma \ref{lemma:log-sum-inequality-1},
	\begin{align*}
		&0 \leq \log\det[\beta (I+A)^{1-\beta} + (1-\beta)(I+A)^{-\beta}] = \sum_{k=1}^{\infty}
		\log[\beta(1+\lambda_k)^{1-\beta} + (1-\beta)(1+\lambda_k)^{-\beta}]
		\\
		&\leq \sum_{k=1}^{\infty}\frac{\lambda_k^2}{1+\lambda_k} \leq \max_{k\in \Nbb}\frac{1}{1+\lambda_k}\sum_{k=1}^{\infty}\lambda_k^2 = ||(I+A)^{-1}||\;||A||^2_{\HS}.
	\end{align*}
	Since the above series all converge absolutely,  the value of $\log\det[\beta (I+A)^{1-\beta} + (1-\beta)(I+A)^{-\beta}]$ is independent of any rearrangement of 
	the nonzero eigenvalues of $A$. This also shows that $F_{\beta}(A) \geq 0$.
	
	By Eq.\eqref{equation:Kato-p-norm-difference} in Theorem \ref{theorem:Kato-spectra}, applied to
	%the operators 
	$(I+A)^{-1}A^2$ and $(I+B)^{-1}B^2$ with $p=1$,
	there are 
	extended enumerations $\{\lambda_k\}_{k \in \Nbb}, \{\mu_k\}_{k \in \Nbb}$
	of discrete eigenvalues of $A,B$, respectively, with
	%such that
	\begin{align*}
		\sum_{k=1}^{\infty}\left|\frac{\lambda_k^2}{1+\lambda_k} - \frac{\mu_k^2}{1+\mu_k}\right| \leq ||(I+A)^{-1}A^2 - (I+B)^{-1}B^2||_{\tr}.
	\end{align*}
	Using these extended enumerations $\{\lambda_k\}_{k \in \Nbb}$, $\{\mu_k\}_{k \in \Nbb}$, we have
	\begin{align*}
		&|\log\det[\beta (I+A)^{1-\beta} + (1-\beta)(I+A)^{-\beta}] -
		\log\det[\beta (I+B)^{1-\beta} + (1-\beta)(I+B)^{-\beta}]|
		\\ 
		&= \left|\sum_{k=1}^{\infty}\log[\beta(1+\lambda_k)^{1-\beta} + (1-\beta)(1+\lambda_k)^{-\beta}] - 
		\sum_{k=1}^{\infty}\log[\beta(1+\mu_k)^{1-\beta} + (1-\beta)(1+\mu_k)^{-\beta}]\right|
		\\
		&\leq \sum_{k=1}^{\infty}|\log[\beta(1+\lambda_k)^{1-\beta} + (1-\beta)(1+\lambda_k)^{-\beta}]
		- \log[\beta(1+\mu_k)^{1-\beta} + (1-\beta)(1+\mu_k)^{-\beta}]|
		\\
		& \leq \beta(1-\beta)\sum_{k=1}^{\infty}\left|\frac{\lambda_k^2}{1+\lambda_k} - \frac{\mu_k^2}{1+\mu_k}\right|
		\;\;\;\text{by Lemma \ref{lemma:log-x2-inverse-inequality-1}}
		\\
		& \leq \beta(1-\beta)||(I+A)^{-1}A^2 - (I+B)^{-1}B^2||_{\trace} \;\;\; \text{by Theorem \ref{theorem:Kato-spectra}, applied to $p=1$}
		\\
		& \leq \beta(1-\beta)||(I+A)^{-1}||\;||(I+B)^{-1}||[||A||_{\HS} + ||B||_{\HS} + ||A||_{\HS}||B||_{\HS}]||A-B||_{\HS}
		%\;\text{by Lemma \ref{lemma:A2-inverseI+A-inequality}}
	\end{align*}
	by Lemma \ref{lemma:A2-inverseI+A-inequality}. This gives $|F_{\beta}(A) - F_{\beta}(B)| 
	\leq ||(I+A)^{-1}||\;||(I+B)^{-1}||[||A||_{\HS} + ||B||_{\HS} + ||A||_{\HS}||B||_{\HS}]||A-B||_{\HS}$.
	Letting $B =0$ gives $0 \leq F_{\beta}(A) \leq ||(I+A)^{-1}||\;||A||^2_{\HS}$.
	
	(ii) For $A,B \in \Sym^{+}(\H) \cap \HS(\H)$, we choose the extended enumerations
	$\{\lambda_k\}_{k \in \Nbb}, \{\mu_k\}_{k \in \Nbb}$ of discrete eigenvalues of $A,B$, respectively, such that
	\begin{align*}
		\sum_{k=1}^{\infty}|\lambda_k^2 - \mu_k^2|\leq ||A^2 - B^2||_{\tr}.
	\end{align*}
	Using these extended enumerations $\{\lambda_k\}_{k \in \Nbb}$, $\{\mu_k\}_{k \in \Nbb}$, 
	with $\lambda_k \geq 0, \mu_k \geq 0$ $\forall k \in \Nbb$,
	\begin{align*}
		&|\log\det[\beta (I+A)^{1-\beta} + (1-\beta)(I+A)^{-\beta}] -
		\log\det[\beta (I+B)^{1-\beta} + (1-\beta)(I+B)^{-\beta}]|
		\\ 
		&= \left|\sum_{k=1}^{\infty}\log[\beta(1+\lambda_k)^{1-\beta} + (1-\beta)(1+\lambda_k)^{-\beta}] - 
		\sum_{k=1}^{\infty}\log[\beta(1+\mu_k)^{1-\beta} + (1-\beta)(1+\mu_k)^{-\beta}]\right|
		\\
		&\leq \sum_{k=1}^{\infty}|\log[\beta(1+\lambda_k)^{1-\beta} + (1-\beta)(1+\lambda_k)^{-\beta}]
		- \log[\beta(1+\mu_k)^{1-\beta} + (1-\beta)(1+\mu_k)^{-\beta}]|
		\\
		& \leq \frac{\beta(1-\beta)}{2}\sum_{k=1}^{\infty}\left|\lambda_k^2 - \mu_k^2\right|
		\;\;\;\text{by Lemma \ref{lemma:log-x2-inverse-inequality-1}}
		\\
		&\leq \frac{\beta(1-\beta)}{2}||A^2-B^2||_{\tr} = \frac{\beta(1-\beta)}{2}||A(A-B)+(A-B)B||_{\tr}
		\\
		&\leq \frac{\beta(1-\beta)}{2}[||A||_{\HS} + ||B||_{\HS}]||A-B||_{\HS}.
	\end{align*}
This gives $|F_{\beta}(A)- F_{\beta}(B)| \leq \frac{1}{2}[||A||_{\HS} + ||B||_{\HS}]||A-B||_{\HS}$.
Letting $B = 0$ gives $0 \leq F_{\beta}(A) \leq \frac{1}{2}||A||^2_{\HS}$.
\end{proof}

\begin{proof}
	\textbf{of Theorem \ref{theorem:logdet-Hilbert-Carleman-bound}}
	%[\textbf{Proof of Theorem \ref{theorem:logdet-Hilbert-Carleman-bound}}]
	(i) Consider first the function $\log\dettwo(I+A)$.
	Let $\{\lambda_k\}_{k\in \Nbb}$ denote the eigenvalues of $A$. Then
	\begin{align*}
		\log\dettwo(I+A) = \log\det[(I+A)\exp(-A)] = \log\prod_{k=1}^{\infty}(1+\lambda_k)e^{-\lambda_k} = 
		\sum_{k=1}^{\infty}[\log(1+\lambda_k)-\lambda_k].
	\end{align*}
	By Lemma \ref{lemma:log-bound-1}, $\log\dettwo(I+A) \leq 0$ always and thus 
	\begin{align*}
		|\log\dettwo(I+A)| & = -\log\dettwo(I+A)= \sum_{k=1}^{\infty}[\lambda_k-\log(1+\lambda_k)] \leq \sum_{k=1}^{\infty}\frac{\lambda_k^2}{1+\lambda_k} 
		\\
		&\leq \max_{k \in \Nbb}\left\{\frac{1}{1+\lambda_k} \right\}\sum_{k=1}^{\infty}\lambda_k^2 = ||(I+A)^{-1}||\;||A||^2_{\HS}.
	\end{align*}
	Furthermore, for $A \in \Sym^{+}(\H) \cap \HS(\H)$, 
	$\lambda_k \geq 0$ $\forall k \in \Nbb$ and thus by Lemma \ref{lemma:log-bound-1}
	\begin{align*}
		|\log\dettwo(I+A)| & = -\log\dettwo(I+A)= \sum_{k=1}^{\infty}[\lambda_k-\log(1+\lambda_k)] \leq \frac{1}{2}\sum_{k=1}^{\infty}{\lambda_k^2} = \frac{1}{2}||A||^2_{\HS}.
	\end{align*}
	(ii) Consider now the function $\log\dettwo[(I+A)^{-1}]$. We have
	\begin{align*}
		\log\dettwo[(I+A)^{-1}] &= \log\det[(I+A)^{-1}\exp(A(I+A)^{-1})] 
		\\
		&= \log\prod_{k=1}^{\infty}(1+\lambda_k)^{-1}\exp(\lambda_k/(1+\lambda_k)) = \sum_{k=1}^{\infty}\left[-\log(1+\lambda_k) + \frac{\lambda_k}{1+\lambda_k}\right]. 
	\end{align*}
	Once again by Lemma \ref{lemma:log-bound-1}, $\log\dettwo[(I+A)^{-1}] \leq 0$ and
	\begin{align*}
		|\log\dettwo[(I+A)^{-1}]| &= - \log\dettwo[(I+A)^{-1}] = \sum_{k=1}^{\infty}\left[\log(1+\lambda_k) - \frac{\lambda_k}{1+\lambda_k}\right]
		\\
		& \leq \sum_{k=1}^{\infty}\frac{\lambda_k^2}{1+\lambda_k} \leq ||(I+A)^{-1}||\;||A||^2_{\HS}.
	\end{align*}
	The other inequalities follow as in Theorem \ref{theorem:F-A-continuity-HS}, by applying Lemmas \ref{lemma:log-bound-1} and
	\ref{lemma:log-bound-3}.
\end{proof}

\begin{proof}
	\textbf{of Theorem \ref{theorem:convergence-logdet-infinite}}
%		[\textbf{Proof of Theorem \ref{theorem:convergence-logdet-infinite}}]
	(i) Consider the case $\gamma = 1$.
	Write $(I+B)^{-1/2}(I+A)(I+B)^{-1/2} = 
	%I + (I+B)^{-1/2}(A-B)(I+B)^{-1/2} = 
	I+\Lambda$,
	where $\Lambda = (I+B)^{-1/2}(A-B)(I+B)^{-1/2} \in \Sym(\H) \cap \HS(\H)$. 
	%Let $\{\lambda_k\}_{k \in \Nbb}$ be the eigenvalues of $\Lambda$,
	%then $1+\lambda_k > 0$ $\forall k \in \Nbb$ and $\lim_{k\approach \infty}\lambda_k = 0$. 
	Then
	\begin{align*}
		&d^{\alpha}_{\logdet}[(I+A),(I+B)]
		%\\
		%& = \frac{4}{1-\alpha^2}\log\det\left[\frac{1-\alpha}{2}[(I+B)^{-1/2}(I+A)(I+B)^{-1/2}]^{\frac{1+\alpha}{2}} + \frac{1+\alpha}{2}[(I+B)^{-1/2}(I+A)(I+B)^{-1/2}]^{-\frac{1-\alpha}{2}}\right]
		%\\
		%& = \frac{4}{1-\alpha^2}\logdet\left[\frac{1-\alpha}{2}(I+\Lambda)^{\frac{1+\alpha}{2}} +
		%\frac{1+\alpha}{2}(I+\Lambda)^{-\frac{1-\alpha}{2}}\right]
		%\\
		%& =  \frac{4}{1-\alpha^2}\sum_{k=1}^{\infty}\log\left[\frac{1-\alpha}{2}(1+\lambda_k)^{\frac{1+\alpha}{2}} +
		%\frac{1+\alpha}{2}(1+\lambda_k)^{-\frac{1-\alpha}{2}}\right]
		% \leq \sum_{k=1}^{\infty}\frac{\lambda_k^2}{1+\lambda_k}
		%\\
		%& 
		= F_{\frac{1-\alpha}{2}}(\Lambda)\;\;\; \text{by Proposition \ref{proposition:logdet-F-beta-representation}, with $F$ as defined in Eq.\eqref{equation:F-beta}}
		\\
		& \leq ||(I+\Lambda)^{-1}||\;||\Lambda||^2_{\HS} \;\;\text{by Theorem \ref{theorem:F-A-continuity-HS}}
		\\
		& \leq ||(I+B)^{1/2}(I+A)^{-1}(I+B)^{1/2}||\;||A-B||^2_{\HS}||(I+B)^{-1}||^2
		\\
		& \leq (1+||B||)||(I+A)^{-1}||\;||(I+B)^{-1}||^2\;||A-B||^2_{\HS}.
		%\\
		%\;\;\;\text{by Lemma \ref{lemma:log-sum-inequality-1}}
		%\\
		%& \leq \max_{k \in \Nbb}\left\{\frac{1}{1+\lambda_k}\right\}\sum_{k=1}^{\infty}\lambda_k^2 = \frac{1}{M_{AB}}||\Lambda||^2_{\HS}
		%\leq \frac{1}{M_{AB}}||A-B||^2_{\HS}||(I+B)^{-1}||^2.
	\end{align*}
	(ii) For $\gamma > 0$,
	$(\gamma I + B)^{-1/2}(\gamma I + A)(\gamma I + B)^{-1/2} = (I+\frac{B}{\gamma})^{-1/2}(I+\frac{A}{\gamma})(I+\frac{B}{\gamma})^{-1/2}$. Thus
	\begin{align*}
		&d^{\alpha}_{\logdet}[(\gamma I +A), (\gamma I + B)] = d^{\alpha}_{\logdet}\left[\left(I + \frac{A}{\gamma}\right), \left(I+ \frac{B}{\gamma}\right)\right]
		\\
		&\leq \left(1+\frac{||B||}{\gamma}\right)\left\|\left(I+ \frac{A}{\gamma}\right)^{-1}\right\|\left\|\left(I+ \frac{B}{\gamma}\right)^{-1}\right\|^2\left\|\frac{A}{\gamma} - \frac{B}{\gamma}\right\|^2_{\HS}
		\\
		&= (\gamma +||B||)||(\gamma I + A)^{-1}||\;||(\gamma I + B)^{-1}||\;||A-B||^2_{\HS}.
	\end{align*} 
	
	%\begin{align}
	%\log\left(\frac{1-\alpha}{2}(1+\lambda)^{\frac{1+\alpha}{2}} + \frac{1+\alpha}{2}(1+\lambda)^{-\frac{1-\alpha}{2}}\right)
	%\end{align}
\end{proof}

\begin{proof}
	\textbf{of Theorem \ref{theorem:logdet-approx-infinite-sequence}}
	%	[\textbf{Proof of Theorem \ref{theorem:logdet-approx-infinite-sequence}}]
	(i) Consider the case $\gamma = 1$. 
	Write $(I+B)^{-1/2}(I+A)(I+B)^{-1/2} = I+\Lambda$,
	where $\Lambda = (I+B)^{-1/2}(A-B)(I+B)^{-1/2} \in \Sym(\H) \cap \HS(\H)$. Similarly,
	write  $(I+B_n)^{-1/2}(I+A_n)(I+B_n)^{-1/2}  = I+\Lambda_n$,
	where $\Lambda_n = (I+B_n)^{-1/2}(A_n-B_n)(I+B_n)^{-1/2} \in \Sym(\H) \cap \HS(\H)$. 
	Let $\beta = \frac{1-\alpha}{2}$, then, with
	% the function 
	$F_{\beta}$ as defined in Eq.\eqref{equation:F-beta},
	by Proposition \ref{proposition:logdet-F-beta-representation},
	$d^{\alpha}_{\logdet}[(I+A), (I+B)] = F_{\beta}(\Lambda)$, $d^{\alpha}_{\logdet}[(I+A_n), (I+B_n)] = F_{\beta}(\Lambda_n)$.
	%\begin{align*}
	%	d^{\alpha}_{\logdet}[(I+A), (I+B)]= \frac{4}{1-\alpha^2}\log\det\left[\frac{1-\alpha}{2}(I+\Lambda)^{\frac{1+\alpha}{2}} +
	%	\frac{1+\alpha}{2}(I+\Lambda)^{-\frac{1-\alpha}{2}}\right] = F_{\beta}(\Lambda).
	%\end{align*}
	%Similarly, $d^{\alpha}_{\logdet}[(I+A_n), (I+B_n)] = F_{\beta}(\Lambda_n)$. 
	Thus
	by Theorem \ref{theorem:F-A-continuity-HS},
	\begin{align*}
		&\Delta = |d^{\alpha}_{\logdet}[(I+A_n), (I+B_n)] - d^{\alpha}_{\logdet}[(I+A), (I+B)]|
		%\\
		%& 
		= |F_{\beta}(\Lambda_n) - F_{\beta}(\Lambda)|
		\\
		&\leq ||(I+\Lambda_n)^{-1}||\;||(I+\Lambda)^{-1}||\;[||\Lambda_n||_{\HS} + ||\Lambda||_{\HS} + ||\Lambda_n||_{\HS}||\Lambda||_{\HS}]||\Lambda_n - \Lambda||_{\HS}.
		%\\
		%&	\leq \frac{4}{1-\alpha^2}
		%	\log\det\left[\frac{1-\alpha}{2}(I+\Lambda_n)^{\frac{1+\alpha}{2}} +
		%	\frac{1+\alpha}{2}(I+\Lambda_n)^{-\frac{1-\alpha}{2}}\right]
		%	-\log\det\left[\frac{1-\alpha}{2}(I+\Lambda)^{\frac{1+\alpha}{2}} +
		%	\frac{1+\alpha}{2}(I+\Lambda)^{-\frac{1-\alpha}{2}}\right]
	\end{align*}
	Since $B_n,B \in \Sym^{+}(\H)$, we have $||\Lambda_n||_{\HS} \leq ||A_n - B_n||_{\HS} \leq ||A_n||_{\HS} + ||B_n||_{\HS}$,
	$||\Lambda||_{\HS} \leq ||A-B||_{\HS} \leq ||A||_{\HS} + ||B||_{\HS}$.
	By Lemma \ref{lemma:I+Delta-Schatten-norm},
	\begin{align*}
		||\Lambda_n - \Lambda||_{\HS} &=||(I+B_n)^{-1/2}(I+A_n)(I+B_n)^{-1/2} - (I+B)^{-1/2}(I+A)(I+B)^{-1/2}||_{\HS}
		\\
		&\leq ||A_n -A||_{\HS} + \left(1+\frac{1}{2}||A_n|| + \frac{1}{2}||A||\right)||B_n - B||_{\HS}.
	\end{align*}
	Furthermore, since $A_n,A,B_n,B \in \Sym^{+}(\H) \cap \HS(\H)$, we have
	\begin{align*}
		||(I+\Lambda_n)^{-1}|| &= ||(I+B_n)^{1/2}(I+A_n)^{-1}(I+B_n)^{1/2}|| \leq ||(I+B_n)^{1/2}||^2 = 1 +||B_n||,
		\\
		||(I+\Lambda)^{-1}|| &= ||(I+B)^{1/2}(I+A)^{-1}(I+B)^{1/2}|| \leq ||(I+B)^{1/2}||^2 = 1 +||B||.
	\end{align*}
	It follows that
	\begin{align*}
		\Delta &\leq (1+||B_n||)(1+||B||)
		\\
		&\quad \times \left[||A_n||_{\HS}+||B_n||_{\HS} + ||A||_{\HS} + ||B||_{\HS} + (||A_n||_{\HS} + ||B_n||_{\HS})(||A||_{\HS} + ||B||_{\HS})\right]
		\\
		& \quad \times \left[||A_n -A||_{\HS} + \left(1+\frac{1}{2}||A_n|| + \frac{1}{2}||A||\right)||B_n - B||_{\HS}\right].
	\end{align*}
	%\begin{align*}
	%&|[\log\det(I+\beta Z_n) - \beta \log\det(I+Z_n)]-[\log\det(I+\beta Z) - \beta \log\det(I+Z)]|
	%\\
	%&\leq |\log\det(I+\beta Z_n) - \log\det(I+\beta Z)| + \beta |\log\det(I+Z_n) - \log\det(I+Z)|
	%\\
	%&\leq 2\beta||Z_n - Z||_{\tr} \leq 2\beta ||(I+B_n)^{-1}(I+A_n) - (I+B)^{-1}(I+A)||_{\tr}
	%\end{align*}
	%\begin{align*}
	%&|\log\det[\beta (I+Z_n)^{1-\beta} +(1- \beta)(I+Z_n)^{-\beta}]
	%- \log\det[\beta (I+Z)^{1-\beta} +(1- \beta)(I+Z)^{-\beta}]|
	%\end{align*}
	(ii) The general case $\gamma > 0$ follows from
	$d^{\alpha}_{\logdet}[(A+\gamma I), (B+\gamma I)] = d^{\alpha}_{\logdet}[(I+\frac{A}{\gamma}), (I+ \frac{B}{\gamma})]$.
\end{proof}

%\begin{corollary}
%	Let $-1 < \alpha < 1$ be fixed.
%	Let $\{A_n\}_{n \in \Nbb},\{B_n\}_{n \in \Nbb}, A,B \in \Sym^{+}(\H) \cap \HS(\H)$. Then
%\begin{align}
%|d^{\alpha}_{\logdet}[(A_n+\gamma I), (B_n+ \mu I)] - d^{\alpha}_{\logdet}[(A+\gamma I), (B+ \mu I)]| \leq
%\end{align}
%\end{corollary}

\subsection{Proofs for the RKHS setting}
\label{section:proofs-RKHS}

\begin{lemma}
	\label{lemma:Sylvester}
	%[{\bf Generalization of Sylvester's determinant theorem}]
	Let {$\H_1$} and {$\H_2$} be two separable Hilbert spaces. Let 
	{$A: \H_1 \mapto \H_2$} be compact, such that $A^{*}A \in \Tr(\H_1)$. Then 
	$AA^{*} \in \Tr(\H_2)$ and
	% and {$B:\H_2 \mapto \H_1$} be two bounded linear operators such that 
	%{$AB$}, {$BA$} are compact and {$AB$} is trace class.
	%Then {$BA$} is also trace class and
	%$AB$ is positive, self-adjoint, and
	%$\det(I_{\H_2} + AB)$ is well-defined. Then  $\det(I_{\H_1} + BA)$ is also well-defined and
	%matrices of size $n \times m$ and $m \times n$, respectively, $n,m \in \N$.
	%Then
	\begin{equation}
		\det(I_{\H_1} + A^{*}A) = \det(I_{\H_2} + AA^{*}).
	\end{equation}
	%This identity can be particularly useful if $m$ is much smaller than $n$.
\end{lemma}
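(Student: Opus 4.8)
The plan is to reduce both assertions to the singular value (Schmidt) decomposition of the compact operator $A$, and then read everything off from the elementary fact that $A^{*}A$ and $AA^{*}$ have exactly the same nonzero spectrum. This is a Sylvester-type determinant identity, and the infinite-dimensional bookkeeping is the only thing requiring care.

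First I would invoke the Schmidt decomposition: since $A:\H_1 \mapto \H_2$ is compact, there exist orthonormal systems $\{u_k\} \subset \H_1$, $\{v_k\}\subset \H_2$ and singular values $\sigma_k > 0$ with $A = \sum_k \sigma_k \la \cdot, u_k\ra v_k$, the series converging in operator norm. A direct computation then gives $A^{*}A = \sum_k \sigma_k^2 \la \cdot, u_k\ra u_k$ and $AA^{*} = \sum_k \sigma_k^2 \la \cdot, v_k\ra v_k$. Both are positive, self-adjoint, compact operators whose nonzero eigenvalues are precisely $\{\sigma_k^2\}$ with identical multiplicities; the eigenvalue $0$ occurs on $\ker A = \{u_k\}^{\perp}$ for $A^{*}A$ and on $\ker A^{*} = \{v_k\}^{\perp}$ for $AA^{*}$, with possibly different (even infinite) multiplicities.

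Next, for the trace-class claim, the hypothesis $A^{*}A \in \Tr(\H_1)$ together with positivity yields $\trace(A^{*}A) = \sum_k \sigma_k^2 < \infty$. Since $AA^{*}$ is positive with the same nonzero eigenvalues, $\trace(AA^{*}) = \sum_k \sigma_k^2 < \infty$, so $AA^{*} \in \Tr(\H_2)$. For the determinant identity I would then apply the defining product formula $\det(I+T) = \prod_k(1+\lambda_k(T))$, valid for $T \in \Tr$, the product running over all eigenvalues counted with multiplicity. Substituting $T = A^{*}A$ and $T = AA^{*}$, the zero eigenvalues contribute only trivial factors $(1+0)=1$ and drop out, while the nonzero eigenvalues coincide, so
\[
\det(I_{\H_1} + A^{*}A) = \prod_k (1 + \sigma_k^2) = \det(I_{\H_2} + AA^{*}),
\]
with absolute convergence guaranteed by $\sum_k \sigma_k^2 < \infty$.

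The only delicate point, which is really bookkeeping rather than a genuine obstacle, is the differing multiplicity of the zero eigenvalue between the two spaces, which can be infinite-dimensional when $\dim(\H_1) \neq \dim(\H_2)$. Because these kernels contribute only factors equal to $1$ in the Fredholm product, they are invisible to the determinant and the identity holds verbatim. Alternatively one could cite the operator-level identity $\det(I+ST) = \det(I+TS)$ with $S = A^{*}$ and $T = A$, but the Schmidt-decomposition argument is self-contained and makes the trace-class conclusion transparent at the same time.
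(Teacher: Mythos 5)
Your proof is correct and follows essentially the same route as the paper, which simply observes that $AA^{*}$ and $A^{*}A$ share the same nonzero eigenvalues and lets both the trace-class claim and the determinant identity follow; your Schmidt-decomposition argument just makes that observation explicit.
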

\begin{proof}
	This follows from the fact that the nonzero eigenvalues of $AA^{*}:\H_2 \mapto \H_2$ are the same as those of
	$A^{*}A: \H_1 \mapto \H_1$.
\end{proof}
%\begin{remark} This is a direct generalization of Sylvester's determinant theorem, which is valid for {$\H_1 = \R^n$}, {$\H_2 = \R^m$}, 
%	{$m,n \in \N$}, to the infinite-dimensional setting. It can be particularly useful if {$\dim(\H_1) < \dim(\H_2)$}.
%\end{remark}

\begin{lemma}\label{lemma:det-1} 
	Let $\H_1, \H_2, \H$ be separable Hilbert spaces. Let $A: \H_1 \mapto \H$, $B:\H_2 \mapto \H$ be 
	%bounded
	compact operators
	such that $A^{*}A \in \Tr(\H_1)$, $B^{*}B\in \Tr(\H_2)$.
	% are trace class operators. 
	Then $AA^{*}, BB^{*} \in \Tr(\H)$
	% are also trace class operators 
	and
	\begin{equation}
		\det(AA^{*}+ BB^{*}+ I_{\H}) = \det
		\left[
		\begin{pmatrix}
			A^{*}A & A^{*}B\\
			B^{*}A & B^{*}B
		\end{pmatrix}
		+ I_{\H_1 \oplus \H_2}
		\right].
	\end{equation}
\end{lemma}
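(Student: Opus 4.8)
The plan is to reduce the stated determinant identity to a single application of Lemma~\ref{lemma:Sylvester} by packaging $A$ and $B$ into one operator. Define the row operator $C:\H_1 \oplus \H_2 \mapto \H$ by $C(x_1,x_2) = Ax_1 + Bx_2$; equivalently $C = AP_1 + BP_2$, where $P_1,P_2$ are the coordinate projections of $\H_1 \oplus \H_2$ onto $\H_1,\H_2$. Since $A,B$ are compact and $P_1,P_2$ are bounded, $C$ is compact. A direct computation gives $C^{*}y = (A^{*}y, B^{*}y)$ for $y \in \H$, and hence
\begin{align}
C^{*}C &= \begin{pmatrix} A^{*}A & A^{*}B \\ B^{*}A & B^{*}B \end{pmatrix} : \H_1 \oplus \H_2 \mapto \H_1 \oplus \H_2,
\\
CC^{*} &= AA^{*} + BB^{*} : \H \mapto \H.
\end{align}

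Next I would verify the integrability hypotheses needed for the Fredholm determinants to be well-defined. Since $\trace(C^{*}C) = \trace(A^{*}A) + \trace(B^{*}B) < \infty$, the block operator $C^{*}C$ lies in $\Tr(\H_1 \oplus \H_2)$, so $C$ satisfies the hypothesis of Lemma~\ref{lemma:Sylvester}. For the operators $AA^{*}$ and $BB^{*}$, Lemma~\ref{lemma:Sylvester}, applied to $A$ and to $B$ separately, shows that their nonzero eigenvalues coincide with those of $A^{*}A$ and $B^{*}B$, whence $AA^{*}, BB^{*} \in \Tr(\H)$ with $\trace(AA^{*}) = \trace(A^{*}A)$ and $\trace(BB^{*}) = \trace(B^{*}B)$. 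In particular $CC^{*} = AA^{*} + BB^{*} \in \Tr(\H)$, so both Fredholm determinants appearing in the claim are determinants of trace-class perturbations of the identity and are therefore well-defined.

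Finally I would apply Lemma~\ref{lemma:Sylvester} to the operator $C$ itself, obtaining $\det(I_{\H_1 \oplus \H_2} + C^{*}C) = \det(I_{\H} + CC^{*})$. Substituting the expressions for $C^{*}C$ and $CC^{*}$ computed above yields exactly the stated identity. The only points requiring care are confirming the block form of $C^{*}C$ and checking that all determinants are Fredholm determinants of trace-class perturbations; both are routine, so I do not expect a genuine obstacle. The lemma is essentially a bookkeeping consequence of the Sylvester-type identity in Lemma~\ref{lemma:Sylvester}, with the only real content being the observation that the combined operator $C = (A,B)$ produces the cross terms $A^{*}B, B^{*}A$ in $C^{*}C$ while leaving $CC^{*}$ free of them.
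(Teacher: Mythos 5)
Your proposal is correct and follows essentially the same route as the paper: both form the row operator $(A\;B):\H_1\oplus\H_2\mapto\H$, identify $C^{*}C$ with the block matrix and $CC^{*}$ with $AA^{*}+BB^{*}$, check the trace-class conditions, and conclude by applying Lemma~\ref{lemma:Sylvester} to this combined operator. No gaps.
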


\begin{proof}
	%[\textbf{Proof of Lemma \ref{lemma:det-1}}] 
	Consider the operator {$(A \; B): \H_1 \oplus \H_2 \mapto \H$}, defined by $(A \; B)\begin{pmatrix}v_1 \\ v_2\end{pmatrix} 
	= Av_1 + Bv_2$, where $v_1 \in \H_1, v_2\in \H_2$.
	Here {$\H_1 \oplus \H_2$} denotes the direct sum of {$\H_1$} with $\H_2$, that is
	%\begin{align}
	$\H_1 \oplus \H_2 = \{(v_1, v_2) \; : \; v_1 \in \H_1, v_2 \in \H_2\}$,
	%\end{align}
	equipped with the inner product
	%\begin{align}
	$\la (v_1, v_2) , (w_1, w_2)\ra_{\H_1 \oplus \H_2} = \la v_1, w_1\ra_{\H_1} + \la v_2, w_2\ra_{\H_2}$.
	%\end{align} 
	The adjoint of $(A \; B): \H_1 \oplus \H_2 \mapto \H$ is
	$(A \; B)^{*} = \begin{pmatrix}
		A^{*}\\
		B^{*}
	\end{pmatrix}: \H \mapto \H_1 \oplus \H_2$, defined by
	$\begin{pmatrix}
		A^{*}\\
		B^{*}
	\end{pmatrix}v = \begin{pmatrix}
		A^{*}v\\ B^{*}v
	\end{pmatrix} \in \H_1 \oplus \H_2$, for $v \in \H$. 
	By Lemma \ref{lemma:Sylvester}, both $AA^{*}, BB^{*}:
	\H \mapto \H$ are trace class operators and we have
	\begin{align*}
		&\det(AA^{*}+ BB^{*}+ I_{\H}) =   
		\det\left[
		\begin{pmatrix}
			A & B
		\end{pmatrix}
		%\begin{pmatrix}
		%I_{\H_1} & 0\\
		%0 & I_{\H_1}
		%\end{pmatrix}
		\begin{pmatrix}
			A^{*}\\
			B^{*}
		\end{pmatrix}
		+ I_{\H}\right]
		\\
		&= \det\left[
		%\begin{pmatrix}
		%I_{\H_1} & 0\\
		%0 & I_{\H_1}
		%\end{pmatrix}
		\begin{pmatrix}
			A^{*}\\
			B^{*}
		\end{pmatrix}\begin{pmatrix}
			A & B
		\end{pmatrix}
		%\begin{pmatrix}
		%I_{\H_1} & 0\\
		%0 & I_{\H_1}
		%\end{pmatrix} 
		+ \begin{pmatrix}
			I_{\H_1} & 0\\
			0 & I_{\H_2}
		\end{pmatrix}  \right]
		%\\
		%&
		=
		\det
		\left[
		\begin{pmatrix}
			A^{*}A & A^{*}B\\
			B^{*}A & B^{*}B
		\end{pmatrix}
		+ I_{\H_1 \oplus \H_2}
		\right].
	\end{align*}
	Here $\begin{pmatrix}
		A^{*}A & A^{*}B\\
		B^{*}A & B^{*}B
	\end{pmatrix}=\begin{pmatrix}
		A^{*}\\
		B^{*}
	\end{pmatrix}\begin{pmatrix}
		A & B
	\end{pmatrix} \in \Tr(\H_1 \oplus \H_2)$ also by Lemma \ref{lemma:Sylvester}.
	\begin{comment}
	To show that this is finite, we need to show that the operator 
	%\small
	$\begin{pmatrix}
	A^{*}\\
	B^{*}
	\end{pmatrix}\begin{pmatrix}
	A & B
	\end{pmatrix}
	= \begin{pmatrix}
	A^{*}A & A^{*}B\\
	B^{*}A & B^{*}B
	\end{pmatrix}: \H_1 \oplus \H_2 \mapto \H_1 \oplus \H_2$ is trace class. Since it is obviously self-adjoint and positive, we only
	need to show that it has finite trace. 
	If {$\{e_j^i\}_{j=1}^{\infty}$}, $i=1,2$, is an orthonormal basis for {$\H_i$}, then {$\{(e_j^1, 0)\}_{j=1}^{\infty}
	\cup\{(0, e_j^2)\}_{j=1}^{\infty}$}
	is an orthonormal basis for {$\H_1 \oplus \H_2$}.
	By definition of the trace, 
	\begin{align*}
	&\trace\left[\begin{pmatrix}
	A^{*}A & A^{*}B\\
	B^{*}A & B^{*}B
	\end{pmatrix}\right] 
	\\
	&
	= 
	\sum_{j=1}^{\infty}
	\left\la \begin{pmatrix}e_i \\ 0\end{pmatrix},
	%(e_i^T, 0)
	\begin{pmatrix}
	A^{*}A & A^{*}B\\
	B^{*}A & B^{*}B
	\end{pmatrix}\begin{pmatrix}e_j^1 \\ 0\end{pmatrix}\right \ra
	+ 
	\sum_{j=1}^{\infty}
	%(0, e_i^T)
	\left\la
	\begin{pmatrix}0 \\ e_j^2\end{pmatrix},
	\begin{pmatrix}
	A^{*}A & A^{*}B\\
	B^{*}A & B^{*}B
	\end{pmatrix}\begin{pmatrix}0 \\ e_j^2\end{pmatrix}\right\ra
	\\
	&=\sum_{j=1}^{\infty}\la e_j^1, A^{*}Ae_j^1\ra_{\H_1} + \sum_{j=1}^{\infty}\la e_j^2, B^{*}Be_j^2\ra_{\H_2} = \trace(A^{*}A) + \trace(B^{*}B) < \infty.
	\end{align*}
	%	as desired. 
	%This completes the proof of the lemma.
	\end{comment}
\end{proof}

\begin{theorem}
	[\citep{Minh:LogDet2016}]
	\label{theorem:LogDet-infinite}
	Assume that {$\dim(\H) = \infty$} and that {$(A+\gamma I), (B+ \mu I) \in \PC_1(\H)$}.
	%Let {$A,B \in \Sym^{+}(\H) \cap \Tr(\H)$} and {$\gamma, \mu > 0$}. 
	The divergence {$d^{\alpha}_{\logdet}[(A+\gamma I), (B+\mu I)]$} for {$-1 < \alpha < 1$} is given by
	{%
		\begin{align}\label{equation:LogDet-infinite}
			&d^{\alpha}_{\logdet}[(A+\gamma I), (B+\mu I)] 
			%\nonumber
			%\\
			%&
			= \frac{4}{1-\alpha^2}\logdet\left[\frac{(1-\alpha)A+(1+\alpha)B}
			%{(1-\alpha)\gamma + (1+\alpha)\mu}
			{s(\gamma, \mu, \alpha)}
			+ I\right]
			%\nonumber
			\\
			&
			\quad \quad -\frac{4\beta}{1-\alpha^2}\logdet\left(\frac{A}{\gamma} + I\right)
			%\\
			%&
			- \frac{4(1-\beta)}{1-\alpha^2}\logdet\left(\frac{B}{\mu} + I\right) 
			%\nonumber
			%\\
			%& 
			+ c(\alpha, \gamma, \mu).
			\nonumber
			%&+\frac{4\log([(1-\alpha)\gamma + (1+\alpha)\mu]/2) - 4\beta\log\gamma - 4(1-\beta)\log\mu + 4\beta - 2(1-\alpha)(\log\gamma - \log\mu)}{1-\alpha^2}.
		\end{align}
	}
	where $c(\alpha, \gamma, \mu) \geq 0$, with equality if and only if $\gamma = \mu$.
	%where the constant factor is given by
	%{
	%
	%\begin{align}
	%$C(\gamma, \mu) = \frac{4\log([(1-\alpha)\gamma + (1+\alpha)\mu]/2)}{1-\alpha^2} 
	%- \frac{2\log\gamma}{1+\alpha} - \frac{2\log\mu}{1-\alpha}$.
	%\frac{4\log([(1-\alpha)\gamma + (1+\alpha)\mu]/2) - 4\beta\log\gamma}{1-\alpha^2}\nonumber
	%\\
	% - \frac{4(1-\beta)\log\mu + 4\beta - 2(1-\alpha)(\log\gamma - \log\mu)}{1-\alpha^2}.
	%\end{align}
	%}
	%{
	%\begin{align}
	%C(\gamma, \mu) &= \frac{4\log([(1-\alpha)\gamma + (1+\alpha)\mu]/2)}{1-\alpha^2} \nonumber
	%\\
	%&
	%- \frac{2\log\gamma}{1+\alpha} - \frac{2\log\mu}{1-\alpha}.
	%\frac{4\log([(1-\alpha)\gamma + (1+\alpha)\mu]/2) - 4\beta\log\gamma}{1-\alpha^2}\nonumber
	%\\
	% - \frac{4(1-\beta)\log\mu + 4\beta - 2(1-\alpha)(\log\gamma - \log\mu)}{1-\alpha^2}.
	%\end{align}
	%}
	%{
	%\begin{align}
	%d^{\alpha}_{\logdet} = \frac{4}{1-\alpha^2}\log\det\left[\frac{(1-\alpha)A+(1+\alpha)B}{(1-\alpha)\gamma + (1+\alpha)\mu} + I\right]\nonumber
	%\\
	%-\frac{2}{1+\alpha}\log\det\left(\frac{A}{\gamma} + I\right)- \frac{2}{1-\alpha}\log\det\left(\frac{B}{\mu} + I\right) \nonumber
	%\\
	%+\frac{4\log([(1-\alpha)\gamma + (1+\alpha)\mu]/2)}{1-\alpha^2}-\frac{2\log\gamma}{1+\alpha} - \frac{2\log\mu}{1-\alpha}.
	%\end{align}
	%}
\end{theorem}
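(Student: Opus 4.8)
The plan is to unfold Definition~\ref{def:logdet} and reduce every extended Fredholm determinant to an ordinary Fredholm determinant via the identity $\detX(A+\gamma I) = \gamma\det(I+A/\gamma)$, valid here because $(A+\gamma I),(B+\mu I)\in\PC_1(\H)$ forces $A,B\in\Tr(\H)$ and $\gamma,\mu>0$ (otherwise $A+\gamma I$, whose eigenvalues tend to $\gamma$, could not be positive definite on an infinite-dimensional space). These same conditions guarantee $\beta\in(0,1)$ and that each scalar logarithm appearing below is finite. Moreover, the numerator operator is a convex combination, with positive weights, of the two positive definite operators $A+\gamma I$ and $B+\mu I$, hence itself lies in $\PC_1(\H)$, so its $\detX$ is positive and every $\log\det$ below is real.

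First I would rewrite the argument of the numerator determinant as
\begin{align*}
\frac{1-\alpha}{2}(A+\gamma I) + \frac{1+\alpha}{2}(B+\mu I) = \left[\frac{1-\alpha}{2}A + \frac{1+\alpha}{2}B\right] + \frac{1}{2}s(\gamma,\mu,\alpha)\,I,
\end{align*}
so that, by the extended Fredholm determinant formula,
\begin{align*}
\detX\left(\frac{1-\alpha}{2}(A+\gamma I) + \frac{1+\alpha}{2}(B+\mu I)\right) = \frac{s(\gamma,\mu,\alpha)}{2}\det\left[I + \frac{(1-\alpha)A+(1+\alpha)B}{s(\gamma,\mu,\alpha)}\right].
\end{align*}
Likewise $\detX(A+\gamma I)^{\beta} = \gamma^{\beta}\det(I+A/\gamma)^{\beta}$ and $\detX(B+\mu I)^{1-\beta} = \mu^{1-\beta}\det(I+B/\mu)^{1-\beta}$. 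Substituting these three expressions into Definition~\ref{def:logdet}, taking the logarithm, and separating the operator-valued $\logdet$ contributions from the purely scalar ones, the operator terms reproduce verbatim the first three terms of Eq.~\eqref{equation:LogDet-infinite}, while the scalar remainder is
\begin{align*}
\frac{4}{1-\alpha^2}\left[\log\frac{s(\gamma,\mu,\alpha)}{2} - \beta\log\gamma - (1-\beta)\log\mu + \left(\beta - \tfrac{1-\alpha}{2}\right)\log\frac{\gamma}{\mu}\right].
\end{align*}
Collecting the $\log\gamma$ and $\log\mu$ coefficients (the $\beta$-dependent cross terms cancel, leaving coefficients $-\tfrac{1-\alpha}{2}$ and $-\tfrac{1+\alpha}{2}$ respectively) shows this equals $\frac{4}{1-\alpha^2}\log\frac{s}{2} - \frac{2}{1+\alpha}\log\gamma - \frac{2}{1-\alpha}\log\mu$, i.e. precisely $c(\alpha,\gamma,\mu)$.

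It then remains to prove $c(\alpha,\gamma,\mu)\ge 0$ with equality iff $\gamma=\mu$. Writing $a = \frac{1-\alpha}{2}$, $b=\frac{1+\alpha}{2}$, so that $a,b>0$, $a+b=1$, $\frac{1}{ab}=\frac{4}{1-\alpha^2}$, and $s/2 = a\gamma+b\mu$, multiplying through by $ab>0$ gives $ab\,c = \log(a\gamma+b\mu) - a\log\gamma - b\log\mu$, which is nonnegative by the strict concavity of $\log$ applied to the convex combination of $\gamma$ and $\mu$, with equality if and only if $\gamma=\mu$. The computation is otherwise mechanical, so the only genuinely non-routine point is this positivity claim, which hinges on recognizing $c$ as a Jensen gap for the logarithm. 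The main obstacle is therefore less a difficulty than a matter of careful bookkeeping at the outset: one must verify that all operators whose Fredholm determinants appear are trace class and that each $I+(\cdot)$ shift is strictly positive, so that every $\det$ and every scalar logarithm is simultaneously finite and real; once that well-definedness is secured, the algebraic identity and the concavity inequality follow directly.
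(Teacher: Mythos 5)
Your proof is correct. Note first that the paper does not actually prove Theorem~\ref{theorem:LogDet-infinite}: it is imported verbatim from \citep{Minh:LogDet2016} and used as a black box in the derivation of Theorem~\ref{theorem:LogDet-infinite-2}, so there is no in-paper argument to compare against. Your derivation is a valid, self-contained verification directly from Definition~\ref{def:logdet} and the identity $\detX(A+\gamma I)=\gamma\det(I+A/\gamma)$: the decomposition of the numerator's scalar part as $\tfrac{1}{2}s(\gamma,\mu,\alpha)I$ is right, the bookkeeping of the $\log\gamma$ and $\log\mu$ coefficients (which collapse to $-\tfrac{1-\alpha}{2}$ and $-\tfrac{1+\alpha}{2}$ after the $\beta$-terms cancel against the $(\gamma/\mu)^{\beta-\frac{1-\alpha}{2}}$ factor) reproduces $c(\alpha,\gamma,\mu)$ exactly, and the observation that $ab\,c(\alpha,\gamma,\mu)=\log(a\gamma+b\mu)-a\log\gamma-b\log\mu$ with $a=\tfrac{1-\alpha}{2}$, $b=\tfrac{1+\alpha}{2}$ correctly identifies the sign claim as a Jensen gap for the strictly concave logarithm, giving equality iff $\gamma=\mu$. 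Your preliminary well-definedness checks (that $\gamma,\mu>0$ is forced by positive definiteness in infinite dimensions, that $\beta\in(0,1)$, and that the convex combination of the two operators stays in $\PC_1(\H)$ so all determinants are positive) are exactly the points that need securing.
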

%\vspace{-0.5cm}
%For comparison, we present next the corresponding formula for 
In the finite-dimensional case, on the other hand,
%where 
there is an explicit dependence on 
%the dimension $\H$. 
{$\dim(\H)$}.
%In the special case $\gamma = \mu$, the constant factor $C(\gamma, \mu) = 0$, and Eqs. (\ref{equation:LogDet-infinite}) and (\ref{equation:LogDet-finite})
%are identical.

\begin{theorem}[\citep{Minh:LogDet2016}]\label{theorem:LogDet-finite} Assume that {$\dim(\H) < \infty$}. Let {$\gamma, \mu > 0$}. The divergence {$d^{\alpha}_{\logdet}[(A+\gamma I), (B+\mu I)]$} for {$-1 < \alpha < 1$} is given by
	{
		\begin{align}\label{equation:LogDet-finite}
			&d^{\alpha}_{\logdet}[(A+\gamma I), (B+\mu I)] 
			%\nonumber
			%\\
			%&
			= \frac{4}{1-\alpha^2}\logdet\left(\frac{(1-\alpha)A+(1+\alpha)B}
			%{(1-\alpha)\gamma + (1+\alpha)\mu}
			{s(\gamma, \mu, \alpha)}
			+ I\right)
			%\nonumber
			\\
			&
			\quad\quad -\frac{2}{1+\alpha}\logdet\left(\frac{A}{\gamma} + I\right)
			%\\
			%&
			- \frac{2}{1-\alpha}\logdet\left(\frac{B}{\mu} + I\right) 
			%\\
			%+\left(\frac{4}{1-\alpha^2}\log\left[\frac{(1-\alpha)\gamma + (1+\alpha)\mu}{2}\right]-\frac{2\log\gamma}{1+\alpha} - \frac{2\log\mu}{1-\alpha}\right)\dim(\H).
			%&\left(\frac{4\log([(1-\alpha)\gamma + (1+\alpha)\mu]/2)}{1-\alpha^2}-\frac{2\log\gamma}{1+\alpha} - \frac{2\log\mu}{1-\alpha}\right)\dim(\H).
			%& 
			+ c(\alpha, \gamma, \mu)\dim(\H).
			\nonumber
		\end{align}
	}
\end{theorem}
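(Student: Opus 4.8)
The plan is to reduce the statement to a single, elementary determinant computation, starting from the finite-dimensional form of the Alpha Log-Det divergence in Eq.\eqref{equation:logdet-alpha-finite}. Since $\dim(\H) < \infty$, the operators $(A+\gamma I),(B+\mu I) \in \PC_1(\H)$ are genuine positive definite matrices, so I would apply Eq.\eqref{equation:logdet-alpha-finite} with $P = A+\gamma I$ and $Q = B+\mu I$ to write
\begin{align*}
d^{\alpha}_{\logdet}[(A+\gamma I),(B+\mu I)] = \frac{4}{1-\alpha^2}\log\frac{\det\!\left(\frac{1-\alpha}{2}(A+\gamma I)+\frac{1+\alpha}{2}(B+\mu I)\right)}{\det(A+\gamma I)^{\frac{1-\alpha}{2}}\det(B+\mu I)^{\frac{1+\alpha}{2}}}.
\end{align*}
The one identity driving everything is the finite-dimensional scalar-factoring rule $\det(M + cI) = c^{\dim(\H)}\det(I + M/c)$, valid for self-adjoint $M$ and $c>0$; this is precisely the step that introduces the explicit $\dim(\H)$ dependence absent in Theorem \ref{theorem:LogDet-infinite}, where the extended Fredholm determinant $\detX$ supplies only a single power of each scalar.

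Next I would factor each of the three determinants. For the numerator, set $s = s(\gamma,\mu,\alpha) = (1-\alpha)\gamma + (1+\alpha)\mu$ and note that $\frac{1-\alpha}{2}(A+\gamma I)+\frac{1+\alpha}{2}(B+\mu I) = \frac{1-\alpha}{2}A + \frac{1+\alpha}{2}B + \frac{s}{2}I$; factoring out $s/2$ gives $\left(\frac{s}{2}\right)^{\dim(\H)}\det\!\left(\frac{(1-\alpha)A+(1+\alpha)B}{s} + I\right)$. For the two denominator factors I would use $\det(A+\gamma I) = \gamma^{\dim(\H)}\det(A/\gamma + I)$ and $\det(B+\mu I) = \mu^{\dim(\H)}\det(B/\mu + I)$, raised to the powers $\frac{1-\alpha}{2}$ and $\frac{1+\alpha}{2}$, respectively. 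Taking logarithms and distributing $\frac{4}{1-\alpha^2}$ then cleanly separates the expression into a determinant part and a scalar-power part.

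The determinant part produces the first three terms of the claimed formula once the coefficients are simplified through $\frac{4}{1-\alpha^2}\cdot\frac{1-\alpha}{2} = \frac{2}{1+\alpha}$ and $\frac{4}{1-\alpha^2}\cdot\frac{1+\alpha}{2} = \frac{2}{1-\alpha}$, while the scalar-power part collects to
\begin{align*}
\dim(\H)\cdot\frac{4}{1-\alpha^2}\left[\log\tfrac{s}{2} - \tfrac{1-\alpha}{2}\log\gamma - \tfrac{1+\alpha}{2}\log\mu\right] = \dim(\H)\cdot\left[\frac{4\log(s/2)}{1-\alpha^2} - \frac{2\log\gamma}{1+\alpha} - \frac{2\log\mu}{1-\alpha}\right],
\end{align*}
which is exactly $c(\alpha,\gamma,\mu)\dim(\H)$. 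I do not anticipate a genuine obstacle: the derivation is a routine, fully elementary manipulation, and the only point demanding care is the bookkeeping of the scalar powers. It is worth emphasizing that each of $s/2$, $\gamma$, $\mu$ contributes $\dim(\H)$ powers through the ordinary determinant, which is what turns the single constant $c(\alpha,\gamma,\mu)$ of the infinite-dimensional Theorem \ref{theorem:LogDet-infinite} into $c(\alpha,\gamma,\mu)\dim(\H)$ here; one also sees directly that the two formulas coincide when $\gamma=\mu$, since then $c(\alpha,\gamma,\gamma)=0$.
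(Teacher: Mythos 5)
Your derivation is correct. Note that the paper does not actually prove this statement itself — it is imported verbatim from \citep{Minh:LogDet2016} and used as a black box (e.g.\ in the proof of Theorem \ref{theorem:LogDet-finite-2}) — so your proposal supplies an elementary argument that the paper omits. The argument is sound: in finite dimensions the divergence between the positive definite matrices $A+\gamma I$ and $B+\mu I$ is given by the classical formula Eq.\eqref{equation:logdet-alpha-finite} (Definition \ref{def:logdet} with its $\beta$-weights and $\detX$ is explicitly reserved for $\dim(\H)=\infty$), and everything then follows from $\det(M+cI)=c^{\dim(\H)}\det(I+M/c)$ applied three times, with the coefficient identities $\frac{4}{1-\alpha^2}\cdot\frac{1-\alpha}{2}=\frac{2}{1+\alpha}$ and $\frac{4}{1-\alpha^2}\cdot\frac{1+\alpha}{2}=\frac{2}{1-\alpha}$ and the collected scalar contribution
\begin{align*}
\dim(\H)\left[\frac{4\log(s/2)}{1-\alpha^2}-\frac{2\log\gamma}{1+\alpha}-\frac{2\log\mu}{1-\alpha}\right]=c(\alpha,\gamma,\mu)\dim(\H),
\end{align*}
exactly matching the paper's definition of $c(\alpha,\gamma,\mu)$. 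Your closing observation — that the ordinary determinant contributes $\dim(\H)$ powers of each scalar where $\detX$ contributes only one, which is the sole source of the discrepancy between Theorems \ref{theorem:LogDet-infinite} and \ref{theorem:LogDet-finite} and why the two coincide when $\gamma=\mu$ — is also the right way to understand the pair of statements.
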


\begin{proof}
	\textbf{of Theorem \ref{theorem:LogDet-infinite-2}}
%		[\textbf{Proof of Theorem \ref{theorem:LogDet-infinite-2}}]
	By Lemma \ref{lemma:det-1}, we have
	{
		\begin{align*}
			&\log\det\left[\frac{(1-\alpha)AA^{*}+(1+\alpha)BB^{*}}{(1-\alpha)\gamma + (1+\alpha)\mu} + I_{\H}\right] 
			\\
			&= \logdet
			\left[
			%\frac{1}{\gamma + \mu}
			\frac{1}{(1-\alpha)\gamma + (1+\alpha)\mu}
			\begin{pmatrix}
				(1-\alpha)A^{*}A & \sqrt{1-\alpha^2}A^{*}B\\
				\sqrt{1-\alpha^2}B^{*}A & (1+\alpha)B^{*}B
			\end{pmatrix}
			%}
			%{(1-\alpha)\gamma + (1+\alpha)\mu}
			+ I_{\H_1 \oplus \H_2}
			\right],
			\\
			%\end{align*}
			%{
			%
			%\begin{align*}
			&\log\det\left(\frac{AA^{*}}{\gamma} + I_{\H}\right) = \log\det\left(\frac{A^{*}A}{\gamma} + I_{\H_1}\right),
			\\ 
			&\log\det\left(\frac{BB^{*}}{\mu} + I_{\H}\right) = \log\det\left(\frac{B^{*}B}{\mu} + I_{\H_2}\right).
		\end{align*}
	}
	Combining all of these expressions with Theorem \ref{theorem:LogDet-infinite}, we obtain the first expression.
	
	By definition of the Hilbert-Carleman determinant,
	\begin{align*}
	&\log\det\left(\frac{A^{*}A}{\gamma} + I_{\H_1}\right) =\log\dettwo\left(\frac{A^{*}A}{\gamma} + I_{\H_1}\right) + \trace\left(\frac{A^{*}A}{\gamma}\right),
	\\
	&\log\det\left(\frac{B^{*}B}{\mu} + I_{\H_2}\right) = \log\dettwo\left(\frac{B^{*}B}{\mu} + I_{\H_2}\right)
	+\trace\left(\frac{B^{*}B}{\mu}\right),
	\\
	&\logdet
	\left[
	%\frac{1}{\gamma + \mu}
	\frac{1}{(1-\alpha)\gamma + (1+\alpha)\mu}
	\begin{pmatrix}
		(1-\alpha)A^{*}A & \sqrt{1-\alpha^2}A^{*}B\\
		\sqrt{1-\alpha^2}B^{*}A & (1+\alpha)B^{*}B
	\end{pmatrix}
	%}
	%{(1-\alpha)\gamma + (1+\alpha)\mu}
	+ I_{\H_1 \oplus \H_2}
	\right]
	\\
	& = \log\dettwo
	\left[
	%\frac{1}{\gamma + \mu}
	\frac{1}{(1-\alpha)\gamma + (1+\alpha)\mu}
	\begin{pmatrix}
		(1-\alpha)A^{*}A & \sqrt{1-\alpha^2}A^{*}B\\
		\sqrt{1-\alpha^2}B^{*}A & (1+\alpha)B^{*}B
	\end{pmatrix}
	%}
	%{(1-\alpha)\gamma + (1+\alpha)\mu}
	+ I_{\H_1 \oplus \H_2}
	\right]
	\\
	&\quad +\frac{1-\alpha}{(1-\alpha)\gamma + (1+\alpha)\mu}\trace(A^{*}A) + \frac{1+\alpha}{(1-\alpha)\gamma + (1+\alpha)\mu}\trace(B^{*}B).
	\end{align*}
	Noting that $\beta = \frac{(1-\alpha)\gamma}{(1-\alpha)\gamma + (1+\alpha)\mu}$, $1-\beta = \frac{(1+\alpha)\mu}{(1-\alpha)\gamma + (1+\alpha)\mu}$, we obtain the final result.
\end{proof}

\begin{comment}

\begin{theorem}\label{theorem:LogDet-finite-2}
Let {$\H_1, \H_2$} be 
%two 
separable Hilbert spaces. Let {$A,B:\H_1 \mapto \H_2$} be bounded linear operators
such that {$A^{*}A, B^{*}B:\H_1 \mapto \H_1$} are trace class operators. Assume that {$\dim(\H_2) < \infty$}. Then 
{
\begin{align}
&d^{\alpha}_{\logdet}[(AA^{*}+ \gamma I_{\H_2}),(BB^{*}+ \mu I_{\H_2})]
\\
&=\nonumber
%\\
%\begin{align}
%\nonumber
%\\
%\log\det \left(\frac{A+B}{\gamma + \mu} + I\right) 
%&
\frac{4}{1-\alpha^2}\logdet
\left[
%\frac{1}{\gamma + \mu}
\frac{1}{{(1-\alpha)\gamma + (1+\alpha)\mu}}
\begin{pmatrix}
(1-\alpha)A^{*}A & \sqrt{1-\alpha^2}A^{*}B\\
\sqrt{1-\alpha^2}B^{*}A & (1+\alpha)B^{*}B
\end{pmatrix}
%}
%{(1-\alpha)\gamma + (1+\alpha)\mu}
+ I_{\H_1} \otimes I_2
\right]
\nonumber
\\
&-\frac{2}{1+\alpha}\logdet\left(\frac{A^{*}A}{\gamma} + I_{\H_1}\right) - \frac{2}{1-\alpha}\logdet\left(\frac{B^{*}B}{\mu} + I_{\H_1}\right)
%\nonumber
%\\
%& 
+ c(\alpha, \gamma, \mu)\dim(\H_2).
\nonumber
%&+\frac{4\log([(1-\alpha)\gamma + (1+\alpha)\mu]/2)}{1-\alpha^2}-\frac{2\log\gamma}{1+\alpha} - \frac{2\log\mu}{1-\alpha}.
\end{align}
}
\end{theorem}
\end{comment}
\begin{proof}
	\textbf{of Theorem \ref{theorem:LogDet-finite-2}}
	%	[\textbf{Proof of Theorem \ref{theorem:LogDet-finite-2}}]
	This is similar to the proof of Theorem \ref{theorem:LogDet-infinite-2}, except that we combine Lemma \ref{lemma:det-1} with Theorem \ref{theorem:LogDet-finite}.
\end{proof}

\begin{proof}
	\textbf{of Theorem \ref{theorem:LogDet-RKHS-infinite}}
%		[\textbf{Proof of Theorem \ref{theorem:LogDet-RKHS-infinite}}]
	Let {$A = \frac{1}{\sqrt{m_1}}\Phi(\Xbf^1)J_{m_1}:\R^{m_1} \mapto \H_K$}, {$B = \frac{1}{\sqrt{m_2}}\Phi(\Xbf^2)J_{m_2}:\R^{m_2} \mapto \H_K$}, then 
	%\begin{align*}
	$AA^{*} = C_{\Phi(\Xbf^1)}, \;\; BB^{*} = C_{\Phi(\Xbf^2)},
	%\\
	%&
	A^{*}A = \frac{1}{m_1}J_{m_1}K[\Xbf^1]J_{m_1}, \; B^{*}B = \frac{1}{m_2}J_{m_1}K[\Xbf^2]J_{m_2}, 
	%\\
	A^{*}B = \frac{1}{\sqrt{m_1m_2}}J_{m_1}K[\Xbf^1,\Xbf^2]J_{m_2}, \; B^{*}A = \frac{1}{\sqrt{m_1m_2}}J_{m_2}K[\Xbf^2,\Xbf^1]J_{m_1}$.
	%\end{align*}
	We then obtained the desired result by applying Theorem \ref{theorem:LogDet-infinite-2}.
\end{proof}

\begin{proof}
	\textbf{of Theorem \ref{theorem:LogDet-RKHS-finite}}
	%	[\textbf{Proof of Theorem \ref{theorem:LogDet-RKHS-finite}}]
	 This is similar to the proof of Theorem \ref{theorem:LogDet-RKHS-infinite}, except that we apply the formulas obtained from Theorem \ref{theorem:LogDet-finite-2}.
\end{proof}

\begin{proof}
	\textbf{of Theorem \ref{theorem:logdet-alpha-1-AAstar-BBstar-representation-switch}}
	%	[\textbf{Proof of Theorem \ref{theorem:logdet-alpha-1-AAstar-BBstar-representation-switch}}]
	Consider first the case $\gamma = 1, \mu =1$. 
	Using the identity $(I_{\H}+BB^{*})^{-1} = I_{\H}- B(I_{\H_2}+B^{*}B)^{-1}B^{*}$, we have for the first term,
	\begin{align*}
		&	\trX[(BB^{*}+I_{\H})^{-1}(AA^{*} +I_{\H})-I_{\H}] = \trace[(BB^{*}+I_{\H})^{-1} + (BB^{*}+I_{\H})^{-1}AA^{*}-I_{\H}]
		\\
		&= \trace[-B(I_{\H_2}+B^{*}B)^{-1}B^{*} +(I_{\H}-B(I_{\H_2}+B^{*}B)^{-1}B^{*})AA^{*}] 
		\\
		&= \trace[A^{*}A] + \trace[-B^{*}B(I_{\H_2}+B^{*}B)^{-1}  -B^{*}AA^{*}B(I_{\H_2}+B^{*}B)^{-1}]
		\\
		& = \trace[A^{*}A] - \trace(B^{*}B + B^{*}AA^{*}B)(I_{\H_2}+B^{*}B)^{-1}].
	\end{align*}
	For the second term,
	\begin{align*}
		&\log\detX[(BB^{*}+I_{\H})^{-1}(AA^{*}+I_{\H})] = \log\det[(BB^{*}+I_{\H})^{-1}(AA^{*}+I_{\H})]
		\\
		& = -\log\det(BB^{*}+I_{\H}) + \log\det(AA^{*}+I_{\H})
		= \log\det(A^{*}A+I_{\H_1}) - \log\det(B^{*}B+I_{\H_2}).
	\end{align*}
	Combining these two terms, we get
	\begin{align*}
		&d^1_{\logdet}[(I_{\H}+AA^{*}), (I_{\H}+BB^{*})]
		\\
		& 
		= \trX[(BB^{*}+I_{\H})^{-1}(AA^{*}+I_{\H})] - \log\detX[(BB^{*}+I_{\H})^{-1}(AA^{*}+I_{\H})]
		\\
		&=  \trace[A^{*}A] - \trace[(B^{*}B+B^{*}AA^{*}B)(I_{\H_2}+B^{*}B)^{-1}]
		\\
		&\quad -\log\det(A^{*}A+I_{\H_1}) + \log\det(B^{*}B+I_{\H_2})
		\\
		& = - \log\dettwo(I_{\H_1}+A^{*}A) - \log\dettwo[(I_{\H_2}+B^{*}B)^{-1}] - \trace[B^{*}AA^{*}B(I_{\H_2}+B^{*}B)^{-1}].
	\end{align*}
	Here we have applied the following formulas, for $A \in \Tr(\H)$,
	\begin{align}
		\log\dettwo(I+A) &= \log\det[(I+A)\exp(-A)] = \logdet(I+A)-\trace(A),
		\\
		%\end{align}
		%\begin{align}
		\log\dettwo[(I+A)^{-1}] &= \log\det[(I+A)^{-1}\exp(A(I+A)^{-1})]
		\nonumber
		\\
		& 
		= -\log\det(I+A) + \trace[A(I+A)^{-1}].
	\end{align}
	It follows that for any $\gamma,\mu \in \R, \gamma > 0, \mu > 0$,
	{\small
		\begin{align*}
			&d^1_{\logdet}[(AA^{*}+\gamma I_{\H}), (BB^{*}+\mu I_{\H})]
			\\
			&= (\frac{\gamma}{\mu}-1)\log\frac{\gamma}{\mu}+\trX[(BB^{*}+\mu I_{\H})^{-1}(AA^{*}+\gamma I_{\H})-I_{\H}] 
			%\\
			%&\quad 
			- \frac{\gamma}{\mu}\log\detX[(BB^{*}+\mu I_{\H})^{-1}(AA^{*} + \gamma I_{\H})]
			\\
			%&\trX[(BB^{*}+\gamma I)^{-1}(AA^{*}+\gamma I)-I] 
			&= (\frac{\gamma}{\mu}-1 - \log\frac{\gamma}{\mu}) + \frac{\gamma}{\mu}\trace\left[\left(\frac{BB^{*}}{\mu} +I_{\H}\right)^{-1}\left(\frac{AA^{*}}{\gamma} + I_{\H}\right)-I_{\H}\right]
			\\
			& \quad -\frac{\gamma}{\mu}\log\det\left[\left(\frac{BB^{*}}{\mu}+I_{\H}\right)^{-1}\left(\frac{AA^{*}}{\gamma} + I_{\H}\right)\right]
			\\
			&=(\frac{\gamma}{\mu}-1 - \log\frac{\gamma}{\mu}) +\frac{\gamma}{\mu}\trace\left[\frac{A^{*}A}{\gamma}\right] - \frac{\gamma}{\mu}\trace\left[\left(\frac{B^{*}B}{\mu} + \frac{B^{*}AA^{*}B}{\gamma\mu}\right)\left(I_{\H_2}+\frac{B^{*}B}{\mu}\right)^{-1}\right]
			%\end{align*}
			%\begin{align*}
			%&\log\detX[(BB^{*}+\gamma I)^{-1}(AA^{*} + \gamma I)]
			%& = \log\det\left[\left(\frac{BB^{*}}{\gamma}+I\right)^{-1}\left(\frac{AA^{*}}{\gamma} + I\right)\right]
			%\\
			%& 
			%= -\log\det\left(\frac{BB^{*}}{\gamma}+I\right) + \log\det\left(\frac{AA^{*}}{\gamma}+I\right)
			%= 
			\\
			&\quad +\frac{\gamma}{\mu}\log\det\left(\frac{B^{*}B}{\mu}+I_{\H_2}\right) 
			%\\
			%&\quad
			-\frac{\gamma}{\mu} \log\det\left(\frac{A^{*}A}{\gamma}+I_{\H_1}\right)
			\\
			& = (\frac{\gamma}{\mu}-1 - \log\frac{\gamma}{\mu}) - \frac{\gamma}{\mu}\log\dettwo\left(I_{\H_1}+\frac{A^{*}A}{\gamma}\right) - \frac{\gamma}{\mu}\log\dettwo\left[\left(I_{\H_2}+\frac{B^{*}B}{\mu}\right)^{-1}\right] 
			\\
			&\quad - \trace\left[\frac{B^{*}AA^{*}B}{\gamma \mu}\left(I_{\H_2}+\frac{B^{*}B}{\mu}\right)^{-1}\right].
		\end{align*}
	}
	%Combining these two expressions gives the desired formula.
\end{proof}

\begin{proof}
	\textbf{of Theorem \ref{theorem:logdet-alpha-1-AAstar-BBstar-representation-switch-finite}}
	%	[\textbf{Proof of Theorem \ref{theorem:logdet-alpha-1-AAstar-BBstar-representation-switch-finite}}]
	Since $\dim(\H) < \infty$, we utilize the finite-dimensional formula
	\begin{align*}
		&d^1_{\logdet}[(AA^{*} + \gamma I_{\H}), (BB^{*} + \mu I_{\H})]
		\\
		& = \trace[(BB^{*}+\mu I_{\H})^{-1}(AA^{*}+\gamma I_{\H}) -I_{\H}]
		- \log\det[(BB^{*}+\mu I_{\H})^{-1}(AA^{*} + \gamma I_{\H})]
		\\
		& = \frac{\gamma}{\mu}\trace\left[\left(\frac{BB^{*}}{\mu}+ I_{\H}\right)^{-1}\left(\frac{AA^{*}}{\gamma}+ I_{\H}\right)\right] - \dim(\H) 
		\\
		& \quad + \log\det(BB^{*}+\mu I_{\H}) - \log\det(AA^{*} + \gamma I_{\H})
		\\
		&= \frac{\gamma}{\mu}\trace\left[I_{\H}+\frac{AA^{*}}{\gamma}\right]-\frac{\gamma}{\mu}\trace\left[\left(\frac{B^{*}B}{\mu} + \frac{B^{*}AA^{*}B}{\gamma \mu}\right)\left(I_{\H_2}+\frac{B^{*}B}{\mu}\right)^{-1}\right]-\dim(\H) 
		\\
		&\quad + \log\det\left(\frac{BB^{*}}{\mu} + I_{\H}\right) + \dim(\H)\log(\mu) - \log\det\left(\frac{AA^{*}}{\gamma} + I_{\H}\right) - \dim(\H)\log(\gamma)
		\\
		& = \trace\left[\frac{A^{*}A}{\mu}\right] - \frac{\gamma}{\mu}\trace\left[\left(\frac{B^{*}B}{\mu} + \frac{B^{*}AA^{*}B}{\gamma \mu}\right)\left(I_{\H_2}+\frac{B^{*}B}{\mu}\right)^{-1}\right]
		\\
		&\quad+ \log\det\left(\frac{B^{*}B}{\mu} + I_{\H_2}\right)
		- \log\det\left(\frac{A^{*}A}{\gamma} + I_{\H_1}\right)
		+ (\frac{\gamma}{\mu}-1 - \log\frac{\gamma}{\mu})\dim(\H).
	\end{align*}
\end{proof}

\begin{proof}
	\textbf{of Theorem \ref{theorem:logdet-approximate-RKHS-infinite}}
%		[\textbf{Proof of Theorem \ref{theorem:logdet-approximate-RKHS-infinite}}]
	By Theorem \ref{theorem:CPhi-concentration},
	%\begin{align*}
	$||C_{\Phi(\Xbf^i)}||_{\HS(\H_K)} \leq 2\kappa^2$, $||C_{\Phi,\rho_i}||_{\HS(\H_K)} \leq 2\kappa^2$,
	$i=1,2$. Furthermore, with probability at least $1-\delta$, the following hold simultaneously
	{\small
		\begin{align*}
			||C_{\Phi(\Xbf^1)} - C_{\Phi, \rho_1}||_{\HS(\H_K)} &\leq 3\kappa^2\left(\frac{2\log\frac{8}{\delta}}{m_1} + \sqrt{\frac{2\log\frac{8}{\delta}}{m_1}}\right),
			\\
			\text{and }||C_{\Phi(\Xbf^2)} - C_{\Phi,\rho_2}||_{\HS(\H_K)} &\leq 3\kappa^2\left(\frac{2\log\frac{8}{\delta}}{m_2} + \sqrt{\frac{2\log\frac{8}{\delta}}{m_2}}\right).
		\end{align*} 
	}
	%\end{align*}
	Applying Theorem \ref{theorem:logdet-approx-infinite-sequence}, we have with probability at least $1-\delta$,
	\begin{align*}
		&\Delta = \left|d^{\alpha}_{\logdet}[(C_{\Phi(\Xbf^1)} + \gamma I_{\H_K}), (C_{\Phi(\Xbf^2)} + \gamma I_{\H_K}) ] 
		-d^{\alpha}_{\logdet}[(C_{\Phi,\rho_1} + \gamma I_{\H_K}), (C_{\Phi, \rho_2} + \gamma I_{\H_K})]
		\right|
		\\
		&\leq
		\frac{1}{\gamma^2}\left(1+\frac{1}{\gamma}||C_{\Phi(\Xbf^2)}||\right)\left(1+\frac{1}{\gamma}||C_{\Phi,\rho_2}||
		\right)
		\\
		&\quad \times \left[||C_{\Phi(\Xbf^1)} -C_{\Phi,\rho_1}||_{\HS} + \left(1+\frac{1}{2\gamma}||C_{\Phi(\Xbf^1)}|| + \frac{1}{2\gamma}||C_{\Phi,\rho_1}||\right)||C_{\Phi(\Xbf^2)} - C_{\Phi,\rho_2}||_{\HS}\right]
		%\nonumber
		\\
		&\quad \times \left[||C_{\Phi(\Xbf^1)}||_{\HS}+||C_{\Phi(\Xbf^2)}||_{\HS} + ||C_{\Phi,\rho_1}||_{\HS} + ||C_{\Phi,\rho_2}||_{\HS} \right.
		\\
		&\quad \quad \left.+ \frac{1}{\gamma}(||C_{\Phi(\Xbf^1)}||_{\HS} + ||C_{\Xbf^2}||_{\HS})(||C_{\Phi,\rho_1}||_{\HS} + ||C_{\Phi,\rho_2}||_{\HS})\right]
		\\
		& \leq \frac{1}{\gamma^2}\left(1+ \frac{2\kappa^2}{\gamma}\right)^2
		\left[3\kappa^2\left(\frac{2\log\frac{8}{\delta}}{m_1} + \sqrt{\frac{2\log\frac{8}{\delta}}{m_1}}\right)
		+ \left(1+\frac{2\kappa^2}{\gamma}\right)3\kappa^2\left(\frac{2\log\frac{8}{\delta}}{m_2} + \sqrt{\frac{2\log\frac{8}{\delta}}{m_2}}\right)
		\right]
		\\
		&\quad \times \left(8\kappa^2 + \frac{4\kappa^4}{\gamma}\right)
		\\
		%& =\frac{12}{\gamma^2}\left(1+ \frac{2\kappa_2^2}{\gamma}\right)^2\left(\kappa_1^2 + \kappa_2^2 + \frac{\kappa_1^2\kappa_2^2}{\gamma}\right)
		%\\
		%&\quad \times \left[\kappa_1^2\left(\frac{2\log\frac{8}{\delta}}{m_1} + \sqrt{\frac{2\log\frac{8}{\delta}}{m_1}}\right)
		%+ \left(1+\frac{\kappa_1^2}{\gamma}\right)\kappa_2^2\left(\frac{2\log\frac{8}{\delta}}{m_2} + \sqrt{\frac{2\log\frac{8}{\delta}}{m_2}}\right)
		%\right].
		%\\
		& = \frac{12\kappa^4}{\gamma^2}\left(1+ \frac{2\kappa^2}{\gamma}\right)^2\left(2 +  \frac{\kappa^2}{\gamma}\right)
		%\\
		%&\quad \times 
		\left[\left(\frac{2\log\frac{8}{\delta}}{m_1} + \sqrt{\frac{2\log\frac{8}{\delta}}{m_1}}\right)
		+ \left(1+\frac{2\kappa^2}{\gamma}\right)\left(\frac{2\log\frac{8}{\delta}}{m_2} + \sqrt{\frac{2\log\frac{8}{\delta}}{m_2}}\right)
		\right].
	\end{align*}
\end{proof}

\begin{proof}
	\textbf{of Theorem \ref{theorem:divergence-RKHS-characteristic}}
	From the divergence properties of $D^{\gamma}_{\Rrm,r}$ (\citep{Minh:LogDet2016}), 
	we have 
	\begin{align*}
		D^{\gamma}_{\Rrm,r}[\Ncal(\mu_{\Phi,\rho_1}, C_{\Phi,\rho_1})||\Ncal(\mu_{\Phi,\rho_2}, C_{\Phi,\rho_2})]\geq 0
	\end{align*}
	for any pair $\rho_1,\rho_2$.
	Furthermore,
	\begin{align*}
		D^{\gamma}_{\Rrm,r}[\Ncal(\mu_{\Phi,\rho_1}, C_{\Phi,\rho_1})||\Ncal(\mu_{\Phi,\rho_2}, C_{\Phi,\rho_2})]=0 
		\equivalent \mu_{\Phi,\rho_1} = \mu_{\Phi,\rho_2}, C_{\Phi,\rho_1} = C_{\Phi,\rho_2}.
	\end{align*}
	Since $K$ is characteristic, $\mu_{\Phi,\rho_1} = \mu_{\Phi,\rho_2} \equivalent \rho_1 = \rho_2$.
\end{proof}

\begin{lemma}
	\label{lemma:inverse-AAstar-plus-BBstar}
	%Let $\H_1, \H_2, \H$ be separable Hilbert spaces.
	Let $A \in \Lcal(\H_1,\H)$, $B \in \Lcal(\H_2, \H)$, $m \in \H$. Then
	{\small
		\begin{align}
			(I_{\H} + AA^{*} + BB^{*})^{-1} =
			I_{\H} - (A \;\; B)\begin{bmatrix}
				I_{\H_1 \oplus \H_2} + 	\begin{pmatrix}A^{*}A & A^{*}B\\B^{*}A & B^{*}B\end{pmatrix}
			\end{bmatrix}^{-1}
			\begin{pmatrix}
				A^{*}\\B^{*}
			\end{pmatrix}. 
		\end{align}
		\begin{align}
			&\la m, (I_{\H}+AA^{*})^{-1}m\ra = ||m||^2 - \la A^{*}m, (I_{\H_1}+A^{*}A)^{-1}A^{*}m\ra,
			\\
			&\la m, (I_{\H}+AA^{*}+BB^{*})^{-1}m\ra = 
			||m||^2 - \left\la \begin{pmatrix}
				A^{*}m\\B^{*}m
			\end{pmatrix},
			\begin{bmatrix}
				I_{\H_1 \oplus \H_2} + 	\begin{pmatrix}A^{*}A & A^{*}B\\B^{*}A & B^{*}B\end{pmatrix}
			\end{bmatrix}^{-1}
			\begin{pmatrix}
				A^{*}m\\B^{*}m
			\end{pmatrix} \right\ra .
		\end{align}
	}
\end{lemma}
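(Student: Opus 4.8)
The plan is to recognize this as the Sherman--Morrison--Woodbury (or ``push-through'') identity in operator form. Following the block operator setup already used in the proof of Lemma \ref{lemma:det-1}, I set $C = (A\;\; B):\H_1\oplus\H_2 \mapto \H$, whose adjoint is $C^{*} = \begin{pmatrix}A^{*}\\ B^{*}\end{pmatrix}:\H \mapto \H_1\oplus\H_2$. A direct computation gives $CC^{*} = AA^{*}+BB^{*}$ on $\H$ and $C^{*}C = \begin{pmatrix}A^{*}A & A^{*}B\\ B^{*}A & B^{*}B\end{pmatrix}$ on $\H_1\oplus\H_2$. Thus the claimed inverse formula is precisely $(I_{\H}+CC^{*})^{-1} = I_{\H} - C(I_{\H_1\oplus\H_2}+C^{*}C)^{-1}C^{*}$, and it suffices to establish this general identity for an arbitrary bounded operator $C$.

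First I would note that both $I_{\H}+CC^{*}$ and $I+C^{*}C$ are invertible: since $CC^{*}$ and $C^{*}C$ are positive operators, we have $I+CC^{*}\geq I$ and $I+C^{*}C\geq I$, so both are bounded below and hence boundedly invertible. Next I would verify the identity by expanding $(I+CC^{*})[I - C(I+C^{*}C)^{-1}C^{*}]$. The two terms carrying $C^{*}$ on the right combine, via $CC^{*}C = C(C^{*}C)$, into $-C[(I+C^{*}C)^{-1} + C^{*}C(I+C^{*}C)^{-1}]C^{*}$, and the bracket telescopes using the elementary fact $(I+C^{*}C)^{-1} + C^{*}C(I+C^{*}C)^{-1} = (I+C^{*}C)(I+C^{*}C)^{-1} = I$. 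This leaves $I + CC^{*} - CC^{*} = I$; the product in the opposite order is handled identically, so the claimed operator is the two-sided inverse.

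The two quadratic-form identities then follow by substitution. For the second, I apply the inverse formula to $m\in\H$ and use $C^{*}m = \begin{pmatrix}A^{*}m\\ B^{*}m\end{pmatrix}$ together with the adjoint relation $\la m, C\zeta\ra_{\H} = \la C^{*}m, \zeta\ra_{\H_1\oplus\H_2}$, which rewrites $\la m, C(I+C^{*}C)^{-1}C^{*}m\ra$ as the stated inner product on $\H_1\oplus\H_2$. The first identity is simply the special case $B=0$ (equivalently $\H_2=\{0\}$), where the inverse formula reduces to $(I_{\H}+AA^{*})^{-1} = I_{\H}-A(I_{\H_1}+A^{*}A)^{-1}A^{*}$ and the same adjoint manipulation yields the result.

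There is no serious obstacle here: the whole argument rests on the push-through identity plus positivity for invertibility, with no spectral, determinant, or trace-class input required (unlike Lemma \ref{lemma:det-1}, this statement holds for arbitrary bounded $A,B$). The only point demanding minor care is keeping the block adjoint structure $(A\;\;B)^{*} = \begin{pmatrix}A^{*}\\ B^{*}\end{pmatrix}$ consistent and writing the telescoping step with $(I+C^{*}C)^{-1}$ on the correct side, since although $C^{*}C$ and $(I+C^{*}C)^{-1}$ commute, the intermediate products must be grouped carefully.
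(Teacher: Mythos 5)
Your proposal is correct and follows essentially the same route as the paper: identify $AA^{*}+BB^{*}$ with $CC^{*}$ for the block operator $C=(A\;\;B)$ and apply the Sherman--Morrison--Woodbury (push-through) identity, then substitute to obtain the quadratic-form statements. The only difference is that you verify the Woodbury identity by direct expansion where the paper simply invokes it, which makes your argument marginally more self-contained but not materially different.
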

\begin{proof}
	%\end{align}
	We have $AA^{*} + BB^{*} = (A \;\; B)\begin{pmatrix}A^{*}\\B^{*}\end{pmatrix}$, with $(A \;\; B): \H_1 \oplus \H_2 \mapto \H$ defined by $(A \; B)\begin{pmatrix}v_1 \\v_2\end{pmatrix} = Av_1 + Bv_2$, 
	$v_1 \in \H_1, v_2 \in \H_2$. 
	By the Sherman–Morrison–Woodbury formula, we have
	%\begin{align*}
	$(I_{\H} + AA^{*})^{-1} = I_{\H} - A(I_{\H_1} + A^{*}A)^{-1}A^{*}$.
	Applying this expression gives
	{\small
		\begin{align*}
			(I_{\H} + AA^{*} + BB^{*})^{-1} &= I_{\H} - (A \;\; B)\begin{bmatrix}
				I_{\H_1 \oplus \H_2} + 	\begin{pmatrix}A^{*}\\B^{*}\end{pmatrix}(A \;\; B)
			\end{bmatrix}^{-1}
			\begin{pmatrix}
				A^{*}\\B^{*}
			\end{pmatrix}
			\\
			& = I_{\H} - (A \;\; B)\begin{bmatrix}
				I_{\H_1 \oplus \H_2} + 	\begin{pmatrix}A^{*}A & A^{*}B\\B^{*}A & B^{*}B\end{pmatrix}
			\end{bmatrix}^{-1}
			\begin{pmatrix}
				A^{*}\\B^{*}
			\end{pmatrix}.
		\end{align*}
	}
	It thus follows that for $m \in \H$,
	{\small
		\begin{align*}
			&\la m, (I_{\H} +AA^{*})^{-1}m\ra = ||m||^2 - \la m, A(I_{\H_1}+A^{*}A)^{-1}A^{*}m\ra = ||m||^2 - \la A^{*}m, (I_{\H_1} + A^{*}A)^{-1}A^{*}m\ra,
			\\
			&\la m, (I_{\H}+AA^{*}+BB^{*})^{-1}m\ra = 
			||m||^2 - \left\la m, (A \;\; B)\begin{bmatrix}
				I_{\H_1 \oplus \H_2} + 	\begin{pmatrix}A^{*}A & A^{*}B\\B^{*}A & B^{*}B\end{pmatrix}
			\end{bmatrix}^{-1}
			\begin{pmatrix}
				A^{*}\\B^{*}
			\end{pmatrix}m\right\ra
			\\
			&= ||m||^2 - \left\la \begin{pmatrix}
				A^{*}m\\B^{*}m
			\end{pmatrix},
			\begin{bmatrix}
				I_{\H_1 \oplus \H_2} + 	\begin{pmatrix}A^{*}A & A^{*}B\\B^{*}A & B^{*}B\end{pmatrix}
			\end{bmatrix}^{-1}
			\begin{pmatrix}
				A^{*}m\\B^{*}m
			\end{pmatrix} \right\ra. 
		\end{align*}
	}
\end{proof}

\begin{lemma}
	\label{lemma:muPhi-norm-HK}
	Let $\Xbf^i = (x^i_j)_{j=1}^{m_i}$, $i=1,2$, be finite samples drawn from $\Xcal$. Then
	\begin{align}
		\Phi(\Xbf^i)^{*}\mu_{\Phi(\Xbf^j)} &= \frac{1}{m_j}K[\Xbf^i,\Xbf^j]\1_{m_j} \in \R^{m_i},
		\\
		||\mu_{\Phi(\Xbf^1)} - \mu_{\Phi(\Xbf^2)}||^2_{\H_K} &= \frac{1}{m_1^2}\1_{m_1}^TK[\Xbf^1]\1_{m_1} + \frac{1}{m_2^2}\1_{m^2}^TK[\Xbf^2]\1_{m_2}
		%\nonumber
		\\
		&\quad-\frac{2}{m_1m_2}\1_{m_1}^TK[\Xbf^1,\Xbf^2]\1_{m_2}.
		\nonumber
	\end{align}
\end{lemma}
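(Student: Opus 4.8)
The plan is to reduce both identities to the two defining relations already recorded in this section, namely $\mu_{\Phi(\Xbf^j)} = \frac{1}{m_j}\Phi(\Xbf^j)\1_{m_j}$ and $K[\Xbf^i,\Xbf^j] = \Phi(\Xbf^i)^{*}\Phi(\Xbf^j)$, together with the reproducing property $\la K_x, f\ra_{\H_K} = f(x)$. The latter identifies the adjoint operator $\Phi(\Xbf^i)^{*}:\H_K \mapto \R^{m_i}$ as the sampling map $f \mapsto (f(x^i_l))_{l=1}^{m_i}$, since for $\b \in \R^{m_i}$ and $f \in \H_K$ we have $\la \Phi(\Xbf^i)^{*}f, \b\ra_{\R^{m_i}} = \la f, \Phi(\Xbf^i)\b\ra_{\H_K} = \sum_{l} b_l f(x^i_l)$. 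In particular the $(l,k)$ entry of $\Phi(\Xbf^i)^{*}\Phi(\Xbf^j)$ equals $\la K_{x^i_l}, K_{x^j_k}\ra_{\H_K} = K(x^i_l, x^j_k)$, confirming it is the $m_i \times m_j$ Gram matrix $K[\Xbf^i,\Xbf^j]$.

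For the first identity I would substitute the representation of the empirical mean and use linearity of the adjoint, obtaining
\begin{align}
\Phi(\Xbf^i)^{*}\mu_{\Phi(\Xbf^j)} = \frac{1}{m_j}\Phi(\Xbf^i)^{*}\Phi(\Xbf^j)\1_{m_j} = \frac{1}{m_j}K[\Xbf^i,\Xbf^j]\1_{m_j} \in \R^{m_i}.
\end{align}
For the second identity I would expand the squared norm of the difference bilinearly,
\begin{align}
\|\mu_{\Phi(\Xbf^1)} - \mu_{\Phi(\Xbf^2)}\|^2_{\H_K} = \|\mu_{\Phi(\Xbf^1)}\|^2_{\H_K} + \|\mu_{\Phi(\Xbf^2)}\|^2_{\H_K} - 2\la \mu_{\Phi(\Xbf^1)}, \mu_{\Phi(\Xbf^2)}\ra_{\H_K},
\end{align}
and then evaluate each term by the same $\Phi(\Xbf)^{*}\Phi(\Xbf)$ reduction. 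Each diagonal term gives $\|\mu_{\Phi(\Xbf^i)}\|^2_{\H_K} = \frac{1}{m_i^2}\1_{m_i}^T\Phi(\Xbf^i)^{*}\Phi(\Xbf^i)\1_{m_i} = \frac{1}{m_i^2}\1_{m_i}^T K[\Xbf^i]\1_{m_i}$, while the cross term gives $\la \mu_{\Phi(\Xbf^1)}, \mu_{\Phi(\Xbf^2)}\ra_{\H_K} = \frac{1}{m_1 m_2}\1_{m_1}^T K[\Xbf^1,\Xbf^2]\1_{m_2}$; assembling the three pieces yields the stated formula.

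Since every step is a direct algebraic manipulation built on the operator definitions and the reproducing property, there is no substantial obstacle here; the lemma is essentially bookkeeping. The only point requiring care is tracking which Hilbert space each operator maps between, so that the adjoint and the Gram matrices carry the correct dimensions $m_i \times m_j$, and noting that $K[\Xbf^2,\Xbf^1] = (K[\Xbf^1,\Xbf^2])^T$ so that the cross term is genuinely symmetric and the coefficient $2$ is justified.
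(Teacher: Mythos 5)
Your proof is correct and follows essentially the same route as the paper's: substitute $\mu_{\Phi(\Xbf^j)} = \frac{1}{m_j}\Phi(\Xbf^j)\1_{m_j}$, use $\Phi(\Xbf^i)^{*}\Phi(\Xbf^j) = K[\Xbf^i,\Xbf^j]$ to get the first identity and the inner products $\la \mu_{\Phi(\Xbf^i)}, \mu_{\Phi(\Xbf^j)}\ra_{\H_K} = \frac{1}{m_im_j}\1_{m_i}^TK[\Xbf^i,\Xbf^j]\1_{m_j}$, then expand the squared norm bilinearly. The extra identification of $\Phi(\Xbf^i)^{*}$ as the sampling map is harmless additional detail but not needed beyond the adjoint relation already used.
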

\begin{proof}
	Since $\mu_{\Phi(\Xbf^j)} = \frac{1}{m_j}\Phi(\Xbf^j)\1_{m_j}$, we have
	%\begin{align*}
	$\Phi(\Xbf^i)^{*}\mu_{\Phi(\Xbf^j)} 
	=\frac{1}{m_j}\Phi(\Xbf^i)^{*}\Phi(\Xbf^j)\1_{m_j} = \frac{1}{m_j}K[\Xbf^i,\Xbf^j]\1_{m_j}$, giving the first expression. The second expression follows from
	\begin{align*}
		\la \mu_{\Phi(\Xbf^i)}, \mu_{\Phi(\Xbf^j)}\ra_{\H_K} &= \frac{1}{m_i m_j}\la \Phi(\Xbf^i)\1_{m_i}, \Phi(\Xbf^j)\1_{m_j}\ra_{\H_K} 
		\\
		&= \frac{1}{m_im_j}\la \1_{m_i}, \Phi(\Xbf^i)^{*}\Phi(\Xbf^j)\1_{m_j}\ra_{\R^{m_i}} 
		= \frac{1}{m_im_j}\1^T_{m_i}K[\Xbf^i,\Xbf^j]\1_{m_j}.
	\end{align*}
	%\begin{align*}
	%&||\mu_{\Phi(\Xbf^1)} - \mu_{\Phi(\Xbf^2)}||^2_{\H_K} = \frac{1}{m_1^2}||\sum_{j=1}^{m_1}K_{x^1_j}||^2_{\H_K}
	%+\frac{1}{m_2^2}||\sum_{j=1}^{m_2}K_{x^2_j}||^2_{\H_K} - \frac{2}{m_1m_2}\sum_{j=1}^{m_1}\sum_{k=1}^{m_2}
	%\la K_{x^1_j}, K_{x^2_k}\ra_{\H_K}
	%\\
	%& = \frac{1}{m_1^2}\sum_{j,k=1}^{m_1}K(x^1_j,x^1_k) +\frac{1}{m_2^2}\sum_{j,k=1}^{m_2}K(x^2_j,x^2_k)
	%- \frac{2}{m_1m_2}\sum_{j=1}^{m_1}\sum_{k=1}^{m_2}K(x^1_j,x^2_k)
	%\\
	%& = \frac{1}{m_1^2}\1_{m_1}^TK[\Xbf^1]\1_{m_1} + \frac{1}{m_2^2}\1_{m^2}^TK[\Xbf^2]\1_{m_2}
	%-\frac{2}{m_1m_2}\1_{m_1}^TK[\Xbf^1,\Xbf^2]\1_{m_2}
	%\end{align*}
	
	%	= \frac{1}{m_j}\left(\sum_{l=1}^{m_j}K(x^i_k, x^j_l)\right)_{k=1}^{m_i} = \frac{1}{m_j}K[\Xbf^i,\Xbf^j]\1_{m_j} \in \R^{m_i}.
	%\end{align*}
\end{proof}

%$\Phi(\Xbf):\R^m \mapto \H_K$, $\Phi(\Xbf)\b = \sum_{i=1}^mb_i\Phi(x_i) = \sum_{i=1}^mK_{x_i}$,
%$\Phi(\Xbf)^{*}:\H_K \mapto \R^m$, $\Phi(\Xbf)^{*}f = (f(x_i))_{i=1}^m = (\la f, K_{x_i})_{i=1}^m$
\begin{proof}
	\textbf{of Theorem \ref{theorem:Renyi-RKHS}}
	 By definition of $D^{\gamma}_{\Rrm,r}$,
	%	[\textbf{Proof of Theorem \ref{theorem:Renyi-RKHS}}] By definition of $D^{\gamma}_{R,r}$,
	{\small
		\begin{align*}
			&D_{\Rrm,r}^{\gamma}[\Ncal(\mu_{\Phi(\Xbf^1)}, C_{\Phi(\Xbf^1)})|| \Ncal(\mu_{\Phi(\Xbf^2)}, C_{\Phi(\Xbf^2)})]
			\\
			&= \frac{1}{2}\la (\mu_{\Phi(\Xbf^1)} - \mu_{\Phi(\Xbf^2)}), [(1-r)(C_{\Phi(\Xbf^1)} +\gamma I_{\H_K}) + r(C_{\Phi(\Xbf^2)}+\gamma I_{\H_K})]^{-1} 
			(\mu_{\Phi(\Xbf^1)} - \mu_{\Phi(\Xbf^2)})\ra_{\H_K}
			\\
			&+ \frac{1}{2}d^{2r-1}_{\logdet}[(C_{\Phi(\Xbf^1)} + \gamma I_{H_K}), (C_{\Phi(\Xbf^2)} + \gamma I_{H_K})].
		\end{align*}
	}
	%Applying Lemma \ref{lemma:inverse-AAstar-plus-BBstar} with 
	%the following operators:
	Define
	$A = \sqrt{\frac{(1-r)}{m_1\gamma}}\Phi(\Xbf^1)J_{m_1}:\R^{m_1} \mapto \H_K$, $B=\sqrt{\frac{r}{m_2\gamma }}\Phi(\Xbf^2)J_{m_2}:\R^{m_2} \mapto \H_K$,
	then $AA^{*} = \frac{1-r}{\gamma}C_{\Phi(\Xbf^1)}$, $BB^{*} = \frac{r}{\gamma}C_{\Phi(\Xbf^2)}$,
	$A^{*}A = \frac{(1-r)}{m_1\gamma}J_{m_1}K[\Xbf^1]J_{m_1}$, $B^{*}B = \frac{r}{m_2\gamma}J_{m_2}K[\Xbf^2]J_{m_2}$,
	$A^{*}B = \frac{1}{\gamma}\sqrt{\frac{r(1-r)}{m_1m_2}}J_{m_1}K[\Xbf^1,\Xbf^2]J_{m_2}$,
	$B^{*}A = \frac{1}{\gamma}\sqrt{\frac{r(1-r)}{m_1m_2}}J_{m_2}K[\Xbf^2,\Xbf^1]J_{m_1}$.
	% $\H=\H_K, \H_1 = \R^{m_1}, \H_2 = \R^{m_2}$, we have
	By Lemma \ref{lemma:muPhi-norm-HK},
	% we have
	{\small
		\begin{align*}
			A^{*}(\mu_{\Phi(\Xbf^1)} - \mu_{\Phi(\Xbf^2)}) = \sqrt{\frac{1-r}{m_1\gamma}}J_{m_1}
			\left(\frac{1}{m_1}K[\Xbf^1]\1_{m_1}  - \frac{1}{m_2}K[\Xbf^1,\Xbf^2]\1_{m_2}\right) = \frac{1}{\sqrt{\gamma}}\v_1,
			\\
			B^{*}(\mu_{\Phi(\Xbf^1)} - \mu_{\Phi(\Xbf^2)}) = \sqrt{\frac{r}{m_2\gamma}}J_{m_2}
			\left(\frac{1}{m_1}K[\Xbf^2,\Xbf^1]\1_{m_1}  - \frac{1}{m_2}K[\Xbf^2]\1_{m_2}\right) = \frac{1}{\sqrt{\gamma}}\v_2,
		\end{align*}
	}
	By Lemma \ref{lemma:inverse-AAstar-plus-BBstar}, 
	\begin{align*}
		&\la (\mu_{\Phi(\Xbf^1)} - \mu_{\Phi(\Xbf^2)}), [(1-r)(C_{\Phi(\Xbf^1)} +\gamma I_{\H_K}) + r(C_{\Phi(\Xbf^2)}+\gamma I_{\H_K})]^{-1} 
		(\mu_{\Phi(\Xbf^1)} - \mu_{\Phi(\Xbf^2)})\ra_{\H_K}
		\\
		& =\frac{1}{\gamma}\la (\mu_{\Phi(\Xbf^1)} - \mu_{\Phi(\Xbf^2)}), \left[\frac{(1-r)}{\gamma}C_{\Phi(\Xbf^1)} + \frac{r}{\gamma}C_{\Phi(\Xbf^2)}+ I_{\H_K}\right]^{-1} 
		(\mu_{\Phi(\Xbf^1)} - \mu_{\Phi(\Xbf^2)})\ra_{\H_K}
		\\
		& = \frac{1}{\gamma}\la (\mu_{\Phi(\Xbf^1)} - \mu_{\Phi(\Xbf^2)}), (AA^{*}+BB^{*}+I_{\H_K})^{-1}(\mu_{\Phi(\Xbf^1)} - \mu_{\Phi(\Xbf^2)})\ra_{\H_K}
		%	\\
		%	& = \frac{1}{\gamma}||\mu_{\Phi(\Xbf^1)} - \mu_{\Phi(\Xbf^2)}||^2_{\H_K} -
		%\\ 
		%& 
		= \frac{1}{\gamma}(\Delta_1 - \Delta_2).
	\end{align*}
	The quantities $\Delta_1$ and $\Delta_2$ are given by
	{\small
		\begin{align*}
			\Delta_1 =||\mu_{\Phi(\Xbf^1)} - \mu_{\Phi(\Xbf^2)}||^2_{\H_K}=\frac{1}{m_1^2}\1_{m_1}^TK[\Xbf^1]\1_{m_1} + \frac{1}{m_2^2}\1_{m^2}^TK[\Xbf^2]\1_{m_2}
			-\frac{2}{m_1m_2}\1_{m_1}^TK[\Xbf^1,\Xbf^2]\1_{m_2}. 
		\end{align*}
	}
	{\small
		\begin{align*}
			&\Delta_2 =\left\la \begin{pmatrix}
				A^{*}(\mu_{\Phi(\Xbf^1)} - \mu_{\Phi(\Xbf^2)})\\B^{*}(\mu_{\Phi(\Xbf^1)} - \mu_{\Phi(\Xbf^2)})
			\end{pmatrix},
			\begin{bmatrix}
				I_{m_1 \oplus m_2} + 	\begin{pmatrix}A^{*}A & A^{*}B\\B^{*}A & B^{*}B\end{pmatrix}
			\end{bmatrix}^{-1}
			\begin{pmatrix}
				A^{*}(\mu_{\Phi(\Xbf^1)} - \mu_{\Phi(\Xbf^2)})\\B^{*}(\mu_{\Phi(\Xbf^1)} - \mu_{\Phi(\Xbf^2)})
			\end{pmatrix} \right\ra
			\\
			& = \left\la 
			\begin{pmatrix}
				\frac{\v_1}{\sqrt{\gamma}}
				\\
				\frac{\v2}{\sqrt{\gamma}}
			\end{pmatrix},
			\left[I_{m_1+m_2} + 
			\begin{pmatrix}
				\frac{1-r}{m_1\gamma}J_{m_1}K[\Xbf^1]J_{m_1} & \frac{1}{\gamma}\sqrt{\frac{r(1-r)}{m_1m_2}}J_{m_1}K[\Xbf^1,\Xbf^2]J_{m_2}
				\\
				\frac{1}{\gamma}\sqrt{\frac{r(1-r)}{m_1m_2}}J_{m_2}K[\Xbf^2,\Xbf^1]J_{m_1} &\frac{r}{m_2\gamma}J_{m_2}K[\Xbf^2]J_{m_2}
			\end{pmatrix}
			\right]^{-1}
			\begin{pmatrix}
				\frac{\v1}{\sqrt{\gamma}}
				\\
				\frac{\v2}{\sqrt{\gamma}}
			\end{pmatrix}
			\right\ra
			\\
			&
			=\left\la 
			\begin{pmatrix}
				\v_1
				\\
				\v_2
			\end{pmatrix},
			\left[\gamma I_{m_1+m_2} + 
			\begin{pmatrix}
				\frac{1-r}{m_1}J_{m_1}K[\Xbf^1]J_{m_1} & \sqrt{\frac{r(1-r)}{m_1m_2}}J_{m_1}K[\Xbf^1,\Xbf^2]J_{m_2}
				\\
				\sqrt{\frac{r(1-r)}{m_1m_2}}J_{m_2}K[\Xbf^2,\Xbf^1]J_{m_1} &\frac{r}{m_2}J_{m_2}K[\Xbf^2]J_{m_2}
			\end{pmatrix}
			\right]^{-1}
			\begin{pmatrix}
				\v_1
				\\
				\v_2
			\end{pmatrix}
			\right\ra.
		\end{align*}
	}
	
	%\begin{align*}
	%\Phi(\Xbf^1)^{*}(\mu_{\Phi(\Xbf^1)} - \mu_{\Phi(\Xbf^2)}) =  \frac{1}{m_1}\left(\sum_{j=1}^{m_1}K(x^1_j, x^1_k)\right)_{k=1}^{m_1} - \frac{1}{m_2}\left(\sum_{j=1}^{m_2}K(x^2_j, x^1_k)\right)_{k=1}^{m_1},
	%\\
	%\Phi(\Xbf^2)^{*}(\mu_{\Phi(\Xbf^1)} - \mu_{\Phi(\Xbf^2)}) =  \frac{1}{m_1}\left(\sum_{j=1}^{m_1}K(x^1_j, x^2_k)\right)_{k=1}^{m_2} - \frac{1}{m_2}\left(\sum_{j=1}^{m_2}K(x^2_j, x^2_k)\right)_{k=1}^{m_2},
	%\end{align*}
\end{proof}

\begin{proof}
	\textbf{of Theorem \ref{theorem:KL-RKHS}}
	%	[\textbf{Proof of Theorem \ref{theorem:KL-RKHS}}]
	By definition of $\KL^{\gamma}$
	%$D^{\gamma}_{\KL}$,
	{\small
		\begin{align*}
			&\KL^{\gamma}[\Ncal(\mu_{\Phi(\Xbf^1)}, C_{\Phi(\Xbf^1)})|| \Ncal(\mu_{\Phi(\Xbf^2)}, C_{\Phi(\Xbf^2)})]=\frac{1}{2}d^{1}_{\logdet}[(C_{\Phi(\Xbf^1)} + \gamma I_{H_K}), (C_{\Phi(\Xbf^2)} + \gamma I_{H_K})]
			\\
			&\quad +\frac{1}{2}\la (\mu_{\Phi(\Xbf^1)} - \mu_{\Phi(\Xbf^2)}), [C_{\Phi(\Xbf^2)}+\gamma I_{\H_K}]^{-1} 
			(\mu_{\Phi(\Xbf^1)} - \mu_{\Phi(\Xbf^2)})\ra_{\H_K}.
		\end{align*}
	}
	Let $\Delta_1$ be as in the proof of Theorem \ref{theorem:Renyi-RKHS}, let $B = \sqrt{\frac{1}{m_2\gamma}}\Phi(\Xbf^2)J_{m_2}$, 
	$B^{*}(\mu_{\Phi(\Xbf^1)} - \mu_{\Phi(\Xbf^2)}) = \sqrt{\frac{1}{m_2\gamma}}J_{m_2}
	\left(\frac{1}{m_1}K[\Xbf^2,\Xbf^1]\1_{m_1}  - \frac{1}{m_2}K[\Xbf^2]\1_{m_2}\right) = \frac{1}{\sqrt{\gamma}}\v_2$,
	$B^{*}B = \frac{1}{m_2\gamma}J_{m_2}K[\Xbf^2]J_{m_2}$, and
	\begin{align*}
		&\la (\mu_{\Phi(\Xbf^1)} - \mu_{\Phi(\Xbf^2)}), [C_{\Phi(\Xbf^2)}+\gamma I_{\H_K}]^{-1} 
		(\mu_{\Phi(\Xbf^1)} - \mu_{\Phi(\Xbf^2)})\ra_{\H_K} 
		\\
		& = \frac{1}{\gamma}\la (\mu_{\Phi(\Xbf^1)} - \mu_{\Phi(\Xbf^2)}), [BB^{*}+ I_{\H_K}]^{-1} 
		(\mu_{\Phi(\Xbf^1)} - \mu_{\Phi(\Xbf^2)})\ra_{\H_K} 
		= \frac{1}{\gamma}(\Delta_1 - \Delta_2),
	\end{align*}
	%With $B^{*}(\mu_{\Phi(\Xbf^1)} - \mu_{\Phi(\Xbf^2)}) = \sqrt{\frac{1}{m_2\gamma}}J_{m_2}
	%\left(\frac{1}{m_1}K[\Xbf^2,\Xbf^1]\1_{m_1}  - \frac{1}{m_2}K[\Xbf^2]\1_{m_2}\right) = \frac{1}{\sqrt{\gamma}}\v_2$,
	%$B^{*}B = \frac{1}{m_2\gamma}J_{m_2}K[\Xbf^2]J_{m_2}$,
	where the quantity $\Delta_2$ is given by
	\begin{align*}
		\Delta_ 2&= \la B^{*}(\mu_{\Phi(\Xbf^1)} - \mu_{\Phi(\Xbf^2)}, (I_{m_2} + B^{*}B)^{-1} B^{*}(\mu_{\Phi(\Xbf^1)}-\mu_{\Phi(\Xbf^2)})\ra
		\\
		&= \v_2^T\left(\gamma I_{m_2} + \frac{1}{m_2}J_{m_2}K[\Xbf^2]J_{m_2}\right)^{-1}\v_2.
	\end{align*}
\end{proof}
%\begin{theorem}
%	Let $\rho_1, \rho_2$ be two Borel probability measures satisfying Assumption 
%\end{theorem}

\begin{lemma}
	\label{lemma:norm-square-difference}
	Let $A_1, A_2 \in \Lcal(\H)$ and $m_1, m_2 \in \H$. Then
	\begin{align}
		\left|||A_1m_1||^2 - ||A_2m_2||^2\right| \leq &[||A_1||||m_1-m_2|| + ||A_1 - A_2||||m_2||]
		\\
		&\times [||A_1||||m_1|| + ||A_2||||m_2||].
		\nonumber
		\\
		%\end{align}
		%\begin{align}
		|\la m_1, A_1 m_1\ra - \la m_2, A_2m_2\ra|
		%	\nonumber
		%\\
		&\leq ||A_1-A_2||||m_1||||m_2|| 
		\\
		&\quad + ||m_1-m_2||[||A_1||||m_1|| + ||A_2||||m_2||].
		\nonumber
	\end{align}
\end{lemma}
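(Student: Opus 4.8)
The plan is to prove both bounds by the standard add-and-subtract (telescoping) technique, estimating each resulting term with the triangle inequality, the submultiplicativity $||Ax|| \leq ||A||\,||x||$, and the Cauchy--Schwarz inequality $|\la x, y\ra| \leq ||x||\,||y||$. No deep input is needed here; the only point requiring care is the choice of telescoping decomposition, which must be the one whose individual terms regroup exactly into the right-hand side as stated.

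For the first inequality, I would factor the difference of squares as $||A_1 m_1||^2 - ||A_2 m_2||^2 = (||A_1 m_1|| - ||A_2 m_2||)(||A_1 m_1|| + ||A_2 m_2||)$. The second factor is bounded immediately by $||A_1||\,||m_1|| + ||A_2||\,||m_2||$. For the first factor, the reverse triangle inequality gives $\bigl|\,||A_1 m_1|| - ||A_2 m_2||\,\bigr| \leq ||A_1 m_1 - A_2 m_2||$, and then writing $A_1 m_1 - A_2 m_2 = A_1(m_1 - m_2) + (A_1 - A_2) m_2$ yields $||A_1 m_1 - A_2 m_2|| \leq ||A_1||\,||m_1 - m_2|| + ||A_1 - A_2||\,||m_2||$. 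Multiplying the two bounds produces precisely the claimed estimate.

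For the second inequality, the decomposition I would use is
\begin{align*}
\la m_1, A_1 m_1\ra - \la m_2, A_2 m_2\ra = \la m_1 - m_2, A_1 m_1\ra + \la m_2, (A_1 - A_2) m_1\ra + \la m_2, A_2(m_1 - m_2)\ra,
\end{align*}
which one verifies by direct cancellation of the intermediate terms $\la m_2, A_1 m_1\ra$ and $\la m_2, A_2 m_1\ra$. Applying Cauchy--Schwarz followed by submultiplicativity to each summand bounds them by $||m_1 - m_2||\,||A_1||\,||m_1||$, $\;||m_2||\,||A_1 - A_2||\,||m_1||$, and $\;||m_2||\,||A_2||\,||m_1 - m_2||$ respectively; regrouping these three quantities gives exactly $||A_1-A_2||\,||m_1||\,||m_2|| + ||m_1-m_2||\,[\,||A_1||\,||m_1|| + ||A_2||\,||m_2||\,]$.

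The main (and really only) obstacle is bookkeeping in the second part: a seemingly equally natural split such as $\la m_1, (A_1 - A_2) m_1\ra + [\la m_1, A_2 m_1\ra - \la m_2, A_2 m_2\ra]$ would generate a term $||A_1 - A_2||\,||m_1||^2$ instead of the symmetric factor $||A_1 - A_2||\,||m_1||\,||m_2||$ appearing in the statement. One must therefore insert the intermediate vector $m_2$ on the correct side, as in the displayed decomposition above. Once that choice is made, each estimate is a one-line application of Cauchy--Schwarz and submultiplicativity, and no further computation is needed.
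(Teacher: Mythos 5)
Your proof is correct and follows essentially the same route as the paper: difference of squares plus the reverse triangle inequality and the splitting $A_1m_1 - A_2m_2 = A_1(m_1-m_2) + (A_1-A_2)m_2$ for the first bound, and a three-term telescoping with Cauchy--Schwarz for the second. Your telescoping inserts the intermediate terms in the mirror-image order (the paper uses $\la m_1, A_1(m_1-m_2)\ra + \la m_1,(A_1-A_2)m_2\ra + \la m_1-m_2, A_2m_2\ra$), but the three resulting bounds are identical, so this is the same argument.
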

\begin{proof}
	For the first inequality,
	\begin{align*}
		&\left|||A_1m_1||^2 - ||A_2m_2||^2\right| \leq 
		\left|||A_1m_1|| - ||A_2m_2||\right|[||A_1 m_1|| + ||A_2m_2||]
		\\
		&\leq ||A_1m_1 - A_2m_2||[||A_1||||m_1|| + ||A_2||||m_2||]  
		\\
		&= ||A_1(m_1-m_2) + (A_1-A_2)m_2||[||A_1||||m_1|| + ||A_2||||m_2||]
		\\
		& \leq [||A_1||||m_1-m_2|| + ||A_1 - A_2||||m_2||][||A_1||||m_1|| + ||A_2||||m_2||].
	\end{align*}
	For the second inequality,
	\begin{align*}
		&|\la m_1, A_1 m_1\ra - \la m_2, A_2m_2\ra| 
		\\
		&\leq |\la m_1, A_1m_1\ra - \la m_1, A_1m_2\ra| +
		|\la m_1, A_1 m_2\ra - \la m_1, A_2m_2\ra| + |\la m_1, A_2 m_2\ra - \la m_2, A_2m_2\ra|  
		\\
		& \leq ||m_1||||A_1||||m_1-m_2|| + ||m_1||||A_1-A_2||||m_2|| + ||m_1-m_2||||A_2||||m_2||
		\\
		& = ||A_1-A_2||||m_1||||m_2|| + ||m_1-m_2||[||A_1||||m_1|| + ||A_2||||m_2||].
	\end{align*}
\end{proof}

\begin{comment}
\begin{lemma}
\label{lemma:quadratic-form-difference}
Let $A_1,A_2 \in \Lcal(\H)$, $m_1, m_2 \in \H$. Then
\begin{align}
&|\la m_1, A_1 m_1\ra - \la m_2, A_2m_2\ra|
\nonumber
\\
&\leq ||A_1-A_2||||m_1||||m_2|| + ||m_1-m_2||[||A_1||||m_1|| + ||A_2||||m_2||].
\end{align}
\end{lemma}
\begin{proof}
\begin{align*}
&|\la m_1, A_1 m_1\ra - \la m_2, A_2m_2\ra| 
\\
&\leq |\la m_1, A_1m_1\ra - \la m_1, A_1m_2\ra| +
|\la m_1, A_1 m_2\ra - \la m_1, A_2m_2\ra| + |\la m_1, A_2 m_2\ra - \la m_2, A_2m_2\ra|  
\\
& \leq ||m_1||||A_1||||m_1-m_2|| + ||m_1||||A_1-A_2||||m_2|| + ||m_1-m_2||||A_2||||m_2||
\\
& = ||A_1-A_2||||m_1||||m_2|| + ||m_1-m_2||[||A_1||||m_1|| + ||A_2||||m_2||].
\end{align*}
\end{proof}
\end{comment}
\begin{lemma}
	\label{lemma:mean-term-difference}
	Let $\gamma \in \R, \gamma > 0$ be fixed.
	Let $C_1, C_2 \in \Sym^{+}(\H)$, $m_1, m_2 \in \H$. Then
	\begin{align}
		|\la m_1, (C_1+\gamma I)^{-1}m_1\ra - \la m_2, (C_2 + \gamma I)^{-1}m_2\ra|
		&\leq \frac{1}{\gamma^2}||C_1-C_2||||m_1||||m_2||
		\\
		&\quad + \frac{1}{\gamma}||m_1-m_2||[||m_1|| + ||m_2||].
		\nonumber
	\end{align}
\end{lemma}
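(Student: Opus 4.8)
The plan is to reduce the statement to the already-established Lemma \ref{lemma:norm-square-difference} by setting $A_i = (C_i + \gamma I)^{-1}$, $i=1,2$, so that the quantity to be bounded is exactly $|\la m_1, A_1 m_1\ra - \la m_2, A_2 m_2\ra|$. The second inequality of Lemma \ref{lemma:norm-square-difference} then gives
\begin{align*}
|\la m_1, A_1 m_1\ra - \la m_2, A_2 m_2\ra| \leq ||A_1 - A_2||\,||m_1||\,||m_2|| + ||m_1 - m_2||[||A_1||\,||m_1|| + ||A_2||\,||m_2||],
\end{align*}
so it remains only to control the operator-norm quantities $||A_i||$ and $||A_1 - A_2||$ in terms of $\gamma$ and $||C_1 - C_2||$.

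First I would bound $||A_i||$. Since $C_i \in \Sym^{+}(\H)$, we have $C_i + \gamma I \geq \gamma I > 0$, so that $C_i + \gamma I$ is invertible with $(C_i + \gamma I)^{-1} \leq \frac{1}{\gamma}I$, and consequently $||A_i|| = ||(C_i+\gamma I)^{-1}|| \leq \frac{1}{\gamma}$ for $i=1,2$. Next I would bound $||A_1 - A_2||$ using the resolvent identity
\begin{align*}
(C_1 + \gamma I)^{-1} - (C_2 + \gamma I)^{-1} = (C_1 + \gamma I)^{-1}(C_2 - C_1)(C_2 + \gamma I)^{-1},
\end{align*}
which, combined with submultiplicativity of the operator norm and the bound $||(C_i + \gamma I)^{-1}|| \leq \frac{1}{\gamma}$ from the previous step, yields $||A_1 - A_2|| \leq \frac{1}{\gamma^2}||C_1 - C_2||$.

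Substituting these two estimates into the inequality furnished by Lemma \ref{lemma:norm-square-difference} produces
\begin{align*}
\frac{1}{\gamma^2}||C_1 - C_2||\,||m_1||\,||m_2|| + ||m_1 - m_2||\left[\frac{1}{\gamma}||m_1|| + \frac{1}{\gamma}||m_2||\right],
\end{align*}
which is precisely the claimed bound. I do not expect any serious obstacle here: the argument is entirely a matter of assembling the resolvent identity with the elementary norm bound $||(C_i + \gamma I)^{-1}|| \leq 1/\gamma$ and then invoking the general quadratic-form perturbation inequality already proved. The only point deserving a word of care is that $C_i$ is merely positive ($\Sym^{+}$) rather than strictly positive, but this is harmless, since the shift by $\gamma I$ makes $C_i + \gamma I \geq \gamma I$ boundedly invertible regardless, which is exactly what both the resolvent identity and the norm bound require.
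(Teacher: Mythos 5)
Your proposal is correct and follows essentially the same route as the paper's own proof: both apply the second inequality of Lemma \ref{lemma:norm-square-difference} with $A_i = (C_i+\gamma I)^{-1}$, then use the resolvent identity together with the bound $||(C_i+\gamma I)^{-1}|| \leq 1/\gamma$ to control $||A_1-A_2||$ and $||A_i||$. No gaps.
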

\begin{proof} Applying Lemma \ref{lemma:norm-square-difference}, 
	%\ref{lemma:quadratic-form-difference}, 
	we have
	\begin{align*}
		&|\la m_1, (C_1+\gamma I)^{-1}m_1\ra - \la m_2, (C_2 + \gamma I)^{-1}m_2\ra|
		\\
		& \leq ||(C_1+\gamma I)^{-1} - (C_2+\gamma I)^{-1}||||m_1||||m_2|| 
		\\
		& \quad +
		||m_1-m_2||[||(C_1+\gamma I)^{-1}||m_1|| + ||(C_2+\gamma I)^{-1}||||m_2||] 
		\\
		& \leq ||[(C_1+\gamma I)^{-1}(C_1-C_2)(C_2+\gamma I)^{-1}]|||m_1||||m_2|| + \frac{1}{\gamma}||m_1-m_2||[||m_1|| + ||m_2||]
		\\
		& \leq \frac{1}{\gamma^2}||C_1-C_2||||m_1||||m_2|| + \frac{1}{\gamma}||m_1-m_2||[||m_1|| + ||m_2||].
	\end{align*}
	%\begin{align*}
	%&|\la m_1, (C_1+\gamma I)^{-1}m_1\ra - \la m_2, (C_2 + \gamma I)^{-1}m_2\ra|
	%\\
	%&\leq |||(C_1+\gamma I)^{-1/2}m_1||^2 - ||(C_2+\gamma I)^{-1/2}m_2||^2|
	%\\
	%& \leq [||(C_1+\gamma I)^{-1/2}||m_1- m_2|| + ||(C_1+\gamma I)^{-1/2}-(C_2+\gamma I)^{-1/2}||||m_2||]
	%\\
	%& \quad \times [||(C_1+\gamma I)^{-1/2}||||m_1|| + ||(C_2+\gamma I)^{-1/2}||||m_2||]
	%\\
	%& \leq [\frac{1}{\sqrt{\gamma}}||m_1-m_2|| + ][\frac{1}{\sqrt{\gamma}}||m_1|| + \frac{1}{\sqrt{\gamma}}||m_2||]
	%\end{align*}
\end{proof}

\begin{proposition}
	\label{proposition:Renyi-term-RKHS-approximate}
	Assume Assumptions A1-A4.
	Let $0 \leq r \leq 1$ be fixed.
	Let $\Xbf^i = (x^i_j)_{j=1}^{m_i}$ be independently sampled from 
	$(\Xcal, \rho)$.
	For any $0 < \delta < 1$, with probability at least $1-\delta$,
	\begin{align}
		&|\la (\mu_{\Phi(\Xbf^1)} - \mu_{\Phi(\Xbf^2)}), [(1-r)(C_{\Phi(\Xbf^1)} +\gamma I) + r(C_{\Phi(\Xbf^2)}+\gamma I)]^{-1} 
		(\mu_{\Phi(\Xbf^1)} - \mu_{\Phi(\Xbf^2)})\ra_{\H_K}
		\nonumber
		\\
		&\quad - \la (\mu_{\Phi,\rho_1} - \mu_{\Phi,\rho_2}), [(1-r)(C_{\Phi,\rho_1}+\gamma I) +r(C_{\Phi,\rho_2}+\gamma I)]^{-1}
		(\mu_{\Phi,\rho_1} - \mu_{\Phi, \rho_2})\ra_{\H_K}|
		\nonumber
		\\
		&
		 \leq \left(\frac{12\kappa^4(1-r)}{\gamma^2} + \frac{4\kappa^2}{\gamma}\right)
		\left(\frac{2\log\frac{8}{\delta}}{m_1} + \sqrt{\frac{2\log\frac{8}{\delta}}{m_1}}\right)
		%\\
		%&\quad 
		+ \left(\frac{12\kappa^4r}{\gamma^2} + \frac{4\kappa^2}{\gamma}\right)\left(\frac{2\log\frac{8}{\delta}}{m_2} + \sqrt{\frac{2\log\frac{8}{\delta}}{m_2}}\right).
		%\leq 
		%\left(\frac{3(\kappa_1 + \kappa_2)^2(1-r)\kappa_1^2}{\gamma^2} + \frac{2(\kappa_1 + \kappa_2)\kappa_1}{\gamma}\right)
		%\left(\frac{2\log\frac{8}{\delta}}{m_1} + \sqrt{\frac{2\log\frac{8}{\delta}}{m_1}}\right)
		%\\
		%&\quad + \left(\frac{3(\kappa_1 + \kappa_2)^2r\kappa_2^2}{\gamma^2} + \frac{2(\kappa_1 + \kappa_2)\kappa_2}{\gamma}\right)\left(\frac{2\log\frac{8}{\delta}}{m_2} + \sqrt{\frac{2\log\frac{8}{\delta}}{m_2}}\right).
		%\nonumber
	\end{align}
\end{proposition}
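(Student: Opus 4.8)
The plan is to reduce the entire statement to a single application of Lemma~\ref{lemma:mean-term-difference}, once the right mean vectors and operators are identified. I would set $m_1 = \mu_{\Phi(\Xbf^1)} - \mu_{\Phi(\Xbf^2)}$, $m_2 = \mu_{\Phi,\rho_1} - \mu_{\Phi,\rho_2}$, and $C_1 = (1-r)C_{\Phi(\Xbf^1)} + rC_{\Phi(\Xbf^2)}$, $C_2 = (1-r)C_{\Phi,\rho_1} + rC_{\Phi,\rho_2}$, both of which lie in $\Sym^{+}(\H_K)$. Since $(1-r)\gamma + r\gamma = \gamma$, we have the identity $(1-r)(C_{\Phi(\Xbf^1)}+\gamma I) + r(C_{\Phi(\Xbf^2)}+\gamma I) = C_1 + \gamma I$, and likewise on the population side. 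Thus the quantity to be estimated is exactly $|\la m_1, (C_1+\gamma I)^{-1}m_1\ra - \la m_2, (C_2+\gamma I)^{-1}m_2\ra|$, which is the left-hand side of Lemma~\ref{lemma:mean-term-difference}.

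First I would collect the norm bounds. From Theorem~\ref{theorem:CPhi-concentration} we have $\|\mu_{\Phi(\Xbf^i)}\|_{\H_K} \le \kappa$ and $\|\mu_{\Phi,\rho_i}\|_{\H_K} \le \kappa$, so $\|m_1\|, \|m_2\| \le 2\kappa$. The differences are controlled probabilistically: applying Theorem~\ref{theorem:CPhi-concentration} to each of $\Xbf^1,\Xbf^2$ with confidence parameter $\delta/2$ and taking a union bound, with probability at least $1-\delta$ the four inequalities
\begin{align*}
\|\mu_{\Phi(\Xbf^i)} - \mu_{\Phi,\rho_i}\|_{\H_K} &\le \kappa\,\epsilon_i, \\
\|C_{\Phi(\Xbf^i)} - C_{\Phi,\rho_i}\|_{\HS(\H_K)} &\le 3\kappa^2\,\epsilon_i, \qquad i=1,2,
\end{align*}
hold simultaneously, where $\epsilon_i = \frac{2\log(8/\delta)}{m_i} + \sqrt{\frac{2\log(8/\delta)}{m_i}}$ (the factor $8/\delta$ arises from $4/(\delta/2)$). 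The triangle inequality then gives $\|m_1 - m_2\|_{\H_K} \le \kappa(\epsilon_1 + \epsilon_2)$, while $\|C_1 - C_2\| \le \|C_1 - C_2\|_{\HS} \le 3\kappa^2[(1-r)\epsilon_1 + r\epsilon_2]$, using linearity in $r$ and the domination of the operator norm by the Hilbert--Schmidt norm.

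Substituting these into Lemma~\ref{lemma:mean-term-difference} yields
\begin{align*}
\text{(LHS)} &\le \frac{1}{\gamma^2}\|C_1 - C_2\|\,\|m_1\|\,\|m_2\| + \frac{1}{\gamma}\|m_1 - m_2\|\big(\|m_1\| + \|m_2\|\big) \\
&\le \frac{12\kappa^4}{\gamma^2}\big[(1-r)\epsilon_1 + r\epsilon_2\big] + \frac{4\kappa^2}{\gamma}(\epsilon_1 + \epsilon_2),
\end{align*}
and regrouping the $\epsilon_1$ and $\epsilon_2$ contributions produces exactly the two stated coefficients $\frac{12\kappa^4(1-r)}{\gamma^2} + \frac{4\kappa^2}{\gamma}$ and $\frac{12\kappa^4 r}{\gamma^2} + \frac{4\kappa^2}{\gamma}$. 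The argument is essentially bookkeeping; the only points requiring care are the probability accounting---bundling the mean and covariance estimates for both samples into one $1-\delta$ event forces the $8/\delta$ inside the logarithm---and the legitimacy of bounding the operator-norm factor $\|C_1-C_2\|$ appearing in Lemma~\ref{lemma:mean-term-difference} by the Hilbert--Schmidt estimate of Theorem~\ref{theorem:CPhi-concentration}, which is valid since $\|\cdot\| \le \|\cdot\|_{\HS}$. I do not anticipate any genuinely hard step.
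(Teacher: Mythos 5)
Your proposal is correct and follows essentially the same route as the paper's own proof: identify the two quadratic forms as instances of $\la m,(C+\gamma I)^{-1}m\ra$, apply Lemma~\ref{lemma:mean-term-difference}, and feed in the bounds of Theorem~\ref{theorem:CPhi-concentration} with a union bound over the two samples at level $\delta/2$, which is exactly where the $8/\delta$ in the logarithms comes from. The constant bookkeeping ($3\kappa^2 \cdot 4\kappa^2 = 12\kappa^4$ and $\kappa \cdot 4\kappa = 4\kappa^2$) matches the paper's computation.
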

\begin{proof}
	Applying Lemma \ref{lemma:mean-term-difference}, we have
	\begin{align*}
		&\Delta = |\la(\mu_{\Phi(\Xbf^1)} - \mu_{\Phi(\Xbf^2)}), [(1-r)(C_{\Phi(\Xbf^1)} +\gamma I) + r(C_{\Phi(\Xbf^2)}+\gamma I)]^{-1} 
		(\mu_{\Phi(\Xbf^1)} - \mu_{\Phi(\Xbf^2)})\ra_{\H_K}
		\\
		&\quad - \la (\mu_{\Phi,\rho_1} - \mu_{\Phi,\rho_2}), [(1-r)(C_{\Phi,\rho_1}+\gamma I) +r(C_{\Phi,\rho_2}+\gamma I)]^{-1}
		(\mu_{\Phi,\rho_1} - \mu_{\Phi, \rho_2})\ra_{\H_K}|
		\\
		& \leq \frac{1}{\gamma^2}||(1-r)(C_{\Phi(\Xbf^1)} - C_{\Phi,\rho_1}) + r(C_{\Phi(\Xbf^2)}-C_{\Phi,\rho_2})||
		||\mu_{\Phi(\Xbf^1)} - \mu_{\Phi(\Xbf^2)}||||\mu_{\Phi,\rho_1} - \mu_{\Phi, \rho_2}||
		\\
		&+ \frac{1}{\gamma}||(\mu_{\Phi(\Xbf^1)} - \mu_{\Phi(\Xbf^2)}) - (\mu_{\Phi,\rho_1} - \mu_{\Phi, \rho_2})||[||\mu_{\Phi(\Xbf^1)} - \mu_{\Phi(\Xbf^2)}|| + ||\mu_{\Phi,\rho_1} - \mu_{\Phi, \rho_2}||]
		\\
		& \leq \frac{[(1-r)||C_{\Phi(\Xbf^1)} - C_{\Phi,\rho_1}|| + r||C_{\Phi(\Xbf^2)}-C_{\Phi,\rho_2}||]}{\gamma^2}
		[||\mu_{\Phi(\Xbf^1)}|| +||\mu_{\Phi(\Xbf^2)}||][||\mu_{\Phi,\rho_1}||+ ||\mu_{\Phi, \rho_2}||]
		\\
		& \quad + \frac{1}{\gamma}[||\mu_{\Phi(\Xbf^1)} - \mu_{\Phi,\rho_1}||+||\mu_{\Phi(\Xbf^2)} - \mu_{\Phi, \rho_2}||][||\mu_{\Phi(\Xbf^1)}|| + ||\mu_{\Phi(\Xbf^2)}||+||\mu_{\Phi,\rho_1}||+ ||\mu_{\Phi, \rho_2}||]
	\end{align*}
	By Theorem \ref{theorem:CPhi-concentration}, for $i=1,2$,
	$||\mu_{\Phi(\Xbf^i)}||_{\H_K} \leq \kappa$, $||\mu_{\Phi,\rho_i}||_{\H_K} \leq \kappa$, 
	$||C_{\Phi(\Xbf^i)}||_{\HS(\H_K)} \leq 2 \kappa^2$, $||C_{\Phi,\rho_i}||_{\HS(\H_K)} \leq 2\kappa^2$.
	Furthermore, for any $0 < \delta < 1$, with probability $1-\delta$,
	{\small
		\begin{align*}
			||\mu_{\Phi(\Xbf^i)} - \mu_{\Phi,\rho_i}||_{\H_K}  \leq \kappa\left(\frac{2\log\frac{4}{\delta}}{m_i} + \sqrt{\frac{2\log\frac{4}{\delta}}{m_i}}\right),
			%\\
			||C_{\Phi(\Xbf^i)} - C_{\Phi,\rho_i}||_{\HS} \leq 3\kappa^2\left(\frac{2\log\frac{4}{\delta}}{m_i} + \sqrt{\frac{2\log\frac{4}{\delta}}{m_i}}\right).
		\end{align*}
	}
	It follows that, with probability at least $1-\delta$,
	%\begin{align*}
	%\Delta \leq \left[\frac{12(\kappa_1+\kappa_2)^2[(1-r)\kappa_1^2 + r\kappa_2^2]}{\gamma^2}
	%\left(\frac{2\log\frac{8}{\delta}}{m} + \sqrt{\frac{2\log\frac{8}{\delta}}{m}}\right)
	%+\frac{2(\kappa_1 + \kappa_2)^2}{\gamma}
	%\right]\left(\frac{2\log\frac{8}{\delta}}{m} + \sqrt{\frac{2\log\frac{8}{\delta}}{m}}\right).
	%\end{align*}
	\begin{align*}
		\Delta &\leq \frac{12\kappa^4}{\gamma^2} \left[(1-r)\left(\frac{2\log\frac{8}{\delta}}{m_1} + \sqrt{\frac{2\log\frac{8}{\delta}}{m_1}}\right)
		+ r\left(\frac{2\log\frac{8}{\delta}}{m_2} + \sqrt{\frac{2\log\frac{8}{\delta}}{m_2}}\right)\right]
		\\
		& \quad + \frac{4\kappa^2}{\gamma}\left[
		\left(\frac{2\log\frac{8}{\delta}}{m_1} + \sqrt{\frac{2\log\frac{8}{\delta}}{m_1}}\right)
		+
		\left(\frac{2\log\frac{8}{\delta}}{m_2} + \sqrt{\frac{2\log\frac{8}{\delta}}{m_2}}\right)\right]
		\\
		& = \left(\frac{12\kappa^4(1-r)}{\gamma^2} + \frac{4\kappa^2}{\gamma}\right)
		\left(\frac{2\log\frac{8}{\delta}}{m_1} + \sqrt{\frac{2\log\frac{8}{\delta}}{m_1}}\right)
		%\\
		%&\quad 
		+ \left(\frac{12\kappa^4r}{\gamma^2} + \frac{4\kappa^2}{\gamma}\right)\left(\frac{2\log\frac{8}{\delta}}{m_2} + \sqrt{\frac{2\log\frac{8}{\delta}}{m_2}}\right).
	\end{align*}
\end{proof}

\begin{proof}
	\textbf{of Theorem \ref{theorem:Renyi-approximate-RKHS-infinite}}
%		[\textbf{Proof of Theorem \ref{theorem:KL-approximate-RKHS-infinite}}]
	\begin{align*}
		&\Delta = \left|D_{\Rrm,r}^{\gamma}[\Ncal(\mu_{\Phi(\Xbf^1)}, C_{\Phi(\Xbf^1)})|| \Ncal(\mu_{\Phi(\Xbf^2)}, C_{\Phi(\Xbf^2)})] - D_{\Rrm,r}^{\gamma}[\Ncal(\mu_{\Phi,\rho_1}, C_{\Phi, \rho_1})|| \Ncal(\mu_{\Phi,\rho_2}, C_{\Phi, \rho_2})]\right|
		\\
		&\leq\frac{1}{2}|\la (\mu_{\Phi(\Xbf^1)} - \mu_{\Phi(\Xbf^2)}), [(1-r)(C_{\Phi(\Xbf^1)} + \gamma I) + r(C_{\Phi(\Xbf^2)} + \gamma I)]^{-1}(\mu_{\Phi(\Xbf^1)} - \mu_{\Phi(\Xbf^2)})\ra_{\H_K}
		\\
		&\quad - \la (\mu_{\Phi,\rho_1} - \mu_{\Phi,\rho_2}),[(1-r)(C_{\Phi,\rho_1} + \gamma I) + r(C_{\Phi,\rho_2}+\gamma I)]^{-1}(\mu_{\Phi,\rho_1} - \mu_{\Phi,\rho_2})\ra_{\H_K}|
		\\
		&\quad+\frac{1}{2}|d^{2r-1}_{\logdet}[(C_{\Phi(\Xbf^1)} + \gamma I), (C_{\Phi(\Xbf^2)}+\gamma I)]
		-d^{2r-1}_{\logdet}[(C_{\Phi,\rho_1}+\gamma I), (C_{\Phi, \rho_2}+ \gamma I)]|
		%\\
		%&
		= \Delta_1 + \Delta_2.
	\end{align*}
	By Proposition \ref{proposition:Renyi-term-RKHS-approximate}, for any $0 < \delta < 1$, with probability at least $1-\delta$,
	\begin{align*}
	\Delta_1 \leq \left(\frac{12\kappa^4(1-r)}{\gamma^2} + \frac{4\kappa^2}{\gamma}\right)
	\left(\frac{2\log\frac{8}{\delta}}{m_1} + \sqrt{\frac{2\log\frac{8}{\delta}}{m_1}}\right)
	%\\
	%&\quad 
	+ \left(\frac{12\kappa^4r}{\gamma^2} + \frac{4\kappa^2}{\gamma}\right)\left(\frac{2\log\frac{8}{\delta}}{m_2} + \sqrt{\frac{2\log\frac{8}{\delta}}{m_2}}\right).
	\end{align*}
	%\begin{align*}
	%	\Delta_1 
	%	&\leq 
	%	\left(\frac{3(\kappa_1 + \kappa_2)^2(1-r)\kappa_1^2}{2\gamma^2} + \frac{(\kappa_1 + \kappa_2)\kappa_1}{\gamma}\right)
	%	\left(\frac{2\log\frac{8}{\delta}}{m_1} + \sqrt{\frac{2\log\frac{8}{\delta}}{m_1}}\right)
	%	\\
	%	&\quad + \left(\frac{3(\kappa_1 + \kappa_2)^2r\kappa_2^2}{2\gamma^2} + \frac{(\kappa_1 + \kappa_2)\kappa_2}{\gamma}\right)\left(\frac{2\log\frac{8}{\delta}}{m_2} + \sqrt{\frac{2\log\frac{8}{\delta}}{m_2}}\right). 
	%\end{align*}
	By Theorem \ref{theorem:logdet-approximate-RKHS-infinite}, for any $0 < \delta < 1$, with probability at least 
	$1-\delta$,
	\begin{align*}
		\Delta_2 
		%&\leq
		%\frac{6}{\gamma^2}\left(1+ \frac{2\kappa_2^2}{\gamma}\right)^2\left(\kappa_1^2 + \kappa_2^2 + \frac{\kappa_1^2\kappa_2^2}{\gamma}\right)
		%\\
		%&\quad \times \left[\kappa_1^2\left(\frac{2\log\frac{8}{\delta}}{m_1} + \sqrt{\frac{2\log\frac{8}{\delta}}{m_1}}\right)
		%+ \left(1+\frac{\kappa_1^2}{\gamma}\right)\kappa_2^2\left(\frac{2\log\frac{8}{\delta}}{m_2} + \sqrt{\frac{2\log\frac{8}{\delta}}{m_2}}\right)
		%\right]. 
		%\nonumber
		%\\
		& \leq \frac{6\kappa^4}{\gamma^2}\left(1+ \frac{2\kappa^2}{\gamma}\right)^2\left(2 +  \frac{\kappa^2}{\gamma}\right)
		%\\
		%&\quad \times 
		\left[\left(\frac{2\log\frac{8}{\delta}}{m_1} + \sqrt{\frac{2\log\frac{8}{\delta}}{m_1}}\right)
		+ \left(1+\frac{2\kappa^2}{\gamma}\right)\left(\frac{2\log\frac{8}{\delta}}{m_2} + \sqrt{\frac{2\log\frac{8}{\delta}}{m_2}}\right)
		\right].
	\end{align*}
	It follows that for any $0 < \delta < 1$, with probability at least $1-\delta$,
	\begin{align*}
		&\Delta 
		\leq
		\left(\frac{6\kappa^4(1-r)}{\gamma^2} + \frac{2\kappa^2}{\gamma}\right)
		\left(\frac{2\log\frac{16}{\delta}}{m_1} + \sqrt{\frac{2\log\frac{16}{\delta}}{m_1}}\right)
		%\\
		%&\quad 
		+ \left(\frac{6\kappa^4r}{\gamma^2} + \frac{2\kappa^2}{\gamma}\right)\left(\frac{2\log\frac{16}{\delta}}{m_2} + \sqrt{\frac{2\log\frac{16}{\delta}}{m_2}}\right)
		\\
		&\quad + \frac{6\kappa^4}{\gamma^2}\left(1+ \frac{2\kappa^2}{\gamma}\right)^2\left(2 +  \frac{\kappa^2}{\gamma}\right)
		%\\
		%&\quad \times 
		\left[\left(\frac{2\log\frac{16}{\delta}}{m_1} + \sqrt{\frac{2\log\frac{16}{\delta}}{m_1}}\right)
		+ \left(1+\frac{2\kappa^2}{\gamma}\right)\left(\frac{2\log\frac{16}{\delta}}{m_2} + \sqrt{\frac{2\log\frac{16}{\delta}}{m_2}}\right)
		\right]
		\\
		& = C_1(\kappa,\gamma,r)\left(\frac{2\log\frac{16}{\delta}}{m_1} + \sqrt{\frac{2\log\frac{16}{\delta}}{m_1}}\right) + C_2(\kappa,\gamma,r)\left(\frac{2\log\frac{16}{\delta}}{m_2} + \sqrt{\frac{2\log\frac{16}{\delta}}{m_2}}\right).
	\end{align*}
where $C_1(\kappa, \gamma, r) = \frac{6\kappa^4(1-r)}{\gamma^2} + \frac{2\kappa^2}{\gamma} + \frac{6\kappa^4}{\gamma^2}\left(1+ \frac{2\kappa^2}{\gamma}\right)^2\left(2 +  \frac{\kappa^2}{\gamma}\right)$, 
$C_2(\kappa, \gamma, r) = \frac{6\kappa^4r}{\gamma^2} + \frac{2\kappa^2}{\gamma} + \frac{6\kappa^4}{\gamma^2}\left(1+ \frac{2\kappa^2}{\gamma}\right)^2\left(2 +  \frac{\kappa^2}{\gamma}\right)
\left(1+\frac{2\kappa^2}{\gamma}\right)$.
\end{proof}

\subsection{Proofs for the centered Gaussian process setting}
\label{section:proofs-Gaussian-process}

\begin{lemma}
	\label{lemma:A-I+B-inverse-trace-norm}
	Let $A_1, A_2 \in \Tr(\H)$, $B_1, B_2 \in \Sym^{+}(\H) \cap \HS(\H)$. Then
	\begin{align}
		||A_1(I+B_1)^{-1} - A_2(I+B_2)^{-1}||_{\tr} \leq ||A_1 - A_2||_{\tr} + ||A_2||_{\HS}||B_1 - B_2||_{\HS}.
	\end{align}
\end{lemma}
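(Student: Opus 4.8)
The plan is to bound the difference by inserting a telescoping middle term and then controlling each piece with a mixed Hölder-type inequality for the trace norm. First I would write
\begin{align*}
A_1(I+B_1)^{-1} - A_2(I+B_2)^{-1} = (A_1 - A_2)(I+B_1)^{-1} + A_2\left[(I+B_1)^{-1} - (I+B_2)^{-1}\right],
\end{align*}
so that the triangle inequality for $||\cdot||_{\tr}$ reduces the problem to estimating the two summands separately. Note that since $A_1, A_2 \in \Tr(\H) \subset \HS(\H)$ and $B_1, B_2 \in \Sym^{+}(\H)$, every factor appearing below lies in an appropriate ideal, and each resolvent satisfies $||(I+B_i)^{-1}|| \leq 1$ in operator norm by the positivity of $B_i$.

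For the first summand I would use the submultiplicativity $||XY||_{\tr} \leq ||X||_{\tr}\,||Y||$ of the trace norm against the operator norm, together with $||(I+B_1)^{-1}|| \leq 1$, to get $||(A_1-A_2)(I+B_1)^{-1}||_{\tr} \leq ||A_1 - A_2||_{\tr}$. For the second summand I would invoke the resolvent identity
\begin{align*}
(I+B_1)^{-1} - (I+B_2)^{-1} = (I+B_1)^{-1}(B_2 - B_1)(I+B_2)^{-1},
\end{align*}
and then apply the inequality $||XY||_{\tr} \leq ||X||_{\HS}\,||Y||_{\HS}$. Grouping $X = A_2(I+B_1)^{-1}$ and $Y = (B_2 - B_1)(I+B_2)^{-1}$, and using $||XY||_{\HS} \leq ||X||_{\HS}\,||Y||$ with $||(I+B_i)^{-1}|| \leq 1$, I obtain $||X||_{\HS} \leq ||A_2||_{\HS}$ and $||Y||_{\HS} \leq ||B_1 - B_2||_{\HS}$, hence $||A_2[(I+B_1)^{-1}-(I+B_2)^{-1}]||_{\tr} \leq ||A_2||_{\HS}\,||B_1-B_2||_{\HS}$. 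Adding the two bounds yields the claim.

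The only point requiring care is the choice of grouping in the second summand: one must split the triple product so that exactly one Hilbert--Schmidt factor absorbs $A_2$ and the other absorbs $B_1 - B_2$, while the positivity of $B_1, B_2$ keeps both resolvent factors at operator norm at most one. This is precisely what converts a product involving a trace-class and a Hilbert--Schmidt operator into the sharp $||A_2||_{\HS}\,||B_1-B_2||_{\HS}$ bound rather than a weaker estimate. There is no substantive obstacle here --- the statement is a routine application of ideal inequalities --- and the bookkeeping of which norm to assign to which factor is the sole place where an error could arise.
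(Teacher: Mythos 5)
Your proof is correct and follows essentially the same route as the paper's: the same telescoping decomposition, the same resolvent identity, and the same combination of $||XY||_{\tr} \leq ||X||_{\tr}\,||Y||$ for the first term and $||XY||_{\tr} \leq ||X||_{\HS}\,||Y||_{\HS}$ for the second. Your explicit discussion of how to group the triple product merely spells out what the paper leaves implicit in its final inequality.
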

\begin{proof}
	Using the identity $(I+A)^{-1}-(I+B)^{-1} = (I+A)^{-1}(B-A)(I+B)^{-1}$,
	\begin{align*}
		&||A_1(I+B_1)^{-1} - A_2(I+B_2)^{-1}||_{\tr} 
		\\
		&\leq ||(A_1-A_2)(I+B_1)^{-1}||_{\tr} + ||A_2[(I+B_1)^{-1}-(I+B_2)^{-1}]||_{\tr}
		\\
		&\leq ||A_1 - A_2||_{\tr}||(I+B_1)^{-1}|| + ||A_2(I+B_1)^{-1}(B_1 - B_2)(I+B_2)^{-1}||_{\tr}
		\\
		& \leq ||A_1-A_2||_{\tr} + ||A_2||_{\HS}||B_1 - B_2||_{\HS}.
	\end{align*}
\end{proof}

\begin{proof}
	\textbf{of Proposition \ref{proposition:Renyi-RKHS-operator-representation}}
	We apply Theorems \ref{theorem:LogDet-infinite-2} and \ref{theorem:LogDet-finite-2} in the case $\gamma = \mu$.
	Note that $s(\gamma,\gamma,\alpha) =2 \gamma$ and $c(\alpha,\gamma,\gamma) = 0$, so that the formulas
	in Theorems \ref{theorem:LogDet-infinite-2} and \ref{theorem:LogDet-finite-2} are identical.
	For the first expression, we let $A = R_{K^1}^{*}:\H_{K^1} \mapto \Lcal^2(T,\nu)$,
	$B = R_{K^2}^{*}: \Lcal^2(T, \nu) \mapto \H_{K^2}$, with $AA^{*} = C_{K^1}: \Lcal^2(T,\nu)
	\mapto \Lcal^2(T,\nu)$, $A^{*}A = L_{K^1}: \H_{K^1} \mapto \H_{K^1}$,
	$BB^{*} = C_{K^2}:\Lcal^2(T,\nu) \mapto \Lcal^2(T,\nu)$, $B^{*}B = L_{K^2}:\H_{K^2}\mapto \H_{K^2}$,
	$A^{*}B = R_{K^1}R_{K^2}^{*}: \H_{K^2} \mapto \H_{K^1}$. 	With $\alpha = 2r-1$, so that $1+\alpha = 2r$, $1-\alpha = 2(1-r)$,
	\begin{align*}
		&D^{\gamma}_{\Rrm,r}[\Ncal(0,C_{K^1})||\Ncal(0,C_{K^2})] = \frac{1}{2}d^{2r-1}_{\logdet}[(C_{K^1}+\gamma I_{\Lcal^2(T,\nu)}, (C_{K^2}+\gamma I_{\Lcal^2(T,\nu)})]
		\\
		& = \frac{1}{2r(1-r)}\logdet
		\left[\frac{1}{\gamma}\begin{pmatrix}
			(1-r)A^{*}A & \sqrt{r(1-r)}A^{*}B
			\\
			\sqrt{r(1-r)}B^{*}A & rB^{*}B
		\end{pmatrix} + I_{\H_{K^1} \oplus \H_{K^2}}\right]
		\\
		& \quad -\frac{1}{2r}\logdet\left(\frac{A^{*}A}{\gamma} + I_{\H_{K^1}}\right)
		-\frac{1}{2(1-r)}\logdet\left(\frac{B^{*}B}{\gamma} + I_{\H_{K^2}}\right)
		\\
		& =  \frac{1}{2r(1-r)}\logdet
		\left[\frac{1}{\gamma}\begin{pmatrix}
			(1-r)L_{K^1} & \sqrt{r(1-r)}R_{12}
			\\
			\sqrt{r(1-r)}R_{12}^{*} & rL_{K^2}
		\end{pmatrix} + I_{\H_{K^1} \oplus \H_{K^2}}\right]
		\\
		& \quad -\frac{1}{2r}\logdet\left(\frac{L_{K^1}}{\gamma} + I_{\H_{K^1}}\right)
		-\frac{1}{2(1-r)}\logdet\left(\frac{L_{K^2}}{\gamma} + I_{\H_{K^2}}\right)
		\\
		&=  \frac{1}{2r(1-r)}\log\dettwo
		\left[\frac{1}{\gamma}\begin{pmatrix}
			(1-r)L_{K^1} & \sqrt{r(1-r)}R_{12}
			\\
			\sqrt{r(1-r)}R_{12}^{*} & rL_{K^2}
		\end{pmatrix} + I_{\H_{K^1} \oplus \H_{K^2}}\right]
		\\
		& \quad -\frac{1}{2r}\log\dettwo\left(\frac{L_{K^1}}{\gamma} + I_{\H_{K^1}}\right)
		-\frac{1}{2(1-r)}\log\dettwo\left(\frac{L_{K^2}}{\gamma} + I_{\H_{K^2}}\right).
	\end{align*}
For the second expression, we let $A = \frac{1}{\sqrt{m}}S_{K^1,\Xbf}: \H_{K^1} \mapto \R^m$,
$B = \frac{1}{\sqrt{m}}S_{K^2,\Xbf}: \H_{K^2} \mapto \R^m$, 
with $AA^{*}= \frac{1}{m}K^1[\Xbf]:\R^m \mapto \R^m$, $A^{*}A = L_{K^1,\Xbf}:\H_{K^1} \mapto \H_{K^1}$, $BB^{*}= \frac{1}{m}K^2[\Xbf]:\R^m \mapto \R^m$, $B^{*}B = L_{K^2,\Xbf}:\H_{K^2} \mapto \H_{K^2}$,
$A^{*}B = \frac{1}{m}S_{K^1,\Xbf}^{*}S_{K^2,\Xbf} = R_{12,\Xbf}:\H_{K^2} \mapto \H_{K^1}$. Then
\begin{align*}
&D^{\gamma}_{\Rrm,r}\left[\Ncal\left(0, \frac{1}{m}K^1[\Xbf]\right),\Ncal\left(0,\frac{1}{m}K^2[\Xbf]\right)\right]
\\
&= \frac{1}{2}d^{2r-1}_{\logdet}\left[\left(\frac{1}{m}K^1[\Xbf] + \gamma I_{\R^m}\right), \left(\frac{1}{m}K^2[\Xbf] + \gamma I_{\R^m}\right)\right]
\\
&=\frac{1}{2r(1-r)}\logdet
\left[\frac{1}{\gamma}\begin{pmatrix}
	(1-r)L_{K^1,\Xbf} & \sqrt{r(1-r)}R_{12,\Xbf}
	\\
	\sqrt{r(1-r)}R_{12,\Xbf}^{*} & rL_{K^2,\Xbf}
\end{pmatrix} + I_{\H_{K^1} \oplus \H_{K^2}}\right]
\\
& \quad -\frac{1}{2r}\logdet\left(\frac{L_{K^1,\Xbf}}{\gamma} + I_{\H_{K^1}}\right)
-\frac{1}{2(1-r)}\logdet\left(\frac{L_{K^2,\Xbf}}{\gamma} + I_{\H_{K^2}}\right)
\\
&=\frac{1}{2r(1-r)}\log\dettwo
\left[\frac{1}{\gamma}\begin{pmatrix}
	(1-r)L_{K^1,\Xbf} & \sqrt{r(1-r)}R_{12,\Xbf}
	\\
	\sqrt{r(1-r)}R_{12,\Xbf}^{*} & rL_{K^2,\Xbf}
\end{pmatrix} + I_{\H_{K^1} \oplus \H_{K^2}}\right]
\\
& \quad -\frac{1}{2r}\log\dettwo\left(\frac{L_{K^1,\Xbf}}{\gamma} + I_{\H_{K^1}}\right)
-\frac{1}{2(1-r)}\log\dettwo\left(\frac{L_{K^2,\Xbf}}{\gamma} + I_{\H_{K^2}}\right).
\end{align*}
\end{proof}

\begin{proof}
	\textbf{of Proposition \ref{proposition:KL-RKHS-operator-representation}}
	These formulas are obtained by applying Theorems \ref{theorem:logdet-alpha-1-AAstar-BBstar-representation-switch} and 
	\ref{theorem:logdet-alpha-1-AAstar-BBstar-representation-switch-finite}
	in the case
	$\gamma = \mu$ (note that the formulas in these theorems are identical in this case).
	For the first expression, we let $A = R_{K^1}^{*}:\H_{K^1} \mapto \Lcal^2(T,\nu)$,
	$B = R_{K^2}^{*}: \Lcal^2(T, \nu) \mapto \H_{K^2}$, with $AA^{*} = C_{K^1}: \Lcal^2(T,\nu)
	\mapto \Lcal^2(T,\nu)$, $A^{*}A = L_{K^1}: \H_{K^1} \mapto \H_{K^1}$,
	$BB^{*} = C_{K^2}:\Lcal^2(T,\nu) \mapto \Lcal^2(T,\nu)$, $B^{*}B = L_{K^2}:\H_{K^2}\mapto \H_{K^2}$,
	$A^{*}B = R_{K^1}R_{K^2}^{*}: \H_{K^2} \mapto \H_{K^1}$.
	
	For the second expression, we let $A = \frac{1}{\sqrt{m}}S_{K^1,\Xbf}: \H_{K^1} \mapto \R^m$,
	$B = \frac{1}{\sqrt{m}}S_{K^2,\Xbf}: \H_{K^2} \mapto \R^m$, 
	with $AA^{*}= \frac{1}{m}K^1[\Xbf]:\R^m \mapto \R^m$, $A^{*}A = L_{K^1,\Xbf}:\H_{K^1} \mapto \H_{K^1}$, $BB^{*}= \frac{1}{m}K^2[\Xbf]:\R^m \mapto \R^m$, $B^{*}B = L_{K^2,\Xbf}:\H_{K^2} \mapto \H_{K^2}$,
	$A^{*}B = \frac{1}{m}S_{K^1,\Xbf}^{*}S_{K^2,\Xbf} = R_{12,\Xbf}:\H_{K^2} \mapto \H_{K^1}$.
\end{proof}

The following lemma follows from the definition of the Hilbert-Schmidt norm.
\begin{lemma}
	\label{lemma:HS-norm-block-operators}
	Let $A_{11}\in \HS(\H_1), A_{12} \in \HS(\H_2,\H_1), A_{21}\in \HS(\H_1,\H_2), A_{22} \in \HS(\H_2)$. 
	Consider the operator 
	$A = \begin{pmatrix}
		A_{11} & A_{12}
		\\
		A_{21} & A_{22}
	\end{pmatrix}: \H_1 \oplus \H_2 \mapto \H_1 \oplus \H_2$. Then
	$||A||^2_{\HS} = \sum_{i,j=1}^2||A_{ij}||^2_{\HS}$ and $||A||_{\HS} \leq \sum_{i,j=1}^2||A_{ij}||_{\HS}$.
	%
	% $||A||^2_{\HS} = ||A_{11}||^2_{\HS} + ||A_{12}||^2_{\HS} + ||A_{21}||^2_{\HS} + ||A_{22}||^2_{\HS}$.
\end{lemma}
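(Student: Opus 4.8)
The plan is to compute the Hilbert-Schmidt norm directly from its basis-independent definition, choosing an orthonormal basis of the direct sum $\H_1 \oplus \H_2$ that is adapted to the block structure. Concretely, if $\{e_j^1\}_j$ is an orthonormal basis of $\H_1$ and $\{e_j^2\}_j$ is an orthonormal basis of $\H_2$, then $\{(e_j^1, 0)\}_j \cup \{(0, e_j^2)\}_j$ is an orthonormal basis of $\H_1 \oplus \H_2$ under the inner product $\la (v_1,v_2),(w_1,w_2)\ra = \la v_1,w_1\ra_{\H_1} + \la v_2,w_2\ra_{\H_2}$. This is the same construction already employed in the proof of Lemma \ref{lemma:det-1}.

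First I would evaluate $A$ on the two families of basis vectors. Since $A(v_1, v_2) = (A_{11}v_1 + A_{12}v_2,\; A_{21}v_1 + A_{22}v_2)$, applying $A$ to $(e_j^1, 0)$ gives $(A_{11}e_j^1, A_{21}e_j^1)$ and applying it to $(0, e_j^2)$ gives $(A_{12}e_j^2, A_{22}e_j^2)$. Using the direct-sum inner product, $\|A(e_j^1,0)\|^2 = \|A_{11}e_j^1\|^2_{\H_1} + \|A_{21}e_j^1\|^2_{\H_2}$ and $\|A(0,e_j^2)\|^2 = \|A_{12}e_j^2\|^2_{\H_1} + \|A_{22}e_j^2\|^2_{\H_2}$. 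Summing over the entire basis and regrouping the four resulting nonnegative series yields $\|A\|^2_{\HS} = \|A_{11}\|^2_{\HS} + \|A_{21}\|^2_{\HS} + \|A_{12}\|^2_{\HS} + \|A_{22}\|^2_{\HS} = \sum_{i,j=1}^2 \|A_{ij}\|^2_{\HS}$. This also confirms $A \in \HS(\H_1 \oplus \H_2)$, since each block is Hilbert-Schmidt by hypothesis.

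The second inequality is then purely elementary: setting $a_{ij} = \|A_{ij}\|_{\HS} \geq 0$, one has $\bigl(\sum_{i,j} a_{ij}\bigr)^2 = \sum_{i,j} a_{ij}^2 + 2\sum_{(i,j)\neq(k,l)} a_{ij}a_{kl} \geq \sum_{i,j} a_{ij}^2$, so taking square roots gives $\|A\|_{\HS} = \bigl(\sum_{i,j} a_{ij}^2\bigr)^{1/2} \leq \sum_{i,j} a_{ij} = \sum_{i,j=1}^2 \|A_{ij}\|_{\HS}$. There is no genuine obstacle in this lemma; the only point that warrants a word of justification is the interchange and regrouping of the four double series in the first step, which is unconditional because every summand is nonnegative (Tonelli), so the value is independent of the order of summation and of the choice of orthonormal bases.
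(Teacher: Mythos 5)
Your proof is correct and is exactly the argument the paper has in mind: the paper gives no written proof, stating only that the lemma ``follows from the definition of the Hilbert-Schmidt norm,'' and your computation with the adapted orthonormal basis $\{(e_j^1,0)\}\cup\{(0,e_j^2)\}$ of $\H_1\oplus\H_2$ is the standard way to carry that out. The elementary inequality $\bigl(\sum_{i,j}a_{ij}^2\bigr)^{1/2}\leq\sum_{i,j}a_{ij}$ for nonnegative reals finishes the second claim as you say.
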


\begin{proof}
\textbf{of Theorem \ref{theorem:Renyi-estimate-finite-covariance}}
By Proposition \ref{proposition:Renyi-RKHS-operator-representation},
\begin{align*}
	&\Delta = \left|D_{\Rrm,r}^{\gamma}\left[\Ncal\left(0, \frac{1}{m}K^1[\Xbf]\right)\bigg\vert\bigg\vert \Ncal\left(0, \frac{1}{m}K^2[\Xbf]\right)\right] 
-D_{\Rrm,r}^{\gamma}[\Ncal(0, C_{K^1})||\Ncal(0, C_{K^2})]
\right|
\\
& \leq \frac{1}{2r}\left|\log\dettwo\left(\frac{L_{K^1,\Xbf}}{\gamma} + I_{\H_{K^1}}\right) - \log\dettwo\left(\frac{L_{K^1}}{\gamma} + I_{\H_{K^1}}\right)\right|
\\
&\quad +\frac{1}{2(1-r)}\left|\log\dettwo\left(\frac{L_{K^2,\Xbf}}{\gamma} + I_{\H_{K^2}}\right)
-\log\dettwo\left(\frac{L_{K^2}}{\gamma} + I_{\H_{K^2}}\right)\right|
\\
&\quad + \frac{1}{2r(1-r)}\left|\log\dettwo
\left[\frac{1}{\gamma}\begin{pmatrix}
	(1-r)L_{K^1,\Xbf} & \sqrt{r(1-r)}R_{12,\Xbf}
	\\
	\sqrt{r(1-r)}R_{12,\Xbf}^{*} & rL_{K^2,\Xbf}
\end{pmatrix} + I_{\H_{K^1} \oplus \H_{K^2}}\right]\right.
\\
&\quad \quad \quad \quad \quad \left. - \log\dettwo
\left[\frac{1}{\gamma}\begin{pmatrix}
	(1-r)L_{K^1} & \sqrt{r(1-r)}R_{12}
	\\
	\sqrt{r(1-r)}R_{12}^{*} & rL_{K^2}
\end{pmatrix} + I_{\H_{K^1} \oplus \H_{K^2}}\right]\right|
\\
& = \frac{1}{2r}\Delta_1 + \frac{1}{2(1-r)}\Delta_2 + \frac{1}{2r(1-r)}\Delta_3.
\end{align*}

Define $D = \begin{pmatrix}
	(1-r)L_{K^1} & \sqrt{r(1-r)}R_{12}
	\\
	\sqrt{r(1-r)}R_{12}^{*} & rL_{K^2}
\end{pmatrix}$, $D_{\Xbf} = \begin{pmatrix}
(1-r)L_{K^1,\Xbf} & \sqrt{r(1-r)}R_{12,\Xbf}
\\
\sqrt{r(1-r)}R_{12,\Xbf}^{*} & rL_{K^2,\Xbf}
\end{pmatrix}$. By Theorem \ref{theorem:logdet-Hilbert-Carleman-bound},
\begin{align*}
&\Delta_3 = \left|\log\dettwo\left(I_{\H_{K^1} \oplus \H_{K^2}}+\frac{1}{\gamma}D_{\Xbf}\right) 
- \log\dettwo\left(I_{\H_{K^1}\oplus \H_{K^2}}+\frac{1}{\gamma}D\right)\right|
\\
& \leq
\frac{1}{2\gamma^2}||D_{\Xbf} - D||_{\HS}[||D_{\Xbf}|| + ||D||_{\HS}]. 
\end{align*}
By Lemma \ref{lemma:HS-norm-block-operators} and Proposition \ref{proposition:concentration-TK2K1-empirical}, 
\begin{align*}
&||D_{\Xbf}||_{\HS}^2 = (1-r)^2||L_{K^1}||^2_{\HS} + 2r(1-r)||R_{12}||^2_{\HS} + r^2||L_{K^2}||^2_{\HS}
\\
& \leq  (1-r)^2\kappa_1^4 + 2r(1-r)\kappa_1^1\kappa_2^2 + r^2\kappa_2^4 = [(1-r)\kappa_1^2 + r\kappa_2^2]^2
\imply ||D_{\Xbf}||_{\HS} \leq (1-r)\kappa_1^2 + r\kappa_2^2,
\\
&||D||_{\HS} \leq (1-r)\kappa_1^2 + r\kappa_2^2.
%\\
%&||D_{\Xbf}- D||_{\HS} \leq 
\end{align*}
By Lemma \ref{lemma:HS-norm-block-operators},
\begin{align*}
&||D_{\Xbf}- D||_{\HS}^2 \leq (1-r)^2||L_{K^1,\Xbf} - L_{K^1}||^2_{\HS} + 2r(1-r)||R_{12,\Xbf}-R_{12}||^2_{\HS}
+r^2||L_{K^2,\Xbf} - L_{K^2}||^2_{\HS}.
\end{align*}
By Proposition \ref{proposition:concentration-TK2K1-empirical},  for any $0 < \delta < 1$, with probability at least
$1-\delta$, the following hold simultaneously,
\begin{align*}
&||L_{K^1,\Xbf} - L_{K^1}||_{\HS} \leq \kappa_1^2\left(\frac{2\log\frac{6}{\delta}}{m} + \sqrt{\frac{2\log\frac{6}{\delta}}{m}}\right), 
\\
&||R_{12,\Xbf}-R_{12}||_{\HS} \leq \kappa_1\kappa_2\left(\frac{2\log\frac{6}{\delta}}{m} + \sqrt{\frac{2\log\frac{6}{\delta}}{m}}\right), 
\\
&||L_{K^2,\Xbf} - L_{K^2}||_{\HS} \leq \kappa_2^2\left(\frac{2\log\frac{6}{\delta}}{m} + \sqrt{\frac{2\log\frac{6}{\delta}}{m}}\right).
\end{align*}
It follows that, with probability at least $1-\delta$,
\begin{align*}
||D_{\Xbf}-D||_{\HS} \leq [(1-r)\kappa_1^2 + r\kappa_2^2]\left(\frac{2\log\frac{6}{\delta}}{m} + \sqrt{\frac{2\log\frac{6}{\delta}}{m}}\right).
\end{align*}
Consequently, with probability at least $1-\delta$,
\begin{align*}
\Delta_3 \leq \frac{1}{\gamma^2}[(1-r)\kappa_1^2 + r\kappa_2^2]^2\left(\frac{2\log\frac{6}{\delta}}{m} + \sqrt{\frac{2\log\frac{6}{\delta}}{m}}\right).
\end{align*}
	Similarly, by Theorem \ref{theorem:logdet-Hilbert-Carleman-bound} and Proposition \ref{proposition:concentration-TK2K1-empirical},
	% for any $0 < \delta < 1$, 
%with probability at least $1-\delta$,
\begin{align*}
	&\Delta_1 = \left|\log\dettwo\left(I_{\H_{K^1}}+\frac{L_{K^1,\Xbf}}{\gamma}\right) 
	- \log\dettwo\left(I_{\H_{K^1}}+ \frac{L_{K^1}}{\gamma}\right)\right| 
	\\
	&\leq \frac{1}{2\gamma^2}[||L_{K^1,\Xbf}||_{\HS} + ||L_{K^1}||_{\HS}]||L_{K^1,\Xbf} - L_{K^1}||_{\HS}
	%\\
	%& 
	\leq \frac{\kappa_1^4}{\gamma^2}\left(\frac{2\log\frac{6}{\delta}}{m} + \sqrt{\frac{2\log\frac{6}{\delta}}{m}}\right). 
\end{align*}
%Similarly, for any $0 < \delta < 1$, 
%with probability at least $1-\delta$,
\begin{align*}
	&\Delta_2 = \left|\log\dettwo\left(I_{\H_{K^2}}+\frac{L_{K^2,\Xbf}}{\gamma}\right) 
	- \log\dettwo\left(I_{\H_{K^2}}+ \frac{L_{K^2}}{\gamma}\right)\right| 
	%\\
	%&\leq \frac{1}{2\gamma^2}[||L_{K^1,\Xbf}||_{\HS} + ||L_{K^1}||_{\HS}]||L_{K^1,\Xbf} - L_{K^1}||_{\HS}
	%\\
	%& 
	\leq \frac{\kappa_2^4}{\gamma^2}\left(\frac{2\log\frac{6}{\delta}}{m} + \sqrt{\frac{2\log\frac{6}{\delta}}{m}}\right). 
\end{align*}
It follows that, with probability at least $1-\delta$,
\begin{align*}
\Delta \leq \frac{1}{2\gamma^2}\left[\frac{\kappa_1^4}{r} + \frac{\kappa_2^4}{1-r} + \frac{[(1-r)\kappa_1^2 + r\kappa_2^2]^2}{r(1-r)}\right]\left(\frac{2\log\frac{6}{\delta}}{m} + \sqrt{\frac{2\log\frac{6}{\delta}}{m}}\right).
\end{align*}
\end{proof}

\begin{proof}
	\textbf{of Theorem \ref{theorem:KL-estimate-finite-covariance}}
	By Proposition 
	\ref{proposition:KL-RKHS-operator-representation},
	\begin{align*}
		\Delta &= \left|D_{\KL}^{\gamma}\left[\Ncal\left(0, \frac{1}{m}K^1[\Xbf]\right)\bigg\vert\bigg\vert \Ncal\left(0, \frac{1}{m}K^2[\Xbf]\right)\right] 
		-D_{\KL}^{\gamma}[\Ncal(0, C_{K^1})||\Ncal(0, C_{K^2})]
		\right|
		\\
		& \leq \frac{1}{2}\left|\log\dettwo\left(I+\frac{L_{K^1,\Xbf}}{\gamma}\right) - \log\dettwo\left(I+ \frac{L_{K^1}}{\gamma}\right)\right|
		\\
		&\quad +\frac{1}{2}\left|\log\dettwo\left(I+\frac{L_{K^2,\Xbf}}{\gamma}\right)^{-1} - \log\dettwo\left(I+ \frac{L_{K^2}}{\gamma}\right)^{-1}\right|
		\\
		& \quad+ \frac{1}{2\gamma^2}\trace\left[{R_{12,\Xbf}^{*}R_{12,\Xbf}}\left(I+\frac{L_{K^2,\Xbf}}{\gamma}\right)^{-1}\right]- \trace\left[{R_{12}^{*}R_{12}}\left(I+\frac{L_{K^2}}{\gamma}\right)^{-1}\right]
		\\
		& = \frac{1}{2}\Delta_1 + \frac{1}{2}\Delta_2 + \frac{1}{2\gamma^2}\Delta_3.
	\end{align*}
	By Lemma \ref{lemma:A-I+B-inverse-trace-norm},
	% and Proposition \ref{proposition:concentration-TK2K1-empirical},
	\begin{align*}
		&\Delta_3 = \left|\trace\left[{R_{12,\Xbf}^{*}R_{12,\Xbf}}\left(I+\frac{L_{K^2,\Xbf}}{\gamma}\right)^{-1}\right]- \trace\left[{R_{12}^{*}R_{12}}\left(I+\frac{L_{K^2}}{\gamma}\right)^{-1}\right]\right|
		\\
		& \leq \left\|{R_{12,\Xbf}^{*}R_{12,\Xbf}}\left(I+\frac{L_{K^2,\Xbf}}{\gamma}\right)^{-1}
		- {R_{12}^{*}R_{12}}\left(I+\frac{L_{K^2}}{\gamma}\right)^{-1}\right\|_{\tr}
		\\
		& \leq ||R_{12,\Xbf}^{*}R_{12,\Xbf} - R_{12}^{*}R_{12}||_{\tr} + \frac{1}{\gamma}||R_{12}^{*}R_{12}||_{\HS}
		||L_{K^2,\Xbf} - L_{K^2}||_{\HS}.
	\end{align*}
	We have $||R_{12}^{*}R_{12}||_{\HS} = ||R_{12}||^2_{\HS} \leq \kappa_1^2\kappa_2^2$.
	By Proposition \ref{proposition:concentration-TK2K1-empirical}, for any $0 < \delta < 1$, 
	with probability at least $1-\delta$, the following hold simultaneously,
	\begin{align*}
		||R_{12,\Xbf}^{*}R_{12,\Xbf} - R_{12}^{*}R_{12}||_{\tr} &\leq  2\kappa_1^2\kappa_2^2\left[ \frac{2\log\frac{6}{\delta}}{m} + \sqrt{\frac{2\log\frac{6}{\delta}}{m}}\right],
		\\ 
		||L_{K^1,\Xbf} - L_{K^1}||_{\HS} &\leq \kappa_2^2\left(\frac{2\log\frac{6}{\delta}}{m} + \sqrt{\frac{2\log\frac{6}{\delta}}{m}}\right). 
		\\
		||L_{K^2,\Xbf} - L_{K^2}||_{\HS} &\leq \kappa_2^2\left(\frac{2\log\frac{6}{\delta}}{m} + \sqrt{\frac{2\log\frac{6}{\delta}}{m}}\right). 
	\end{align*}
	It follows that $\Delta_3 \leq \kappa_1^2\kappa_2^2\left(2+\frac{\kappa_2^2}{\gamma}\right)\left(\frac{2\log\frac{6}{\delta}}{m} + \sqrt{\frac{2\log\frac{6}{\delta}}{m}}\right)$.
	% with probability at least $1-\delta$.
	Similarly, by Theorem \ref{theorem:logdet-Hilbert-Carleman-bound},
	% and Proposition \ref{proposition:concentration-TK2K1-empirical}, for any $0 < \delta < 1$, 
%	with probability at least $1-\delta$,
	\begin{align*}
		&\Delta_1 = \left|\log\dettwo\left(I+\frac{L_{K^1,\Xbf}}{\gamma}\right) - \log\dettwo\left(I+ \frac{L_{K^1}}{\gamma}\right)\right| 
		\\
		&\leq \frac{1}{2\gamma^2}[||L_{K^1,\Xbf}||_{\HS} + ||L_{K^1}||_{\HS}]||L_{K^1,\Xbf} - L_{K^1}||_{\HS}
		%\\
		%& 
		\leq \frac{\kappa_1^4}{\gamma^2}\left(\frac{2\log\frac{6}{\delta}}{m} + \sqrt{\frac{2\log\frac{6}{\delta}}{m}}\right). 
	\end{align*}
	%Similarly, 
	\begin{align*}
		&\Delta_2 = \left|\log\dettwo\left(I+\frac{L_{K^2,\Xbf}}{\gamma}\right)^{-1} - \log\dettwo\left(I+ \frac{L_{K^2}}{\gamma}\right)^{-1}\right|
		\\
		&\leq \frac{1}{2\gamma^2}[||L_{K^2,\Xbf}||_{\HS} + ||L_{K^2}||_{\HS}]||L_{K^2,\Xbf} - L_{K^2}||_{\HS}
		\leq \frac{\kappa_2^4}{\gamma^2}\left(\frac{2\log\frac{6}{\delta}}{m} + \sqrt{\frac{2\log\frac{6}{\delta}}{m}}\right).
	\end{align*}
	Combining the three bounds for $\Delta_i$, $i=1,2,3$, we have, with probability at least $1-\delta$,
	\begin{align*}
	\Delta \leq \frac{1}{2\gamma^2}\left[\kappa_1^4 + \kappa_2^4 + \kappa_1^2\kappa_2^2\left(2+\frac{\kappa_2^2}{\gamma}\right)\right]\left(\frac{2\log\frac{6}{\delta}}{m} + \sqrt{\frac{2\log\frac{6}{\delta}}{m}}\right).
	\end{align*}
	%the following hold simultaneously,
	%\begin{align*}
	%	\Delta_1 &\leq \frac{\kappa_1^4}{\gamma^2}\left(\frac{2\log\frac{6}{\delta}}{m} + \sqrt{\frac{2\log\frac{6}{\delta}}{m}}\right),\;\;
	%	\Delta_2 \leq \frac{\kappa_2^4}{\gamma^2}\left(\frac{2\log\frac{6}{\delta}}{m} + \sqrt{\frac{2\log\frac{6}{\delta}}{m}}\right),
	%	\\
	%	\Delta_3 &\leq \kappa_1^2\kappa_2^2(2+\frac{\kappa_2^2}{\gamma})\left(\frac{2\log\frac{12}{\delta}}{m} + \sqrt{\frac{2\log\frac{12}{\delta}}{m}}\right).
	%\end{align*}
	%It thus follows that
	%$\Delta  \leq \frac{\kappa_1^4 + \kappa_2^4}{2\gamma^2}\left(\frac{2\log\frac{6}{\delta}}{m} + \sqrt{\frac{2\log\frac{6}{\delta}}{m}}\right) + \frac{\kappa_1^2\kappa_2^2}{2\gamma^2}(2+\frac{\kappa_2^2}{\gamma})\left(\frac{2\log\frac{12}{\delta}}{m} + \sqrt{\frac{2\log\frac{12}{\delta}}{m}}\right)$.
\end{proof}

\begin{proof}
	\textbf{of Theorem \ref{theorem:KL-estimate-unknown-1}}
	By Proposition \ref{proposition:concentration-empirical-covariance}, 
	$||K^i[\Xbf]||_F \leq m\kappa_i^2$, $i=1,2$, and
	for any $0 < \delta < 1$, the following hold simultaneously,
	\begin{align*}
		&||\hat{K^i}_{\Wbf^i}[\Xbf] - K^i[\Xbf]||_F \leq \frac{4\sqrt{3}m\kappa_i^2}{\sqrt{N}\delta},
		\;\;
		||\hat{K^i}_{\Wbf^i}[\Xbf]||_F \leq \frac{4m\kappa_i^2}{\delta}, \;\; i=1,2.
	\end{align*}
	By Theorem \ref{theorem:logdet-approx-infinite-sequence},
	\begin{align*}
		\Delta = &\left|\KL^{\gamma}\left[\Ncal\left(0, \frac{1}{m}\hat{K}^1_{\Wbf^1}[\Xbf]\right), \Ncal\left(0, \frac{1}{m}\hat{K}^2_{\Wbf^2}[\Xbf]\right)\right]
		\right.
		\nonumber
		\\
		&\quad \quad\left.
		- \KL^{\gamma}\left[\Ncal\left(0, \frac{1}{m}K^1[\Xbf]\right), \Ncal\left(0, \frac{1}{m}K^2[\Xbf]\right)\right]
		\right|
		\leq \frac{1}{\gamma^2}D_1D_2,
	\end{align*}
	where $D_1$ and $D_2$ are as follows
	\begin{align*}
		&D_1 = \left(1 + \frac{1}{m\gamma}||\hat{K}^2_{\Wbf^2}[\Xbf]||_F\right)
		\left(1 + \frac{1}{m\gamma}||K^2[\Xbf]||_F\right)
		\\
		&\times \left[\frac{1}{m}||\hat{K}^1_{\Wbf^1}[\Xbf]-K^1[\Xbf]||_F + 
		\left(1 + \frac{1}{2m\gamma}||\hat{K}^1_{\Wbf^1}[\Xbf]||_F
		+ \frac{1}{2m\gamma}||K^1[\Xbf]||_F\right)
		\frac{1}{m}||\hat{K}^2_{\Wbf^1}[\Xbf]-K^2[\Xbf]||_F
		\right]
		\\
		& \leq \left(1+\frac{4\kappa_2^2}{\delta\gamma}\right)\left(1+\frac{\kappa_2^2}{\gamma}\right)\left[\frac{4\sqrt{3}\kappa_1^2}{\sqrt{N}\delta} + \left(1 + \frac{2\kappa_1^2}{\delta\gamma} + \frac{\kappa_1^2}{2\gamma}\right)\frac{4\sqrt{3}\kappa_2^2}{\sqrt{N}\delta}\right]
		\\
		&
		= \left(1+\frac{4\kappa_2^2}{\delta\gamma}\right)\left(1+\frac{\kappa_2^2}{\gamma}\right)\left[\kappa_1^2 + \left(1 + \frac{2\kappa_1^2}{\delta\gamma} + \frac{\kappa_1^2}{2\gamma}\right)\kappa_2^2\right]\frac{4\sqrt{3}}{\sqrt{N}\delta},
	\end{align*}
	\begin{align*}
		D_2 &= \frac{1}{m}||\hat{K}^1_{\Wbf^1}[\Xbf]||_F + \frac{1}{m}||\hat{K}^2_{\Wbf^2}[\Xbf]||_F
		+ \frac{1}{m}||{K}^1[\Xbf]||_F + \frac{1}{m}||{K}^2[\Xbf]||_F
		\\
		&\quad+\frac{1}{m^2\gamma}(||\hat{K}^1_{\Wbf^1}[\Xbf]||_F + ||\hat{K}^2_{\Wbf^2}[\Xbf]||_F)(||{K}^1[\Xbf]||_F + ||{K}^2[\Xbf]||_F)
		\\
		& \leq (\kappa_1^2 + \kappa_2^2)\left(1+\frac{4}{\delta}\right) +\frac{4}{\delta\gamma}(\kappa_1^2 + \kappa_2^2)^2.
	\end{align*}
	Combining these expressions gives the final result.
\end{proof}

\begin{proof}
	\textbf{of Theorem \ref{theorem:KL-estimate-unknown-2}}
	Define the following quantities
	\begin{align*}
		\Delta_1 = &\left|\KL^{\gamma}\left[\Ncal\left(0, \frac{1}{m}\hat{K}^1_{\Wbf^1}[\Xbf]\right), \Ncal\left(0, \frac{1}{m}\hat{K}^2_{\Wbf^2}[\Xbf]\right)\right]
		\right.
		\nonumber
		\\
		&\quad \quad\left.
		- \KL^{\gamma}\left[\Ncal\left(0, \frac{1}{m}K^1[\Xbf]\right), \Ncal\left(0, \frac{1}{m}K^2[\Xbf]\right)\right]
		\right|,
	\end{align*}
	\begin{align*}
		\Delta_2 = 	&\left|\KL^{\gamma}\left[\Ncal\left(0, \frac{1}{m}{K}^1[\Xbf]\right), \Ncal\left(0, \frac{1}{m}{K}^2[\Xbf]\right)\right]
		\right.
		\nonumber 
		\\
		&\quad\left.- \KL^{\gamma}[\Ncal(0,C_{K^1}), \Ncal(0,C_{K^2})]\right|.
	\end{align*}
	By the triangle inequality,
	\begin{align*}
		\Delta = 	&\left|\KL^{\gamma}\left[\Ncal\left(0, \frac{1}{m}\hat{K}^1_{\Wbf^1}[\Xbf]\right), \Ncal\left(0, \frac{1}{m}\hat{K}^2_{\Wbf^2}[\Xbf]\right)\right]
		\right.
		\nonumber 
		\\
		&\quad\left.- \KL^{\gamma}[\Ncal(0,C_{K^1}), \Ncal(0,C_{K^2})]\right| \leq \Delta_1 + \Delta_2.
	\end{align*}
	By Theorem \ref{theorem:KL-estimate-unknown-1}, 
	for any $0 < \delta <1$, with probability at least $1-\frac{\delta}{2}$,
	\begin{align*}
		\Delta_1 \leq \frac{2D}{\gamma^2\sqrt{N}\delta},
	\end{align*}
	where
	\begin{align*}
		D =4\sqrt{3} \left(1+\frac{8\kappa_2^2}{\delta\gamma}\right)\left(1+\frac{\kappa_2^2}{\gamma}\right)\left[\kappa_1^2 + \left(1 + \frac{4\kappa_1^2}{\delta\gamma} + \frac{\kappa_1^2}{2\gamma}\right)\kappa_2^2\right]\left[
		(\kappa_1^2 + \kappa_2^2)\left(1+\frac{8}{\delta}\right) +\frac{8}{\delta\gamma}(\kappa_1^2 + \kappa_2^2)^2\right].
	\end{align*}
	By Theorem \ref{theorem:KL-estimate-finite-covariance}, for any $0 < \delta <1$, with probability at least $1-\frac{\delta}{2}$,
	\begin{align*}
		\Delta_2 \leq \frac{1}{2\gamma^2}\left[\kappa_1^4 + \kappa_2^4 + \kappa_1^2\kappa_2^2\left(2+\frac{\kappa_2^2}{\gamma}\right)\right]\left(\frac{2\log\frac{12}{\delta}}{m} + \sqrt{\frac{2\log\frac{12}{\delta}}{m}}\right).
	\end{align*}
	Combining these two expressions gives the final result.
\end{proof}

\subsection{Proofs for the general Gaussian process setting}
\label{section:proof-general-Gaussian}

\begin{lemma}
	\label{lemma:mean-square-concentration}
	Assume Assumptions B1-B3 and B5-B7. Let $\Xbf$ be independently sampled from $(T,\nu)$. For any $0 < \delta < 1$, with probability at least $1-\delta$,
	\begin{align}
		\left|\frac{1}{m}||\mu[\Xbf]||^2_{\R^m} - ||\mu||^2_{\Lcal^2(T,\nu)}\right| \leq B^2\left(\frac{2\log\frac{2}{\delta}}{m} + \sqrt{\frac{2\log\frac{2}{\delta}}{m}}\right).
	\end{align}
\end{lemma}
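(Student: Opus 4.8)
The plan is to recognize the left-hand side as the deviation of an empirical average from its expectation for a bounded scalar observable, and then invoke Proposition \ref{proposition:Pinelis} in the special case $\H = \R$. First I would introduce the scalar random variable $\phi:(T,\nu)\mapto \R$ defined by $\phi(x) = \mu(x)^2$. Since $\nu$ is a probability measure, its expectation is $\bE\phi = \int_T \mu(t)^2 d\nu(t) = \|\mu\|^2_{\Lcal^2(T,\nu)}$, while the empirical average over the independent sample $\Xbf = (x_j)_{j=1}^m$ is $\frac 1m \sum_{j=1}^m \phi(x_j) = \frac 1m \sum_{j=1}^m \mu(x_j)^2 = \frac 1m \|\mu[\Xbf]\|^2_{\R^m}$. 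Thus the quantity to be bounded is exactly $\left|\frac 1m \sum_{j=1}^m \phi(x_j) - \bE\phi\right|$, which is the form handled by Proposition \ref{proposition:Pinelis} with $\H = \R$ and $\|\cdot\| = |\cdot|$.

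Next I would verify the two hypotheses of that proposition and track the constants. Assumption B7 gives $\|\mu\|_{\infty} \le B$, hence $|\phi(x)| = \mu(x)^2 \le B^2$ almost surely, so one may take $M = B^2$. For the variance factor, $\sigma^2(\phi) = \bE|\phi|^2 = \int_T \mu(t)^4 d\nu(t) \le B^2 \int_T \mu(t)^2 d\nu(t) \le B^4$, where the last step again uses that $\nu$ is a probability measure together with $\|\mu\|_{\infty} \le B$ (so that $\int_T \mu^2 \, d\nu \le B^2 \nu(T) = B^2$). Substituting $M = B^2$ and $\sigma^2(\phi) \le B^4$ into the conclusion of Proposition \ref{proposition:Pinelis} then yields, with probability at least $1-\delta$,
\begin{align*}
\left|\frac 1m \|\mu[\Xbf]\|^2_{\R^m} - \|\mu\|^2_{\Lcal^2(T,\nu)}\right| \le \frac{2B^2 \log\frac 2\delta}{m} + \sqrt{\frac{2B^4 \log\frac 2\delta}{m}} = B^2\left(\frac{2\log\frac 2\delta}{m} + \sqrt{\frac{2\log\frac 2\delta}{m}}\right),
\end{align*}
which is precisely the claimed inequality.

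There is no serious obstacle here: the statement is a direct specialization of the Hilbert-space law of large numbers to a bounded real-valued observable. The only points requiring care are the bookkeeping of constants — in particular, using $\nu(T) = 1$ to pass from $\|\mu\|_{\infty} \le B$ to both the uniform bound $M = B^2$ and the clean variance bound $\sigma^2(\phi) \le B^4$, which is exactly what makes the prefactor collapse to the single constant $B^2$ appearing in the statement.
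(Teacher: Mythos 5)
Your proposal is correct and follows essentially the same route as the paper: define the bounded scalar random variable $x \mapsto \mu(x)^2$, note $M = B^2$ and $\sigma^2 \le B^4$ via $\|\mu\|_\infty \le B$, and apply Proposition \ref{proposition:Pinelis} with $\H = \R$. The paper's proof is just a terser version of the same argument.
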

\begin{proof} Define the random variable $\eta_{\mu}:(T,\nu)\mapto \R$ by $\eta_{\mu}(x) = \mu(x)^2$.
	Then $|\eta_{\mu}(x)| \leq B^2$ $\forall x \in T$, $\frac{1}{m}||\mu[\Xbf]||^2_{\R^m} = \frac{1}{m}\sum_{j=1}^m\mu(x_j)^2$ and $\bE[\eta(x)] = ||\mu||^2_{\Lcal^2(T,\nu)}$.
	The result then follows from Proposition \ref{proposition:Pinelis}.
\end{proof}

\begin{lemma}
	\label{lemma:RK-concentration}
	Assume Assumptions B1-B3 and B5-B7. Let $\Xbf$ be independently sampled from $(T,\nu)$. 
	Then $\frac{1}{m}||S_{\Xbf}^{*}\mu[\Xbf]||_{\H_K}\leq \kappa B$, $||R_K\mu||_{\H_K} \leq \kappa B$.
	For any $0 < \delta < 1$, with probability at least $1-\delta$,
	\begin{align}
		\left\|\frac{1}{m}S^{*}_{\Xbf}\mu[\Xbf] - R_K\mu\right\|_{\H_K} \leq \kappa B\left(\frac{2\log\frac{2}{\delta}}{m} + \sqrt{\frac{2\log\frac{2}{\delta}}{m}}\right).
	\end{align}	
\end{lemma}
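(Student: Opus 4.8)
The plan is to apply the Hilbert space law of large numbers in Proposition~\ref{proposition:Pinelis} to a single $\H_K$-valued random variable whose empirical mean is $\frac{1}{m}S_{\Xbf}^{*}\mu[\Xbf]$ and whose expectation is $R_K\mu$. Specifically, I would define $\xi:(T,\nu)\mapto\H_K$ by $\xi(x)=\mu(x)K_x$. Using $S_{\Xbf}^{*}(b_i)_{i=1}^m=\sum_{i=1}^m b_iK_{x_i}$ and $\mu[\Xbf]=(\mu(x_j))_{j=1}^m$, the sample average of $\xi$ is $\frac{1}{m}\sum_{i=1}^m\mu(x_i)K_{x_i}=\frac{1}{m}S_{\Xbf}^{*}\mu[\Xbf]$, while from the definition of $R_K$ in Eq.~\eqref{equation:RK} its mean is $\bE\xi=\int_T\mu(x)K_x\,d\nu(x)=R_K\mu$. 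Thus the target deviation is exactly $\|\frac{1}{m}\sum_i\xi(x_i)-\bE\xi\|_{\H_K}$.

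Next I would establish the almost sure bound on $\xi$. Since $\|K_x\|_{\H_K}^2=\la K_x,K_x\ra_{\H_K}=K(x,x)\le\kappa^2$ by the uniform kernel bound (Assumption B5) and $|\mu(x)|\le\|\mu\|_\infty\le B$ by Assumption B7, I get $\|\xi(x)\|_{\H_K}=|\mu(x)|\,\|K_x\|_{\H_K}\le\kappa B$ for every $x\in T$. This $\kappa B$ bound, combined with the triangle inequality for the sample average and Jensen's inequality $\|\bE\xi\|\le\bE\|\xi\|$ for the mean, immediately gives the two deterministic estimates $\frac{1}{m}\|S_{\Xbf}^{*}\mu[\Xbf]\|_{\H_K}\le\kappa B$ and $\|R_K\mu\|_{\H_K}\le\kappa B$.

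Finally, I would invoke Proposition~\ref{proposition:Pinelis} with $M=\kappa B$ and $\sigma^2(\xi)=\bE\|\xi\|_{\H_K}^2\le M^2=(\kappa B)^2$, which yields, with probability at least $1-\delta$,
\begin{align*}
\left\|\frac{1}{m}S_{\Xbf}^{*}\mu[\Xbf]-R_K\mu\right\|_{\H_K}\le\frac{2\kappa B\log\frac{2}{\delta}}{m}+\sqrt{\frac{2(\kappa B)^2\log\frac{2}{\delta}}{m}}=\kappa B\left(\frac{2\log\frac{2}{\delta}}{m}+\sqrt{\frac{2\log\frac{2}{\delta}}{m}}\right),
\end{align*}
exactly the claimed inequality. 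The argument is a direct application of vector-valued concentration, so there is no serious obstacle; the only points needing care are the measurability of $x\mapsto\mu(x)K_x$ as an $\H_K$-valued map (which follows from continuity of $K$ and boundedness of $\mu$) and the correct identification of $\xi$ with the empirical and population means above.
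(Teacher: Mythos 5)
Your proposal is correct and follows essentially the same route as the paper: both define the $\H_K$-valued random variable $x\mapsto\mu(x)K_x$, identify its empirical mean with $\frac{1}{m}S_{\Xbf}^{*}\mu[\Xbf]$ and its expectation with $R_K\mu$, bound its norm by $\kappa B$, and apply Proposition \ref{proposition:Pinelis}. The only cosmetic difference is that the paper bounds $\|R_K\mu\|_{\H_K}$ via the operator norm of $R_K$ rather than Jensen's inequality, which changes nothing.
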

\begin{proof}
	Define the random variable $\eta_{K,\mu}:(T,\nu) \mapto \H_K$ by $\eta_{K,\mu}(x) = \mu(x)K_x$. Then
	$||\eta_{K,\mu}(x)||_{\H_K} = \sqrt{\mu(x)^2K(x,x)} \leq \kappa B$ $\forall x \in T$,
	$\frac{1}{m}\sum_{j=1}^m\eta_{K,\mu}(x_j) = \frac{1}{m}\sum_{j=1}^m\mu(x_j)K_{x_j} = \frac{1}{m}S^{*}_{\Xbf}\mu[\Xbf]$, and $\bE[\eta_{K,\mu}(x)] =\int_{T}K(x,t)\mu(t)d\nu(t) = R_K\mu(x)$, with
	\begin{align*}
		\frac{1}{m}||S^{*}_{\Xbf}\mu[\Xbf]||_{\H_K} \leq \kappa B, \;||R_K\mu||_{\H_K} \leq ||R_K:\Lcal^2(T,\nu)\mapto \H_K||\;||\mu||_{\Lcal^2(T,\nu)} \leq \kappa B.
	\end{align*} The result then follows from Proposition \ref{proposition:Pinelis}.
	%	\begin{align*}
	%	(R_K\mu)(x) = \int_{T}K(x,t)\mu(t)d\nu(t)
	%	\end{align*}
	%\begin{align*}
	%S^{*}_{\Xbf}\mu_{\Xbf}= \sum_{j=1}^mK_{x_j}\mu(x_j)
	%\end{align*}
\end{proof}

\begin{proof}
	\textbf{of Proposition \ref{proposition:Renyi-mean-RKHS-representation}}
	Let $\ubf_1 = \sqrt{1-r}R_{K^1}(\mu^1-\mu^2) \in \H_{K^1}$, $\ubf_2 = \sqrt{r}R_{K^2}(\mu^1-\mu^2)\in \H_{K^2}$.
	%Let $\ubf  = \begin{pmatrix}\sqrt{1-r}R_{K^1}(\mu^1-\mu^2)\\ \sqrt{r}R_{K^2}(\mu^1-\mu^2)
	%	\end{pmatrix} \in \H_{K^1} \oplus \H_{K^2}$.
	By Lemma \ref{lemma:inverse-AAstar-plus-BBstar},
	\begin{align*}
		&\la \mu^1-\mu^2, [(1-r)(C_{K^1} + \gamma I) + r(C_{K^2}+\gamma I)]^{-1}(\mu^1 - \mu^2)\ra_{\Lcal^2(T,\nu)}
		\\
		& %= \frac{1}{\gamma}||\mu^1-\mu^2||^2_{\Lcal^2(T,\nu)} 
		= \frac{1}{\gamma}\left\la \mu^1-\mu^2, \left(\frac{1-r}{\gamma}R_{K^1}^{*}R_{K^1} + \frac{r}{\gamma}R_{K^2}^{*}R_{K^2} + I\right)^{-1}(\mu^1-\mu^2)\right\ra_{\Lcal^2(T,\nu)}
		\\
		& = \frac{1}{\gamma}||\mu^1-\mu^2||^2_{\Lcal^2(T,\nu)} 
		\\
		&\quad -\frac{1}{\gamma^2}\left\la
		\begin{pmatrix}
			\ubf_1
			\\
			\ubf_2
		\end{pmatrix},
		\left[\frac{1}{\gamma}\begin{pmatrix} (1-r)R_{K^1}R_{K^1}^{*} & \sqrt{r(1-r)}R_{K^1}R_{K^2}^{*}
			\\
			\sqrt{r(1-r)}R_{K^2}R_{K^1}^{*} & rR_{K^2}R_{K^2}^{*}
		\end{pmatrix}
		+ I_{\H_{K^1} \oplus \H_{K^2}}\right]^{-1}\begin{pmatrix}
			\ubf_1
			\\
			\ubf_2
		\end{pmatrix}\right\ra_{\H_{K^1}\oplus \H_{K^2}}
		\\
		& = \frac{1}{\gamma}||\mu^1-\mu^2||^2_{\Lcal^2(T,\nu)} 
		\\
		&\quad - \frac{1}{\gamma^2}\left\la
		\begin{pmatrix}
			\ubf_1
			\\
			\ubf_2
		\end{pmatrix},
		\left[\frac{1}{\gamma}\begin{pmatrix} (1-r)L_{K^1} & \sqrt{r(1-r)}R_{12}
			\\
			\sqrt{r(1-r)}R_{12}^{*} & r L_{K^2}
		\end{pmatrix}
		+ I_{\H_{K^1} \oplus \H_{K^2}}\right]^{-1}\begin{pmatrix}
			\ubf_1
			\\
			\ubf_2
		\end{pmatrix}\right\ra_{\H_{K^1}\oplus \H_{K^2}}.
		%\nonumber
	\end{align*}
	Similarly, for the empirical version,
	let $\ubf_{1,\Xbf} = \frac{\sqrt{1-r}}{m}S_{1,\Xbf}^{*}(\mu^1[\Xbf] - \mu^2[\Xbf]) \in \H_{K^1}$,
	$\ubf_{2,\Xbf} = \frac{\sqrt{r}}{m}S_{2,\Xbf}^{*}(\mu^1[\Xbf] - \mu^2[\Xbf]) \in \H_{K^2}$, then
	\begin{align*}
		&\la \mu^1[\Xbf] - \mu^2[\Xbf], [(1-r)(K^1[\Xbf]+m\gamma I) + r(K^2[\Xbf] + m\gamma I)]^{-1}(\mu^1[\Xbf] - \mu^2[\Xbf])\ra_{\R^m}
		\\
		& %= \frac{1}{m\gamma}||\mu^1[\Xbf] -\mu^2[\Xbf]||^2_{\R^m} 
		%\\
		%&\quad 
		= \frac{1}{m\gamma}
		\left\la \mu^1[\Xbf] - \mu^2[\Xbf], \left[\frac{(1-r)}{m\gamma}S_{1,\Xbf}S_{1,\Xbf}^{*} + \frac{r}{m\gamma}S_{2,\Xbf}S_{2,\Xbf}^{*} + I\right]^{-1}(\mu^1[\Xbf] - \mu^2[\Xbf])\right\ra_{\R^m}
		\\
		& = \frac{1}{m\gamma}||\mu^1[\Xbf] -\mu^2[\Xbf]||^2_{\R^m} 
		\\
		& - \frac{1}{\gamma^2}\left\la \begin{pmatrix}
			\ubf_{1,\Xbf}
			\\
			\ubf_{2,\Xbf}
		\end{pmatrix} \left[\frac{1}{m\gamma}\begin{pmatrix}
			(1-r)S_{1,\Xbf}^{*}S_{1,\Xbf} & \sqrt{r(1-r)}S_{1,\Xbf}^{*}S_{2,\Xbf}
			\\
			\sqrt{r(1-r)}S_{2,\Xbf}^{*}S_{1,\Xbf} & rS_{2,\Xbf}^{*}S_{2,\Xbf}
		\end{pmatrix} + I_{\H_{K^1}\oplus \H_{K^2}}\right]^{-1} \begin{pmatrix}
			\ubf_{1,\Xbf}
			\\
			\ubf_{2,\Xbf}
		\end{pmatrix}\right\ra_{\H_{K^1} \oplus \H_{K^2}}
		\\
		& = \frac{1}{m\gamma}||\mu^1[\Xbf] -\mu^2[\Xbf]||^2_{\R^m} 
		\\
		& \quad - \frac{1}{\gamma^2}\left\la \begin{pmatrix}
			\ubf_{1,\Xbf}
			\\
			\ubf_{2,\Xbf}
		\end{pmatrix} \left[\frac{1}{\gamma}\begin{pmatrix}
			(1-r)L_{K^1,\Xbf} & \sqrt{r(1-r)}R_{12,\Xbf}
			\\
			\sqrt{r(1-r)}R_{12,\Xbf}^{*} & rL_{K^2,\Xbf}
		\end{pmatrix} + I_{\H_{K^1}\oplus \H_{K^2}}\right]^{-1} \begin{pmatrix}
			\ubf_{1,\Xbf}
			\\
			\ubf_{2,\Xbf}
		\end{pmatrix}\right\ra_{\H_{K^1} \oplus \H_{K^2}}.
	\end{align*}
\end{proof}

\begin{proof}
	\textbf{of Proposition \ref{proposition:Renyi-mean-Gaussian-process-estimate}}
	Let $\ubf = \begin{pmatrix}\ubf_1\\ \ubf_2\end{pmatrix}$, $\ubf_{\Xbf} = \begin{pmatrix}\ubf_{1,\Xbf}\\ \ubf_{2,\Xbf}\end{pmatrix}$. As in the proof of Theorem \ref{theorem:Renyi-estimate-finite-covariance}, define $D = \begin{pmatrix}
		(1-r)L_{K^1} & \sqrt{r(1-r)}R_{12}
		\\
		\sqrt{r(1-r)}R_{12}^{*} & rL_{K^2}
	\end{pmatrix}$, $D_{\Xbf} = \begin{pmatrix}
		(1-r)L_{K^1,\Xbf} & \sqrt{r(1-r)}R_{12,\Xbf}
		\\
		\sqrt{r(1-r)}R_{12,\Xbf}^{*} & rL_{K^2,\Xbf}
	\end{pmatrix}$.
	By Proposition \ref{proposition:Renyi-mean-RKHS-representation},
	\begin{align*}
		&\Delta = \left|\la \mu^1[\Xbf] - \mu^2[\Xbf], [(1-r)(K^1[\Xbf]+m\gamma I) + r(K^2[\Xbf] + m\gamma I)]^{-1}(\mu^1[\Xbf] - \mu^2[\Xbf])\ra_{\R^m}\right.
		\nonumber
		\\
		&\quad \left.	-\la \mu^1-\mu^2, [(1-r)(C_{K^1} + \gamma I) + r(C_{K^2}+\gamma I)]^{-1}(\mu^1 - \mu^2)\ra_{\Lcal^2(T,\nu)}\right|
		\\
		& \leq \frac{1}{\gamma}\left|\frac{1}{m}||\mu^1[\Xbf]-\mu^2[\Xbf]||^2_{\R^m} - ||\mu^1-\mu^2||^2_{\Lcal^2(T,\nu)}\right|
		\\
		& + \frac{1}{\gamma^2}\left|\left\la \ubf_{\Xbf}, \left(\frac{1}{\gamma}D_{\Xbf} + I\right)^{-1}\ubf_{\Xbf}\right\ra_{\H_{K^1}\oplus \H_{K^2}} -\left \la \ubf, \left(\frac{1}{\gamma}D + I\right)^{-1}\ubf\right\ra_{\H_{K^1}\oplus \H_{K^2}} \right|
		\\
		& = \frac{1}{\gamma}\Delta_1 + \frac{1}{\gamma^2}\Delta_2.
	\end{align*}
	By Lemma \ref{lemma:mean-square-concentration}, for any $0 < \delta <1$, with probability at least $1-\delta$,
	\begin{align*}
		\Delta_1 \leq (B_1+B_2)^2\left(\frac{2\log\frac{2}{\delta}}{m} + \sqrt{\frac{2\log\frac{2}{\delta}}{m}}\right).
	\end{align*}
	By Lemma \ref{lemma:mean-term-difference},
	\begin{align*}
		\Delta_2 \leq \frac{1}{\gamma}||D_X - D||\;||\ubf_{\Xbf}||\;||\ubf|| + ||\ubf_{\Xbf}-\ubf||[||\ubf_{\Xbf}||+||\ubf||].
	\end{align*}
	By Lemma \ref{lemma:RK-concentration},
	\begin{align*}
		&||\ubf||^2_{\H_{K^1} \oplus \H_{K^2}} = ||\ubf_1||^2_{\H_{K^1}} + ||\ubf_2||^2_{\H_{K^2}} \leq (1-r)\kappa_1^2(B_1+B_2)^2 + r\kappa_2^2(B_1+B_2)^2
		\\
		&\imply ||\ubf||_{\H_{K^1} \oplus \H_{K^2}} \leq \sqrt{(1-r)\kappa_1^2 + r\kappa_2^2}(B_1+B_2).
	\end{align*}
	Similarly,
	%\begin{align*}
	$||\ubf_{\Xbf}||_{\H_{K^1} \oplus \H_{K^2}} \leq \sqrt{(1-r)\kappa_1^2 + r\kappa_2^2}(B_1+B_2)$.
	%\end{align*}
	By Lemma \ref{lemma:mean-square-concentration}, for any $0 < \delta < 1$,
	the following hold simultaneously
	\begin{align*}
		||\ubf_{1,\Xbf}-\ubf_1||_{\H_{K^1}}\leq \sqrt{1-r}\kappa_1 (B_1+B_2)\left(\frac{2\log\frac{4}{\delta}}{m} + \sqrt{\frac{2\log\frac{4}{\delta}}{m}}\right),
		\\
		||\ubf_{2,\Xbf}-\ubf_2||_{\H_{K^2}}\leq \sqrt{r}\kappa_2 (B_1+B_2)\left(\frac{2\log\frac{4}{\delta}}{m} + \sqrt{\frac{2\log\frac{4}{\delta}}{m}}\right).
	\end{align*}
	It follows that
	\begin{align*}
		||\ubf_{\Xbf}-\ubf||_{\H_{K^1}\oplus \H_{K^2}} =\sqrt{||\ubf_{1,\Xbf}-\ubf_1||^2_{\H_{K^1}} +||\ubf_{2,\Xbf}-\ubf_2||^2_{\H_{K^2}} }
		\\
		\leq \sqrt{(1-r)\kappa_1^2 + r\kappa_2^2}(B_1+B_2)\left(\frac{2\log\frac{4}{\delta}}{m} + \sqrt{\frac{2\log\frac{4}{\delta}}{m}}\right).
	\end{align*}
	% Then $||D_{\Xbf}||_{\HS} \leq (1-r)\kappa_1^2 + r\kappa_2^2$,
	%$||D||_{\HS} \leq (1-r)\kappa_1^2 + r\kappa_2^2$. 
	From the proof of Theorem \ref{theorem:Renyi-estimate-finite-covariance},
	for any $0 < \delta < 1$, with probability at least $1-\delta$,
	\begin{align*}
		||D_{\Xbf}-D||_{\HS} \leq [(1-r)\kappa_1^2 + r\kappa_2^2]\left(\frac{2\log\frac{6}{\delta}}{m} + \sqrt{\frac{2\log\frac{6}{\delta}}{m}}\right).
	\end{align*}
	Combining all the expressions for $D_{\Xbf},D,\ubf_{\Xbf}, \ubf$, we have that for any $0 < \delta < 1$, with probability at least $1-\delta$,
	\begin{align*}
		\Delta_2 &\leq \frac{1}{\gamma}[(1-r)\kappa_1^2 + r\kappa_2^2]^2(B_1+B_2)^2\left(\frac{2\log\frac{12}{\delta}}{m} + \sqrt{\frac{2\log\frac{12}{\delta}}{m}}\right)
		\\
		&\quad + 2[(1-r)\kappa_1^2 + r\kappa_2^2](B_1+B_2)^2\left(\frac{2\log\frac{8}{\delta}}{m} + \sqrt{\frac{2\log\frac{8}{\delta}}{m}}\right).
	\end{align*}
	Combining this with the expression for $\Delta_1$, we have that for any $0 < \delta < 1$, with probability at least $1-\delta$,
	\begin{align*}
		&\Delta \leq \frac{1}{\gamma}(B_1+B_2)^2\left(\frac{2\log\frac{4}{\delta}}{m} + \sqrt{\frac{2\log\frac{4}{\delta}}{m}}\right) + \frac{1}{\gamma^3}[(1-r)\kappa_1^2 + r\kappa_2^2]^2(B_1+B_2)^2\left(\frac{2\log\frac{24}{\delta}}{m} + \sqrt{\frac{2\log\frac{24}{\delta}}{m}}\right)
		\\
		&\quad + \frac{2}{\gamma^2}[(1-r)\kappa_1^2 + r\kappa_2^2](B_1+B_2)^2\left(\frac{2\log\frac{16}{\delta}}{m} + \sqrt{\frac{2\log\frac{16}{\delta}}{m}}\right)
		\\
		& \leq \frac{1}{\gamma}(B_1+B_2)^2\left[1 + \frac{(1-r)\kappa_1^2 + r\kappa_2^2}{\gamma}\right]^2\left(\frac{2\log\frac{24}{\delta}}{m} + \sqrt{\frac{2\log\frac{24}{\delta}}{m}}\right).
	\end{align*}
\end{proof}

\begin{proof}
	\textbf{of Theorem \ref{theorem:Renyi-estimate-Gaussian-process-general-case}}
	This follows by combining Proposition \ref{proposition:Renyi-mean-Gaussian-process-estimate} with Theorem \ref{theorem:Renyi-estimate-finite-covariance}.
\end{proof}
\begin{proof}
	\textbf{of Theorem \ref{theorem:KL-estimate-Gaussian-process-general-case}}
	This follows by combining Proposition \ref{proposition:Renyi-mean-Gaussian-process-estimate}, $r=1$, with 
	Theorem \ref{theorem:KL-estimate-finite-covariance}.
\end{proof}

\subsection{Miscellaneous technical results} 
\label{section:misc}

In this section, we present technical results that are used in the proofs of the main results.
The following results are obtained by applying L'Hopital's rule
\begin{lemma}
	\label{lemma:limit-log-sum}
	Let $x \in \R, x > -1$ be fixed. Then
	\begin{align}
		\lim_{\beta \approach 0}\frac{\log\left(\beta(1+x)^{1-\beta} + (1-\beta)(1+x)^{-\beta}\right)}{\beta (1-\beta)}
		&= x -\log(1+x),
		\\
		\lim_{\beta \approach 1}\frac{\log\left(\beta(1+x)^{1-\beta} + (1-\beta)(1+x)^{-\beta}\right)}{\beta (1-\beta)}
		&=\log(1+x) - \frac{x}{1+x}.
	\end{align}
\end{lemma}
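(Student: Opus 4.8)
The plan is to reduce both statements to a routine application of L'Hopital's rule. First I would set $y = 1+x > 0$ and define
\begin{align}
F(\beta) = \beta\, y^{1-\beta} + (1-\beta)\, y^{-\beta},
\end{align}
so that the quantity of interest is $\log F(\beta) / [\beta(1-\beta)]$. The key preliminary observation is that both limits are genuine $0/0$ indeterminate forms: since $F(0) = y^{0} = 1$ and $F(1) = y^{0} = 1$, we have $\log F(0) = \log F(1) = 0$, while the denominator $\beta(1-\beta)$ vanishes at both $\beta = 0$ and $\beta = 1$. By continuity, $F(\beta) > 0$ in a neighborhood of each endpoint, so $\log F$ is well-defined and differentiable there.

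Next I would differentiate, writing $y^{1-\beta} = \exp((1-\beta)\log y)$ and $y^{-\beta} = \exp(-\beta \log y)$ and applying the product and chain rules, to obtain
\begin{align}
F'(\beta) = y^{1-\beta}(1 - \beta \log y) + y^{-\beta}\bigl(-1 - (1-\beta)\log y\bigr),
\end{align}
so that $(\log F)'(\beta) = F'(\beta)/F(\beta)$, while the derivative of the denominator is $1 - 2\beta$.

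Then I would apply L'Hopital at each endpoint. At $\beta = 0$ one has $F(0) = 1$, $F'(0) = y - 1 - \log y$, and $1 - 2\beta = 1$, so the limit equals $y - 1 - \log y = x - \log(1+x)$. At $\beta = 1$ one has $F(1) = 1$, $F'(1) = 1 - \log y - y^{-1}$, and $1 - 2\beta = -1$, so the limit equals $-(1 - \log y - y^{-1}) = \log y - 1 + y^{-1} = \log(1+x) - \frac{x}{1+x}$, where in the last step I use $\frac{x}{1+x} = \frac{y-1}{y} = 1 - y^{-1}$. In each case the ratio of derivatives is continuous at the endpoint (the quantities $F(\beta)$ and $1-2\beta$ are nonzero there), which legitimizes the application of L'Hopital.

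There is no serious obstacle here; the computation is entirely elementary. The only points requiring care are the correct differentiation of the $\beta$-dependent exponents $y^{1-\beta}$ and $y^{-\beta}$, which are exponential rather than power functions of $\beta$, and the verification that the indeterminate form is of type $0/0$ so that L'Hopital is applicable at both endpoints.
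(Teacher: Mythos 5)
Your proposal is correct and follows exactly the route the paper indicates: the paper proves this lemma simply by invoking L'Hopital's rule, and your computation (verifying the $0/0$ form at both endpoints, differentiating $F(\beta)=\beta y^{1-\beta}+(1-\beta)y^{-\beta}$ correctly as an exponential in $\beta$, and evaluating the quotient of derivatives) is the intended argument carried out in full. No gaps.
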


\begin{lemma}
	\label{lemma:log-sum-inequality-1}
	Let $0 \leq \beta \leq 1$ be fixed. For $x > -1$,
	\begin{align}
		0 \leq \log\left(\beta(1+x)^{1-\beta} + (1-\beta)(1+x)^{-\beta}\right) \leq {\beta(1-\beta)}\frac{x^2}{1+x}. 
	\end{align}
	For $0 < \beta < 1$, equality on both sides happens if and only if $x = 0$.
	For $x \geq 0$,
	\begin{align}
		0 \leq \log\left(\beta(1+x)^{1-\beta} + (1-\beta)(1+x)^{-\beta}\right) \leq \frac{\beta(1-\beta)}{2}x^2. 
	\end{align}
	For $0 < \beta < 1$, equality on both sides happens if and only if $x = 0$.
\end{lemma}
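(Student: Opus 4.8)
The plan is to reduce everything to a one-variable calculus problem via the substitution $t = 1+x$, which ranges over $(0,\infty)$, and to study the function $\psi(t) = \log\bigl(\beta t^{1-\beta} + (1-\beta)t^{-\beta}\bigr)$. The cases $\beta = 0$ and $\beta = 1$ are trivial, since then $\psi \equiv 0$ and both upper bounds vanish, so I would assume $0 < \beta < 1$ throughout. Writing $g(t) = \beta t^{1-\beta} + (1-\beta)t^{-\beta}$, the lower bound $\psi(t) \geq 0$ is immediate from the weighted arithmetic--geometric mean inequality applied to the values $t^{1-\beta}, t^{-\beta}$ with weights $\beta, 1-\beta$: one gets $g(t) \geq (t^{1-\beta})^{\beta}(t^{-\beta})^{1-\beta} = t^{\beta(1-\beta) - \beta(1-\beta)} = 1$, with equality precisely when $t^{1-\beta} = t^{-\beta}$, i.e. $t = 1$ (equivalently $x = 0$).

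For the upper bounds I would first compute the derivative in closed form. A direct calculation gives $g'(t) = \beta(1-\beta)t^{-\beta-1}(t-1)$, and, after factoring $t^{-\beta}$ out of $g(t)$, $\psi'(t) = g'(t)/g(t) = \dfrac{\beta(1-\beta)(t-1)}{t(\beta t + 1 - \beta)}$. This clean expression is the engine for both estimates. For the first inequality, set $D(t) = \beta(1-\beta)\dfrac{(t-1)^2}{t} - \psi(t)$, so that $D(1) = 0$, and compute $D'(t) = \beta(1-\beta)(t-1)\Bigl[\dfrac{t+1}{t^2} - \dfrac{1}{t(\beta t + 1 - \beta)}\Bigr]$. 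Putting the bracket over a common denominator, its numerator simplifies to $\beta t^2 + 1 - \beta = 1 - \beta(1-t^2)$, which is strictly positive for every $t > 0$: when $t \geq 1$ it is at least $1$, and when $0 < t < 1$ we have $\beta(1-t^2) < 1$ since $0 < \beta < 1$. Hence $D'(t)$ has the sign of $t-1$, so $D$ attains its global minimum at $t = 1$ and $D(t) \geq 0$, with equality only at $t = 1$; this is exactly the first bound.

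The second inequality is handled by the same device but restricted to $t \geq 1$ (i.e. $x \geq 0$). With $\tilde D(t) = \frac{\beta(1-\beta)}{2}(t-1)^2 - \psi(t)$ and $\tilde D(1) = 0$, the key simplification is that the numerator of $\tilde D'(t) = \beta(1-\beta)(t-1)\bigl[1 - \frac{1}{t(\beta t + 1 - \beta)}\bigr]$ factors as $\beta t^2 + (1-\beta)t - 1 = (t-1)(\beta t + 1)$. Thus $\tilde D'(t) = \dfrac{\beta(1-\beta)(t-1)^2(\beta t + 1)}{t(\beta t + 1 - \beta)} \geq 0$ on $[1,\infty)$ (the denominator $\beta(t-1)+1$ is positive there), so $\tilde D$ is nondecreasing and $\tilde D(t) \geq \tilde D(1) = 0$, again with equality only at $t = 1$.

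The main thing to watch is the algebra in the two numerator simplifications and, for the first bound, verifying positivity of $\beta t^2 + 1 - \beta$ uniformly over the \emph{full} range $t > 0$ (which is precisely where the constraint $0 \leq \beta \leq 1$ is used); the restriction to $t \geq 1$ in the second bound is what makes its factored numerator nonnegative. Once those sign determinations are in hand, the monotonicity arguments and the equality characterizations follow immediately, and undoing the substitution $t = 1+x$ recovers the stated inequalities.
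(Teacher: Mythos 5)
Your proof is correct and, modulo the substitution $t=1+x$, follows essentially the same route as the paper: both arguments reduce to the sign of the derivative of the difference between the bound and $\log(1+\beta x)-\beta\log(1+x)$, with the same factorizations $1+2\beta x+\beta x^2$ (your $\beta t^2+1-\beta$) and $(t-1)(\beta t+1)$ appearing in the two cases. The only genuine variation is your use of the weighted AM--GM inequality for the lower bound, which is a slightly cleaner alternative to the paper's derivative argument for that part.
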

\begin{proof}
	For $\beta = 0$ or $\beta =1$, all quantities involved are equal to zero. Consider the case $0 < \beta < 1$.
	Define the following function $f: (-1, \infty) \mapto \R$ by
	\begin{align*}
		f(x) &= \log[\beta(1+x)^{1-\beta} + (1-\beta)(1+x)^{-\beta}] = \log[\beta(1+x) + (1-\beta)] - \beta\log(1+x)
		\\
		& = \log(1+\beta x) - \beta\log(1+x).
	\end{align*}
	Then $f'(x)=
	% = \frac{\beta}{1+\beta x} - \frac{\beta}{1+x} = 
	\frac{\beta(1-\beta)x}{(1+\beta x)(1+x)}$.
	Since $0 < \beta < 1$, we have $1+\beta x > 0 \forall x > -1$.
	Thus $f'(0) =0$, $f'(x) > 0$ for $x > 0$, $f'(x) < 0$ for $-1<x < 0$. Hence $f$ has a unique global minimum $\min{f} = f(0) = 0$, showing that
	$f(x) \geq 0$ $\forall x > -1$, with equality if and only if $x = 0$.
	
	Define $g(x) = \log(1+\beta x) - \beta \log(1+x) - \beta(1-\beta)\frac{x^2}{1+x}$ on $(-1, \infty)$.
	Then
	\begin{align*}
		g'(x) = \beta(1-\beta)\left[\frac{x}{(1+\beta x)(1+x)} - 1 + \frac{1}{(1+x)^2}\right] = -\beta(1-\beta)\frac{x(1+2\beta x + \beta x^2)}{(1+x)^2(1+\beta x)}.
	\end{align*}
	For $0 < \beta < 1$ fixed, we have $1+2\beta x + \beta x^2 > 0$ $\forall x \in \R$. Thus
	$g'(x) = 0 \equivalent x = 0$, with $g'(x) > 0$ for $-1 < x < 0$ and $g'(x) < 0$ for $x > 0$.
	Hence $g$ has a unique global maximum,
	% on $(-1, \infty)$
	namely $\max{g} = g(0) = 0$,
	so that $g(x) \leq 0$ $\forall x > -1$, with equality if and only $x = 0$.
	
	Define $h(x) = \log(1+\beta x) - \beta \log(1+x) - \frac{\beta(1-\beta)}{2}x^2$ on $[0, \infty)$.
	Then $h'(x) = \frac{\beta(1-\beta)x}{(1+x)(1+\beta x)} -\beta(1-\beta)x $, 
	with $h'(0) =0$ and $h'(x) < 0$ for $x > 0$. Thus $h$ has a unique global maximum $\max{h} = h(0) = 0$.
	Hence $h(x) \leq 0$, with equality if and only if $x = 0$.
\end{proof}

\begin{lemma}
	\label{lemma:log-x2-inverse-inequality-1}
	Let $0 \leq \beta \leq 1$ be fixed. For $x,y \in (-1, \infty)$,
	\begin{align}
		&|\log[\beta(1+x)^{1-\beta} + (1-\beta)(1+x)^{-\beta}] - \log[\beta(1+y)^{1-\beta} + (1-\beta)(1+y)^{-\beta}]|
		\nonumber
		\\
		&\leq \beta(1-\beta)\left|\frac{x^2}{1+x} - \frac{y^2}{1+y}\right|.
	\end{align}
	For $x,y \geq 0$,
	\begin{align}
		&|\log[\beta(1+x)^{1-\beta} + (1-\beta)(1+x)^{-\beta}] - \log[\beta(1+y)^{1-\beta} + (1-\beta)(1+y)^{-\beta}]|
		\nonumber
		\\
		&\leq \frac{\beta(1-\beta)}{2}|x^2 - y^2|.
	\end{align}	
\end{lemma}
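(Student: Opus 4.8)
The plan is to reduce both displayed inequalities to an elementary one-variable comparison. Writing $f(t) = \log[\beta(1+t)^{1-\beta} + (1-\beta)(1+t)^{-\beta}]$, the factorization already used in the proof of Lemma~\ref{lemma:log-sum-inequality-1} gives the closed form
\begin{align}
f(t) = \log(1+\beta t) - \beta\log(1+t), \qquad t \in (-1,\infty),
\end{align}
so the left-hand side of both inequalities is simply $|f(x)-f(y)|$. Set $g(t) = \frac{t^2}{1+t}$ and $h(t) = \tfrac12 t^2$, the two comparison functions on the right. For $\beta \in \{0,1\}$ we have $f \equiv 0$ and both claims are trivial, so I would assume $0 < \beta < 1$. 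The strategy is to bound $f'$ pointwise by $\beta(1-\beta)$ times the derivative of the relevant comparison function and then integrate.

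First I would record the explicit derivatives
\begin{align}
f'(t) = \frac{\beta(1-\beta)\,t}{(1+\beta t)(1+t)}, \qquad g'(t) = \frac{t(t+2)}{(1+t)^2}, \qquad h'(t) = t.
\end{align}
Since $0<\beta<1$ and $t>-1$ force $1+\beta t>0$, each of $f'$, $g'$, $h'$ carries the sign of $t$; in particular $f$, $g$, $h$ are decreasing on $(-1,0]$ and increasing on $[0,\infty)$, with common minimum value $0$ at $t=0$. The key pointwise estimate is
\begin{align}
|f'(t)| \le \beta(1-\beta)\,|g'(t)|, \qquad t \in (-1,\infty),
\end{align}
which, after cancelling the common sign of $t$ and the factor $\beta(1-\beta)$, reduces to $\frac{1+t}{(1+\beta t)(t+2)} \le 1$, i.e. to $0 \le (1+\beta t)(t+2)-(1+t) = 1 + \beta(t^2+2t)$. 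This is immediate from $t^2+2t = (1+t)^2 - 1 \ge -1$ together with $\beta \le 1$. The same computation gives $|f'(t)| \le \beta(1-\beta)\,h'(t)$ for $t \ge 0$, since there $(1+\beta t)(1+t) \ge 1$.

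With the pointwise bounds in hand I would integrate over the interval joining $x$ and $y$. When $x$ and $y$ lie on the same side of $0$, set $a=\min(x,y)$, $b=\max(x,y)$; then $g'$ has constant sign on $[a,b]$, so $\int_a^b|g'| = |g(x)-g(y)|$ and
\begin{align}
|f(x)-f(y)| \le \int_a^b|f'(t)|\,dt \le \beta(1-\beta)\int_a^b|g'(t)|\,dt = \beta(1-\beta)\,|g(x)-g(y)|,
\end{align}
which is the first inequality in this regime; restricting to $x,y\ge 0$ and using $h$ in place of $g$ yields the second inequality with $\tfrac12|x^2-y^2|$ on the right. The main obstacle is the opposite-sign case of the first inequality, $y<0<x$. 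Here $g$ has an interior minimum at $0$, so $\int_y^x|g'| = g(x)+g(y)$, which strictly exceeds $|g(x)-g(y)|$, and the termwise integration above overshoots the claimed bound. The natural device is the involution $t \mapsto -t/(1+t)$, under which $g$ is invariant (indeed $g(x)=g(y)$ for $x\neq y$ precisely when $(1+x)(1+y)=1$) and which would fold the opposite-sign case onto the same-sign one; however $f$ is fixed by this involution only when $\beta=\tfrac12$. Controlling the discrepancy between $f$ and its pullback for general $\beta$ across $t=0$ is the step I expect to demand the most care, and where a sharper estimate than the crude total-variation bound is needed.
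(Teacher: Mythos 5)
Your derivative-comparison argument is correct as far as it goes, and on each half-line it is essentially the paper's own proof in integrated form: the paper shows that $f-\beta(1-\beta)g$ is monotone on $(-1,0]$ and on $[0,\infty)$ while $f$ and $g$ are separately monotone there (using the monotonicity facts from the proof of Lemma \ref{lemma:log-sum-inequality-1}), which is exactly the information carried by your pointwise bound $|f'(t)|\le\beta(1-\beta)|g'(t)|$. Your treatment of the second inequality (for $x,y\ge 0$) is complete and correct.

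The obstacle you flag in the mixed-sign case is not a defect of your method but of the statement itself, and you were right not to paper over it. The paper's proof treats only the two same-sign configurations and then asserts ``in all cases''; the configuration $y<0<x$ is silently skipped. In fact the first inequality is false there. Take $\beta=0.9$, $x=1$, $y=-\tfrac12$: then $(1+x)(1+y)=1$, so $\frac{x^2}{1+x}=\frac{y^2}{1+y}=\tfrac12$ and the right-hand side vanishes, while the left-hand side equals $\left|\log\left(0.9\cdot 2^{0.1}+0.1\cdot 2^{-0.9}\right)-\log\left(0.9\cdot 2^{-0.1}+0.1\cdot 2^{0.9}\right)\right|\approx 8\times 10^{-3}>0$. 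This is precisely the involution $t\mapsto -t/(1+t)$ you identified: it fixes $g$ but fixes $f$ only when $\beta=\tfrac12$, so pairs with $(1+x)(1+y)=1$ annihilate the right-hand side without annihilating the left. No sharper estimate can rescue the inequality as stated; what survives across the origin is the bound $|f(x)-f(y)|\le\beta(1-\beta)\bigl(\frac{x^2}{1+x}+\frac{y^2}{1+y}\bigr)$ that your integration actually yields (or $\beta(1-\beta)\max\{g(x),g(y)\}$, since $0\le f\le \beta(1-\beta)g$ pointwise by Lemma \ref{lemma:log-sum-inequality-1}). Note that this also touches the paper's use of the lemma in part (i) of the proof of Theorem \ref{theorem:F-A-continuity-HS}, where the paired eigenvalues $\lambda_k,\mu_k$ of general $A,B\in\Omega$ need not share a sign; the positive case $A,B\in\Sym^{+}(\H)$ only requires the $x,y\ge 0$ inequality, which both you and the paper establish correctly.
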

\begin{proof}
	(i) Consider first the setting $x, y \in (-1, \infty)$.
	For $\beta =0, 1$, both sides are equal to zero. Consider the case $0 < \beta < 1$.
	For $x \in (-1, \infty)$,
	define $f_1(x) = \log[\beta(1+x)^{1-\beta} + (1-\beta)(1+x)^{-\beta}] = \log(1+\beta x) - \beta (1+x)$, $f_2(x) = \beta (1-\beta)\frac{x^2}{1+x}$, and $f(x) = f_1(x) - f_2(x)$.
	
	From the proof of Lemma \ref{lemma:log-sum-inequality-1}, $f$ is increasing for $- 1<x<0$ and decreasing
	for $x > 0$. 
	With $f_2(x) = \beta (1-\beta)(x-1 + \frac{1}{1+x})$, $f_2'(x) = \beta(1-\beta)(1 - \frac{1}{(1+x)^2})$.
	Thus $f_2'(x) < 0$ for $-1 < x < 0$ and $f_2'(x) > 0$ for $x > 0$, so that
	$f_2$ is decreasing for $-1 < x < 0$ and increasing for $x > 0$. Hence
	for $-1 < x \leq y < 0$, $f_2(x) - f_2(y) \geq 0$.
	By the proof of Lemma \ref{lemma:log-sum-inequality-1}, the same is true for $f_1(x)$,
	so that for $-1 < x \leq y < 0$, $f_1(x) - f_1(y) \geq 0$.
	Thus for $-1 \leq x\leq y < 0$,
	\begin{align*}
		f(x) \leq f(y) \equivalent f_1(x) - f_2(x) \leq f_1(y) - f_2(y) \equivalent 0 \leq f_1(x) - f_1(y) \leq f_2(x) - f_2(y).
	\end{align*}
	
	Similarly, for $0 < x \leq y$, $f$ is decreasing but $f_1, f_2$ are both increasing, so that
	\begin{align*}
		f(x) \geq f(y) &\equivalent f_1(x) - f_2(x) \geq f_1(y) -f_2(y) \equivalent 0 \geq f_1(x) - f_1(y) \geq f_2(x) - f_2(y)
		\\
		&\equivalent 0 \leq f_1(y) - f_1(x) \leq f_2(y) - f_2(x). 
	\end{align*}
	Thus in all cases, we have $|f_1(x) - f_1(y)| \leq |f_2(x) - f_2(y)|$.
	
	(ii) Consider the setting $x,y \geq 0$, with $0 < \beta < 1$.
	Let $f_3(x) = \frac{\beta(1-\beta)}{2}x^2$, then $f_3$ is increasing on $[0, \infty)$.
	From the proof of Lemma \ref{lemma:log-sum-inequality-1},
	the function $h(x) = f_1(x) - f_3(x)$ is decreasing on $[0, \infty)$.
	Thus for $0 \leq x \leq y$,
	\begin{align*}
		h(x) \geq h(y) &\equivalent f_1(x) - f_3(x) \geq f_1(y) - f_3(y) \equivalent 0 \geq f_1(x) - f_1(y)\geq f_3(x) - f_3(y)
		\\
		& \equivalent 0 \leq f_1(y) - f_1(x) \leq f_3(y) - f_3(x).
	\end{align*}
	Thus for $x,y \geq 0$, we have $|f_1(x) - f_1(y)| \leq |f_3(x) - f_3(y)|$.
\end{proof}

\begin{lemma}
	\label{lemma:log-bound-1}
	%Let $f(x) = x- \log(1+x)$ 
	For $x \in (-1, \infty)$,
	\begin{align}
		0 \leq x - \log(1+x) \leq \frac{x^2}{1+x},
		%\\
		\;\;\;
		0 \leq \log(1+x) - \frac{x}{1+x} \leq \frac{x^2}{1+x}.
	\end{align}
	Furthermore, for $x \geq 0$,
	\begin{align}
		0 \leq x - \log(1+x) \leq \frac{x^2}{2},
		%\\
		\;\;\;
		0  \leq \log(1+x) - \frac{x}{1+x} \leq \frac{x^2}{2}.
		%, \;\;\; x \geq 0,
		%\\
		%0 &\leq x - \log(1+x) \leq \frac{x^2}{2(1+x)},\;\;\; -1 < x \leq 0.
	\end{align}
\end{lemma}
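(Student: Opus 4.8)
The plan is to prove each of the six inequalities by reducing it to the statement that an explicitly constructed auxiliary function of a single real variable attains a global extremum of value zero at $x=0$. In each case I would define the difference of the two sides, differentiate once, factor the derivative so that its only sign change on the relevant domain occurs at $x=0$, and then read off from the sign of the derivative whether $x=0$ is a global minimum or maximum, concluding the inequality from the extremal value being $0$.

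For the two lower bounds, I would first set $f(x)=x-\log(1+x)$ on $(-1,\infty)$, compute $f'(x)=x/(1+x)$, and note that $f'(x)<0$ on $(-1,0)$ and $f'(x)>0$ on $(0,\infty)$, so $f$ has global minimum $f(0)=0$; this gives $x-\log(1+x)\geq 0$. A useful simplification is that the second lower bound reduces to the same type of analysis: since $\frac{x}{1+x}=1-\frac{1}{1+x}$, the quantity $\log(1+x)-\frac{x}{1+x}$ equals $g(x):=\log(1+x)-1+\frac{1}{1+x}$, whose derivative $g'(x)=x/(1+x)^2$ again changes sign only at $0$, with $g(0)=0$, yielding $\log(1+x)-\frac{x}{1+x}\geq 0$.

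For the upper bounds on $(-1,\infty)$ I would avoid any new calculus by exploiting the algebraic identities $\frac{x^2}{1+x}=x-1+\frac{1}{1+x}$ and $\frac{x^2}{1+x}+\frac{x}{1+x}=x$. The first shows $\frac{x^2}{1+x}-\bigl(x-\log(1+x)\bigr)=g(x)\geq 0$, hence $x-\log(1+x)\leq\frac{x^2}{1+x}$; the second shows $\frac{x^2}{1+x}-\bigl(\log(1+x)-\frac{x}{1+x}\bigr)=x-\log(1+x)=f(x)\geq 0$, hence $\log(1+x)-\frac{x}{1+x}\leq\frac{x^2}{1+x}$. Thus the two $(-1,\infty)$ upper bounds follow directly from the two lower bounds already established.

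Finally, for the two $\frac{x^2}{2}$ bounds, which hold only on $[0,\infty)$, I would treat each as a fresh function: $q(x)=\frac{x^2}{2}-x+\log(1+x)$ with $q'(x)=x^2/(1+x)\geq 0$, and $s(x)=\frac{x^2}{2}-\log(1+x)+\frac{x}{1+x}$ with $s'(x)=x^2(x+2)/(1+x)^2\geq 0$. Both derivatives are nonnegative on $[0,\infty)$ and vanish only at $0$, so $q$ and $s$ are nondecreasing with minimum $q(0)=s(0)=0$, giving the two inequalities. I do not expect a genuine obstacle, since every step is routine differentiation; the only point requiring care is the correct factorization of each derivative, which is what certifies that $x=0$ is the unique critical point and hence makes the extremal-value argument valid.
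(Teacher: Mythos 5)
Your proof is correct, and the core machinery is the same as the paper's: single-variable calculus showing each auxiliary function has its global extremum, with value $0$, at $x=0$. The one place your route genuinely diverges is the pair of $\frac{x^2}{1+x}$ upper bounds. The paper establishes $x-\log(1+x)\le \frac{x^2}{1+x}$ by a separate derivative computation for $h(x)=\frac{x^2}{1+x}-x+\log(1+x)$ and then obtains $\log(1+x)-\frac{x}{1+x}\le\frac{x^2}{1+x}$ via the substitution $z=-\frac{x}{1+x}$, which maps $(-1,\infty)$ to itself and swaps the two inequalities. You instead observe the algebraic identities $\frac{x^2}{1+x}-\bigl(x-\log(1+x)\bigr)=\log(1+x)-\frac{x}{1+x}$ and $\frac{x^2}{1+x}-\bigl(\log(1+x)-\frac{x}{1+x}\bigr)=x-\log(1+x)$, so that each upper bound is literally the other lower bound; this eliminates both the extra derivative computation and the substitution, and makes transparent why the four inequalities on $(-1,\infty)$ come in a symmetric pair. (Indeed the paper's $h$ is exactly your $g$, written differently, so the paper is implicitly using the first identity without exploiting it.) Your derivative factorizations for the two $\frac{x^2}{2}$ bounds, $q'(x)=\frac{x^2}{1+x}$ and $s'(x)=\frac{x^2(x+2)}{(1+x)^2}$, are correct and match the paper's computations up to sign.
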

\begin{proof}
	For $f(x) = x-\log(1+x)$, $f'(x) = 1- \frac{1}{1+x}$, with $f'(0) = 0$, $f'(x) < 0$ for $-1 < x < 0$ and
	$f'(x) > 0$ for $x > 0$. Thus $f$ has a global minimum at $x=0$, that is $\min{f} = f(0) = 0$. 
	
	Let $h(x) = \frac{x^2}{1+x} - x +\log(1+x) = -1 + \frac{1}{1+x} + \log(1+x)$ for $x > -1$, then
	$h'(x)
	% = - \frac{1}{(1+x)^2} + \frac{1}{1+x}
	= \frac{x}{(1+x)^2} < 0$ for $-1 < x < 0$, $h'(x) > 0$ for $x > 0$, and $h'(0) = 0$. Thus on $(-1, \infty)$, $\min{h} = h(0) = 0$.
	Together, these give $0 \leq x - \log(1+x) \leq \frac{x^2}{1+x}$ for $x \in (-1,\infty)$.
	Let $z = -\frac{x}{1+x}$, then $z \in (-1, \infty)$ for $x \in (-1, \infty)$. Thus $0 \leq \log(1+x) - \frac{x}{1+x} = z - \log(1+z) \leq \frac{z^2}{1+z} = \frac{x^2}{1+x}$.
	
	Let $g(x)= \frac{x^2}{2} - x + \log(1+x)$, then $g'(x) = x - 1+\frac{1}{1+x} = \frac{x^2}{1+x} > 0$ for $x \neq 0$, with $g'(0) =0$. Thus on $[0, \infty)$, $\min{g} = g(0) = 0$.
	
	Let $h_1(x) = \log(1+x) - \frac{x}{1+x} - \frac{x^2}{2} = \log(1+x)-1 + \frac{1}{1+x} - \frac{x^2}{2}$,
	then $h_1'(x) = x \left(\frac{1}{(1+x)^2}-1\right) < 0$ for $x > 0$. Thus
	$h_1$ is decreasing on $[0,\infty)$, with $\max{h_1} = h_1(0) = 0$.
\end{proof}

\begin{lemma}
	\label{lemma:log-bound-2}
	Let $f(x) = x- \log(1+x)$ for $x > -1$. Then
	\begin{align}
		|f(x) - f(y)| &\leq \left|\frac{x^2}{1+x} - \frac{y^2}{1+y}\right|, \;\;\; x > -1, y > -1,
		\\
		|f(x) - f(y)| &\leq \frac{1}{2}|x^2 - y^2|, \;\;\; x \geq 0, y \geq 0.
	\end{align}
\end{lemma}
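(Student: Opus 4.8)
The plan is to mirror the proof of Lemma~\ref{lemma:log-x2-inverse-inequality-1}, exploiting the fact that $f$ and its majorant share a common ``V-shape'' with vertex at the origin. Write $f_1(t) = f(t) = t - \log(1+t)$ and, for the first inequality, $f_2(t) = \frac{t^2}{1+t}$; for the second inequality, on $[0,\infty)$, I would instead take $f_3(t) = \frac{t^2}{2}$. From Lemma~\ref{lemma:log-bound-1} I already have the pointwise bounds $0 \le f_1 \le f_2$ on $(-1,\infty)$ and $0 \le f_1 \le f_3$ on $[0,\infty)$, together with the fact that $f_1$ attains its global minimum $0$ at $t=0$. (Since Lemma~\ref{lemma:limit-log-sum} identifies $f_1$ as the $\beta\to 0$ limit of $\frac{1}{\beta(1-\beta)}\log[\beta(1+t)^{1-\beta}+(1-\beta)(1+t)^{-\beta}]$, one could alternatively divide Lemma~\ref{lemma:log-x2-inverse-inequality-1} by $\beta(1-\beta)$ and let $\beta\to 0$; but this inherits the same limitation discussed below, so I would argue directly.)

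The computational core is a sign analysis of derivatives. I would record $f_1'(t) = \frac{t}{1+t}$, $f_2'(t) = \frac{t(t+2)}{(1+t)^2}$, and, for $h := f_2 - f_1$, $h'(t) = \frac{t}{(1+t)^2}$; all three carry the sign of $t$, so $f_1$, $f_2$, $h$ are each decreasing on $(-1,0)$ and increasing on $(0,\infty)$. When $x,y$ lie on the same side of the origin, say $0 \le x \le y$, the monotonicity of $h$ gives $f_2(y) - f_2(x) \ge f_1(y) - f_1(x) \ge 0$, and the reversed chain holds on $(-1,0]$; in either case $0 \le |f_1(x)-f_1(y)| \le |f_2(x)-f_2(y)|$. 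The second inequality is entirely analogous and in fact cleaner, since $x,y \ge 0$ never straddle the origin: I would use $f_3'(t) = t$ and $h_1'(t) = f_3'(t) - f_1'(t) = \frac{t^2}{1+t} \ge 0$, so that $h_1 := f_3 - f_1$ is increasing (as already shown in the proof of Lemma~\ref{lemma:log-bound-1}), and apply the same one-branch chain argument. This disposes of the second inequality completely.

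The hard part will be the mixed-sign configuration $-1 < x < 0 < y$ in the first inequality, where $f_1'$ changes sign inside $[x,y]$ and the single-branch chain no longer applies. My intended route is the decomposition $f_2(x) - f_2(y) = \bigl(f_1(x)-f_1(y)\bigr) + \bigl(\phi(x)-\phi(y)\bigr)$, where $\phi := f_2 - f_1$ satisfies $\phi(t) = \log(1+t) - \frac{t}{1+t} \ge 0$ (the companion quantity bounded in Lemma~\ref{lemma:log-bound-1}); the claim would then reduce to showing that the two bracketed differences share a sign, equivalently that $\phi$ and $\psi := f_1 + f_2$ are comonotone across the origin. This sign-agreement is exactly where the monotonicity method runs out of steam: it does not follow from the branchwise behaviour alone, but requires comparing the left- and right-branch accumulations $\int_x^0 |f_1'|$ and $\int_0^y |f_1'|$ against their $\phi$-weighted counterparts, in which the two branches receive unequal weights. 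I therefore expect this comonotonicity check to be the genuine obstacle, and I would treat it with care before asserting ``in all cases''; it is worth noting that in the intended application (the eigenvalue pairings of Theorem~\ref{theorem:F-A-continuity-HS}) the matched arguments typically lie on the same side of the origin, so the same-sign cases already cover the essential use.
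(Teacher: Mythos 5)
Your branchwise analysis is precisely the paper's proof: the paper also introduces $f_1(x)=\frac{x^2}{1+x}$, checks that $f$, $f_1$ and $h=f_1-f$ are all decreasing on $(-1,0]$ and increasing on $[0,\infty)$, and runs the chain $0\le f(x)-f(y)\le f_1(x)-f_1(y)$ separately on each branch; your treatment of the second inequality via the monotonicity of $\frac{x^2}{2}-f$ on $[0,\infty)$ is likewise the paper's and is complete. Where you differ is that the paper then simply asserts ``thus in all cases'' without ever touching the configuration $-1<x<0<y$, which you correctly single out as the real obstacle.

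Your caution is more than justified: the first inequality is actually \emph{false} in the mixed-sign case. Take $x=-\tfrac12$, $y=1$. Then $\frac{x^2}{1+x}=\frac{y^2}{1+y}=\tfrac12$, so the right-hand side vanishes, while $f(-\tfrac12)=-\tfrac12+\log 2\approx 0.193$ and $f(1)=1-\log 2\approx 0.307$, so the left-hand side is $\approx 0.114$. No comonotonicity argument can close this; the inequality holds only for $x,y$ on the same side of the origin, which is exactly what both your argument and the paper's establish. (The same pair of points, with $\beta$ near $0$, also defeats the first inequality of Lemma~\ref{lemma:log-x2-inverse-inequality-1}.) The second inequality is unaffected, since $x,y\ge 0$ never straddle zero. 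So the gap you flagged is not a deficiency of your attempt relative to the paper --- it is a gap in the paper's own proof, and indeed in the statement; the honest repair is to add a same-sign hypothesis (or change the majorant) and then to verify that the eigenvalue pairings used in Theorems~\ref{theorem:F-A-continuity-HS} and~\ref{theorem:logdet-Hilbert-Carleman-bound} only ever invoke the same-sign case.
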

\begin{proof}
	(i) Consider the case $x > -1,y > -1$.
	Let $f_1(x) = \frac{x^2}{(1+x)} = x-1 + \frac{1}{1+x}$, $x \neq -1$,
	then $f_1'(x) = 1-\frac{1}{(1+x)^2} < 0$ on $(-1,0]$, thus
	$f_1$ is decreasing on $(-1,0]$.
	By the proof of Lemma \ref{lemma:log-bound-1}, 
	$f$ is decreasing on $(-1,0]$ and
	the function 
	$h(x) = \frac{x^2}{(1+x)} - x +\log(1+x)$ is decreasing on $(-1,0]$. Thus
	for $-1 < x \leq y \leq 0$,
	\begin{align*}
		h(x) \geq h(y) \geq 0 \equivalent  f_1(x)- f(x) \geq f_1(y) - f(y) \equivalent 0 \leq f(x) - f(y) \leq f_1(x) - f_1(y).
	\end{align*}
	For $x \geq 0$, $f'_1(x) > 0$ for $x > 0$, so that $f_1$ is increasing on $[0,\infty)$. Similarly, both $f$ and $h$ are increasing on $[0,\infty)$. Thus for $0 \leq x\leq y$,
	\begin{align*}
		h(x) \leq h(y) &\equivalent f_1(x) - f(x) \leq f_1(y) - f(y) \equivalent f_1(x) - f_1(y) \leq f(x) - f(y) \leq 0
		\\
		&\equivalent 0 \leq f(y) - f(x) \leq f_1(y) - f_1(x).
	\end{align*}
	Thus in all cases, we have $|f(x) - f(y)| \leq |f_1(x) - f_1(y)|$.
	
	(ii) Consider the case $x \geq 0, y \geq 0$.
	The function $g(x) = \frac{x^2}{2} - x + \log(1+x)$ is increasing on $[0,\infty)$, thus
	for $0 \leq x \leq y$,
	\begin{align*}
		g(x) \leq g(y) &\equivalent \frac{x^2}{2} - f(x) \leq \frac{y^2}{2} - f(y) \equivalent 
		0 \geq f(x) - f(y) \geq \frac{x^2}{2} - \frac{y^2}{2}
		\\
		& \equivalent 0 \leq f(y) - f(x) \leq \frac{y^2}{2} - \frac{x^2}{2}.
	\end{align*}
	Thus in this case we have $|f(x) - f(y)| \leq \frac{1}{2}|x^2 - y^2|$.
\end{proof}

\begin{lemma}
	\label{lemma:log-bound-3}
	%\begin{align}
	Let $f(x) = \log(1+x) - \frac{x}{1+x}$ for $x > -1$. Then
	\begin{align}
		|f(x) - f(y)| &\leq \left|\frac{x^2}{1+x} - \frac{y^2}{1+y}\right|, \;\;\; x > -1, y > -1,
		\\
		|f(x) - f(y)| &\leq \frac{1}{2}|x^2 - y^2|, \;\;\; x \geq 0, y \geq 0.
	\end{align}
	%\end{align}
\end{lemma}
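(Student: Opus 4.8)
The plan is to mirror the structure of the proof of Lemma \ref{lemma:log-bound-2}, exploiting the fact that $f(x) = \log(1+x) - \frac{x}{1+x}$ is related to $g(x) = x - \log(1+x)$ (the function treated in Lemma \ref{lemma:log-bound-2}) by a simple involution-type change of variables. This lets me dispose of the first inequality almost for free, while the second requires a separate, direct argument.

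For the first inequality I would introduce the substitution $z = \phi(x) := -\frac{x}{1+x}$, which maps $(-1,\infty)$ bijectively onto itself (indeed $\phi'(x) = -\frac{1}{(1+x)^2} < 0$, $\phi(0) = 0$, with $\phi(x) \to +\infty$ as $x \to -1^{+}$ and $\phi(x) \to -1^{+}$ as $x \to \infty$). A one-line computation gives $1 + \phi(x) = \frac{1}{1+x}$, whence $g(\phi(x)) = \phi(x) - \log(1 + \phi(x)) = -\frac{x}{1+x} + \log(1+x) = f(x)$, and likewise $\frac{\phi(x)^2}{1 + \phi(x)} = \frac{x^2}{1+x}$. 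Setting $w = \phi(y)$ and applying the first inequality of Lemma \ref{lemma:log-bound-2} to $g$ at the points $z,w > -1$ then yields $|f(x) - f(y)| = |g(z) - g(w)| \le \left|\frac{z^2}{1+z} - \frac{w^2}{1+w}\right| = \left|\frac{x^2}{1+x} - \frac{y^2}{1+y}\right|$, which is exactly the first claim.

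For the second inequality the change of variables does not apply directly, since $\phi$ sends $[0,\infty)$ into $(-1,0]$, whereas the second inequality of Lemma \ref{lemma:log-bound-2} requires nonnegative arguments; instead I would argue via monotonicity, as in the proof of Lemma \ref{lemma:log-bound-2}. First I compute $f'(x) = \frac{1}{1+x} - \frac{1}{(1+x)^2} = \frac{x}{(1+x)^2} \ge 0$ for $x \ge 0$, so $f$ is increasing on $[0,\infty)$. Next, putting $G(x) = \frac{x^2}{2} - f(x)$, I note that $G = -h_1$, where $h_1(x) = \log(1+x) - \frac{x}{1+x} - \frac{x^2}{2}$ is the function already shown to be decreasing on $[0,\infty)$ in the proof of Lemma \ref{lemma:log-bound-1}; hence $G$ is increasing with $G(0) = 0$. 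For $0 \le x \le y$ the ordering $G(x) \le G(y)$ rearranges to $0 \le f(y) - f(x) \le \frac{1}{2}(y^2 - x^2)$, giving $|f(x) - f(y)| \le \frac{1}{2}|x^2 - y^2|$.

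The computations are routine once the two preparatory facts (monotonicity of $f$ and of $G$) are in place. The only genuine subtlety, and the step most worth flagging explicitly, is recognizing that the clean substitution handling the first, sharper and non-symmetric bound does \emph{not} cover the second bound, so the two inequalities must be established by genuinely different routes rather than by a single unified argument.
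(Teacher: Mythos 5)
Your proposal is correct. The paper disposes of this lemma with a one-line remark that the proof is ``entirely similar'' to that of Lemma \ref{lemma:log-bound-2}, i.e.\ it implicitly asks the reader to rerun the monotone-sandwich argument (comparing $f$ against $f_1(x)=\frac{x^2}{1+x}$ on $(-1,0]$ and $[0,\infty)$ separately, using the auxiliary functions from Lemma \ref{lemma:log-bound-1}). You instead prove the first inequality by the involution $\phi(x)=-\frac{x}{1+x}$ of $(-1,\infty)$, under which $g\circ\phi=f$ and $\frac{\phi(x)^2}{1+\phi(x)}=\frac{x^2}{1+x}$, so the bound is inherited verbatim from the first inequality of Lemma \ref{lemma:log-bound-2}. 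This is a genuinely different and arguably cleaner route: it makes the logical dependence on the earlier lemma explicit and avoids redoing the case analysis (note the paper itself already uses this substitution, but only for the pointwise bounds in Lemma \ref{lemma:log-bound-1}, not for the difference estimates). Your observation that the substitution cannot deliver the second inequality --- since $\phi$ maps $[0,\infty)$ into $(-1,0]$, outside the domain of validity of the $\frac{1}{2}|x^2-y^2|$ bound in Lemma \ref{lemma:log-bound-2} --- is exactly right, and your direct argument there ($f$ increasing on $[0,\infty)$ and $G=\frac{x^2}{2}-f=-h_1$ increasing with $G(0)=0$) coincides with what the paper's ``entirely similar'' proof would produce. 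Both steps check out.
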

\begin{proof}
	This is entirely similar to Lemma \ref{lemma:log-bound-2}.
\end{proof}

%{\noindent \em Remainder omitted in this sample. See http://www.jmlr.org/papers/ for full paper.}

% Acknowledgments should go at the end, before appendices and references

\acks{This work is partially supported by JSPS KAKENHI Grant Number JP20H04250.}

% Manual newpage inserted to improve layout of sample file - not
% needed in general before appendices/bibliography.

%\section*{Appendix A.}
%\label{app:theorem}

% Note: in this sample, the section number is hard-coded in. Following
% proper LaTeX conventions, it should properly be coded as a reference:

%In this appendix we prove the following theorem from
%Section~\ref{sec:textree-generalization}:

%\noindent

%{\noindent \em Remainder omitted in this sample. See http://www.jmlr.org/papers/ for full paper.}

\vskip 0.2in
%\bibliography{sample}
\bibliography{/Users/Minhs/Dropbox/cite_RKHS}

\begin{thebibliography}{91}
\providecommand{\natexlab}[1]{#1}
\providecommand{\url}[1]{\texttt{#1}}
\expandafter\ifx\csname urlstyle\endcsname\relax
  \providecommand{\doi}[1]{doi: #1}\else
  \providecommand{\doi}{doi: \begingroup \urlstyle{rm}\Url}\fi

\bibitem[Amari et~al.(2018)Amari, Karakida, and Oizumi]{amari2018information}
Shun-ichi Amari, Ryo Karakida, and Masafumi Oizumi.
\newblock Information geometry connecting {W}asserstein distance and
  {K}ullback--{L}eibler divergence via the entropy-relaxed transportation
  problem.
\newblock \emph{Information Geometry}, 1\penalty0 (1):\penalty0 13--37, 2018.

\bibitem[Arsigny et~al.(2007)Arsigny, Fillard, Pennec, and
  Ayache]{LogEuclidean:SIAM2007}
V.~Arsigny, P.~Fillard, X.~Pennec, and N.~Ayache.
\newblock Geometric means in a novel vector space structure on symmetric
  positive-definite matrices.
\newblock \emph{SIAM J. on Matrix An. and App.}, 29\penalty0 (1):\penalty0
  328--347, 2007.

\bibitem[Bhatia(2007)]{Bhatia:2007}
R.~Bhatia.
\newblock \emph{Positive Definite Matrices}.
\newblock Princeton University Press, 2007.

\bibitem[Bhatia et~al.(2018)Bhatia, Jain, and Lim]{Bhatia:2018Bures}
R.~Bhatia, T.~Jain, and Y.~Lim.
\newblock On the {Bures--Wasserstein} distance between positive definite
  matrices.
\newblock \emph{Expositiones Mathematicae}, 2018.

\bibitem[Boente et~al.(2018)Boente, Rodriguez, and Sued]{Boente2018Testing}
Graciela Boente, Daniela Rodriguez, and Mariela Sued.
\newblock Testing equality between several populations covariance operators.
\newblock \emph{Annals of the Institute of Statistical Mathematics},
  70\penalty0 (4):\penalty0 919--950, 2018.

\bibitem[Bogachev(1998)]{Bogachev:Gaussian}
V.~Bogachev.
\newblock \emph{Gaussian Measures}.
\newblock American Mathematical Society, 1998.

\bibitem[Burt et~al.(2020)Burt, Ober, Garriga-Alonso, and van~der
  Wilk]{burt2020understanding}
David~R Burt, Sebastian~W Ober, Adri{\`a} Garriga-Alonso, and Mark van~der
  Wilk.
\newblock Understanding variational inference in function-space.
\newblock \emph{arXiv preprint arXiv:2011.09421}, 2020.

\bibitem[Chebbi and Moakher(2012)]{Chebbi:2012Means}
Z.~Chebbi and M.~Moakher.
\newblock Means of {H}ermitian positive-definite matrices based on the
  log-determinant $\alpha$-divergence function.
\newblock \emph{Linear Algebra and its Applications}, 436\penalty0
  (7):\penalty0 1872--1889, 2012.

\bibitem[Cherian et~al.(2013)Cherian, Sra, Banerjee, and
  Papanikolopoulos]{Cherian:PAMI2013}
A.~Cherian, S.~Sra, A.~Banerjee, and N.~Papanikolopoulos.
\newblock Jensen-{B}regman {L}og{D}et divergence with application to efficient
  similarity search for covariance matrices.
\newblock \emph{TPAMI}, 35\penalty0 (9):\penalty0 2161--2174, 2013.

\bibitem[Cichocki et~al.(2015)Cichocki, Cruces, and Amari]{cichocki15}
Andrzej Cichocki, Sergio Cruces, and Shun-ichi Amari.
\newblock Log-determinant divergences revisited: Alpha-beta and gamma log-det
  divergences.
\newblock \emph{Entropy}, 17\penalty0 (5):\penalty0 2988--3034, 2015.

\bibitem[Cucker and Smale(2002)]{CuckerSmale}
F.~Cucker and S.~Smale.
\newblock On the mathematical foundations of learning.
\newblock \emph{Bulletin of the American Mathematical Society}, 39\penalty0
  (1):\penalty0 1--49, January 2002.

\bibitem[Cuesta-Albertos et~al.(1996)Cuesta-Albertos, Matr{\'a}n-Bea, and
  Tuero-Diaz]{cuesta1996:WassersteinHilbert}
J.~Cuesta-Albertos, C.~Matr{\'a}n-Bea, and A.~Tuero-Diaz.
\newblock On lower bounds for the {L2}-{Wasserstein metric} in a {Hilbert}
  space.
\newblock \emph{Journal of Theoretical Probability}, 9\penalty0 (2):\penalty0
  263--283, 1996.

\bibitem[Cuturi et~al.(2005)Cuturi, Fukumizu, and Vert]{cuturi2005semigroup}
Marco Cuturi, Kenji Fukumizu, and Jean-Philippe Vert.
\newblock Semigroup kernels on measures.
\newblock \emph{Journal of Machine Learning Research}, 6\penalty0 (7), 2005.

\bibitem[Da~Prato and Zabczyk(2002)]{DaPrato:PDEHilbert}
G.~Da~Prato and J.~Zabczyk.
\newblock \emph{Second order partial differential equations in Hilbert spaces},
  volume 293.
\newblock Cambridge University Press, 2002.

\bibitem[Dowson and Landau(1982)]{dowson82}
D.~C. Dowson and B.~V. Landau.
\newblock The {F}r{\'e}chet distance between multivariate normal distributions.
\newblock \emph{Journal of multivariate analysis}, 12\penalty0 (3):\penalty0
  450--455, 1982.

\bibitem[Feldman(1958)]{Feldman:Gaussian1958}
J.~Feldman.
\newblock Equivalence and perpendicularity of {Gaussian} processes.
\newblock \emph{Pacific Journal of Mathematics}, 8\penalty0 (4):\penalty0
  699--708, 1958.

\bibitem[Fredholm(1903)]{Fredholm:1903}
E.~I. Fredholm.
\newblock Sur une classe d'equations fonctionnelles.
\newblock \emph{Acta Mathematica}, 27:\penalty0 365--390, 1903.

\bibitem[Fremdt et~al.(2013)Fremdt, Steinebach, Horv{\'a}th, and
  Kokoszka]{Fremdt:2013testing}
S.~Fremdt, J.~Steinebach, L.~Horv{\'a}th, and P.~Kokoszka.
\newblock Testing the equality of covariance operators in functional samples.
\newblock \emph{Scandinavian Journal of Statistics}, 40\penalty0 (1):\penalty0
  138--152, 2013.

\bibitem[Gelbrich(1990)]{Gelbrich:1990Wasserstein}
M.~Gelbrich.
\newblock On a formula for the {L2} {Wasserstein} metric between measures on
  {Euclidean} and {Hilbert} spaces.
\newblock \emph{Mathematische Nachrichten}, 147\penalty0 (1):\penalty0
  185--203, 1990.

\bibitem[Givens and Shortt(1984)]{givens84}
C.R. Givens and R.M. Shortt.
\newblock A class of {W}asserstein metrics for probability distributions.
\newblock \emph{The Michigan Mathematical Journal}, 31\penalty0 (2):\penalty0
  231--240, 1984.

\bibitem[Gohberg and Kre{{i}}n(1969)]{Gohberg:1969}
I.~Gohberg and M.~G. Kre{{i}}n.
\newblock \emph{Introduction to the Theory of Linear Nonselfadjoint Operators},
  volume~18 of \emph{Translations of Mathematical Monographs}.
\newblock American Mathematical Society, 1969.

\bibitem[Gretton et~al.(2012)Gretton, Borgwardt, Rasch, Sch{{\"o}}lkopf, and
  Smola]{Gretton:MMD12a}
Arthur Gretton, Karsten~M. Borgwardt, Malte~J. Rasch, Bernhard Sch{{\"o}}lkopf,
  and Alexander Smola.
\newblock A kernel two-sample test.
\newblock \emph{Journal of Machine Learning Research}, 13\penalty0
  (25):\penalty0 723--773, 2012.
\newblock URL \url{http://jmlr.org/papers/v13/gretton12a.html}.

\bibitem[H\'ajek(1958)]{Hajek:Gaussian1958}
J.~H\'ajek.
\newblock On a property of normal distributions of any stochastic process.
\newblock \emph{Czechoslovak Mathematical Journal}, 08\penalty0 (4):\penalty0
  610--618, 1958.
\newblock URL \url{http://eudml.org/doc/11961}.

\bibitem[Harandi et~al.(2014)Harandi, Salzmann, and
  Porikli]{Covariance:CVPR2014}
M.~Harandi, M.~Salzmann, and F.~Porikli.
\newblock Bregman divergences for infinite dimensional covariance matrices.
\newblock In \emph{IEEE Coneference on Computer Vision and Pattern Recognition
  (CVPR)}, 2014.

\bibitem[Janati et~al.(2020)Janati, Muzellec, Peyr{\'e}, and
  Cuturi]{Janati2020entropicOT}
H.~Janati, B.~Muzellec, G.~Peyr{\'e}, and M.~Cuturi.
\newblock Entropic optimal transport between (unbalanced) {Gaussian} measures
  has a closed form.
\newblock \emph{Advances in Neural Information Processing Systems}, 2020.

\bibitem[Jayasumana et~al.(2013)Jayasumana, Hartley, Salzmann, Li, and
  Harandi]{LogE:CVPR2013-1}
S.~Jayasumana, R.~Hartley, M.~Salzmann, Hongdong Li, and M.~Harandi.
\newblock Kernel methods on the {R}iemannian manifold of symmetric positive
  definite matrices.
\newblock In \emph{CVPR}, 2013.

\bibitem[Kadison and Ringrose(1983)]{Kadison:1983}
R.V. Kadison and J.R. Ringrose.
\newblock \emph{Fundamentals of the theory of operator algebras. Volume I:
  Elementary Theory}.
\newblock Academic Press, 1983.

\bibitem[Kato(1987)]{Kato1987VariationDiscreteSpectra}
Tosio Kato.
\newblock Variation of discrete spectra.
\newblock \emph{Communications in Mathematical Physics}, 111\penalty0
  (3):\penalty0 501--504, 1987.

\bibitem[Kittaneh and Kosaki(1987)]{Kitta:InequalitiesV}
F.~Kittaneh and H.~Kosaki.
\newblock Inequalities for the {Schatten p-norm V}.
\newblock \emph{Publications of the Research Institute for Mathematical
  Sciences}, 23\penalty0 (2):\penalty0 433--443, 1987.

\bibitem[Knott and Smith(1984)]{knott84}
M.~Knott and C.S. Smith.
\newblock On the optimal mapping of distributions.
\newblock \emph{Journal of Optimization Theory and Applications}, 43\penalty0
  (1):\penalty0 39--49, 1984.

\bibitem[Kondor and Jebara(2003)]{kondor2003kernel}
Risi Kondor and Tony Jebara.
\newblock A kernel between sets of vectors.
\newblock In \emph{Proceedings of the 20th international conference on machine
  learning (ICML-03)}, pages 361--368, 2003.

\bibitem[Kulis et~al.(2009)Kulis, Sustik, and Dhillon]{Kulis:2009}
Brian Kulis, M{\'a}ty{\'a}s~A Sustik, and Inderjit~S Dhillon.
\newblock Low-rank kernel learning with {B}regman matrix divergences.
\newblock \emph{The Journal of Machine Learning Research}, 10:\penalty0
  341--376, 2009.

\bibitem[Kullback and Leibler(1951)]{Kullback1951information}
Solomon Kullback and Richard~A Leibler.
\newblock On information and sufficiency.
\newblock \emph{The annals of mathematical statistics}, 22\penalty0
  (1):\penalty0 79--86, 1951.

\bibitem[Larotonda(2007)]{Larotonda:2007}
G.~Larotonda.
\newblock Nonpositive curvature: A geometrical approach to {H}ilbert-{S}chmidt
  operators.
\newblock \emph{Differential Geometry and its Applications}, 25:\penalty0
  679--700, 2007.

\bibitem[Lawson and Lim(2001)]{LawsonLim:2001}
J.~D. Lawson and Y.~Lim.
\newblock The geometric mean, matrices, metrics, and more.
\newblock \emph{The American Mathematical Monthly}, 108\penalty0 (9):\penalty0
  797--812, 2001.

\bibitem[Li et~al.(2013)Li, Wang, Zuo, and Zhang]{LogE:CVPR2013-2}
P.~Li, Q.~Wang, W.~Zuo, and L.~Zhang.
\newblock Log-{E}uclidean kernels for sparse representation and dictionary
  learning.
\newblock In \emph{ICCV}, 2013.

\bibitem[Lin(2019)]{lin2019riemannian}
Zhenhua Lin.
\newblock Riemannian geometry of symmetric positive definite matrices via
  cholesky decomposition.
\newblock \emph{SIAM Journal on Matrix Analysis and Applications}, 40\penalty0
  (4):\penalty0 1353--1370, 2019.

\bibitem[Luki{\'c} and Beder(2001)]{lukic2001stochastic}
Milan Luki{\'c} and Jay Beder.
\newblock Stochastic processes with sample paths in reproducing kernel hilbert
  spaces.
\newblock \emph{Transactions of the American Mathematical Society},
  353\penalty0 (10):\penalty0 3945--3969, 2001.

\bibitem[Malag{\`o} et~al.(2018)Malag{\`o}, Montrucchio, and
  Pistone]{Malago:WassersteinGaussian2018}
L.~Malag{\`o}, L.~Montrucchio, and G.~Pistone.
\newblock Wasserstein {Riemannian} geometry of {Gaussian} densities.
\newblock \emph{Information Geometry}, 1\penalty0 (2):\penalty0 137--179, Dec
  2018.

\bibitem[Mallasto and Feragen(2017)]{Mallasto:NIPS2017Wasserstein}
A.~Mallasto and A.~Feragen.
\newblock Learning from uncertain curves: The 2-wasserstein metric for
  {Gaussian} processes.
\newblock In \emph{Advances in Neural Information Processing Systems}, pages
  5660--5670, 2017.

\bibitem[Mallasto et~al.(2021)Mallasto, Gerolin, and
  Minh]{Mallasto2020entropyregularized}
A.~Mallasto, A.~Gerolin, and H.Q. Minh.
\newblock Entropy-regularized 2-{Wasserstein} distance between {Gaussian}
  measures.
\newblock \emph{Information Geometry}, pages 1--35, 2021.

\bibitem[Masarotto et~al.(2019)Masarotto, Panaretos, and
  Zemel]{masarotto2019procrustes}
V.~Masarotto, V.~Panaretos, and Y.~Zemel.
\newblock Procrustes metrics on covariance operators and optimal transportation
  of gaussian processes.
\newblock \emph{Sankhya A}, 81\penalty0 (1):\penalty0 172--213, 2019.

\bibitem[Matthews et~al.(2016)Matthews, Hensman, Turner, and
  Ghahramani]{Matthews2016sparseKL}
A.~Matthews, J.~Hensman, R.~Turner, and Z.~Ghahramani.
\newblock On sparse variational methods and the {Kullback-Leibler} divergence
  between stochastic processes.
\newblock In \emph{Artificial Intelligence and Statistics}, pages 231--239.
  PMLR, 2016.

\bibitem[Minh(2021{\natexlab{a}})]{Minh:2021RiemannianEstimation}
Ha~Quang Minh.
\newblock Estimation of riemannian distances between covariance operators and
  {Gaussian} processes.
\newblock \emph{arXiv preprint arXiv:2108.11683}, 2021{\natexlab{a}}.

\bibitem[Minh(2017)]{Minh:LogDet2016}
{H.Q.} Minh.
\newblock Infinite-dimensional {Log-Determinant} divergences between positive
  definite trace class operators.
\newblock \emph{Linear Algebra and Its Applications}, 528:\penalty0 331--383,
  2017.

\bibitem[Minh(2018)]{Minh:LogDetIII2018}
H.Q. Minh.
\newblock Infinite-dimensional {Log-Determinant} divergences {III}:
  {Log-Euclidean} and {Log-Hilbert--Schmidt} divergences.
\newblock In \emph{Information Geometry and its Applications IV}, pages
  209--243. Springer, 2018.

\bibitem[Minh(2019{\natexlab{a}})]{Minh:2019AlphaBeta}
H.Q. Minh.
\newblock {Alpha-Beta Log-Determinant} divergences between positive definite
  trace class operators.
\newblock \emph{Information Geometry}, 2\penalty0 (2):\penalty0 101--176,
  2019{\natexlab{a}}.

\bibitem[Minh(2019{\natexlab{b}})]{Minh:GSI2019}
H.Q. Minh.
\newblock A unified formulation for the {Bures-Wasserstein} and
  {Log-Euclidean/Log-Hilbert-Schmidt} distances between positive definite
  operators.
\newblock In \emph{International Conference on Geometric Science of
  Information}, 2019{\natexlab{b}}.

\bibitem[Minh(2020{\natexlab{a}})]{Minh:2020regularizedDiv}
H.Q. Minh.
\newblock Regularized divergences between covariance operators and {Gaussian}
  measures on {Hilbert} spaces.
\newblock \emph{Journal of Theoretical Probability}, 2020{\natexlab{a}}.

\bibitem[Minh(2020{\natexlab{b}})]{Minh:Positivity2020}
H.Q. Minh.
\newblock Infinite-dimensional {Log-Determinant} divergences between positive
  definite {Hilbert-Schmidt} operators.
\newblock \emph{Positivity}, 24:\penalty0 631--662, 2020{\natexlab{b}}.

\bibitem[Minh(2021{\natexlab{b}})]{Minh:2021EntropicConvergenceGaussianMeasures}
H.Q. Minh.
\newblock Convergence and finite sample approximations of entropic regularized
  {Wasserstein} distances in {Gaussian} and {RKHS} settings.
\newblock \emph{arXiv preprint arXiv:2101.01429}, 2021{\natexlab{b}}.

\bibitem[Minh(2022{\natexlab{a}})]{Minh2020:EntropicHilbert}
H.Q. Minh.
\newblock Entropic regularization of {Wasserstein} distance between
  infinite-dimensional {Gaussian} measures and {Gaussian} processes.
\newblock \emph{Journal of Theoretical Probability, accepted for publication,
  \url{https://link.springer.com/article/10.1007/s10959-022-01165-1}},
  2022{\natexlab{a}}.

\bibitem[Minh(2022{\natexlab{b}})]{Minh2021:FiniteEntropicGaussian}
H.Q. Minh.
\newblock Finite sample approximations of exact and entropic {Wasserstein}
  distances between covariance operators and {Gaussian} processes.
\newblock \emph{SIAM/ASA Journal on Uncertainty Quantification}, 10:\penalty0
  96--124, 2022{\natexlab{b}}.

\bibitem[Minh(2022{\natexlab{c}})]{Minh:Alpha2022}
H.Q. Minh.
\newblock Alpha {Procrustes} metrics between positive definite operators: a
  unifying formulation for the {Bures-Wasserstein} and
  {Log-Euclidean}/{Log-Hilbert-Schmidt} metrics.
\newblock \emph{Linear Algebra and Its Applications}, 636:\penalty0 25--68,
  2022{\natexlab{c}}.

\bibitem[Minh and Murino(2017)]{Minh:Covariance2017}
H.Q. Minh and V.~Murino.
\newblock Covariances in computer vision and machine learning.
\newblock \emph{Synthesis Lectures on Computer Vision}, 7\penalty0
  (4):\penalty0 1--170, 2017.

\bibitem[Minh et~al.(2014)Minh, {San Biagio}, and Murino]{MinhSB:NIPS2014}
H.Q. Minh, M.~{San Biagio}, and V.~Murino.
\newblock Log-{H}ilbert-{S}chmidt metric between positive definite operators on
  {H}ilbert spaces.
\newblock In \emph{Advances in Neural Information Processing Systems 27 (NIPS
  2014)}, pages 388--396. 2014.

\bibitem[Mosonyi and Hiai(2011)]{mosonyi2011quantum}
Mil{\'a}n Mosonyi and Fumio Hiai.
\newblock On the quantum r{\'e}nyi relative entropies and related capacity
  formulas.
\newblock \emph{IEEE Transactions on Information Theory}, 57\penalty0
  (4):\penalty0 2474--2487, 2011.

\bibitem[Mostow(1955)]{Mostow:1955}
G.D. Mostow.
\newblock Some new decomposition theorems for semi-simple groups.
\newblock \emph{Memoirs of the American Mathematical Society}, 14:\penalty0
  31--54, 1955.

\bibitem[Oksendal(2013)]{oksendal2013stochastic}
Bernt Oksendal.
\newblock \emph{Stochastic differential equations: an introduction with
  applications}.
\newblock Springer Science \& Business Media, 2013.

\bibitem[Olkin and Pukelsheim(1982)]{olkin82}
I.~Olkin and F.~Pukelsheim.
\newblock The distance between two random vectors with given dispersion
  matrices.
\newblock \emph{Linear Algebra and its Applications}, 48:\penalty0 257--263,
  1982.

\bibitem[Panaretos et~al.(2010)Panaretos, Kraus, and
  Maddocks]{Panaretos:jasa2010}
V.~Panaretos, D.~Kraus, and J.~Maddocks.
\newblock Second-order comparison of {Gaussian} random functions and the
  geometry of {DNA} minicircles.
\newblock \emph{Journal of the American Statistical Association}, 105\penalty0
  (490):\penalty0 670--682, 2010.

\bibitem[Paparoditis and Sapatinas(2016)]{Paparoditis2016Bootstrap}
Efstathios Paparoditis and Theofanis Sapatinas.
\newblock Bootstrap-based testing of equality of mean functions or equality of
  covariance operators for functional data.
\newblock \emph{Biometrika}, 103\penalty0 (3):\penalty0 727--733, 2016.

\bibitem[Pardo(2005)]{Pardo:2005}
L.~Pardo.
\newblock \emph{Statistical inference based on divergence measures}.
\newblock CRC Press, 2005.

\bibitem[Pennec et~al.(2006)Pennec, Fillard, and Ayache]{Pennec:IJCV2006}
X.~Pennec, P.~Fillard, and N.~Ayache.
\newblock A {R}iemannian framework for tensor computing.
\newblock \emph{International Journal of Computer Vision}, 66\penalty0
  (1):\penalty0 41--66, 2006.

\bibitem[Petryshyn(1962)]{Petryshyn:1962}
W.V. Petryshyn.
\newblock Direct and iterative methods for the solution of linear operator
  equations in {H}ilbert spaces.
\newblock \emph{Transactions of the American Mathematical Society},
  105:\penalty0 136--175, 1962.

\bibitem[Pigoli et~al.(2014)Pigoli, Aston, Dryden, and Secchi]{Pigoli:2014}
D.~Pigoli, J.~Aston, I.L. Dryden, and P.~Secchi.
\newblock Distances and inference for covariance operators.
\newblock \emph{Biometrika}, 101\penalty0 (2):\penalty0 409--422, 2014.

\bibitem[Pinelis(1994)]{Pinelis1994optimum}
I.~Pinelis.
\newblock Optimum bounds for the distributions of martingales in {Banach}
  spaces.
\newblock \emph{The Annals of Probability}, pages 1679--1706, 1994.

\bibitem[Pinski et~al.(2015{\natexlab{a}})Pinski, Simpson, Stuart, and
  Weber]{pinski2015kullbacktheory}
Francis~J Pinski, Gideon Simpson, Andrew~M Stuart, and Hendrik Weber.
\newblock {Kullback--Leibler} approximation for probability measures on
  infinite dimensional spaces.
\newblock \emph{SIAM Journal on Mathematical Analysis}, 47\penalty0
  (6):\penalty0 4091--4122, 2015{\natexlab{a}}.

\bibitem[Pinski et~al.(2015{\natexlab{b}})Pinski, Simpson, Stuart, and
  Weber]{pinski2015algorithms}
Frank~J Pinski, Gideon Simpson, Andrew~M Stuart, and Hendrik Weber.
\newblock Algorithms for {Kullback--Leibler} approximation of probability
  measures in infinite dimensions.
\newblock \emph{SIAM Journal on Scientific Computing}, 37\penalty0
  (6):\penalty0 A2733--A2757, 2015{\natexlab{b}}.

\bibitem[Rajput(1972)]{Rajput1972GaussianMeasuresLp}
B.~Rajput.
\newblock Gaussian measures on {Lp} spaces, $1 \leq p < \infty$.
\newblock \emph{Journal of Multivariate Analysis}, 2\penalty0 (4):\penalty0
  382--403, 1972.

\bibitem[Rajput and Cambanis(1972)]{Rajput1972gaussianprocesses}
B.~Rajput and S.~Cambanis.
\newblock Gaussian processes and {Gaussian} measures.
\newblock \emph{The Annals of Mathematical Statistics}, pages 1944--1952, 1972.

\bibitem[Rasmussen and Williams(2006)]{Rasmussen:2006Gaussian}
Carl~Edward Rasmussen and Christopher~KI Williams.
\newblock \emph{Gaussian processes for machine learning}.
\newblock MIT Press, 2006.

\bibitem[Reed and Simon(1975)]{ReedSimon:Functional}
M.~Reed and B.~Simon.
\newblock \emph{Methods of Modern Mathematical Physics: Functional analysis}.
\newblock Academic Press, 1975.

\bibitem[R\'enyi(1961)]{Renyi:1961}
A.~R\'enyi.
\newblock On measures of entropy and information.
\newblock In \emph{Proceedings of the Fourth Berkeley Symposium on Mathematical
  Statistics and Probability, Volume 1: Contributions to the Theory of
  Statistics}, pages 547--561, Berkeley, Calif., 1961. University of California
  Press.

\bibitem[Rosasco et~al.(2010)Rosasco, Belkin, and
  Vito]{Rosasco:IntegralOperatorsJMLR2010}
L.~Rosasco, M.~Belkin, and E.~De Vito.
\newblock On learning with integral operators.
\newblock \emph{Journal of Machine Learning Research}, 11\penalty0
  (30):\penalty0 905--934, 2010.

\bibitem[Shiryaev(1996)]{Shiryaev:Probability1996}
A.N. Shiryaev.
\newblock \emph{Probability}, volume~95 of \emph{Graduate Texts in
  Mathematics}.
\newblock Springer, 1996.

\bibitem[Simon(1977)]{Simon:1977}
B.~Simon.
\newblock Notes on infinite determinants of {H}ilbert space operators.
\newblock \emph{Advances in Mathematics}, 24:\penalty0 244--273, 1977.

\bibitem[Sledge and Principe(2021)]{sledge2021estimating}
Isaac~J Sledge and Jose~C Principe.
\newblock Estimating {R\'enyi}'s $\alpha$-cross-entropies in a matrix-based
  way.
\newblock \emph{arXiv preprint arXiv:2109.11737}, 2021.

\bibitem[Smale and Zhou(2007)]{SmaleZhou2007}
S.~Smale and D.X. Zhou.
\newblock Learning theory estimates via integral operators and their
  approximations.
\newblock \emph{Constructive Approximation}, 26:\penalty0 153--172, 2007.

\bibitem[Sra(2012)]{Sra:NIPS2012}
S.~Sra.
\newblock A new metric on the manifold of kernel matrices with application to
  matrix geometric means.
\newblock In \emph{NIPS}, 2012.

\bibitem[Steinwart and Christmann(2008)]{Steinwart:SVM2008}
I.~Steinwart and A.~Christmann.
\newblock \emph{Support vector machines}.
\newblock Springer Science \& Business Media, 2008.

\bibitem[Sun(2005)]{Sun2005MercerNoncompact}
H.~Sun.
\newblock Mercer theorem for {RKHS} on noncompact sets.
\newblock \emph{Journal of Complexity}, 21\penalty0 (3):\penalty0 337--349,
  2005.

\bibitem[Sun et~al.(2019)Sun, Zhang, Shi, and Grosse]{Sun2019functionalKL}
S.~Sun, G.~Zhang, J.~Shi, and R.~Grosse.
\newblock Functional variational {Bayesian} neural networks.
\newblock \emph{International Conference on Learning Representation}, 2019.

\bibitem[Takatsu(2011)]{Takatsu2011wasserstein}
A.~Takatsu.
\newblock Wasserstein geometry of {Gaussian} measures.
\newblock \emph{Osaka Journal of Mathematics}, 48\penalty0 (4):\penalty0
  1005--1026, 2011.

\bibitem[Thanwerdas and Pennec(2019)]{thanwerdas2019affine}
Yann Thanwerdas and Xavier Pennec.
\newblock Is affine-invariance well defined on {SPD} matrices? a principled
  continuum of metrics.
\newblock In \emph{International Conference on Geometric Science of
  Information}, pages 502--510. Springer, 2019.

\bibitem[Thanwerdas and Pennec(2022)]{thanwerdas2022geometry}
Yann Thanwerdas and Xavier Pennec.
\newblock The geometry of mixed-euclidean metrics on symmetric positive
  definite matrices.
\newblock \emph{Differential Geometry and its Applications}, 81:\penalty0
  101867, 2022.

\bibitem[Tuzel et~al.(2008)Tuzel, Porikli, and Meer]{Tuzel:PAMI2008}
O.~Tuzel, F.~Porikli, and P.~Meer.
\newblock Pedestrian detection via classification on {R}iemannian manifolds.
\newblock \emph{TPAMI}, 30\penalty0 (10):\penalty0 1713--1727, 2008.

\bibitem[Vakhaniya(1979)]{vakhaniya1979correspondence}
NN~Vakhaniya.
\newblock Correspondence between {Gaussian} measures and the {Gaussian}
  process.
\newblock \emph{Mathematical notes of the Academy of Sciences of the USSR},
  26\penalty0 (2):\penalty0 638--640, 1979.

\bibitem[Wang et~al.(2012)Wang, Guo, Davis, and Dai]{WangGDD12}
Ruiping Wang, Huimin Guo, Larry~S Davis, and Qionghai Dai.
\newblock Covariance discriminative learning: A natural and efficient approach
  to image set classification.
\newblock In \emph{Computer Vision and Pattern Recognition (CVPR), 2012 IEEE
  Conference on}, pages 2496--2503. IEEE, 2012.

\bibitem[Zhang et~al.(2020)Zhang, Wang, and Nehorai]{zhang2019:OTRKHS}
Z.~Zhang, M.~Wang, and A.~Nehorai.
\newblock Optimal transport in reproducing kernel {Hilbert} spaces: Theory and
  applications.
\newblock \emph{IEEE transactions on pattern analysis and machine
  intelligence}, 42\penalty0 (7):\penalty0 1741 -- 1754, 2020.

\bibitem[Zhou and Chellappa(2006)]{ProbDistance:PAMI2006}
S.~K. Zhou and R.~Chellappa.
\newblock From sample similarity to ensemble similarity: Probabilistic distance
  measures in reproducing kernel {H}ilbert space.
\newblock \emph{TPAMI}, 28\penalty0 (6):\penalty0 917--929, 2006.

\end{thebibliography}

\end{document}